\newcommand{\eg}{e.g.\xspace}
\newcommand{\wrt}{{\it w.r.t.}\xspace}
\newcommand{\iid}{\xspace}
\newcommand{\Zcal}{\mathcal{Z}}
\newcommand{\ie}{i.e.\xspace}
\newcommand{\R}{\mathds{R}}
\newcommand{\el}{\widehat{L}_S}
\newcommand{\ef}{\widehat{F}_S}
\newcommand{\zcal}{\mathcal{Z}}
\newcommand{\Rd}{{\R^d}}
\newcommand{\levy}{L_t^\alpha}
\newcommand{\E}{\mathds{E}}
\newcommand{\Eof}[2][]{\mathds{E}_{#1} \left[ #2 \right]}
\newcommand{\Pof}[2][]{\mathds{P}_{#1} \left( #2 \right)}
\newcommand{\normof}[1]{\left\Vert #1 \right\Vert}
\newcommand{\klb}[2]{\text{\normalfont{{KL}}}\left(#1 || #2 \right)}
\newcommand{\renyi}[3][\alpha]{\text{\normalfont{{D}}}_{#1}\left(#2 || #3 \right)}
\newcommand{\op}[1]{I\left[ #1 \right] }
\newcommand{\bregman}[2]{D_\Phi(#1, #2)}
\newcommand{\uinftybar}{\bar{u}_\infty}
\newcommand{\entphi}[2][\uinftybar]{\text{\normalfont Ent}_{#1}^\Phi \left(#2\right)}
\newcommand{\set}[1]{\left\{ #1 \right\} }
\newcommand{\datadist}{\mu_z^{\otimes n}}
\newcommand{\Sd}{{\mathds{S}^{d-1}}}
\newcommand{\chisq}[2]{\chi^2\left(#1 \right|\left| #2 \right)}
\newcommand{\fraclap}{ \left( -\Delta \right)^{\frac{\alpha}{2}}}
\newcommand{\equald}{\overset{\mathbf{d}}{=}}
\newcommand{\landau}[1]{\mathcal{O} \left( #1 \right)}
\crefname{ass}{Asmp.}{Asmps.}
\Crefname{ass}{Assumption}{Assumptions}
\theoremstyle{plain}
\newtheorem{theorem}{Theorem}[section]
\newtheorem{lemma}[theorem]{Lemma}
\newtheorem{corollary}[theorem]{Corollary}
\theoremstyle{definition}
\newtheorem{definition}[theorem]{Definition}
\newtheorem{assumption}[theorem]{Assumption}
\theoremstyle{remark}
\newtheorem{remark}[theorem]{Remark}
\newtheorem{example}[theorem]{Example}
\icmltitlerunning{Generalization Bounds for Heavy-Tailed SDEs}
\begin{document}

\twocolumn[
\icmltitle{Generalization Bounds for Heavy-Tailed SDEs through the \\ Fractional Fokker-Planck Equation}

\icmlsetsymbol{equal}{*}

\begin{icmlauthorlist}
\icmlauthor{Benjamin Dupuis}{inria,ens,psl}
\icmlauthor{Umut \c{S}im\c{s}ekli}{inria,ens,psl,cnrs}
\end{icmlauthorlist}

\icmlaffiliation{inria}{Inria}
\icmlaffiliation{ens}{Ecole Normale Supérieure, Paris, France}
\icmlaffiliation{cnrs}{CNRS}
\icmlaffiliation{psl}{PSL Research University, Paris, France}

\icmlcorrespondingauthor{Benjamin Dupuis}{benjamin.dupuis@inria.fr}
\icmlcorrespondingauthor{Umut \c{S}im\c{s}ekli}{umut.simsekli@inria.fr}

\icmlkeywords{Generalisation bounds, heavy-tails, SDE}

\vskip 0.3in
]

\printAffiliationsAndNotice{}  %

\begin{abstract}
Understanding the generalization properties of heavy-tailed stochastic optimization algorithms has attracted increasing attention over the past years. 
While illuminating interesting aspects of stochastic optimizers by using heavy-tailed stochastic differential equations as proxies, prior works either provided \emph{expected} generalization bounds, or introduced non-computable information theoretic terms.
Addressing these drawbacks, in this work, we prove \emph{high-probability} generalization bounds for heavy-tailed SDEs which do not contain any nontrivial information theoretic terms. To achieve this goal, we develop new proof techniques based on estimating the entropy flows associated with the so-called \emph{fractional} Fokker-Planck equation (a partial differential equation that governs the evolution of the distribution of the corresponding heavy-tailed SDE). In addition to obtaining high-probability bounds, we show that our bounds have a better dependence on the dimension of parameters as compared to prior art.
Our results further identify a phase transition phenomenon, which suggests that heavy tails can be either beneficial or harmful depending on the problem structure. 
We support our theory with experiments conducted in a variety of settings.
\end{abstract}

\section{Introduction}
\label{sec:intro}

A supervised machine learning setup consists of a data space $\zcal$, a data distribution $\mu_z$, and a parameter space, which will be $\Rd$ in our study. Given a loss function $\ell : \Rd \times \zcal \longrightarrow \R_+$, the goal is to minimize the following population risk:
\begin{align}
    \label{eq:prm}
    \text{min}_{w \in \Rd} L(w), \quad L(w) := \Eof[z \sim \mu_z]{\ell(w,z)}.
\end{align}
As $\mu_z$ is typically unknown in practice, the population risk $L$ is replaced by the empirical risk, defined as follows:
\begin{align}
    \label{eq:er}
    \el(w) := \frac{1}{n} \sum_{i=1}^n \ell(w,z_i),
\end{align}
where $S = (z_1,\dots,z_n) \sim \datadist$ is a dataset and each $z_i$ is sampled independent and identically (i.i.d.) from $\mu_z$. Even though $\el$ can be computed in practice as opposed to $L$, 
in several practical scenarios, $\ell$ is further replaced with a `surrogate loss' function $f : \Rd \times \zcal \longrightarrow \R$. For instance, in a binary classification setting, $\ell$ is typically chosen as the non-differentiable $0$-$1$ loss, whereas $f$ can be chosen as a differentiable surrogate, such as the cross-entropy loss, which would be amenable to gradient-based optimization. 
We accordingly define the \emph{surrogate empirical risk}:\footnote{In our theoretical setup we introduce the surrogate loss to be able to cover more general settings. However, this is not a requirement, we can set $f = \ell$. }
\begin{align*}
    \ef(w) := \frac{1}{n} \sum_{i=1}^n f(w,z_i).
\end{align*} 
Given a dataset $S$ and a surrogate loss $f$, a stochastic optimization algorithm $\mathcal{A}$ aims at minimizing $\ef$ and can be seen as a function such that $\mathcal{A}(S,U) =: w_{S,U}$, where $U$ is a random variable encompassing all the randomness in the algorithm. One of the major challenges of statistical learning theory is then to upper-bound the so-called generalization error, i.e., $L(w_{S,U}) - \el(w_{S,U})$. Once such a bound can be obtained, it immediately provides an upper-bound on the true risk $L$, as $\el$ can be computed numerically. 

In our study, we analyze the generalization error induced by a specific class of \emph{heavy-tailed} optimization algorithms, described by the next stochastic differential equation (SDE): 
\begin{align}
    \label{eq:multifractal_dynamics}
    dW_t^S = -\nabla V_S(W_t^S) dt+ \sigma_1 d\levy + \sqrt{2}\sigma_2 dB_t,
\end{align}
where $\levy$ is a stable Lévy process, which will be formally introduced in Section~\ref{sec:background_notations}, $\alpha \in (1,2)$ is the tail-index, controlling the heaviness of the tails\footnote{$\levy$ does not admit a finite variance and as $\alpha$ gets smaller the process becomes heavier-tailed.} (examples of Lévy processes are provided in \cref{fig:levy-processes}), $B_t$ is a Brownian motion in $\Rd$, $\sigma_{1,2} \geq 0$ are fixed constants, and the \emph{potential} $V_S$ is the $\ell_2$-regularized loss that is defined as:
\begin{align}
    \label{eq:regularized_er}
    V_S(w) := \ef(w) + \frac{\eta}{2} \normof{w}^2.
\end{align}
The term $\frac{\eta}{2} \normof{w}^2$ corresponds to a $\ell_2$ weight decay that is commonly used in the theoretical analysis of Langevin dynamics, which corresponds to \eqref{eq:multifractal_dynamics} with $\sigma_1=0$ \citep{mou_generalization_2017,li_generalization_2020,farghly_time-independent_2021}.

\begin{figure}[t]
    \centering
    \includegraphics[width=0.9\columnwidth]{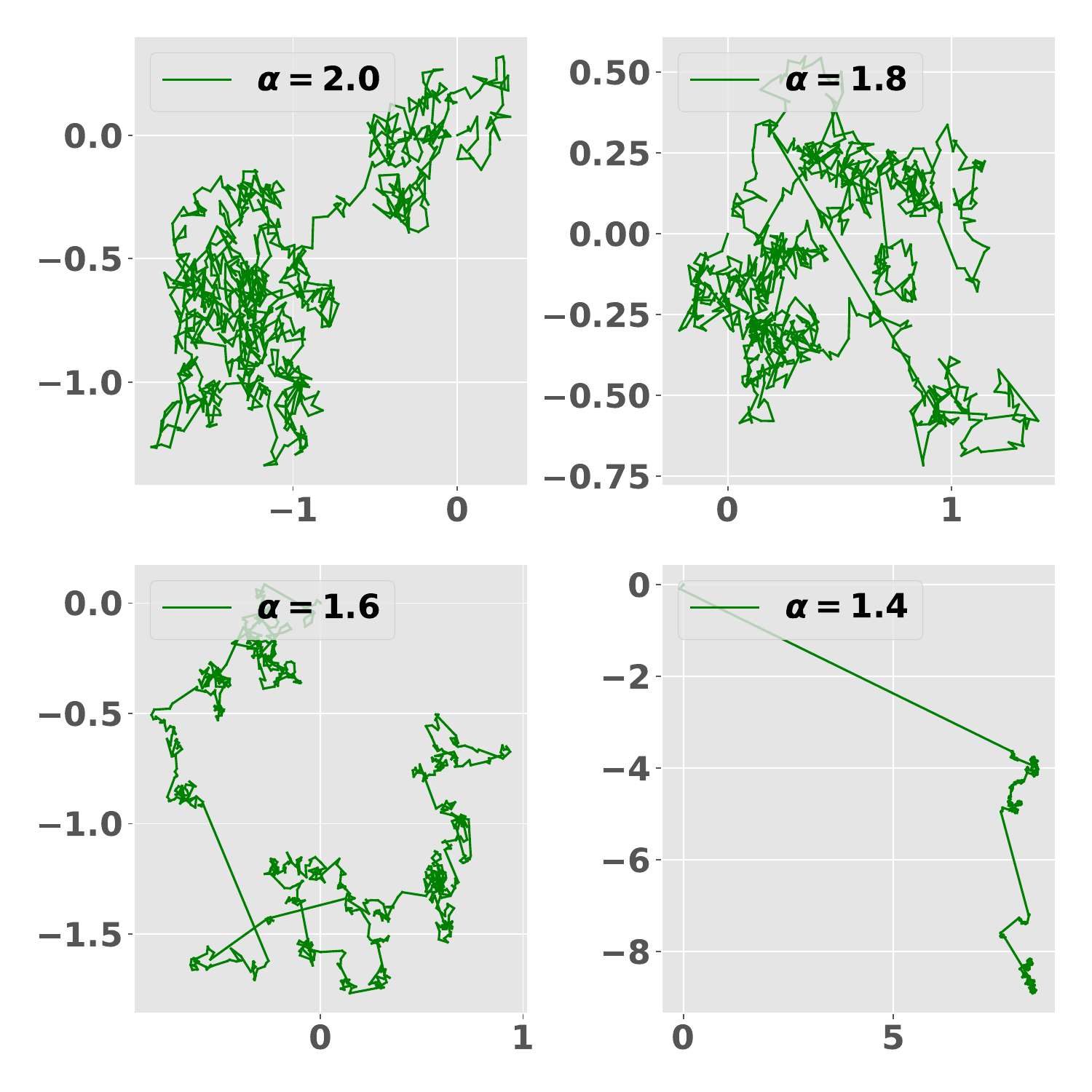}
    \vspace{-0.7cm}
    \caption{Simulation of $\levy$ for different values of $\alpha$.}
    \label{fig:levy-processes}
    \vskip -0.1in
\end{figure}

There has been an increasing interest in understanding the theoretical properties of heavy-tailed SDEs, such as \eqref{eq:multifractal_dynamics}, due to two main reasons.
\begin{enumerate}[noitemsep,topsep=0pt,leftmargin=.2in]
    \item Recently, several studies have provided empirical and theoretical evidence that stochastic gradient descent (SGD) can exhibit heavy tails when the step-size is chosen large, or the batch size small \citep{simsekli_tail-index_2019,gurbuzbalaban_heavy-tail_2021,hodgkinson_multiplicative_2020,pavasovic2023approximate}. This has motivated several studies (see e.g.,  \citep{nguyen_first_2019,simsekli_hausdorff_2021,simsekli_tail-index_2019,zhou2020towards}) to model the heavy tails, emerging in the large step-size/small batch-size regime, through heavy-tailed SDEs and analyze the resulting SDE as a proxy for SGD. 
    \item Injecting heavy-tailed noise to SGD in an explicit way has also been considered from several perspectives. It has been shown that heavy-tailed noise can help the algorithm avoid sharp minima \cite{csimcsekli2017fractional,simsekli_tail-index_2019,nguyen2019non,nguyen_first_2019}, attain better generalization properties \cite{lim_chaotic_2022,raj_algorithmic_2023-1} or to obtain sparse parameters in an overparametrized neural network setting \cite{wan2023implicit}. 
\end{enumerate}

Our main goal in this study is to develop \emph{high-probability} generalization bounds for the SDE given in \eqref{eq:multifractal_dynamics}. More precisely, we will choose the learning algorithm $\mathcal{A}$ as the solution to the SDE \eqref{eq:multifractal_dynamics}, i.e., $\mathcal{A}(S,U) = W_T^S$ for some fixed time horizon $T>0$, where in this case $U$ will encapsulate the randomness introduced by $L^\alpha_t$ and $B_t$.
We will then upper-bound the generalization gap $\Eof[U]{G_S(W_T^S)}$ over $S \sim \datadist$ under this specific choice, where
\begin{align}
    \label{eq:generalzation_gap}
    G_S(W_T^S) := L(W_T^S) - \el(W_T^S).
\end{align}

\textbf{Related work. }
In the case where the SDE \eqref{eq:multifractal_dynamics} is only driven by a Brownian motion, \ie $\sigma_1 = 0$, \cref{eq:multifractal_dynamics} reduces to the continuous Langevin dynamics, whose generalization properties have been widely studied \citep{mou_generalization_2017, li_generalization_2020, farghly_time-independent_2021, futami_time-independent_2023}, as well as its discrete-time counterpart \citep{raginsky_non-convex_2017,pensia_generalization_2018,negrea_information-theoretic_2020,haghifam_sharpened_2020,neu_information-theoretic_2021-1,farghly_time-independent_2021}. For instance, \citet{mou_generalization_2017} distinguished two different approaches: the first is based the concept of algorithmic stability \citep{bousquet_stability_2002, bousquet_sharper_2020}, while the second is based on PAC-Bayesian theory \citep{shawetaylor1997pac,mcAllester1998some,catoni_pac-bayesian_2007,germain_pac-bayesian_2009}. In our work, we extend this approach to handle the presence of heavy-tailed noise.

A first step toward generalization bounds for heavy-tailed dynamics was achieved by leveraging the fractal structures generated by such SDEs \citep{simsekli_hausdorff_2021,hodgkinson_generalization_2022,dupuis_mutual_2023}. These studies successfully brought to light new empirical links between the sample paths of these SDEs and topological data analysis \cite{birdal_intrinsic_2021,dupuis_generalization_2023,andreeva_metric_2023}. However, the uniform bounds developed in these studies contain intricate mutual information terms between the set of points of the trajectory and the dataset, which are not amenable for numerical computation to our knowledge \citep{dupuis_generalization_2023}.
Closest to our work are the results recently obtained by \citet{raj_algorithmic_2023,raj_algorithmic_2023-1}, in the case of pure heavy-tailed noise (\ie $\sigma_2 = 0$). \citet{raj_algorithmic_2023} used an algorithmic stability argument to derive expected generalization bounds. While their approach provided more explicit bounds that do not contain mutual information terms, 
it still has certain drawbacks: \textit{(i)} the proof technique cannot be directly used to derive high probability bounds and \textit{(ii)} their bound has a strong dependence in the dimension $d$, rendering it vacuous in overparameterized settings.

\textbf{Contributions. }
In our work, we aim to solve these issues by introducing new tools, taking inspiration from the PAC-Bayesian techniques already used in the case of Langevin dynamics. 
In particular, we will leverage recent results on fractional partial differential equations \citep{gentil_logarithmic_2008, tristani_fractional_2013}, and use them to extend the analysis technique presented in \citet{mou_generalization_2017} to our heavy-tailed setting. While the presence of the heavy tails makes our task significantly more technical, our results unify both light-tailed and heavy-tailed models around one proof technique. Our contributions are as follows:
\begin{itemize}[itemsep=1pt,topsep=0pt,leftmargin=.2in]
    \item We derive high-probability generalization bounds, first when $\sigma_2 > 0$, then in the case $\sigma_2=0$, which turns out to introduce the main technical challenge to our task. Informally, our result takes the following form, with high probability over $S\sim \datadist$,
    \begin{align*}
        \Eof[U]{G_S(W_T^S)} \lesssim \sqrt{\frac{K_{\alpha, d}}{n \sigma_1^\alpha} \int_0^T \E_U\normof{\nabla \ef(W_t)}^2dt}  ,
    \end{align*}
    where $U$ denotes the randomness coming from $\levy$ and $B_t$, and $K_{\alpha, d}$ is a constant depending on $\alpha$ and $d$. We further provide additional results where the resulting bound has a different form and is time-uniform (i.e., does not diverge with $T$) at the expense of introducing terms that cannot be computed in a straightforward way.
    \item By analyzing the constant $K_{\alpha, d}$, we study the impact of the tail-index $\alpha$ on our bounds. Our analysis reveals the existence of a phase transition: we identify two regimes, where in the first case heavy tails are malicious, i.e., the bound increases with the increasing heaviness of the tails, whereas, in the second regime, the heavy tails are beneficial, i.e., increasing the heaviness of the tails results in smaller bounds. Furthermore, we show that our bounds have an improved dependence on the dimension $d$ compared to \cite{raj_algorithmic_2023}.  
\end{itemize}
We support our theory with various experiments conducted on several models. As our experiments require discretizing the dynamics \eqref{eq:multifractal_dynamics}, we analyze the extension of our bounds to a discrete setting, as an additional contribution, see \cref{sec:discrete_case}. All the proofs are presented in the Appendix. The code for our numerical experiments is available at \url{https://github.com/benjiDupuis/heavy_tails_generalization}.

\section{Technical Background}
\label{sec:background_notations}

\subsection{Levy processes and Fokker-Planck equations}
\label{sec:levy_fokker_planck}

 A Lévy process $(L_t)_{t\geq 0}$ is a stochastic process which is stochastically continuous
and has stationary and independent increments, with $L_0 = 0$. We are interested in a specific class of such processes, called symmetric (strictly) $\alpha$-stable processes, which we denote $\levy$. These processes are defined through the characteristic function of their increments, i.e., $\Eof{e^{i \xi \cdot (\levy - L_s^\alpha)}} = e^{- |t - s|^\alpha \normof{\xi}^\alpha}$. 
When the tail-index, $\alpha$, is $2$, then $L_t^2$ corresponds to $\sqrt{2}B_t$, where $B_t$ is a standard Brownian motion in $\Rd$ . For $\alpha < 2$, the processes have heavy-tailed distributions and exhibit jumps (see \cref{fig:levy-processes}). We restrict our study to $\alpha > 1$, since when $\alpha \leq 1$, the expectation of $\levy$ is not defined, which may introduce technical complications and does not have a clear practical interest. We provide further details on Lévy processes in \cref{sec:background_levy_markov}, see also \citep{schilling_introduction_2016-1}.

As mentioned in the introduction, the learning algorithm treated in this study consists in the SDE \eqref{eq:multifractal_dynamics}, defined in the Itô sense \citep[Section $12$]{schilling_introduction_2016-1}, which generalizes both Langevin dynamics \citep{mou_generalization_2017, li_generalization_2020} and purely heavy-tailed dynamics \citep{raj_algorithmic_2023}.

Inspired by \citep{mou_generalization_2017, li_generalization_2020}, our proofs will  not be directly based on this SDE, but on an associated partial differential equation, called the \emph{fractional} Fokker-Planck equation (FPE), or the forward Kolmogorov equation \citep{umarov_beyond_2018}. This equation describes the evolution of the probability density function $u_t^S(w) := u^S(t,w)$ the random variable $W_t^S$, that is the solution of \cref{eq:multifractal_dynamics}. 
Following \citep{duan_introduction_2015, umarov_beyond_2018, schilling_symbol_2010}, \cref{eq:multifractal_dynamics} is associated with the following FPE:
\begin{align}
    \label{eq:empirical_lfp}
    \frac{\partial}{\partial_t} u_t^S = -\sigma_1^\alpha \fraclap u_t^S + \sigma_2^2 \Delta u_t^S  + \text{div} (u_t^S \nabla V_S),
\end{align}
where $\fraclap$ is the (negative) fractional Laplacian operator, which is formally defined in \ref{sec:background_levy_markov}, see also \citep{daoud_fractional_nodate, schertzer_fractional_2001} for introductions.

\subsection{PAC-Bayesian bounds}
\label{sec:pac_bayesian_boudns}

Based on the notations of \cref{eq:multifractal_dynamics}, the learning algorithm studied in this paper is a random map that takes the data $S \in \zcal^n$ as input and generates $W_T^S$ as the output. Due to the randomness introduced by $B_t$ and $\levy$, this procedure defines a \emph{randomized predictor}, \ie, given $S$, the output $W_T^S$ follows a certain probability distribution. 

Generalization properties of similar randomized predictors have been popularly studied through the PAC-Bayesian theory (see \citep{alquier_user-friendly_2021} for a formal introduction). Informally, in PAC-Bayesian analysis, a generalization bound is typically based on some notion of distance between a \emph{posterior} distribution over the predictors, typically denoted by $\rho_S$, a data-dependent probability distribution on $\Rd$, and a data-independent distribution over the predictors, typically denoted by $\pi$, called the \emph{prior}, see e.g., 
\citep{catoni_pac-bayesian_2007,mcallester_pac-bayesian_2003,maurer_note_2004,viallard_general_2021}. 

As an additional theoretical contribution, we begin by proving a generic PAC-Bayesian bound that will be suitable for our setting. This bound has a similar form to that of \citet{germain_pac-bayesian_2009}, but holds for subgaussian losses, and not only bounded losses, see \cref{sec:proofs-subgaussian-pb}.
\begin{theorem}
    \label{thm:kl_pb_bound}
    We assume that $\ell$ is $s^2$-subgaussian, in the sense of \cref{ass:subgaussian}. Then, we have, with probability at least $1 - \zeta$ over $S \sim \datadist$, that
    \begin{align*}
       \Eof[w \sim \rho_S]{G_S(w)} \leq 2s \sqrt{\frac{\klb{\rho_S}{\pi} + \log(3/\zeta)}{n}},
    \end{align*}
    where $\klb{\rho_S}{\pi}$  is the Kullback-Leibler (KL) divergence, whose definition is recalled in \cref{sec:it_terms_and_pb_bounds}.
\end{theorem}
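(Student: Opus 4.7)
The plan is to adapt the classical change-of-measure argument underlying the PAC-Bayes bound of \citet{germain_pac-bayesian_2009} to the subgaussian regime, replacing the Hoeffding-type exponential-moment bound for bounded losses by its subgaussian counterpart. Concretely, I would first apply the Donsker--Varadhan variational characterization of the KL divergence: for any $\lambda>0$ and any measurable $h$,
\[
\lambda\,\mathbb{E}_{w \sim \rho_S}[h(w)] \;\leq\; \mathrm{KL}(\rho_S \,\|\, \pi) + \log \mathbb{E}_{w \sim \pi}\!\left[e^{\lambda h(w)}\right].
\]
Taking $h = G_S$ transfers the data-dependent integral under the posterior $\rho_S$ into an exponential moment under the data-independent prior $\pi$, isolating all $S$-dependence inside a log-moment generating function.

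Next, I would take an expectation over $S \sim \mu_z^{\otimes n}$ and swap the two integrals by Fubini (legitimate since $\pi$ does not depend on $S$). Because $G_S(w) = L(w) - \widehat{L}_S(w)$ is an average of $n$ i.i.d.\ centered $s^2$-subgaussian terms, the Chernoff bound gives, for each fixed $w$, $\mathbb{E}_S[e^{\lambda G_S(w)}] \leq e^{\lambda^2 s^2/(2n)}$, and therefore $\mathbb{E}_S \mathbb{E}_\pi[e^{\lambda G_S(w)}] \leq e^{\lambda^2 s^2/(2n)}$. A one-sided Markov inequality then yields, with probability at least $1-\zeta$ over $S$,
\[
\log \mathbb{E}_\pi\!\left[e^{\lambda G_S(w)}\right] \;\leq\; \log(1/\zeta) + \frac{\lambda^2 s^2}{2n},
\]
which, combined with the Donsker--Varadhan step, produces the \emph{linearized} PAC-Bayes inequality
\[
\mathbb{E}_{\rho_S}[G_S] \;\leq\; \frac{\mathrm{KL}(\rho_S \,\|\, \pi) + \log(1/\zeta)}{\lambda} + \frac{\lambda s^2}{2n}.
\]

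To reach the square-root form stated in the theorem, one still needs to optimize in $\lambda$. Since the minimizer $\lambda^\star = \sqrt{2n(\mathrm{KL}+\log(1/\zeta))/s^2}$ is data-dependent through the KL term, direct substitution is not allowed; the standard remedy is to instantiate the linearized bound on a countable grid of $\lambda$-values (e.g.\ a geometric grid $\lambda_k = 2^k \lambda_0$), apply a union bound with probability weights summing to $\zeta$, and then pick the grid element closest to $\lambda^\star$. This costs only a mild multiplicative constant (absorbed into the leading factor $2$ in front of $s$) and an additive log-of-cardinality term, which is the origin of the $\log(3/\zeta)$ appearing in place of $\log(1/\zeta)$.

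The main obstacle is precisely this final bookkeeping: threading the grid union bound through the optimization so as to land on the clean constants $2s$ and $\log(3/\zeta)$ rather than unspecified absolute constants. The subgaussian assumption itself enters the proof only through the single exponential-moment inequality above, so once that ingredient is in place the rest of the argument mirrors the bounded-loss case of \citet{germain_pac-bayesian_2009}; the conceptual content of the proof reduces to change of measure followed by Markov.
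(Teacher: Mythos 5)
Your proposal takes a genuinely different route from the paper. You linearize via Donsker--Varadhan applied to $\lambda\,G_S(w)$, use the subgaussian Chernoff bound $\mathbb{E}_S[e^{\lambda G_S(w)}]\le e^{\lambda^2 s^2/(2n)}$, and then propose to optimize $\lambda$ over a countable grid with a union bound to remove the data-dependence of the optimal $\lambda^\star$. That strategy is sound in outline, but --- as you yourself flag --- it does not cleanly land on the stated constants: a grid union bound produces an additive $\log(\text{grid cardinality or weight}/\zeta)$ term that depends on the grid design, not precisely $\log(3/\zeta)$, and the factor absorbed into $2s$ depends on the grid ratio. The paper sidesteps the optimization over $\lambda$ entirely: it applies the generic PAC-Bayes change-of-measure bound (\cref{thm:pac_bayesian_kl}) to the \emph{squared} deviation $\varphi(w,S)=an\,G_S(w)^2$ with a single, fixed, data-independent parameter $a=1/(4s^2)$. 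The exponential moment $\mathbb{E}_S[e^{an G_S(w)^2}]$ is then bounded by summing the even moments $\mathbb{E}_S[|G_S(w)|^{2k}]$, each controlled via Hoeffding's tail bound and the Gamma function (following Vershynin's Prop.~2.5.2); the choice $a=1/(4s^2)$ gives $\mathbb{E}_S[e^{an G_S(w)^2}]\le 3$, which is exactly the source of $\log 3$. Jensen's inequality then converts $\mathbb{E}_{\rho_S}[G_S^2]$ into $\mathbb{E}_{\rho_S}[G_S]^2$, and rearranging yields the stated $2s\sqrt{(\mathrm{KL}+\log(3/\zeta))/n}$ with no grid, no union bound, and no data-dependent tuning. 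So the "main obstacle" you identify (threading the grid through the optimization to recover clean constants) is precisely what the squared-gap trick is designed to avoid; if you want to match the paper's statement exactly, replacing your linearized-plus-grid step with a single application of the change-of-measure inequality to $an\,G_S^2$ at $a=1/(4s^2)$ is the missing idea.
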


 Our main theoretical contributions will be proving upper-bounds on  $\klb{\rho_S}{\pi}$, when $\rho_S$ is set to the distribution of $W_t^S$ and $\pi$ is chosen appropriately.
 Additionally, in \cref{sec:toward_time_uniform}, we will prove generalization bounds that are based on related but different generic PAC-Bayesian results, for which we provide a short introduction in \cref{sec:it_terms_and_pb_bounds}.

To end this section, we define the notion of $\Phi$-entropy, through which we link PAC-Bayesian bounds and the study of fractional FPEs \citep{gentil_logarithmic_2008, tristani_fractional_2013}.

\begin{restatable}[$\Phi$-entropies]{definition}{defPhiEntropy}
    \label{def:phi_entropies}
    Let $\mu$ be a non-negative measure on $\Rd$ and $\Phi: \mathds{R}_+ \longrightarrow \mathds{R}$ be a convex function. Then, for a $g: \Rd \longrightarrow \mathds{R}_+$, such that $g,\Phi(g) \in L^1(\mu)$, we define:
    \begin{align*}
        \entphi[\mu]{g} := \int \Phi(g) ~d\mu - \Phi \left( \int g ~d\mu \right).
    \end{align*}
\end{restatable}
\vskip -0.1in
Note in particular that, if $\Phi(x) = x\log(x)$ and $g$ is chosen to be $d\rho_S/d\pi$, the Radon-Nykodym derivative of $\rho_S$ with respect to $\pi$, then we have $\entphi[\pi]{g} = \klb{\rho_S}{\pi}$.

\section{Main Assumptions}
\label{sec:notations_assumptions}

As discussed in \cref{sec:levy_fokker_planck}, our analysis is based on \cref{eq:empirical_lfp}. 
 To avoid technical complications, we assume that it has a solution, $u_t^S = u^S(t,w)$, that is continuously differentiable in $t$ and twice continuously differentiable in $w$. We provide a discussion of these properties in \cref{sec:background_levy_markov}. 
 We also denote by $\rho_t^S$ the corresponding probability distribution on $\Rd$, so that $\rho_t^S$ is the law of $W_t^S$.

We first make two classical assumptions. The first is the subgaussian behavior of the objective $\ell$. Besides, we make a smoothness assumption on function $f$ ensuring the existence of strong solutions to \cref{eq:multifractal_dynamics} \citep{schilling_symbol_2010}. Those assumptions are made throughout the paper.

\begin{assumption}
    \label{ass:subgaussian}
    The loss $\ell(w,z)$ is $s^2$-subgaussian, \ie for all $w$ and all $\lambda \in \R$, $\Eof[z]{e^{\lambda (\ell(w,z) - \Eof[z']{\ell(w,z')})}} \leq e^{\frac{\lambda^2 s^2}{2}}$. Moreover, $\ell(w,z)$ is integrable with respect to $\rho_t^S \otimes \datadist$.
\end{assumption}
\begin{assumption}
    \label{ass:smooth}
    $f(w,z)$ is $M$-smooth, which means:
    \begin{align*}
        \normof{\nabla_w f(w,z) - \nabla_w f(w',z)} \leq M \normof{w - w'}.
    \end{align*}
\end{assumption}
\vskip -0.1in
As our proof technique is based on the use of PAC-Bayesian bounds, where we use $\rho_T^S$ as a posterior distribution, 
we are required to find a pertinent choice for the prior distribution $\pi$. We define it by considering the FPE of the Lévy driven Ornstein-Uhlenbeck process associated with the regularization term, $\frac{\eta}{2} \normof{w}^2$. 
More precisely, we consider $\uinftybar$ a solution to the following steady FPE.
\begin{align}
    \label{eq:steady_state}
    0 = -\sigma_1^\alpha \fraclap \uinftybar + \sigma_2^2 \Delta \uinftybar  + \eta \nabla \cdot (\uinftybar w).
\end{align}
It has been shown in \citep{tristani_fractional_2013, gentil_logarithmic_2008} that such a steady state is well-defined and regular enough. We hence denote by $\pi$ the probability distribution, on $\Rd$, with density $\uinftybar$. We characterize further properties of the prior $\pi$ in \cref{lemma:steady_state_regularized}. 

Throughout the paper, we will use the following notation:
\begin{align}
    \label{eq:vts_definition}
    v_t^S (x) := \frac{u_t^S(x)}{\uinftybar(x)} = \frac{d \rho_t^S}{d \pi} (x).
\end{align}
We will often omit the dependency of $v_t^S$ on $t$ and/or $S$, hence denoting $v_t$, or just $v$. 

Our theory will require a technical regularity condition on $v_t^S$, which we will now formalize. To achieve this goal, let $\Phi: \R_+ \to \mathds{R}$ be a twice differentiable convex function. Specific choices for $\Phi$ will be made in \cref{sec:main_results}.

\begin{assumption}
    \label{ass:phi_regularity}
    For all $t>0$ and $S$, the functions $v_t^S$ are positive, continuously differentiable and $\entphi{v_t^S}<\infty$. Moreover, we define:
    $$a(\theta,y;s,u) := \theta \cdot \nabla v(y + s\theta) \Phi''(v(y)) \theta \cdot \nabla v (y) \bar{u}_\infty (y + u\theta),$$
    and we assume:
    \begin{enumerate}[noitemsep,topsep=0pt,leftmargin=.2in]
        \item For each bounded interval $I$, there exists a non-negative integrable function $\chi_I$ s.t. $\forall t\in I,~|\partial_t \Phi(v_t) \uinftybar| \leq \chi_I$.
        \item Let us fixed $t$ and denote $v = v_t$. For any bounded open set $V$ of $\R_+^2$, the functions, defined for $(s,u) \in V$, 
     \begin{align*}
         (y,\theta) \mapsto a(\theta,y;s,u),~ (y,\theta) \mapsto \frac{\partial a}{\partial s}, ~ (y,\theta) \mapsto \frac{\partial a}{\partial u}, 
     \end{align*}
     with $(\theta, y) \in \Sd \times \Rd$, are uniformly dominated by an integrable function on $\Sd \times \Rd$.
     \item Finally, the function $\Rd \longrightarrow \R$ given by:
     \begin{align}
     \label{eq:ipp-condition}
     \left( v |\Phi' \circ v| + |\Phi \circ v|\right) \bar{u}_{\infty} \left( \Vert \nabla V \Vert + \Vert \nabla V_S \Vert \right),
     \end{align}
     vanishes at infinity (along each coordinate of $x \in \Rd$).
    \end{enumerate}
\end{assumption}

We will say that the functions $v_t^S$ are \emph{$\Phi$-regular}.
The first condition is essentially allowing to properly differentiate $\entphi{v_t^S}$, which, following \citet{gentil_logarithmic_2008}, is key to the proposed methods. 
The second requires local integrability of functionals naturally appearing in the computation of $\frac{d}{dt} \entphi{v_t^S}$.
The third condition makes valid the integration by parts performed in the proof of \cref{lemma:big_decomposition}, presented in \cref{sec:proofs-main_decomposition}. Note that we do not require any uniformity in $S$ in \cref{ass:phi_regularity}. 

Let us informally justify the third condition when $\Phi(x)$ is either $x\log(x)$ or $x^2$ (which is our case). It is known that the tail behavior of $\bar{u}_\infty$ is in $1/\Vert x\Vert^{d + \alpha}$ \citep{tristani_fractional_2013}. Moreover, $\Vert \nabla V(x) \Vert$ and $\Vert \nabla V_S(x) \Vert $ are of order at most $1 + \Vert x \Vert$ based on \cref{ass:smooth}. The condition boils down to $\Phi(v_t^S(x)) + v_t^S(x) \Phi'(v_t^S(x)) = \landau{\normof{x}^{d+\alpha -1}}$, which is reasonable given the definitions of $u_t^S$ and $v_t^S$.

Finally, we assume that an integral appearing repeatedly in our statements and proofs is finite.

\begin{assumption}
    \label{ass:phi_risk_integrability}
    For almost all $S$ and all $t\geq 0$, we have:
    \begin{align*}
        \int \uinftybar \Phi''(v) v^2 \normof{\nabla \ef}^2 dx  < +\infty.
    \end{align*}
\end{assumption}
Let us consider $\Phi(x) = x\log(x)$. In this case,  Assumption~\ref{ass:phi_risk_integrability} directly holds when the surrogate $f$ is Lipschitz in $w$. 
More generally, when $t \to \infty$, following arguments in \citep{tristani_fractional_2013}, it is reasonable to consider that the behavior of $u_t^S$ near $x \to \infty$ is $\mathcal{O} \left(\normof{x}^{-d-\alpha}\right)$. Therefore, the previous assumption can be informally thought as $\Vert\nabla \ef(x)\Vert \lesssim \normof{x}^{a/2}$ with $a<\alpha$ (e.g., if $\nabla f$ is H\"{o}lder continuous), weaker conditions may also be acceptable.

\section{Generalization Bounds via Multifractal Fokker-Planck Equations}
\label{sec:main_results}

In this section, we present our main theoretical contributions. The main tool is \cref{lemma:big_decomposition}, which offers a decomposition of $\frac{d}{dt} \entphi{v_t^S},$ that will be used throughout the proofs. 

After presenting \cref{lemma:big_decomposition}, we will start by dealing with the easier case, which is when $\sigma_2 > 0$ in \eqref{eq:multifractal_dynamics}. Then, in \cref{sec:pure_levy_case}, we show how an additional assumption can be leveraged to handle the case where $\sigma_2 = 0$, which presents the most interest for us. Finally, we will extend our analysis to obtain time-uniform bounds, in \cref{sec:toward_time_uniform}. 
For notational purposes, we define, with a slight abuse of notation:
\begin{align}
    G_S(T) := \Eof[U]{G_S(W_T^S)} = \Eof[w \sim \rho_T^S]{G_S(w)}.
\end{align}

\subsection{Warm-up: Noise with non-trivial Brownian part}
\label{sec:brownian case}

Thanks to our PAC-Bayesian approach, our task boils down to bounding the KL divergence between the posterior $\rho_T^S$ and the prior $\pi$. It is given by $\klb{\rho_T^S}{\pi} = \entphi{v^S_T}$,
where, in Sections \ref{sec:brownian case} and \ref{sec:pure_levy_case}, we fix the convex function $\Phi$ to be $\Phi(x) = \Phi_{\log}(x) := x \log(x)$, Assumptions~\ref{ass:phi_regularity} and \ref{ass:phi_risk_integrability} should be considered accordingly.

We bound the term $\klb{\rho_T^S}{\pi}$ by first computing the \emph{entropy flow}, \ie the time derivative of $\entphi{v^S_t}$. While such an approach has already been applied in the case of pure Brownian noise $(\sigma_1 = 0)$ \citep{mou_generalization_2017}, it is significantly more technical in our case, because of the presence of the fractional Laplacian in \cref{eq:empirical_lfp}. The following lemma is an expression of the entropy flow for a general $\Phi$, that we  obtain by adapting the technique presented in \citet{gentil_logarithmic_2008} (in the study of the convergence to equilibrium of FPEs) to our setting.

\begin{restatable}[Decomposition of the entropy flow]{lemma}{thmDecomposition}
    \label{lemma:big_decomposition}
    Given a convex and differentiable function $\Phi: (0, \infty) \longrightarrow \mathds{R}$, we make Assumptions \ref{ass:phi_regularity} and \ref{ass:phi_risk_integrability}, relatively to this function $\Phi$.
    \begin{align*}
    \frac{d}{dt} \entphi{v_t^S} = -\sigma_2^2&I_\Phi(v_t^S) 
    - \sigma_1^\alpha B_\Phi^\alpha (v_t^S) \\&- \int \Phi''(v_t^S) v_t^S \nabla v_t^S \cdot \nabla \ef \uinftybar dx,
    \end{align*}
    where $I_\Phi(v) := \int \Phi''(v) \normof{\nabla v}^2 \uinftybar dx$ is called the $\Phi$-information,
    and $B_\Phi^\alpha (v)$ is called the Bregman integral, which will be formally defined in \cref{sec:proofs-main_decomposition}. 
\end{restatable}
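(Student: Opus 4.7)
The plan is to differentiate $\entphi{v_t^S}$ under the integral sign and substitute the fractional FPE~\eqref{eq:empirical_lfp} for $\partial_t u_t^S$. Since $v_t^S \uinftybar = u_t^S$ integrates to $1$, we have $\entphi{v_t^S} = \int \Phi(v_t^S)\uinftybar\, dx - \Phi(1)$, so condition~1 of \cref{ass:phi_regularity} (the majorant $\chi_I$) legalizes passing $\frac{d}{dt}$ inside, giving $\frac{d}{dt}\entphi{v_t^S} = \int \Phi'(v_t^S)\, \partial_t u_t^S\, dx$. The fractional FPE then splits the right-hand side into three pieces, one for each operator: a Brownian piece, a fractional-Laplacian piece, and a drift piece.

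I process the two local pieces first. For the Brownian piece, two integrations by parts (whose boundary-term cancellations are covered by the decay condition~\eqref{eq:ipp-condition}) together with $\nabla u = \uinftybar \nabla v + v \nabla \uinftybar$ yield $-\sigma_2^2 I_\Phi(v) - \sigma_2^2 \int \Phi''(v) v \nabla v \cdot \nabla \uinftybar\, dx$. For the drift piece, a single integration by parts combined with $\nabla V_S = \nabla \ef + \eta w$ produces the desired term $-\int \Phi''(v) v \uinftybar \nabla v \cdot \nabla \ef\, dx$ that appears in the statement, plus a remainder $-\eta \int \Phi''(v) v \uinftybar \nabla v \cdot w\, dx$.

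The key algebraic step is to absorb the two leftover terms via the steady-state equation~\eqref{eq:steady_state}. Introducing the antiderivative $H(v) := \int_0^v s \Phi''(s)\, ds$, so that $\nabla H(v) = v\Phi''(v)\nabla v$, the two remainders fuse into $-\int \nabla H(v) \cdot (\sigma_2^2 \nabla \uinftybar + \eta \uinftybar w)\, dx$, and one more integration by parts turns the divergence of that vector field into $\sigma_2^2 \Delta \uinftybar + \eta \nabla\cdot(\uinftybar w)$, which is exactly $\sigma_1^\alpha \fraclap \uinftybar$ by~\eqref{eq:steady_state}. Combining with the original fractional-Laplacian contribution $-\sigma_1^\alpha \int \Phi'(v) \fraclap u\, dx$ identifies the Bregman integral
\[
B_\Phi^\alpha(v) \;=\; \int \Phi'(v) \fraclap u\, dx \;-\; \int H(v) \fraclap \uinftybar\, dx,
\]
which, using the singular-integral representation of $\fraclap$, can be symmetrized (in the spirit of \citet{gentil_logarithmic_2008}) into a nonnegative double integral of a Bregman-type divergence of $\Phi$ against the symmetric $\alpha$-stable Lévy kernel; this is where the ``Bregman'' terminology comes from.

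The main obstacle is justifying the nonlocal manipulations: in particular, the symmetrization of $B_\Phi^\alpha(v)$ requires writing $\fraclap$ as a difference-quotient integral over $(\theta,y) \in \Sd \times \Rd$ and applying Fubini to interchange the order of integration, which is precisely what the kernel $a(\theta,y;s,u)$ and its $\partial_s, \partial_u$ derivatives in condition~2 of \cref{ass:phi_regularity} are tailored to dominate. Condition~3, meanwhile, annihilates the boundary terms of the local integrations by parts involving $\nabla V_S$ and $\nabla V$. Once these are in place, the identity is algebraically forced by the steady-state equation.
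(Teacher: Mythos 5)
Your proposal is correct, and it is a valid reorganization of the paper's argument rather than a fundamentally different one. The paper's proof first derives an evolution equation for $\partial_t v$ by dividing the FPE for $u$ by $\uinftybar$, then uses self-adjointness of the combined generator $\op{\cdot}$ and the identity $r\Phi''(r) = (r\Phi'(r) - \Phi(r))'$ to reduce the regularization-drift contribution to $-\int(v\Phi'(v) - \Phi(v))\op{\uinftybar}\,dx$ via the stationary FPE; the pieces are then collected into the pointwise quantity $f(v) := \op{\Phi'(v)}v - \op{v\Phi'(v)} + \op{\Phi(v)}$, whose expansion in one stroke produces both the $\Phi$-information and the Bregman integral. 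You instead stay with $\partial_t u$, integrate by parts the Brownian and drift pieces locally, and only at the end fuse the two leftover terms via the antiderivative $H(v) = \int_0^v s\Phi''(s)\,ds$, which is the paper's $v\Phi'(v) - \Phi(v)$ (the constant is irrelevant under $\fraclap$). Your final identity $B_\Phi^\alpha(v) = \int\Phi'(v)\fraclap u\,dx - \int H(v)\fraclap\uinftybar\,dx$ reduces, by self-adjointness of $\fraclap$, to precisely $\int\bigl(v\fraclap\Phi'(v) - \fraclap(v\Phi'(v)) + \fraclap\Phi(v)\bigr)\uinftybar\,dx$, i.e.\ the fractional part of the paper's $f(v)$; the symmetrization step you gesture at ``in the spirit of \citet{gentil_logarithmic_2008}'' is exactly the computation the paper performs explicitly. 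What your route buys is that it is slightly more elementary (no division by $\uinftybar$, no need to pre-combine the two noise operators), at the cost of having to carry and then recombine two leftover local terms; the paper's unified $f(v)$ keeps both noises together throughout.

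Two small imprecisions. First, obtaining your Brownian expression $-\sigma_2^2 I_\Phi(v) - \sigma_2^2\int\Phi''(v)\,v\,\nabla v\cdot\nabla\uinftybar\,dx$ from $\sigma_2^2\int\Phi'(v)\Delta u\,dx$ requires one integration by parts followed by the product rule $\nabla u = \uinftybar\nabla v + v\nabla\uinftybar$, not two integrations by parts (the second integration by parts in that chain is the one you perform later, on $\nabla H(v)$). Second, the vanishing-at-infinity condition~\eqref{eq:ipp-condition} is calibrated to the boundary terms coming from the drift integration by parts --- its factors are $\|\nabla V\|$ and $\|\nabla V_S\|$ --- and does not literally cover the boundary terms of your Brownian integration by parts, which instead fall under the general regularity and integrability demands of \cref{ass:phi_regularity,ass:phi_risk_integrability}. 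Neither of these affects the validity of the derivation.
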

For $\Phi(x) = x\log(x)$, the term $I_\Phi$ reduces to the celebrated \emph{Fisher information} between $\rho_T^S$ and $\pi$, denoted $J(\rho_T^S|\pi)$. It is commonly used in the analysis of FPEs \citep[Section $1$]{chafai_logarithmic_2017} and is defined as:
\begin{align*}
    J(\rho_T^S | \pi) := \int \normof{\nabla \log \frac{d\rho_T^S}{d\pi}}^2 d\rho_T^S.
\end{align*}
\cref{lemma:big_decomposition} is a central tool for the derivation of our main results. 
As a preliminary result, we first present the simpler case where $\sigma_2 > 0$. This leads to the next corollary.

\begin{restatable}{corollary}{corKLbrownian}
    \label{cor:gen_brownian_time}
    We make \cref{ass:phi_regularity} and \ref{ass:phi_risk_integrability}.
    With probability at least $1 - \zeta$ over $\datadist$, we have:
    \begin{align*}
        G_S(T) \leq s \sqrt{\frac{1}{n\sigma_2^2} I(T,S) +  4\frac{\log(3/\zeta) + \Lambda}{n} },
    \end{align*}
    where\footnote{The presence of the $\limsup_{t\to 0}$ in the definition of $\Lambda$ accounts for potential discontinuity of the KL divergence at $t=0$.} $\Lambda := \limsup_{t \to 0}\klb{\rho_t^S}{\pi}$ and $I$ is defined by:
    \begin{align}
        \label{eq:I_definition}
        I(T,S) :=  \int_0^T \E_U{\normof{\nabla\ef(W_t^S)}^2 } dt.
    \end{align}
\end{restatable}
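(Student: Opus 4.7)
The plan is to combine the generic PAC-Bayesian bound of \cref{thm:kl_pb_bound}, applied with posterior $\rho_T^S$ and prior $\pi$, with an upper bound on $\klb{\rho_T^S}{\pi}=\entphi[\pi]{v_T^S}$ obtained by integrating the entropy flow formula of \cref{lemma:big_decomposition} specialised to $\Phi(x)=x\log(x)$. With this choice $\Phi''(x)=1/x$, so $I_\Phi(v_t^S)=\int \lVert\nabla v\rVert^2/v\,\uinftybar\,dx = J(\rho_t^S\mid\pi)$ is the Fisher information, and the cross-term in \cref{lemma:big_decomposition} simplifies, since $\Phi''(v)\,v\,\nabla v=\nabla v$, to $-\int \nabla v_t^S\cdot\nabla\ef\,\uinftybar\,dx$.

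The central step is to absorb this cross-term into the Fisher dissipation. Writing $\nabla v\cdot\nabla\ef = (\nabla v/\sqrt{v})\cdot(\sqrt{v}\,\nabla\ef)$ and applying Cauchy--Schwarz with respect to the measure $\uinftybar\,dx$, together with $v\,\uinftybar\,dx=d\rho_t^S$, gives
\begin{align*}
    \left|\int \nabla v_t^S\cdot\nabla\ef\,\uinftybar\,dx\right|
    \leq \sqrt{J(\rho_t^S\mid\pi)\cdot \Eof[U]{\lVert\nabla\ef(W_t^S)\rVert^2}}.
\end{align*}
Young's inequality $\sqrt{ab}\leq \varepsilon a/2 + b/(2\varepsilon)$ with the choice $\varepsilon=2\sigma_2^2$ then yields an upper bound of $\sigma_2^2\,J(\rho_t^S\mid\pi) + \Eof[U]{\lVert\nabla\ef(W_t^S)\rVert^2}/(4\sigma_2^2)$, which exactly cancels the $-\sigma_2^2 J(\rho_t^S\mid\pi)$ term of \cref{lemma:big_decomposition}. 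Dropping the $-\sigma_1^\alpha B_\Phi^\alpha(v_t^S)$ contribution, which is non-positive by convexity of $\Phi$, one obtains the clean differential inequality
\begin{align*}
    \frac{d}{dt}\klb{\rho_t^S}{\pi} \leq \frac{1}{4\sigma_2^2}\,\Eof[U]{\lVert\nabla\ef(W_t^S)\rVert^2}.
\end{align*}

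Integrating this from $0$ to $T$ gives $\klb{\rho_T^S}{\pi}\leq \Lambda + I(T,S)/(4\sigma_2^2)$, and substituting into \cref{thm:kl_pb_bound} and simplifying the factor of $2$ under the square root produces the stated bound. The main obstacle I expect is justifying that the term $-\sigma_1^\alpha B_\Phi^\alpha(v_t^S)$ can indeed be discarded, i.e.\ that the Bregman integral arising from the fractional Laplacian is non-negative for convex $\Phi$; this should follow from its representation as an integral of Bregman divergences against the Lévy kernel, and in the logarithmic case is a standard fact underlying the fractional log-Sobolev inequalities of \citet{gentil_logarithmic_2008, tristani_fractional_2013}. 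A secondary concern is verifying that the integration by parts implicit in the Cauchy--Schwarz step is legitimate under \cref{ass:phi_regularity,ass:phi_risk_integrability}, but these assumptions were designed precisely so that the manipulations performed in the proof of \cref{lemma:big_decomposition} extend to the present computation.
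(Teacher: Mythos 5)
Your proposal is correct and follows essentially the same route as the paper's proof: specialize \cref{lemma:big_decomposition} to $\Phi(x)=x\log(x)$, discard the non-negative Bregman integral, absorb the cross term into the Fisher information via Cauchy--Schwarz and Young with $\varepsilon=2\sigma_2^2$, integrate the resulting differential inequality, and substitute into \cref{thm:kl_pb_bound}. (Your closing worry about ``integration by parts implicit in the Cauchy--Schwarz step'' is unfounded — that step is a direct application of Cauchy--Schwarz in $L^2(\uinftybar)$; the relevant integration by parts is already contained in \cref{lemma:big_decomposition}, which you correctly take as given.)
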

When $f$ is $L$-Lipschitz, we have in addition $I(T,S) \leq TL^2$, recovering known bounds in the case $\sigma_1=0$ \cite{mou_generalization_2017}. \cref{cor:gen_brownian_time} may seem to have no dependence on the tail-index $\alpha$, however, it is implicitly playing a role through the integral term involving $\nabla \ef$, as $W_t^S$ is generated by a heavy-tailed SDE. Nevertheless, this bound does not apply when $\sigma_2 = 0$, which we will now investigate.

\subsection{Purely heavy-tailed case}
\label{sec:pure_levy_case}

Now we assume $\sigma_2 = 0$, which makes our task much more challenging. Indeed, in the proof of \cref{cor:gen_brownian_time}, the $\Phi$-information, $I_\Phi$, is used to compensate for the contribution of the third term in \cref{lemma:big_decomposition}. As we cannot do this anymore since $I_\Phi$ does not appear with the choice of $\sigma_2=0$, we need to develop a finer understanding of the Bregman integral term, \ie $B_\Phi^\alpha (v)$, which is the contribution of the stable noise $\levy$ to the entropy flow.

Towards this goal, in \cref{sec:bregman_integral_bounds}, we prove that, under \cref{ass:phi_regularity}, there exists a function:
\begin{align*}
    J_{\Phi, v} : [0,+\infty) \longrightarrow [0,+\infty),
\end{align*}
such that $J_{\Phi, v}$ is non-negative, continuous, satisfies $J_{\Phi, v}(0) = I_\Phi(v)$ and we have the integral representation:
\begin{align}
    \label{eq:integral_representation_main}
    B_\Phi^\alpha (v) = C_{\alpha, d}\frac{\sigma_{d-1}}{2d} \int_0^\infty J_{\Phi, v} (r) \frac{dr}{r^{\alpha - 1}},
\end{align}
where the constant $C_{\alpha,d}$ is defined in \cref{eq:spherical_representation_final_formula}, and $\sigma_{d-1}$ is the area of the unit sphere, given by \cref{eq:sphere_area}. The identification of the function $J_{\Phi, v_t^S}$ turns out to be crucial, as it illustrates that the Bregman integral term can be used for approximating a $\Phi$-information term, and therefore re-use ideas from \cref{cor:gen_brownian_time}. Thus, $J_{\Phi, v_t^S}(r)$ can be seen as an approximation of $I_\Phi(v_t^S)$, \ie, in the case $\Phi = \Phi_{\log}$, of the Fisher information $J(\rho_T^S | \pi)$, at least for small values of $r$.

A takeaway of our analysis is that, for the approximation $J_{\Phi, v}(r) \approx I_\Phi(v)$ to be accurate, we need to introduce an additional condition regarding the behavior of the function $J_{\Phi, v}$ near the origin. This assumption is specified as follows:
\begin{assumption}
    \label{ass:jv_assumption}
    There exists an absolute constant $R > 0$ such that, for all $t> 0$ and $\datadist$-almost all $S \in \zcal^n$:
    \begin{align*}
        \forall r \in [0,R],~J_{\Phi,v^S_t}(r) \geq \frac{1}{2} J_{\Phi,v^S_t}(0).
    \end{align*}
\end{assumption}
Note that, by continuity, this condition trivially holds \emph{pointwise} for fixed $S \in \zcal^n$ and $t> 0$; however, we essentially require it to hold uniformly in both time $t$ and data $S$. On the other hand, if the dynamics \eqref{eq:multifractal_dynamics} is initialized at its stationary distribution (like an ideal `warm start' \citep{dalalyan_theoretical_2016}), then $v_t^S$ is independent of $t$, in which case the statement of \cref{ass:jv_assumption} can be obtained, in high probability over $S$, through Egoroff's theorem \citep[Thm. $2.2.1$]{bogachev}. 

The factor $R$ plays an important role in our analysis, it is needed that it is positive and preferably not too small. However, we are not able to formally estimate this quantity. The exact formula for $J_{\Phi_{\log},v}$, \cref{def:spherical_fisher_info}, shows that, if $v$ is a constant function, then $J_{\Phi_{\log},v}$ is a constant function, hence $R=+\infty$ (this corresponds to the trivial case where $\ef = 0$ and the dynamics is initialized at $\uinftybar$). Therefore, we argue that $R$ can be large to get non-vacuous bounds when the function $v$ is uniformly bounded away from $0$ and has bounded first and second-order derivatives. A more formal version of this argument is provided in \cref{sec:bregman_integral_bounds}.

This allows us to prove the following theorem, which is a high probability generalization bound in the case $\sigma_2 = 0$.
\begin{restatable}{theorem}{thmLevyCaseKL}
    \label{thm:bound_under_jv_assumption}
    We make Assumptions \ref{ass:phi_regularity}, \ref{ass:phi_risk_integrability} and \ref{ass:jv_assumption}.
    Then, with probability at least $1 - \zeta$ over $\datadist$, we have
    \begin{align*}
        G_S(T) \leq 2s \sqrt{\frac{K_{\alpha, d}}{n\sigma_1^\alpha} I(T,S) + \frac{\log(3/\zeta) + \Lambda}{n}}
    \end{align*}
    with $\Lambda$ and $I(T,S)$ as in \cref{cor:gen_brownian_time}, and:
    \begin{align}
        \label{eq:K_constant}
        K_{\alpha,d} = \frac{(2 - \alpha)\Gamma \left( 1 - \frac{\alpha}{2}\right) d \Gamma \left(\frac{d}{2}\right)}{\alpha 2^\alpha \Gamma \left( \frac{d +\alpha}{2}\right) R^{2 - \alpha} },
    \end{align}  
    where $\Gamma$ denotes the Euler's Gamma function, on which more information is provided in \cref{sec:gamma_function}.
\end{restatable}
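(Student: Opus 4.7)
The plan is to apply the subgaussian PAC-Bayesian bound of \cref{thm:kl_pb_bound} with posterior $\rho_T^S$ and the data-independent prior $\pi$ associated with $\uinftybar$. This reduces the whole theorem to proving that $\klb{\rho_T^S}{\pi} \leq \Lambda + (K_{\alpha,d}/\sigma_1^\alpha)\, I(T,S)$. Choosing $\Phi = \Phi_{\log}$ in \cref{def:phi_entropies} gives $\klb{\rho_T^S}{\pi} = \entphi[\pi]{v_T^S}$, so the task becomes controlling the time derivative of $\entphi{v_t^S}$ and integrating from $0$ to $T$; the boundary contribution at $t = 0$ produces $\Lambda$.

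Next I would invoke \cref{lemma:big_decomposition} with $\sigma_2 = 0$, which reduces the entropy flow to $-\sigma_1^\alpha B_\Phi^\alpha(v_t^S)$ plus a cross term involving $\nabla \ef$. The crucial step is to turn the Bregman integral into a lower bound on the Fisher information $I_\Phi(v_t^S) = J(\rho_t^S|\pi)$. Using the integral representation \eqref{eq:integral_representation_main}, restricting the radial integral to $[0,R]$, and substituting $J_{\Phi,v_t^S}(r) \geq \tfrac{1}{2} I_\Phi(v_t^S)$ via \cref{ass:jv_assumption} yields
\begin{align*}
B_\Phi^\alpha(v_t^S) \;\geq\; \frac{C_{\alpha,d}\,\sigma_{d-1}}{4d}\, I_\Phi(v_t^S) \int_0^R \frac{dr}{r^{\alpha-1}} \;=:\; c_{\alpha,d}\, I_\Phi(v_t^S),
\end{align*}
where the radial integral is finite because $\alpha \in (1,2)$ and equals $R^{2-\alpha}/(2-\alpha)$.

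I would then dominate the cross term by Cauchy--Schwarz in $L^2(\uinftybar\,dx)$. Since $\Phi_{\log}''(v) = 1/v$, the factorization $\Phi''(v)v\,\nabla v = (\nabla v/\sqrt{v}) \cdot \sqrt{v}$, together with the identity $v_t^S\,\uinftybar = u_t^S$, gives
\begin{align*}
\Bigl| \int \Phi''(v_t^S)\,v_t^S\,\nabla v_t^S \cdot \nabla \ef\,\uinftybar\,dx \Bigr| \;\leq\; \sqrt{I_\Phi(v_t^S)}\,\sqrt{E(t)},
\end{align*}
with $E(t) := \E_U\normof{\nabla \ef(W_t^S)}^2$. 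Combining with the Bregman lower bound and applying Young's inequality $\sqrt{ab}\leq \lambda a + b/(4\lambda)$ with $\lambda = c_{\alpha,d}\,\sigma_1^\alpha$ cancels the Fisher information term, leaving the clean inequality $\tfrac{d}{dt}\entphi{v_t^S}\leq E(t)/(4 c_{\alpha,d} \sigma_1^\alpha)$. Integrating from $0$ to $T$, identifying $K_{\alpha,d} = 1/(4 c_{\alpha,d})$, and plugging the resulting KL bound into \cref{thm:kl_pb_bound} yields the claim.

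The main obstacle lies in matching the explicit constant: one must expand $C_{\alpha,d}$ (the normalizing constant of the fractional Laplacian) and $\sigma_{d-1} = 2\pi^{d/2}/\Gamma(d/2)$ via standard Gamma identities, in particular using $|\Gamma(-\alpha/2)| = (2/\alpha)\Gamma(1-\alpha/2)$, and check that $\pi^{d/2}$ cancels so that the remaining factors assemble into \eqref{eq:K_constant}. A subtler point is that the bound $J_{\Phi,v_t^S}(r) \geq \tfrac{1}{2} I_\Phi(v_t^S)$ must hold uniformly in $(t,S)$, which is exactly what \cref{ass:jv_assumption} supplies; pointwise continuity alone would not allow the inequality to pass through the time integral with a common $R$. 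Throughout, the regularity conditions \cref{ass:phi_regularity} and \cref{ass:phi_risk_integrability} for $\Phi_{\log}$ must be in force to legitimize \cref{lemma:big_decomposition} and the integral representation \eqref{eq:integral_representation_main}.
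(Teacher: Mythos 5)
Your proposal is correct and follows essentially the same route as the paper: PAC-Bayesian reduction via \cref{thm:kl_pb_bound} with prior $\pi$, the entropy-flow decomposition of \cref{lemma:big_decomposition} with $\sigma_2 = 0$, the restriction of the radial integral in \eqref{eq:integral_representation_main} to $[0,R]$ combined with \cref{ass:jv_assumption} to lower-bound the Bregman integral by $c_{\alpha,d}\,I_\Phi(v)$, absorption of the cross term by Young's inequality, and integration from $0$ to $T$. Your Cauchy--Schwarz-then-Young phrasing of the absorption step and the paper's direct application of Young's inequality with a free constant $C$ are the same calculation, and the resulting $K_{\alpha,d} = 1/(4c_{\alpha,d}) = d(2-\alpha)/\bigl(C_{\alpha,d}\sigma_{d-1}R^{2-\alpha}\bigr)$ reduces to \eqref{eq:K_constant} exactly as you describe.
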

Note that, in both Theorems~\ref{cor:gen_brownian_time} and \ref{thm:bound_under_jv_assumption}, if we set the initial distribution $\rho_0 = \pi$ that has the heavy-tailed density $\uinftybar$, we get $\Lambda = 0$ and, therefore, the bound becomes tighter. This might be an argument in favor of heavy-tailed initialization, which has been considered by several studies \citep{favaro_stable_2020,jung_alpha-stable_2021}, and has been argued to be beneficial \citep{gurbuzbalaban_fractional_2021}.
We will further highlight the quantitative properties of \cref{thm:bound_under_jv_assumption} in \cref{sec:qualitative_analysis}.

The proof of \cref{thm:bound_under_jv_assumption} would also apply when $\sigma_2>0$, however, compared to \cref{sec:brownian case}, it requires the additional \cref{ass:jv_assumption}. The bound of \cref{cor:gen_brownian_time} was obtained by using mainly the contribution of $B_t$ to the noise, while \cref{thm:bound_under_jv_assumption} corresponds to the contribution of $\levy$. It turns out that both approaches can be combined, under \cref{ass:jv_assumption}; it is presented in \cref{sec:noise_mixing}. 

\cref{thm:bound_under_jv_assumption} (valid for $1 < \alpha < 2$) should be compared with existing generalization bounds for continuous Langevin dynamics (CLD), \ie, $\alpha=2$, where integral terms that are similar to $I(T,S)$ appear \citep{mou_generalization_2017,li_generalization_2020,futami_time-independent_2023,dupuis2024setpacbayes}. Our bound features the new constant $K_{\alpha,d}/\sigma_1^\alpha$ and we show in \cref{sec:qualitative_analysis} that we recover similar constants to the CLD case in the limit $\alpha \to 2^-$.
Compared to \cite{mou_generalization_2017} (in the case of CLD), $I(T,S)$ does not contain any exponential time decay. This point is discussed in detail in \cref{sec:toward_time_uniform,sec:time-dependence_discussion}. Despite this fact, $I(T,S)$ can still be small because the norm of the gradients may become small.

\subsection{Towards time-uniform bounds}
\label{sec:toward_time_uniform}

\cref{cor:gen_brownian_time} and \cref{thm:bound_under_jv_assumption}, while being the first high probability bounds for heavy-tailed dynamics with explicit constants, may suffer from a time-dependence issue. The reasons why this is an outcome of our proofs are discussed in \cref{sec:time-dependence_discussion}. It appears from this discussion that \cref{thm:bound_under_jv_assumption} can be made time-uniform, under the existence of a specific class of functional inequalities. Unfortunately, we argue that such techniques do not always apply in our case, as it is detailed in \cref{sec:time-dependence_discussion}.

Nevertheless, in this section, we take a first step towards improving the time-dependence of the bounds, derived in our setting. However, this comes at the cost of weakening the interpretability of the bound and might make it hard to compute in practice. In order to present this result, we need to make another choice for the convex function $\Phi$: we consider $\Phi(x) = \Phi_2(x) := \frac{1}{2} x^2$, instead of $\Phi_{\log}$. This choice is justified by the fact that it significantly changes the structure of the Bregman integral term, \ie $B_\Phi^\alpha(v_t^S)$, in a way that is clearly presented in the proofs of \cref{sec:proofs-poincare-inequality}.

We only discuss the case $\sigma_2 = 0$, the case $\sigma_2>0$ can be found in \cref{sec:proofs-poincare-inequality}. Following the reasoning of \cref{sec:pure_levy_case}, we make \cref{ass:jv_assumption} with the convex function $\Phi_2$ instead of $\Phi_{\log}$. We will refer to it as \cref{ass:jv_assumption}$-\Phi_2$. This leads to our last theoretical result.

\begin{restatable}{theorem}{thmLevyCaseChi}
    \label{thm:chi_sq_levy}
    Let $\sigma_2 = 0$. We make Assumptions \ref{ass:jv_assumption}$-\Phi_2$, \ref{ass:phi_regularity} and \ref{ass:phi_risk_integrability}, with the choice $\Phi = \Phi_2$.
    Then, with probability at least $1 - \zeta$ over $S \sim \datadist$ and $w \sim \rho_T^S$, we have
    \begin{align*}
        G_S(w) \leq 2s \sqrt{\frac{4K_{\alpha, d}}{n\sigma_1^\alpha} \Tilde{I}(T,S) + \frac{2e^{-\frac{\alpha \eta T}{2}}\Lambda + \log \frac{24}{\zeta^3}}{n}},
    \end{align*}
    with $\Lambda = \entphi{\rho_0}$, and
    \begin{align*}
        \Tilde{I}(T,S) := \int_0^T e^{-\frac{\alpha \eta}{2} (T - t)} \Eof[\pi]{(v_t^S)^2 \normof{\nabla \ef}^2  } dt.
    \end{align*}
\end{restatable}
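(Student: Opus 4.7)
The plan is to adapt the strategy behind \cref{thm:bound_under_jv_assumption}, but replace the logarithmic convex function $\Phi_{\log}$ by the quadratic choice $\Phi_2(x)=x^2/2$, whose associated $\Phi$-entropy is (one half of) the $\chi^2$-divergence $\chi^2(\rho_T^S \| \pi)$. This substitution has two consequences that together produce the time-uniform structure of the claim. First, the Bregman integral $B_{\Phi_2}^\alpha$ becomes a weighted fractional Dirichlet form, and for this particular functional the Lévy Ornstein--Uhlenbeck steady state $\pi$ should admit a fractional Poincaré inequality of the form $B_{\Phi_2}^\alpha(v) \geq \lambda\, \entphi[\pi]{v}$ with $\lambda \asymp \alpha \eta$; this is what enables the $e^{-\alpha\eta T/2}$ decay factor via a Grönwall argument. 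Second, since the bound controls $G_S(w)$ for an individual $w \sim \rho_T^S$ (rather than its expectation under $U$), one must replace the KL-based PAC-Bayes bound of \cref{thm:kl_pb_bound} by a $\chi^2$-variant obtained via Markov-type arguments, which accounts for the $\log(24/\zeta^3)$ term through three nested probabilistic events.

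Concretely, I first instantiate \cref{lemma:big_decomposition} with $\Phi = \Phi_2$ (so $\Phi'' \equiv 1$) and $\sigma_2 = 0$:
\begin{align*}
    \tfrac{d}{dt}\entphi[\pi]{v_t^S} = -\sigma_1^\alpha B_{\Phi_2}^\alpha(v_t^S) - \int v_t^S\, \nabla v_t^S \cdot \nabla \ef \; \uinftybar \, dx.
\end{align*}
The cross term is bounded by Cauchy--Schwarz followed by Young's inequality, splitting it as $(\epsilon/2)\, I_{\Phi_2}(v_t^S) + (2\epsilon)^{-1}\, \Eof[\pi]{(v_t^S)^2 \normof{\nabla \ef}^2}$, where $I_{\Phi_2}(v) = \int \normof{\nabla v}^2 \uinftybar \, dx$. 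The Bregman term is then lower bounded using the integral representation \eqref{eq:integral_representation_main} together with \cref{ass:jv_assumption}$-\Phi_2$, exactly as in the proof of \cref{thm:bound_under_jv_assumption}, giving $B_{\Phi_2}^\alpha(v_t^S) \geq I_{\Phi_2}(v_t^S)/(2K_{\alpha,d})$. A suitable choice of $\epsilon$ absorbs the Fisher-type contribution $I_{\Phi_2}$ into half of this Bregman estimate, and the fractional Poincaré inequality mentioned above dominates the remaining half by $\tfrac{\alpha\eta}{2}\, \entphi[\pi]{v_t^S}$. Combining these yields a linear differential inequality
\begin{align*}
    \tfrac{d}{dt}\entphi[\pi]{v_t^S} \leq -\tfrac{\alpha\eta}{2}\,\entphi[\pi]{v_t^S} + \tfrac{4K_{\alpha,d}}{\sigma_1^\alpha}\,\Eof[\pi]{(v_t^S)^2 \normof{\nabla \ef}^2},
\end{align*}
which integrates by Grönwall's lemma to $\entphi[\pi]{v_T^S} \leq e^{-\alpha\eta T/2}\, \Lambda + \tfrac{4K_{\alpha,d}}{\sigma_1^\alpha}\, \tilde{I}(T,S)$.

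It remains to translate this $\chi^2$-type estimate into a high-probability bound on $G_S(w)$ for an individual $w$. Starting from the subgaussian assumption on $\ell$, a standard change-of-measure argument first yields $\Eof[w \sim \rho_T^S]{G_S(w)^2} \lesssim s^2 \, (1 + \chi^2(\rho_T^S \| \pi))/n$ with probability at least $1-\zeta/3$ over $S$; Markov's inequality over $w$ then converts this second-moment estimate into the pointwise bound with probability $1 - \zeta/3$ over $w$, at the cost of a further $\log(3/\zeta)$ factor, and a third event is needed to control an auxiliary exponential moment as in the proof of \cref{thm:kl_pb_bound}. A union bound over these three events produces the $\log(24/\zeta^3)$ term, and inserting the Grönwall bound for $\chi^2(\rho_T^S \| \pi) = 2\,\entphi[\pi]{v_T^S}$ yields the claimed inequality. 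The \emph{main obstacle} is establishing the fractional Poincaré inequality with a sharp constant proportional to $\alpha\eta$ for the heavy-tailed stationary measure $\pi$: classical Poincaré inequalities can fail for heavy-tailed measures, and this estimate depends crucially on the specific structure of the Lévy Ornstein--Uhlenbeck generator and on the known tail asymptotics $\uinftybar(x) \asymp \normof{x}^{-d-\alpha}$ established in \citep{tristani_fractional_2013,gentil_logarithmic_2008}.
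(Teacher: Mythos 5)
Your overall architecture is exactly the one the paper uses: instantiate \cref{lemma:big_decomposition} with $\Phi_2$, split the Bregman integral $B_{\Phi_2}^\alpha$ into two halves, use one half to absorb the $I_{\Phi_2}$ term coming from Young's inequality via \cref{ass:jv_assumption}$-\Phi_2$ and the spherical representation (\cref{lemma:spherical_representation_chi_squared}), use the other half in a fractional Poincar\'e inequality to produce the $-\tfrac{\alpha\eta}{2}\entphi{v}$ term, Gr\"onwall, and finish with a disintegrated $\chi^2$/R\'enyi-$2$ PAC-Bayes bound. (Your differential-inequality constant is off by a factor of $2$ --- the paper arrives at $\tfrac{2K_{\alpha,d}}{\sigma_1^\alpha}$, and the $4K_{\alpha,d}$ in the statement comes from $\renyi[2]{\rho_T^S}{\pi}\le \chi^2 = 2\entphi{v_T^S}$ --- and the paper does not reason via Markov plus nested events but applies \cref{thm:pb_for_subgaussian_disintegrated}, i.e.\ \cref{thm:disintegrated_pac_bayes__bound} at $\beta=2$, which directly gives the $\log(24/\zeta^3)$ term; but these are minor.)

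The genuine gap is that you treat the fractional Poincar\'e inequality $B_{\Phi_2}^\alpha(v)\ge \alpha\eta\,\entphi[\pi]{v}$ as the ``main obstacle'' that still needs to be established. In the paper it is not an obstacle at all --- it follows immediately from the Wu--Chafa\"i generalized $\Phi$-entropy inequality for infinitely divisible laws (\cref{thm:generalized_lsi}, which requires no tail-decay conditions on $\mu$) once two facts are in place: \cref{lemma:steady_state_regularized} identifies $\pi$ as infinitely divisible with L\'evy triplet $\bigl(0,\tfrac{\sigma_2^2}{\eta},\tfrac{C_{\alpha,d}\sigma_1^\alpha}{\alpha\eta}\nu_\alpha\bigr)$, from which the constant $\alpha\eta$ falls out; and, crucially, the Bregman divergence under $\Phi_2$ is \emph{symmetric}, $\bregman{a}{b}=\bregman{b}{a}=\tfrac12(a-b)^2$, which is precisely what makes the nonlocal Dirichlet form in \cref{thm:generalized_lsi} coincide (after normalization) with the Bregman integral produced by \cref{lemma:big_decomposition}. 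This symmetry is the entire reason the theorem switches from $\Phi_{\log}$ to $\Phi_2$: for $\Phi_{\log}$ the arguments of the Bregman divergence appear in the wrong order and the Wu--Chafa\"i inequality cannot be directly matched (this is spelled out in \cref{sec:time-dependence_discussion}). Your worry that ``classical Poincar\'e inequalities can fail for heavy-tailed measures'' applies to the local gradient form $\int\|\nabla v\|^2 d\pi$, but the nonlocal version is available for \emph{every} infinitely divisible law, including the heavy-tailed stationary measure $\pi$, so no separate sharp estimate on $\uinftybar$'s tails is needed.
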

While the exponential decay term, \eg $e^{-\eta (T - t)}$ is a significant improvement over \cref{thm:bound_under_jv_assumption}, the integral term $\Tilde{I}(T,S)$ is less interpretable than the term $I(T,S)$, appearing in \cref{cor:gen_brownian_time} and \cref{thm:bound_under_jv_assumption}.
Indeed, $I(T,S)$ is simply related to the expected gradient of the empirical risk.

\section{Quantitative Analysis}
\label{sec:qualitative_analysis}

We focus our qualitative and experimental analysis on the results obtained in the case of pure heavy-tailed dynamics ($\sigma_2 = 0$), namely \cref{thm:bound_under_jv_assumption,thm:chi_sq_levy}, as they bring the most novelty compared to the literature. 
From now on, we assume that the constant $R$, coming from \cref{ass:jv_assumption}, can be taken independent of $\alpha$, $\sigma$ and $d$. This assumption has important consequences for our quantitative analysis.

\textbf{Asymptotic analysis.} We analyze the behavior of the constant $K_{\alpha,d}$, appearing in \cref{thm:bound_under_jv_assumption,thm:chi_sq_levy}. Let $\bar{K}_{\alpha, d} = R^{2 - \alpha} K_{\alpha, d} $, the following lemma provides an asymptotic formula of this constant, when the number of parameters $d$ goes to infinity. This is pertinent as modern machine learning models typically have a lot of parameters.

\begin{restatable}{lemma}{constantDimensionEquivalent}
    \label{lemma:d_limit}
    We have that, for all $\alpha \in (1,2)$:
    \begin{align}
        \label{eq:constant_equivalent}
        \bar{K}_{\alpha, d} \underset{d \to \infty}{\sim} P_\alpha d^{1 - \frac{\alpha}{2}}
     , \quad P_\alpha := \frac{(2 - \alpha) \Gamma \left( 1 - \frac{\alpha}{2} \right)}{\alpha 2^{\alpha / 2}},
    \end{align}
\end{restatable}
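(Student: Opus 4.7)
The plan is to plug the explicit formula for $K_{\alpha, d}$ into the definition of $\bar{K}_{\alpha, d}$ and reduce the claim to an asymptotic statement about a ratio of Gamma functions. From \eqref{eq:K_constant}, we have
\begin{align*}
\bar{K}_{\alpha, d} = R^{2-\alpha} K_{\alpha,d} = \frac{(2 - \alpha)\Gamma \left( 1 - \tfrac{\alpha}{2}\right)}{\alpha 2^\alpha} \cdot \frac{d\, \Gamma(d/2)}{\Gamma((d+\alpha)/2)},
\end{align*}
so all the $d$-dependence is concentrated in the factor $d\, \Gamma(d/2)/\Gamma((d+\alpha)/2)$, while the prefactor depends only on $\alpha$.

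Next, I would invoke the classical asymptotic for ratios of Gamma functions: for any fixed $a \in \mathds{R}$,
\begin{align*}
\frac{\Gamma(x)}{\Gamma(x+a)} \underset{x\to\infty}{\sim} x^{-a}.
\end{align*}
This is a standard consequence of Stirling's formula (and can be found, e.g., in any reference on asymptotics of the Gamma function; a short self-contained proof uses $\log \Gamma(x+a) - \log \Gamma(x) = a \log x + O(1/x)$). Applying it with $x = d/2$ and $a = \alpha/2$ gives
\begin{align*}
\frac{\Gamma(d/2)}{\Gamma((d+\alpha)/2)} \underset{d\to\infty}{\sim} \left(\frac{d}{2}\right)^{-\alpha/2} = 2^{\alpha/2}\, d^{-\alpha/2}.
\end{align*}

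Multiplying by $d$ yields $d\,\Gamma(d/2)/\Gamma((d+\alpha)/2) \sim 2^{\alpha/2} d^{1 - \alpha/2}$, and substituting this into the expression for $\bar{K}_{\alpha,d}$ gives
\begin{align*}
\bar{K}_{\alpha,d} \underset{d\to\infty}{\sim} \frac{(2-\alpha)\Gamma(1-\alpha/2)}{\alpha 2^\alpha}\cdot 2^{\alpha/2}\, d^{1-\alpha/2} = \frac{(2-\alpha)\Gamma(1-\alpha/2)}{\alpha 2^{\alpha/2}}\, d^{1-\alpha/2},
\end{align*}
which is exactly $P_\alpha d^{1-\alpha/2}$ as claimed.

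There is no real obstacle here: once the formula for $K_{\alpha,d}$ is in front of us, the only nontrivial input is the Stirling-type asymptotic for $\Gamma(x)/\Gamma(x+a)$, which holds uniformly for $\alpha$ in the compact interval $[1,2]$ away from the endpoint issues (the prefactor $\Gamma(1-\alpha/2)$ remains finite for $\alpha \in (1,2)$, and $(2-\alpha)$ handles the limit $\alpha \to 2^-$). The main sanity check is the bookkeeping of the powers of $2$: the $2^\alpha$ in the denominator of $K_{\alpha,d}$ combines with the $2^{\alpha/2}$ from the Gamma ratio to produce the $2^{\alpha/2}$ denominator in $P_\alpha$, which is what drives the comparison with prior constants in the Brownian limit discussed after \cref{thm:bound_under_jv_assumption}.
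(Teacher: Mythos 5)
Your proof is correct and follows essentially the same route as the paper: both substitute the explicit formula for $K_{\alpha,d}$ and then apply the Stirling-type asymptotic $\Gamma(x)/\Gamma(x+a)\sim x^{-a}$ (equivalently, the paper's $\Gamma(x+\alpha)\sim \Gamma(x)x^\alpha$) with $x=d/2$, $a=\alpha/2$. The bookkeeping of the powers of $2$ matches the paper's computation.
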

In \cref{eq:constant_equivalent}, we isolated a term $P_\alpha$ depending only on $\alpha$ and a dimension dependent term, $d^{1 - \frac{\alpha}{2}}$. A quick analysis (see \cref{sec:proofs-qualitative_analysis}) shows that the pre-factor $\alpha \longmapsto P_\alpha$ is decreasing in $(1,2)$ and satisfies $\frac{1}{2} \leq P_\alpha \leq \sqrt{\frac{\pi}{2}}$.

Despite being proven for $\alpha<2$, our bounds do not explode when $\alpha \to 2^-$, as we show in the following lemma.

\begin{restatable}{lemma}{constantAlphaLimit}
\label{lemma:alpha_limit}
For any $d \geq 1$, we have $K_{\alpha, d}  \underset{\alpha \to 2^-}{\longrightarrow} \frac{1}{2}$.
\end{restatable}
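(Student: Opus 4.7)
Looking at the formula
\[
K_{\alpha,d} = \frac{(2-\alpha)\,\Gamma(1-\alpha/2)\,d\,\Gamma(d/2)}{\alpha\, 2^\alpha\, \Gamma((d+\alpha)/2)\, R^{2-\alpha}},
\]
the only factor that could be problematic as $\alpha \to 2^-$ is $\Gamma(1-\alpha/2)$, since $1-\alpha/2 \to 0^+$ and $\Gamma$ has a simple pole at $0$. Every other factor is continuous in $\alpha$ on a neighborhood of $2$. The plan is therefore to cancel this pole against the prefactor $(2-\alpha)$ using the functional equation $\Gamma(z+1)=z\,\Gamma(z)$, and then evaluate the remaining factors at $\alpha=2$.

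First I would apply the $\Gamma$-recursion with $z=1-\alpha/2\in(0,1/2)$ to obtain
\[
(2-\alpha)\,\Gamma(1-\alpha/2) \;=\; \frac{2-\alpha}{1-\alpha/2}\,\Gamma(2-\alpha/2) \;=\; 2\,\Gamma(2-\alpha/2),
\]
which converges to $2\,\Gamma(1)=2$ by continuity of $\Gamma$ on $(0,\infty)$. Next, applying the same recursion with $z=d/2$ (using that $(d+\alpha)/2 \to d/2 + 1$) gives $\Gamma((d+\alpha)/2)\to (d/2)\,\Gamma(d/2)$, so the dimensional ratio satisfies
\[
\frac{d\,\Gamma(d/2)}{\Gamma((d+\alpha)/2)} \;\longrightarrow\; \frac{d\,\Gamma(d/2)}{(d/2)\,\Gamma(d/2)} \;=\; 2.
\]
The remaining elementary factors obey $\alpha\,2^\alpha \to 2\cdot 4 = 8$ and $R^{2-\alpha}\to 1$.

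Collecting these limits gives $K_{\alpha,d}\to (2\cdot 2)/(8\cdot 1) = 1/2$, as claimed. There is no genuine obstacle here; the only mildly delicate point is the cancellation of the pole of $\Gamma(1-\alpha/2)$ by the vanishing prefactor $(2-\alpha)$, which the functional equation makes entirely transparent. If one wanted a quantitative rate of convergence (not needed for the stated lemma), one could Taylor-expand $\Gamma(2-\alpha/2)$ and $\Gamma((d+\alpha)/2)$ around $\alpha=2$ in powers of $(2-\alpha)$.
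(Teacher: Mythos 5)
Your proof is correct. The overall strategy matches the paper's: the only singular factor as $\alpha \to 2^-$ is $\Gamma(1-\alpha/2)$, which has a simple pole at $\alpha=2$, and one cancels it against the prefactor $(2-\alpha)$ before passing to the limit in the remaining (continuous) factors. Where you differ is in how the pole is cancelled. The paper invokes Euler's reflection formula to write
\[
\Gamma\!\left(1-\tfrac{\alpha}{2}\right)\Gamma\!\left(\tfrac{\alpha}{2}\right) = \frac{\pi}{\sin(\pi\alpha/2)} \underset{\alpha\to 2^-}{\sim} \frac{1}{1-\alpha/2},
\]
which requires knowing the asymptotics of $\sin$ near $\pi$ and then using $\Gamma(\alpha/2)\to 1$. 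You instead use the functional equation $\Gamma(z+1)=z\Gamma(z)$ with $z=1-\alpha/2$, obtaining the \emph{exact} algebraic identity
\[
(2-\alpha)\,\Gamma\!\left(1-\tfrac{\alpha}{2}\right) = 2\,\Gamma\!\left(2-\tfrac{\alpha}{2}\right),
\]
valid for all $\alpha\in(1,2)$, after which the limit follows from continuity of $\Gamma$ at $1$. Your route is slightly more elementary and avoids any asymptotic reasoning for the singular factor; both approaches then handle the dimensional ratio $d\,\Gamma(d/2)/\Gamma((d+\alpha)/2)\to 2$ and the elementary factors $\alpha 2^\alpha\to 8$, $R^{2-\alpha}\to 1$ identically, giving $K_{\alpha,d}\to 1/2$.
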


 \textbf{Phase transition.} By \cref{lemma:d_limit}, in the limit $d \gg 1$, \cref{thm:bound_under_jv_assumption} becomes:
\begin{align}
    \label{eq:bound_in_bounded_case}
    G_S(T) \leq 2s\sqrt{ \frac{P_\alpha d^{1 - \frac{\alpha}{2}}}{n\sigma_1^\alpha R^{2 - \alpha}} I(T,S)  + \frac{\Lambda + \log \frac{3}{\zeta}}{n} }.
\end{align}
We can rewrite the constant term, multiplying $I(T,S)$, as:
\begin{align*}
    \frac{P_\alpha d^{1 - \frac{\alpha}{2}}}{n\sigma_1^\alpha R^{2 - \alpha}} = \frac{P_\alpha d_0^{1 - \frac{\alpha}{2}}}{n\sigma_1^\alpha} = \frac{P_\alpha d_0}{n(\sigma_1 \sqrt{d_0})^\alpha},
\end{align*}
where we introduced a `reduced dimension' $d_0 := d/(R^2)$. 
As mentioned earlier, we have, for all $\alpha \in (1,2)$, that $\frac{1}{2} \leq P_\alpha \leq \sqrt{\frac{\pi}{2}}$. Therefore, it is clear that the main influence of the tail-index $\alpha$ on the generalization bounds is induced by the geometric term $(\sigma_1 \sqrt{d_0})^{-\alpha}$.
Based on this observation, our bounds suggest a phase transition between two regimes:
\begin{itemize}[noitemsep,topsep=0pt,leftmargin=.2in]
    \item Heavy regime: $(\sigma_1 \sqrt{d_0}) < 1$, the generalization error increases with the tail, \ie the performance should be better with heavier-tails. If we take into account the contribution of the factor $P_\alpha$, this condition becomes $(\sigma_1 \sqrt{d_0}) < 1/\sqrt{2\pi}$, see \cref{sec:low_noise_regime_precision}.
    \vskip -0.1in
    \item Light regime:  $(\sigma_1 \sqrt{d_0}) > 1$, heavy-tails are harmful for the generalization bound.
\end{itemize}

This shows that, depending on the setting and the structure of the dynamics, heavy tails may have a different impact on the generalization error.

\textbf{Comparison with existing works.} In \citep{raj_algorithmic_2023}, the authors studied  \cref{eq:multifractal_dynamics}, with $\sigma_1=1$, $\sigma_2=0$, and $f$ Lipschitz continuous.\footnote{\citet{raj_algorithmic_2023} consider a Lipschitz loss $\ell$ and a surrogate $f$, that has a dissipativity property; we can frame it within our setting by assuming that $f$ is Lipschitz in $w$. }
Informally, the obtained bound is:
\begin{align}
    \label{eq:levy_anant}
    \Eof[S,U]{L(W^S_\infty) - \el(W^S_\infty)} \leq \frac{\normof{\ell}_{\text{Lip}} A R_{\alpha,d} }{n},
\end{align}
where $A$ is a quantity that has a complex dependence on various constants appearing in the assumptions, $\normof{\ell}_{\text{Lip}}$ is the Lipschitz constant of $\ell$, which is assumed finite, and $R_{\alpha,d}$ is a constant, explicitly given in \cref{sec:raj_comparison_appendix}, where we also show that it satisfies $R_{\alpha, d} = \mathcal{O}_{d\to\infty}(d^{\frac{1+\alpha}{2}})$.

We already mentioned, in \cref{sec:intro}, some differences between \cref{eq:levy_anant} and our results. We additionally emphasize that \textit{(i)} we do not require a Lipschitz assumption, and \textit{(ii)} The constant $R_{\alpha,d}$ has a worse dependence on the dimension $d$ than the constant $K_{\alpha,d}$, appearing in our theorems.
\cref{eq:levy_anant} cannot explain generalization in an overparameterized regime, \ie when $d>n$. Moreover, in the limit $\alpha \to 2^-$, it does not yield the known dimension dependence for Langevin dynamics \citep{mou_generalization_2017,pensia_generalization_2018,farghly_time-independent_2021}, while \cref{lemma:alpha_limit} shows that $K_{\alpha,d}$ becomes independent of $d$ when $\alpha \to 2^-$.

To have a fair comparison, we shall note that \eqref{eq:levy_anant} does not increase with the time horizon $T$, whereas it is the main drawback of our bounds. Nevertheless, the results of \cref{sec:toward_time_uniform,sec:time-dependence_discussion} show that this point might have room for improvement, which we leave as future work.

\section{Empirical Analysis}
\label{sec:experiments}

 \textbf{Setup.} We numerically approximate \cref{eq:multifractal_dynamics}, using its Euler-Maruyama discretization \citep{duan_introduction_2015}, $\forall k \in \set{1,\dots,N}$, 
\begin{align}
    \label{eq:Euler-Maruyama}
    \widehat{W}^S_{k+1} = \widehat{W}^S_k - \gamma \nabla \ef (\widehat{W}^S_k) - \eta \gamma \widehat{W}^S_k + \gamma^{\frac{1}{\alpha}} \sigma_1 L_1^\alpha,
\end{align}
where $\gamma > 0$ and $N \in \mathds{N}$ are fixed learning rate and number of iterations. 
Our main experiments were conducted with $2$ layers fully-connected networks (FCN$2$) trained on the MNIST dataset \citep{lecun_gradient-based_1998}. Additional experiments, using MNIST, FasionMNIST  \citep{xiao_fashion-mnist_2017} and CIFAR$10$ datasets \cite{krizhevsky_cifar-10_2014}, as well as linear models and deeper networks, are presented in \cref{sec:additional_experiments}. 
We choose the objective $\ell$ as the $0$-$1$ loss and the surrogate $f$ (that we used for training) as the cross entropy loss. These choices make our experiments as close as possible to our theoretical setting, still allowing us to have a varying number of parameters $d$. 
Each experiment is run with $10$ different random seeds. All hyperparameters details can be found in \cref{sec:hyperparameters}.

We provide, in \cref{sec:discrete_case}, an additional analysis justifying that our continuous-time theory is still pertinent to study the discrete one, 
\cref{eq:Euler_Maruyama_batch_size}, thus providing sufficient theoretical foundations for our experiments. 

The estimation of the accuracy is subject to important noise, due to the jumps incurred by $\levy$. To act against this noise, we first use $\alpha \in [1.6, 2]$. This range is also coherent with estimated tail indices in practical settings by \citet{raj_algorithmic_2023-1,barsbey_heavy_2021-1}. Moreover, the accuracy gap is (robustly) averaged over the last iterations, see \cref{sec:robust_mean_estimation}.

As shown in \cref{eq:Euler-Maruyama}, we use the full dataset $S$ at each iteration, in accordance with the model that we study in this paper. Moreover, it has been argued in several studies \citep{gurbuzbalaban_heavy-tail_2021,hodgkinson_multiplicative_2020,barsbey_heavy_2021-1} that SGD may create heavy-tailed behavior, an effect whose interaction with the noise $\levy$ would be unclear. Our setting allows us to isolate the effect of $\levy$ on the generalization error. To make our experiments tractable, we sub-sample $10\%$ of the MNIST and FashionMNIST datasets to run our main experiments.
To show that our theory may stay pertinent in more practical settings, we estimated our bound when training a FCN$5$ on the whole MNIST dataset, with smaller batches, see \cref{sec:additiional_full_batch_appendix}. 

\paragraph{Lévy processes simulation.} As shown by \cref{eq:Euler-Maruyama}, the numerical approximation of $\widehat{W}_k^S$ requires the simulation of the Lévy process $\levy$, which is estimated by independent realization of $L_1^{\alpha}$. Simulating $\alpha$-stable Lévy processes is standard in probabilistic simulation. In our case, we use the method described by \citet[Section $1$]{Nolan2013MultivariateEC}. More precisely, we sample $L_1^{\alpha}$ as $L_1^{\alpha} = \sqrt{A} G$, where $G \sim \mathcal{N}(0, I_d)$ and $A$ is a skewed stable distribution given by:
\begin{align*}
    A \sim S\left( \frac{\alpha}{2}, 1, 2 \cos \left( \frac{\pi \alpha}{4} \right)^{2/\alpha}, 0 \right),
\end{align*}
where $S(\alpha, \beta,c, \mu)$ denotes stable distributions, with $\beta$ the skewness parameter and $c$ the scale parameter, see \cite{duan_introduction_2015} for more details. This model was in particular used to generate the Lévy processes in \cref{fig:levy-processes}.

\begin{figure}[t]
    \centering
    \includegraphics[width=0.9\columnwidth]{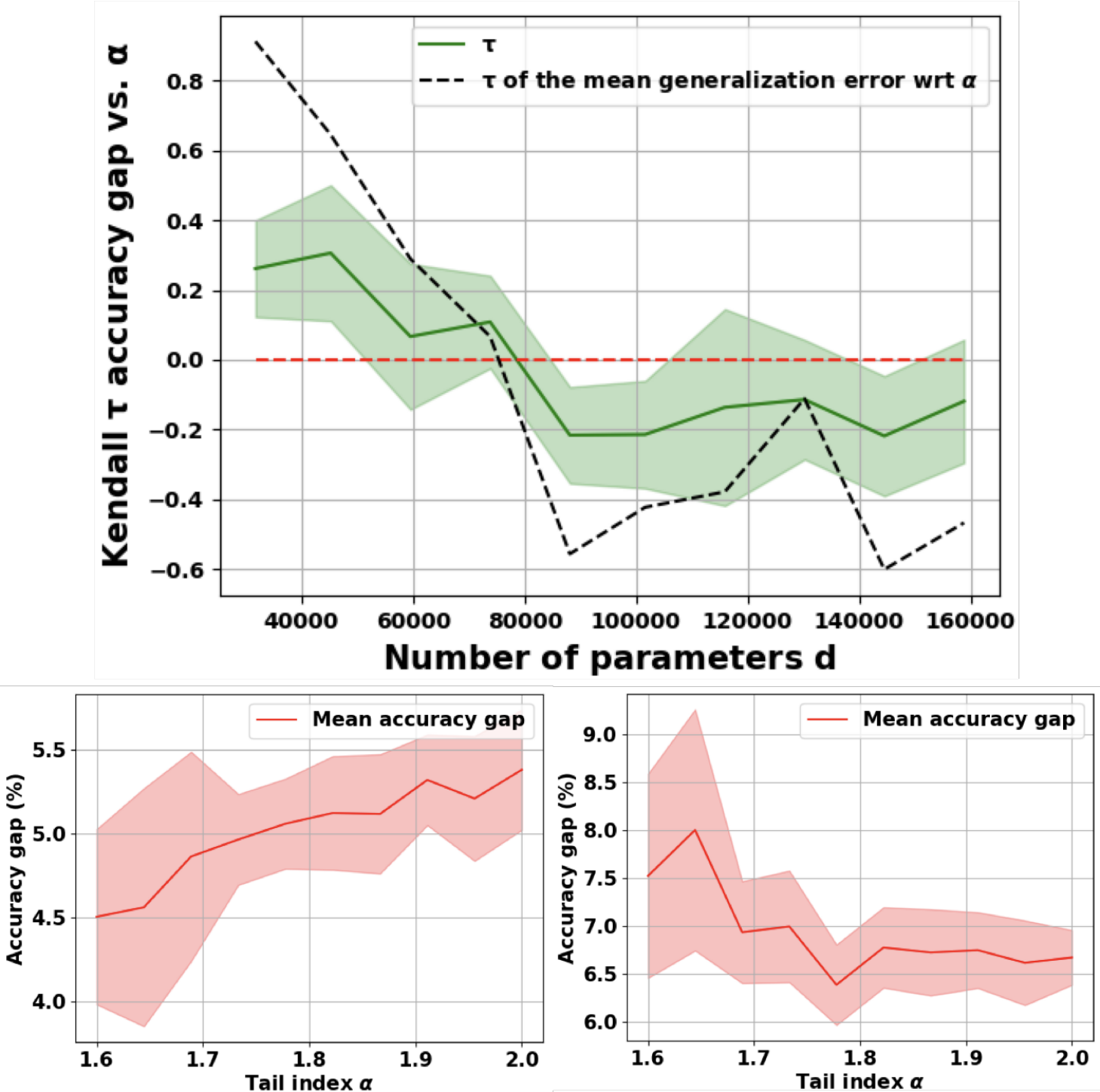}
    \vspace{-7pt}
    \caption{\textit{(up)} Correlation (Kendall's $\tau$) between $\alpha$ and the accuracy gap, for different values of $d$, with a FCN$2$ trained on MNIST. The green curve is the average $\tau$ over $10$ random seeds. The black curve is the correlation between $\alpha$ and the average accuracy gap over $10$ seeds. \textit{(bottom)} Accuracy gap with respect to $\alpha$ for $d=3 \cdot 10^4$ \textit{(left)} and $d=15 \cdot 10^4$ \textit{(right)}.}
    \label{fig:correlation_main}
    \vskip -0.1in
\end{figure}

\begin{figure}[!b]
    \centering
    \vspace{-10pt}
    \includegraphics[trim={0 0cm 0 0},clip,width=\columnwidth]{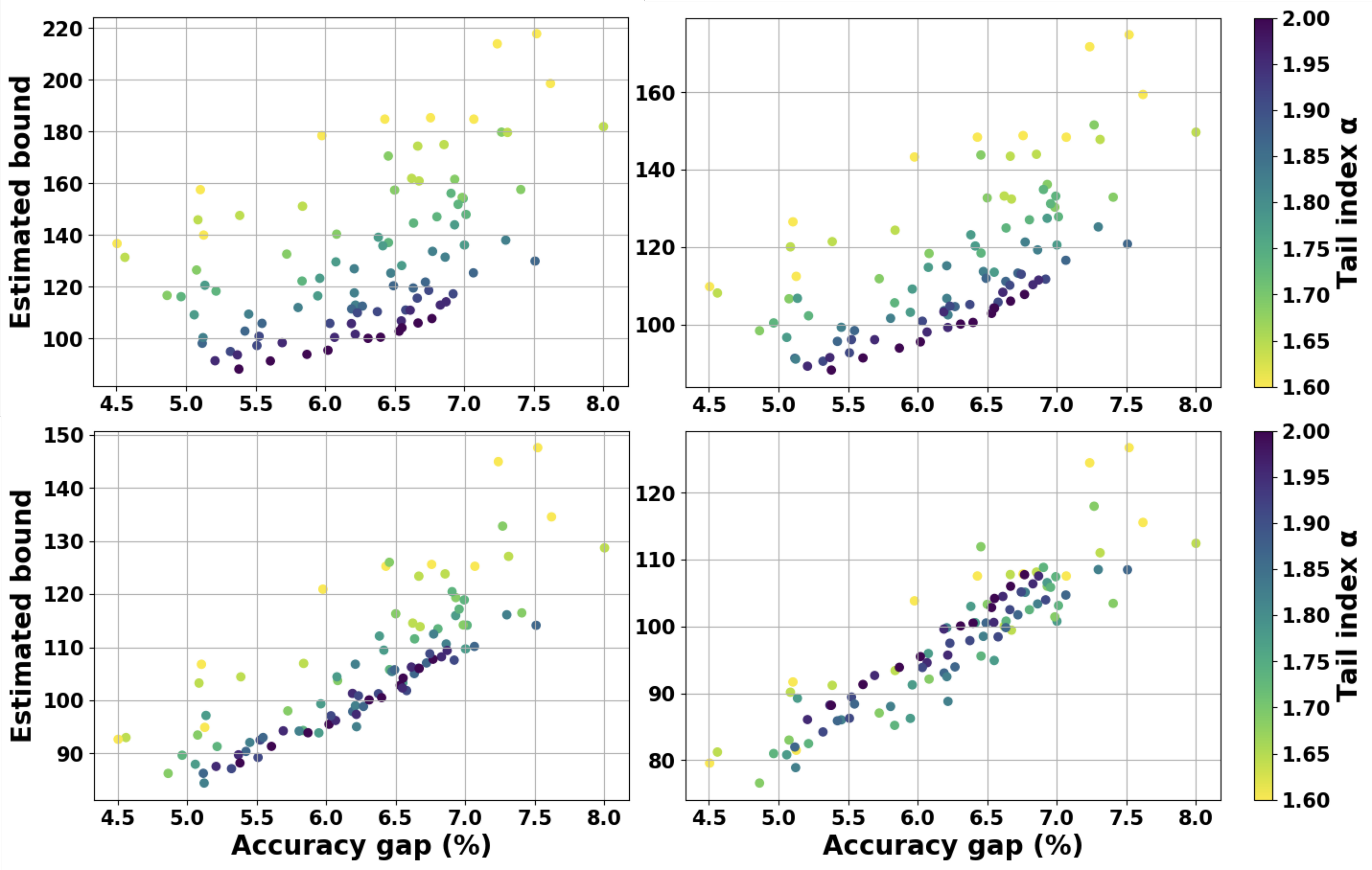}
    \vspace{-20pt}
    \caption{Bound estimated from \cref{eq:G_estimation_main} versus accuracy gap for a FCN$2$ on MNIST, for different values of $R$: $1$ \textit{(top left)}, $3$ \textit{(top right)}, $7$ \textit{(bottom left)}, $15$ \textit{(bottom right)}.}
    \label{fig:all_R}
\end{figure}

\textbf{Results.} We test our theory through $3$ types of experiments.
We present in this section their results for a FCN$2$ trained on MNIST. \cref{sec:additional_experiments} contains additional experiments.

First, on \cref{fig:correlation_main}, we compute the correlation between $\alpha$ and the accuracy gap, measured in term of a Kendall's $\tau$ coefficient\footnote{The sign of $\tau$ corresponds to the sign of the correlation.}\citep{kendall_new_1938}.
We use a FCN$2$ and let the width vary to compute $\tau$ for different values of the dimension $d$. 
The detailed procedure to obtain \cref{fig:correlation_main} can be found in \cref{sec:procedure_details,sec:hyperparameters}.
We observe that the phase transition between positive and negative correlation, predicted in \cref{sec:qualitative_analysis}, happens for a value of the dimension $d \simeq 8 \cdot 10^4$, which we will use to further estimate $R$.
We observe that the positive correlation of the heavy regime seems to be stronger than the negative correlation of the light regime. As an additional experiment, we also provide the same plot as \cref{fig:correlation_main} in \cref{sec:additional_experiments}, but using the Pearson correlation coefficient instead of $\tau$. These results, displayed in \cref{fig:pearson_plot}, yield the same empirical results than \cref{fig:correlation_main}.

\begin{figure}[t]
    \centering
    \includegraphics[trim={1cm, 0cm, 0cm, 0cm},clip, width=0.9\columnwidth]{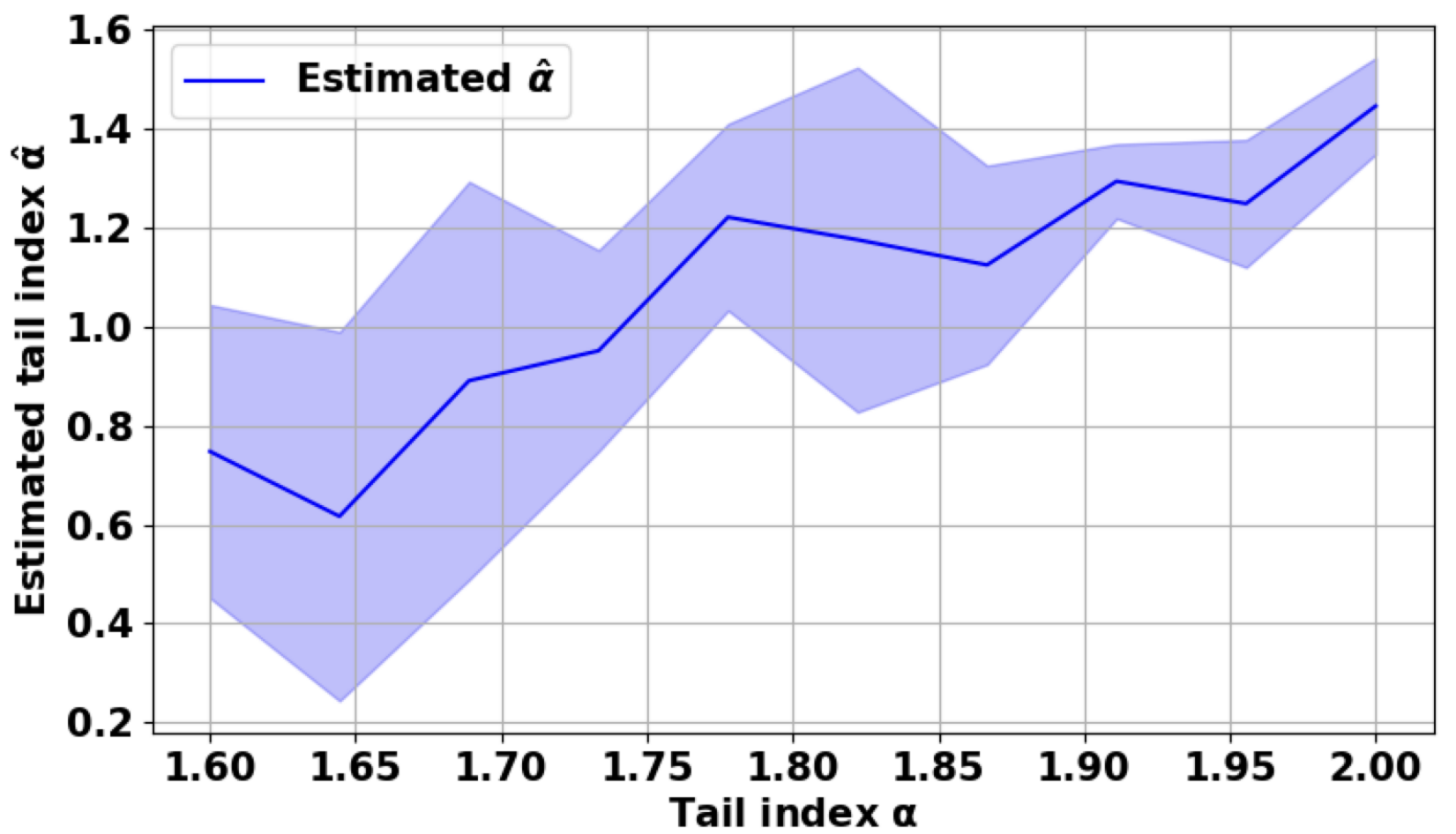}
    \vspace{-7pt}
    \caption{Regression of the tail-index $\alpha$ from the accuracy error, for a FCN$2$ trained on MNIST.}
    \label{fig:regression_from_d}
    \vskip -0.2in
\end{figure}

The bound of \cref{thm:bound_under_jv_assumption} is computable in practice, we estimate it by the formula (in that case $s=1/2$):
\begin{align}
\label{eq:G_estimation_main}
\widehat{G} := \sqrt{\frac{P_\alpha d^{1 - \frac{\alpha}{2}} \gamma }{n \sigma_1 R^{2 - \alpha} } \sum_{k=1}^N \normof{\nabla \ef (\widehat{W}_k^S)}^2 }.
\end{align}
On \cref{fig:all_R}, we plot \cref{eq:G_estimation_main} \wrt to the accuracy gap, for several values of $R$. We use $R=1$ as a default choice, as $R$ is unknown \textit{a priori}, it shows, for each value of $\alpha$, a good correlation with the accuracy gap. Nonetheless, based on \cref{fig:correlation_main}, conducted with the same setting as \cref{fig:all_R}, we can estimate the value of $R$ to be in $\simeq [2.8, 7]$. If we use these values in \cref{eq:G_estimation_main}, the observed correlation is much stronger. 
If we use a slightly larger value ($R=15$), we see, in \cref{fig:all_R}, that the correlation becomes almost perfect, which we interpret as the phase transition being correctly taken into account. This shows that the right corrective term in \cref{eq:estimation_formula} is indeed of the form $R^{2-\alpha}$.
We note that the reason why our bound over-estimates the accuracy gap, is that it increases with $T$. However, as the bounds presented in  \cref{sec:toward_time_uniform,sec:time-dependence_discussion} are time-uniform and have similar constants and dependence on $\nabla \ef $ as in \cref{thm:bound_under_jv_assumption}, we believe that this issue can be alleviated by extending \cref{thm:bound_under_jv_assumption} in a similar direction, which we leave as an open question.

Finally, to obtain \cref{fig:regression_from_d}, we fixed $\sigma$ to $0.01$ and let $d$ vary in a fixed range. Based on \cref{eq:bound_in_bounded_case}, we expect the accuracy error, denoted $G_S$, to be proportional to $d^{1/2 - \alpha / 4}$. This suggests to perform the linear regression, $\log(G_S) \simeq \widehat{r} \log(d) + C,$ to compute an estimate $\widehat{\alpha} := 2 - 4 \widehat{r}$ of the tail-index $\alpha$. The blue curve in \cref{fig:regression_from_d} shows $\widehat{\alpha}$ in terms of $\alpha$. This shows a strong correlation between the estimated and the ground-truth tail-index, in particular, we retrieve the expected monotonicity. 
However, $\widehat{\alpha}$ seems to underestimate the true value of $\alpha$, by a term independent of $\alpha$. We suspect that this may be because other terms in the bound have a dependence on $d$, or because our bound is not a strict equality, which we assumed to compute $\widehat{\alpha}$ from $G_S$.

\section{Conclusion}
\label{sec:conclusion}
In this paper, we proved generalization bounds for heavy-tailed SDEs. Our results are the first to be both in high-probability and computable. Moreover, they allow for a more flexible setup and have a better dimension-dependence than existing works. We analyzed the constants appearing in our theorems, which led us to predict the existence of a phase transition in terms of the effect of the tail index on the generalization. We supported our theory with various numerical experiments.
Several directions remain to be studied in the future. In particular, obtaining new functional inequalities, such as presented in \cref{sec:time-dependence_discussion}, could improve the time-dependence of the bounds. Moreover, understanding the interaction, between small batches and the stable noise $\levy$, would be a natural extension of the theory.

\section*{Acknowledgments}
We thank Paul Viallard, Maxime Haddouche and Isabelle Tristani for valuable discussions.
 U.\c{S}. is partially supported by the French government under management of
Agence Nationale de la Recherche as part of the ``Investissements d'avenir'' program, reference
ANR-19-P3IA-0001 (PRAIRIE 3IA Institute). B.D. and U.\c{S}. are partially supported by the European Research Council Starting Grant
DYNASTY – 101039676.

\section*{Impact Statement} This work is largely theoretical, it does not have any direct social or ethical impact.

\bibliography{main.bib}

\begin{thebibliography}{78}
\providecommand{\natexlab}[1]{#1}
\providecommand{\url}[1]{\texttt{#1}}
\expandafter\ifx\csname urlstyle\endcsname\relax
  \providecommand{\doi}[1]{doi: #1}\else
  \providecommand{\doi}{doi: \begingroup \urlstyle{rm}\Url}\fi

\bibitem[Alquier(2021)]{alquier_user-friendly_2021}
Alquier, P.
\newblock User-friendly introduction to {{PAC-Bayes}} bounds, November 2021.

\bibitem[Andreeva et~al.(2023)Andreeva, Limbeck, Rieck, and Sarkar]{andreeva_metric_2023}
Andreeva, R., Limbeck, K., Rieck, B., and Sarkar, R.
\newblock Metric space magnitude and generalisation in neural networks.
\newblock In \emph{Proceedings of 2nd Annual Workshop on Topology, Algebra, and Geometry in Machine Learning~(TAG-ML)}, volume 221 of \emph{Proceedings of Machine Learning Research}, pp.\  242--253. PMLR, 2023.

\bibitem[Bakry et~al.(2014)Bakry, Gentil, and Ledoux]{bakry_analysis_2014}
Bakry, D., Gentil, I., and Ledoux, M.
\newblock \emph{Analysis and {{Geometry}} of {{Markov Diffusion Operators}}}.
\newblock {Springer}, 2014.

\bibitem[Barsbey et~al.(2021)Barsbey, Sefidgaran, Erdogdu, Richard, and {\c S}im{\c s}ekli]{barsbey_heavy_2021-1}
Barsbey, M., Sefidgaran, M., Erdogdu, M.~A., Richard, G., and {\c S}im{\c s}ekli, U.
\newblock Heavy {{Tails}} in {{SGD}} and {{Compressibility}} of {{Overparametrized Neural Networks}}.
\newblock In \emph{35th {{Conference}} on {{Neural Information Processing Systems}} ({{NeurIPS}} 2021)}. {arXiv}, June 2021.
\newblock \doi{10.48550/arXiv.2106.03795}.

\bibitem[Birdal et~al.(2021)Birdal, Lou, Guibas, and {\c S}im{\c s}ekli]{birdal_intrinsic_2021}
Birdal, T., Lou, A., Guibas, L., and {\c S}im{\c s}ekli, U.
\newblock Intrinsic {{Dimension}}, {{Persistent Homology}} and {{Generalization}} in {{Neural Networks}}.
\newblock \emph{Advances in Neural Information Processing Systems 34 (NeurIPS 2021)}, November 2021.

\bibitem[Bogachev(2007)]{bogachev}
Bogachev, V.~I.
\newblock \emph{Measure theory. {V}ol. {I}, {II}}.
\newblock Springer-Verlag, Berlin, 2007.
\newblock ISBN 978-3-540-34513-8; 3-540-34513-2.
\newblock \doi{10.1007/978-3-540-34514-5}.

\bibitem[B{\"o}ttcher et~al.(2013)B{\"o}ttcher, Schilling, and Wang]{bottcher_levy_2013}
B{\"o}ttcher, B., Schilling, R., and Wang, J.
\newblock \emph{L{\'e}vy {{Matters III}}: {{L{\'e}vy-Type Processes}}: {{Construction}}, {{Approximation}} and {{Sample Path Properties}}}, volume 2099 of \emph{Lecture {{Notes}} in {{Mathematics}}}.
\newblock {Springer International Publishing}, {Cham}, 2013.
\newblock ISBN 978-3-319-02683-1 978-3-319-02684-8.
\newblock \doi{10.1007/978-3-319-02684-8}.

\bibitem[Bousquet(2002)]{bousquet_stability_2002}
Bousquet, O.
\newblock Stability and generalization.
\newblock \emph{Journal of Machine Learning Research}, pp.\  499--526, 2002.

\bibitem[Bousquet et~al.(2020)Bousquet, Klochkov, and Zhivotovskiy]{bousquet_sharper_2020}
Bousquet, O., Klochkov, Y., and Zhivotovskiy, N.
\newblock Sharper bounds for uniformly stable algorithms.
\newblock \emph{Proceedings of Thirty Third Conference on Learning Theory}, May 2020.

\bibitem[Catoni(2007)]{catoni_pac-bayesian_2007}
Catoni, O.
\newblock Pac-{{Bayesian Supervised Classification}}: {{The Thermodynamics}} of {{Statistical Learning}}.
\newblock \emph{IMS Lecture Notes Monograph Series}, 56:\penalty0 1--163, 2007.
\newblock ISSN 0749-2170.
\newblock \doi{10.1214/074921707000000391}.

\bibitem[Chafai(2004)]{chafai_entropies_2004}
Chafai, D.
\newblock Entropies, convexity, and functional inequalities.
\newblock \emph{Kyoto Journal of Mathematics}, 44\penalty0 (2), January 2004.
\newblock ISSN 2156-2261.
\newblock \doi{10.1215/kjm/1250283556}.

\bibitem[Chafai \& Lehec(2017)Chafai and Lehec]{chafai_logarithmic_2017}
Chafai, D. and Lehec, J.
\newblock Logarithmic sobolev inequalities essentials, 2017.

\bibitem[Chafaï(2006)]{Chafai_2006}
Chafaï, D.
\newblock Binomial-poisson entropic inequalities and the m/m/infinity queue.
\newblock \emph{ESAIM: Probability and Statistics}, 10:\penalty0 317–339, September 2006.
\newblock ISSN 1262-3318.
\newblock \doi{10.1051/ps:2006013}.
\newblock URL \url{http://dx.doi.org/10.1051/ps:2006013}.

\bibitem[Dalalyan(2016)]{dalalyan_theoretical_2016}
Dalalyan, A.~S.
\newblock Theoretical guarantees for approximate sampling from smooth and log-concave densities, December 2016.

\bibitem[Daoud \& Laamri(2022)Daoud and Laamri]{daoud_fractional_nodate}
Daoud, M. and Laamri, E.~H.
\newblock Fractional laplacians : A short survey.
\newblock \emph{Discrete and Continuous Dynamical Systems - S}, 15\penalty0 (1):\penalty0 95--116, 2022.
\newblock URL \url{https://www.aimsciences.org/article/id/5a563692-a87e-458f-8e9e-e03ce8ff49b9}.

\bibitem[Duan(2015)]{duan_introduction_2015}
Duan, J.
\newblock \emph{An Introduction to Stochastic Dynamics}.
\newblock {Cambridge texts in Applied Mathematics}, 2015.

\bibitem[Dupuis \& Viallard(2023)Dupuis and Viallard]{dupuis_mutual_2023}
Dupuis, B. and Viallard, P.
\newblock From {{Mutual Information}} to {{Expected Dynamics}}: {{New Generalization Bounds}} for {{Heavy-Tailed SGD}}, December 2023.

\bibitem[Dupuis et~al.(2023)Dupuis, Deligiannidis, and {\c S}im{\c s}ekli]{dupuis_generalization_2023}
Dupuis, B., Deligiannidis, G., and {\c S}im{\c s}ekli, U.
\newblock Generalization {{Bounds}} with {{Data-dependent Fractal Dimensions}}.
\newblock In \emph{Internation {{Conference}} on {{Machine Learning}} ({{ICML}} 2023)}. {arXiv}, February 2023.
\newblock \doi{10.48550/arXiv.2302.02766}.

\bibitem[Dupuis et~al.(2024)Dupuis, Viallard, Deligiannidis, and Simsekli]{dupuis2024setpacbayes}
Dupuis, B., Viallard, P., Deligiannidis, G., and Simsekli, U.
\newblock Uniform generalization bounds on data-dependent hypothesis sets via pac-bayesian theory on random sets.
\newblock \emph{Journal of Machine Learning Research}, 25\penalty0 (409):\penalty0 1--55, 2024.
\newblock URL \url{http://jmlr.org/papers/v25/24-0605.html}.

\bibitem[Farghly \& Rebeschini(2021)Farghly and Rebeschini]{farghly_time-independent_2021}
Farghly, T. and Rebeschini, P.
\newblock Time-independent {{Generalization Bounds}} for {{SGLD}} in {{Non-convex Settings}}.
\newblock In \emph{35th {{Conference}} on {{Neural Information Processing Systems}} ({{NeurIPS}} 2021).} {arXiv}, November 2021.
\newblock \doi{10.48550/arXiv.2111.12876}.

\bibitem[Favaro et~al.(2020)Favaro, Fortini, and Peluchetti]{favaro_stable_2020}
Favaro, S., Fortini, S., and Peluchetti, S.
\newblock Stable behaviour of infinitely wide deep neural networks.
\newblock In \emph{Proceedings of the {{23rdInternational Conference}} on {{Artifi-}} Cial {{Intelligence}} and {{Statistics}} ({{AISTATS}}) 2020}. {arXiv}, February 2020.

\bibitem[Futami \& Fujisawa(2023)Futami and Fujisawa]{futami_time-independent_2023}
Futami, F. and Fujisawa, M.
\newblock Time-{{Independent Information-Theoretic Generalization Bounds}} for {{SGLD}}.
\newblock In \emph{7th {{Conference}} on {{Neural Information Processing Systems}} ({{NeurIPS}} 2023).} {arXiv}, November 2023.
\newblock \doi{10.48550/arXiv.2311.01046}.

\bibitem[Gentil \& Imbert(2008)Gentil and Imbert]{gentil_logarithmic_2008}
Gentil, I. and Imbert, C.
\newblock Logarithmic {{Sobolev}} inequalities: Regularizing effect of {{L{\'e}vy}} operators and asymptotic convergence in the {{L{\'e}vy-Fokker-Planck}} equation.
\newblock \emph{Asymptotic analysis}, September 2008.

\bibitem[Germain et~al.(2009)Germain, Lacasse, Laviolette, and Marchand]{germain_pac-bayesian_2009}
Germain, P., Lacasse, A., Laviolette, F., and Marchand, M.
\newblock {{PAC-Bayesian}} learning of linear classifiers.
\newblock In \emph{Proceedings of the 26th {{Annual International Conference}} on {{Machine Learning}}}, {{ICML}} '09, pp.\  353--360, {New York, NY, USA}, June 2009. {Association for Computing Machinery}.
\newblock ISBN 978-1-60558-516-1.
\newblock \doi{10.1145/1553374.1553419}.

\bibitem[Gross(1975)]{gross_logarithmic_1975}
Gross, L.
\newblock Logarithmic {{Sobolev}} inequalities.
\newblock \emph{Amer. J. Math., 97(4):1061{\textendash}1083}, 1975.

\bibitem[Gurbuzbalaban \& Hu(2021)Gurbuzbalaban and Hu]{gurbuzbalaban_fractional_2021}
Gurbuzbalaban, M. and Hu, Y.
\newblock Fractional moment-preserving initialization schemes for training deep neural networks.
\newblock In \emph{Proceedings of the 24th {{International Conference}} on {{Artifi-}} Cial {{Intelligence}} and {{Statistics}} ({{AISTATS}}) 2021}. {arXiv}, February 2021.

\bibitem[Gurbuzbalaban et~al.(2021)Gurbuzbalaban, {\c S}im{\c s}ekli, and Zhu]{gurbuzbalaban_heavy-tail_2021}
Gurbuzbalaban, M., {\c S}im{\c s}ekli, U., and Zhu, L.
\newblock The {{Heavy-Tail Phenomenon}} in {{SGD}}.
\newblock In \emph{International {{Conference}} on {{Machine Learning}} ({{ICML}} 2021)}. {arXiv}, June 2021.
\newblock \doi{10.48550/arXiv.2006.04740}.

\bibitem[Haghifam et~al.(2020)Haghifam, Negrea, Khisti, Roy, and Dziugaite]{haghifam_sharpened_2020}
Haghifam, M., Negrea, J., Khisti, A., Roy, D.~M., and Dziugaite, G.~K.
\newblock Sharpened {{Generalization Bounds}} based on {{Conditional Mutual Information}} and an {{Application}} to {{Noisy}}, {{Iterative Algorithms}}, October 2020.

\bibitem[Halperin \& Schwartz(1952)Halperin and Schwartz]{halperin_introduction_1952}
Halperin, I. and Schwartz, L.
\newblock \emph{Introduction to the {{Theory}} of {{Distributions}}}.
\newblock {University of Toronto Press}, 1952.
\newblock ISBN 978-1-4875-9132-8.

\bibitem[Hodgkinson \& Mahoney(2020)Hodgkinson and Mahoney]{hodgkinson_multiplicative_2020}
Hodgkinson, L. and Mahoney, M.~W.
\newblock Multiplicative noise and heavy tails in stochastic optimization.
\newblock In \emph{Proceedings of the 38 Th {{International Conference}} on {{Machine Learning}} ({{ICML}} 2021)}. {arXiv}, June 2020.
\newblock \doi{10.48550/arXiv.2006.06293}.

\bibitem[Hodgkinson et~al.(2022)Hodgkinson, {\c S}im{\c s}ekli, Khanna, and Mahoney]{hodgkinson_generalization_2022}
Hodgkinson, L., {\c S}im{\c s}ekli, U., Khanna, R., and Mahoney, M.~W.
\newblock Generalization {{Bounds}} using {{Lower Tail Exponents}} in {{Stochastic Optimizers}}.
\newblock \emph{Proceedings of the 39th International Conference on Machine Learning}, July 2022.

\bibitem[Imbert(2005)]{imbert_non-local_2005}
Imbert, C.
\newblock A non-local regularization of first order {{Hamilton-Jacobi}} equations.
\newblock \emph{Journal of differential equations}, 2005.

\bibitem[Jung et~al.(2021)Jung, Lee, Lee, and Yang]{jung_alpha-stable_2021}
Jung, P., Lee, H., Lee, J., and Yang, H.
\newblock \${\textbackslash}alpha\$-{{Stable}} convergence of heavy-tailed infinitely-wide neural networks.
\newblock \emph{Advances in Applied Probability , Volume 55 , Issue 4}, June 2021.

\bibitem[Kendall(1938)]{kendall_new_1938}
Kendall, M.~G.
\newblock A new reasure of rank correlation.
\newblock \emph{Biometrika}, 1938.

\bibitem[Krizhevsky et~al.(2014)Krizhevsky, Nair, and Hinton]{krizhevsky_cifar-10_2014}
Krizhevsky, A., Nair, V., and Hinton, G.~E.
\newblock The cifar-10 dataset, 2014.

\bibitem[K{\"u}hn(2018)]{kuhn_solutions_2018}
K{\"u}hn, F.
\newblock Solutions of {{L{\'e}vy-driven SDEs}} with unbounded coefficients as {{Feller}} processes.
\newblock \emph{Proceedings of the American Mathematical Society}, 146\penalty0 (8):\penalty0 3591--3604, 2018.

\bibitem[Lafleche(2020)]{lafleche_fractional_2020}
Lafleche, L.
\newblock Fractional {{Fokker-Planck Equation}} with {{General Confinement Force}}.
\newblock \emph{SIAM Journal on Mathematical Analysis}, 52\penalty0 (1):\penalty0 164--196, January 2020.
\newblock ISSN 0036-1410, 1095-7154.
\newblock \doi{10.1137/18M1188331}.

\bibitem[Lecun et~al.(1998)Lecun, Bottou, Bengio, and Haffner]{lecun_gradient-based_1998}
Lecun, Y., Bottou, L., Bengio, Y., and Haffner, P.
\newblock Gradient-based learning applied to document recognition.
\newblock \emph{Proceedings of the IEEE}, 86\penalty0 (11):\penalty0 2278--2324, November 1998.
\newblock ISSN 1558-2256.
\newblock \doi{10.1109/5.726791}.

\bibitem[Li et~al.(2020)Li, Luo, and Qiao]{li_generalization_2020}
Li, J., Luo, X., and Qiao, M.
\newblock On {{Generalization Error Bounds}} of {{Noisy Gradient Methods}} for {{Non-Convex Learning}}.
\newblock In \emph{Published as a Conference Paper at {{ICLR}} 2020}. {arXiv}, February 2020.
\newblock \doi{10.48550/arXiv.1902.00621}.

\bibitem[Lim et~al.(2022)Lim, Wan, and {\c S}im{\c s}ekli]{lim_chaotic_2022}
Lim, S.~H., Wan, Y., and {\c S}im{\c s}ekli, U.
\newblock Chaotic {{Regularization}} and {{Heavy-Tailed Limits}} for {{Deterministic Gradient Descent}}, May 2022.

\bibitem[Lischke et~al.(2019)Lischke, Pang, Gulian, Song, Glusa, Zheng, Mao, Cai, Meerschaert, Ainsworth, and Karniadakis]{lischke_what_2019}
Lischke, A., Pang, G., Gulian, M., Song, F., Glusa, C., Zheng, X., Mao, Z., Cai, W., Meerschaert, M.~M., Ainsworth, M., and Karniadakis, G.~E.
\newblock What {{Is}} the {{Fractional Laplacian}}? - {{A}} comparative review with new results, November 2019.

\bibitem[Markowich \& Villani(2004)Markowich and Villani]{markowich_trend_2004}
Markowich, P. and Villani, C.
\newblock On the {{Trend}} to {{Equilibrium}} for the {{Fokker-Planck Equation}}: An {{Interplay}} between {{Physics}} and {{Functional Analysis}}., 2004.

\bibitem[Maurer(2004)]{maurer_note_2004}
Maurer, A.
\newblock A {{Note}} on the {{PAC Bayesian Theorem}}, November 2004.

\bibitem[McAllester(1998)]{mcAllester1998some}
McAllester, D.
\newblock Some pac-bayesian theorems.
\newblock In \emph{Proceedings of the Eleventh Annual Conference on Computational Learning Theory, {COLT} 1998, Madison, Wisconsin, USA, July 24-26, 1998}, pp.\  230--234. {ACM}, 1998.

\bibitem[McAllester(2003)]{mcallester_pac-bayesian_2003}
McAllester, D.~A.
\newblock {{PAC-Bayesian Stochastic Model Selection}}.
\newblock \emph{Machine Learning}, 51\penalty0 (1):\penalty0 5--21, April 2003.
\newblock ISSN 1573-0565.
\newblock \doi{10.1023/A:1021840411064}.

\bibitem[Mou et~al.(2017)Mou, Wang, Zhai, and Zheng]{mou_generalization_2017}
Mou, W., Wang, L., Zhai, X., and Zheng, K.
\newblock Generalization {{Bounds}} of {{SGLD}} for {{Non-convex Learning}}: {{Two Theoretical Viewpoints}}.
\newblock In \emph{Proceedings of the 31st {{Conference On Learning Theory}}}. {arXiv}, July 2017.
\newblock \doi{10.48550/arXiv.1707.05947}.

\bibitem[Negrea et~al.(2020)Negrea, Haghifam, Dziugaite, Khisti, and Roy]{negrea_information-theoretic_2020}
Negrea, J., Haghifam, M., Dziugaite, G.~K., Khisti, A., and Roy, D.~M.
\newblock Information-{{Theoretic Generalization Bounds}} for {{SGLD}} via {{Data-Dependent Estimates}}, January 2020.

\bibitem[Neu et~al.(2021)Neu, Dziugaite, Haghifam, and Roy]{neu_information-theoretic_2021-1}
Neu, G., Dziugaite, G.~K., Haghifam, M., and Roy, D.~M.
\newblock Information-{{Theoretic Generalization Bounds}} for {{Stochastic Gradient Descent}}, August 2021.

\bibitem[Nguyen et~al.(2019{\natexlab{a}})Nguyen, {\c S}im{\c s}ekli, G{\"u}rb{\"u}zbalaban, and Richard]{nguyen_first_2019}
Nguyen, T.~H., {\c S}im{\c s}ekli, U., G{\"u}rb{\"u}zbalaban, M., and Richard, G.
\newblock First {{Exit Time Analysis}} of {{Stochastic Gradient Descent Under Heavy-Tailed Gradient Noise}}.
\newblock In \emph{{{NIPS}}'19: {{Proceedings}} of the 33rd {{International Conference}} on {{Neural Information Processing Systems}}}. {arXiv}, June 2019{\natexlab{a}}.
\newblock \doi{10.48550/arXiv.1906.09069}.

\bibitem[Nguyen et~al.(2019{\natexlab{b}})Nguyen, Simsekli, and Richard]{nguyen2019non}
Nguyen, T.~H., Simsekli, U., and Richard, G.
\newblock Non-asymptotic analysis of fractional langevin monte carlo for non-convex optimization.
\newblock In \emph{International Conference on Machine Learning}, pp.\  4810--4819. PMLR, 2019{\natexlab{b}}.

\bibitem[Nolan(2013)]{Nolan2013MultivariateEC}
Nolan, J.~P.
\newblock Multivariate elliptically contoured stable distributions: theory and estimation.
\newblock \emph{Computational Statistics}, 28:\penalty0 2067 -- 2089, 2013.
\newblock URL \url{https://api.semanticscholar.org/CorpusID:6341926}.

\bibitem[Pavasovic et~al.(2023)Pavasovic, Durmus, and Simsekli]{pavasovic2023approximate}
Pavasovic, K.~L., Durmus, A., and Simsekli, U.
\newblock Approximate heavy tails in offline (multi-pass) stochastic gradient descent.
\newblock In \emph{Advances in Neural Information Processing Systems}, 2023.

\bibitem[Pensia et~al.(2018)Pensia, Jog, and Loh]{pensia_generalization_2018}
Pensia, A., Jog, V., and Loh, P.-L.
\newblock Generalization {{Error Bounds}} for {{Noisy}}, {{Iterative Algorithms}}.
\newblock \emph{2018 IEEE International Symposium on Information Theory (ISIT)}, January 2018.

\bibitem[Raginsky et~al.(2017)Raginsky, Rakhlin, and Telgarsky]{raginsky_non-convex_2017}
Raginsky, M., Rakhlin, A., and Telgarsky, M.
\newblock Non-convex learning via {{Stochastic Gradient Langevin Dynamics}}: A nonasymptotic analysis, June 2017.

\bibitem[Raj et~al.(2023{\natexlab{a}})Raj, Barsbey, G{\"u}rb{\"u}zbalaban, Zhu, and {\c S}im{\c s}ekli]{raj_algorithmic_2023-1}
Raj, A., Barsbey, M., G{\"u}rb{\"u}zbalaban, M., Zhu, L., and {\c S}im{\c s}ekli, U.
\newblock Algorithmic {{Stability}} of {{Heavy-Tailed Stochastic Gradient Descent}} on {{Least Squares}}.
\newblock In \emph{Proceedings of {{The}} 34th {{International Conference}} on {{Algorithmic Learning Theory}},}. {arXiv}, February 2023{\natexlab{a}}.
\newblock \doi{10.48550/arXiv.2206.01274}.

\bibitem[Raj et~al.(2023{\natexlab{b}})Raj, Zhu, G{\"u}rb{\"u}zbalaban, and {\c S}im{\c s}ekli]{raj_algorithmic_2023}
Raj, A., Zhu, L., G{\"u}rb{\"u}zbalaban, M., and {\c S}im{\c s}ekli, U.
\newblock Algorithmic {{Stability}} of {{Heavy-Tailed SGD}} with {{General Loss Functions}}.
\newblock In \emph{International Conference on Machine Learning ({{ICML}} 2023)}. {arXiv}, January 2023{\natexlab{b}}.
\newblock \doi{10.48550/arXiv.2301.11885}.

\bibitem[Schertzer et~al.(2001)Schertzer, Larchev, Duan, Yanovsky, and Lovejoy]{schertzer_fractional_2001}
Schertzer, D., Larchev, M., Duan, J., Yanovsky, V.~V., and Lovejoy, S.
\newblock Fractional {{Fokker--Planck Equation}} for {{Nonlinear Stochastic Differential Equations Driven}} by {{Non-Gaussian Levy Stable Noises}}.
\newblock \emph{Journal of Mathematical Physics}, 42\penalty0 (1):\penalty0 200--212, January 2001.
\newblock ISSN 0022-2488, 1089-7658.
\newblock \doi{10.1063/1.1318734}.

\bibitem[Schilling(1998)]{schilling_feller_1998}
Schilling, R.~L.
\newblock Feller {{Processes Generated}} by {{Pseudo-Differential Operators}}: {{On}} the {{Hausdorff Dimension}} of {{Their Sample Paths}} | {{SpringerLink}}.
\newblock \emph{Journal of Theoretical Probability}, 1998.

\bibitem[Schilling(2016)]{schilling_introduction_2016-1}
Schilling, R.~L.
\newblock An {{Introduction}} to {{L{\'e}vy}} and {{Feller Processes}}. {{Advanced Courses}} in {{Mathematics}} - {{CRM Barcelona}} 2014, October 2016.

\bibitem[Schilling \& Schnurr(2010)Schilling and Schnurr]{schilling_symbol_2010}
Schilling, R.~L. and Schnurr, A.
\newblock The {{Symbol Associated}} with the {{Solution}} of a {{Stochastic Differential Equation}}.
\newblock \emph{Electr. J. Probab. 15}, December 2010.

\bibitem[Shawe{-}Taylor \& Williamson(1997)Shawe{-}Taylor and Williamson]{shawetaylor1997pac}
Shawe{-}Taylor, J. and Williamson, R.~C.
\newblock A {PAC} analysis of a bayesian estimator.
\newblock In Freund, Y. and Schapire, R.~E. (eds.), \emph{Proceedings of the Tenth Annual Conference on Computational Learning Theory, {COLT} 1997, Nashville, Tennessee, USA, July 6-9, 1997}, pp.\  2--9. {ACM}, 1997.

\bibitem[{\c{S}}im{\c{s}}ekli(2017)]{csimcsekli2017fractional}
{\c{S}}im{\c{s}}ekli, U.
\newblock Fractional langevin monte carlo: Exploring l{\'e}vy driven stochastic differential equations for markov chain monte carlo.
\newblock In \emph{International Conference on Machine Learning}, pp.\  3200--3209. PMLR, 2017.

\bibitem[Simsekli et~al.(2019)Simsekli, Sagun, and Gurbuzbalaban]{simsekli_tail-index_2019}
Simsekli, U., Sagun, L., and Gurbuzbalaban, M.
\newblock A {{Tail-Index Analysis}} of {{Stochastic Gradient Noise}} in {{Deep Neural Networks}}.
\newblock In \emph{Proceedings of the 36 Th {{International Conference}} on {{Machine Learning}} ({{ICML}} 2019)}. {arXiv}, January 2019.
\newblock \doi{10.48550/arXiv.1901.06053}.

\bibitem[{\c S}im{\c s}ekli et~al.(2021){\c S}im{\c s}ekli, Sener, Deligiannidis, and Erdogdu]{simsekli_hausdorff_2021}
{\c S}im{\c s}ekli, U., Sener, O., Deligiannidis, G., and Erdogdu, M.~A.
\newblock Hausdorff {{Dimension}}, {{Heavy Tails}}, and {{Generalization}} in {{Neural Networks}}.
\newblock \emph{Journal of Statistical Mechanics: Theory and Experiment}, 2021\penalty0 (12):\penalty0 124014, December 2021.
\newblock ISSN 1742-5468.
\newblock \doi{10.1088/1742-5468/ac3ae7}.

\bibitem[Teymurazyan(2023)]{teymurazyan_fractional_2023}
Teymurazyan, R.
\newblock The fractional {{Laplacian}}: A primer.
\newblock https://arxiv.org/abs/2310.19118v1, October 2023.

\bibitem[Tristani(2013)]{tristani_fractional_2013}
Tristani, I.
\newblock Fractional {{Fokker-Planck}} equation.
\newblock \emph{Commun. Math. Sci. 13}, December 2013.

\bibitem[Umarov et~al.(2018)Umarov, Hahn, and Kobayashi]{umarov_beyond_2018}
Umarov, S., Hahn, M., and Kobayashi, K.
\newblock \emph{Beyond the Triangle: {{Brownian}} Motion, {{Ito}} Calculus and {{Fokker-Planck Equation}} - Fractional Generalizations}.
\newblock {World scientific publishing}, 2018.

\bibitem[{van Erven} \& Harremo{\"e}s(2014){van Erven} and Harremo{\"e}s]{van_erven_renyi_2014}
{van Erven}, T. and Harremo{\"e}s, P.
\newblock R{\'e}nyi {{Divergence}} and {{Kullback-Leibler Divergence}}.
\newblock \emph{IEEE Transactions on Information Theory}, 60\penalty0 (7):\penalty0 3797--3820, July 2014.
\newblock ISSN 0018-9448, 1557-9654.
\newblock \doi{10.1109/TIT.2014.2320500}.

\bibitem[Vershynin(2020)]{vershynin_high-dimensional_2020}
Vershynin, R.
\newblock \emph{High-Dimensional Probability: An Introduction with Applications in Data Science}.
\newblock Number~47 in Cambridge Series in Statistical and Probabilistic Mathematics. {Cambridge University Press}, 2020.
\newblock ISBN 978-1-108-41519-4.

\bibitem[Viallard et~al.(2021)Viallard, Germain, Habrard, and Morvant]{viallard_general_2021}
Viallard, P., Germain, P., Habrard, A., and Morvant, E.
\newblock A {{General Framework}} for the {{Disintegration}} of {{PAC-Bayesian Bounds}}.
\newblock \emph{Machine Learning Journal}, October 2021.

\bibitem[Villani(2009)]{villani_optimal_2009}
Villani, C.
\newblock \emph{Optimal Transport - {{Old}} and {{New}}}.
\newblock {Springer}, 2009.

\bibitem[Wan et~al.(2023)Wan, Zaidi, and Simsekli]{wan2023implicit}
Wan, Y., Zaidi, A., and Simsekli, U.
\newblock Implicit compressibility of overparametrized neural networks trained with heavy-tailed sgd.
\newblock \emph{arXiv preprint arXiv:2306.08125}, 2023.

\bibitem[Wang \& Duan(2017)Wang and Duan]{wang_existence_2017}
Wang, M. and Duan, J.
\newblock Existence and regularity of a linear nonlocal {{Fokker}}{\textendash}{{Planck}} equation with growing drift.
\newblock \emph{Journal of Mathematical Analysis and Applications}, 449\penalty0 (1):\penalty0 228--243, May 2017.
\newblock ISSN 0022-247X.
\newblock \doi{10.1016/j.jmaa.2016.12.013}.

\bibitem[Wu(2000)]{wu_new_2000}
Wu, L.
\newblock A new modified logarithmic {{Sobolev}} inequality for {{Poisson}} point processes and several applications.
\newblock \emph{Probability theory and related fileds - s 118, 427{\textendash}438}, 2000.

\bibitem[Xiao et~al.(2017)Xiao, Rasul, and Vollgraf]{xiao_fashion-mnist_2017}
Xiao, H., Rasul, K., and Vollgraf, R.
\newblock Fashion-{{MNIST}}: A {{Novel Image Dataset}} for {{Benchmarking Machine Learning Algorithms}}, September 2017.

\bibitem[Xiao(2004)]{xiao_random_2004}
Xiao, Y.
\newblock Random fractals and {{Markov}} processes.
\newblock \emph{Fractal Geometry and Applications: A jubilee of Beno{\^i}t Mandelbrot - American Mathematical Society}, 72.2:\penalty0 261--338, 2004.
\newblock \doi{10.1090/pspum/072.2/2112126}.

\bibitem[Xie et~al.(2015)Xie, Duan, Li, and Lv]{xie_regularity_2015}
Xie, X., Duan, J., Li, X., and Lv, G.
\newblock A regularity result for the nonlocal {{Fokker-Planck}} equation with {{Ornstein-Uhlenbeck}} drift, April 2015.

\bibitem[Zhou et~al.(2020)Zhou, Feng, Ma, Xiong, Hoi, et~al.]{zhou2020towards}
Zhou, P., Feng, J., Ma, C., Xiong, C., Hoi, S. C.~H., et~al.
\newblock Towards theoretically understanding why sgd generalizes better than adam in deep learning.
\newblock \emph{Advances in Neural Information Processing Systems}, 33:\penalty0 21285--21296, 2020.

\end{thebibliography}
\bibliographystyle{icml2024}

\newpage

\newpage
\appendix
\onecolumn

\textbf{Organization of the appendix:} The appendix starts with a short notations section. The remainder of the document is then organized as follows:
\begin{itemize}
    \item In \cref{sec:background_appendix}, some technical background is presented. The technical background is divided into three main topics: PAC-Bayesian bounds, Lévy processes, and $\Phi$-entropies inequalities. We also include a small subsection on the Euler's $\Gamma$ function.
    \item \cref{sec:proofs-subgaussian-pb} presents the proof of our version of a PAC-Bayesian generalization bound for subgaussian losses.
    \item \cref{sec:proof_of_main_results} is the core of the appendix, we prove the main result, along with all intermediary lemmas, and introduce the notations necessary to understand those proofs. Moreover, a few additional theoretical results are given, which are a refinement of the main results. In particular, the extension of the theory to a discrete setting is discussed in \cref{sec:discrete_case}, while \cref{sec:noise_mixing} presents bounds in the case $\sigma_2>0$, which are different than those of \cref{sec:brownian case}.
    \item In \cref{sec:proofs-qualitative_analysis}, we provide details on how to obtain the theoretical results of \cref{sec:proofs-qualitative_analysis}.
    \item In \cref{sec:time-dependence_discussion}, we discuss the time dependence of \cref{cor:gen_brownian_time,thm:bound_under_jv_assumption} and mention that this time dependence could be largely improved by assuming that a certain class of inequality holds. 
    \item Finally, \cref{sec:experiments_appendix} presents some details on the experimental setting, as well as a few additional experiments.
\end{itemize}

\section*{Notations}

In order to simplify the notations, we will sometimes omit the $x$ variable when integrating with respect to the Lebesgue measure in $\Rd$, \ie we will write invariable $\int f$ or $\int f dx$, instead $\int f(x) dx$. These conventions are meant to ease the notations throughout the paper.

We will use the following convention regarding the Fourier transform, for $\phi: \mathds{R}^d \longrightarrow \R$, regular enough, we set:
\begin{align}
    \label{eq:fourier_transform}
    \mathcal{F}\phi (\xi) := \int e^{-i x \cdot \xi} \phi(x) dx.
\end{align}

The partial derivative $\partial / \partial t$ will often be shortened as $\partial_t$. Similarly, $\partial_i$ may denote $\partial / \partial x_i$.

Let's also precise some notations introduced in the main part of the document. As mentioned in \cref{sec:intro}, the data space is denoted $\zcal$. More precisely, $\zcal$ is a measurable space, endowed with a $\sigma$-algebra $\mathcal{F}$. The data distribution, $\mu_z$, is a probability measure on $(\zcal, \mathcal{F})$.

\section{Technical background}
\label{sec:background_appendix}

\subsection{Information-theoretic terms and PAC-Bayesian bounds}
\label{sec:it_terms_and_pb_bounds}

The concept of PAC-Bayesian analysis has been introduced in \cref{sec:pac_bayesian_boudns}. In this section, we detail two particular PAC-Bayesian bounds that we use for the derivation of our main results. For a more detailed introduction to those subjects, the reader is invited to consult \citep{alquier_user-friendly_2021}.

We start by defining the information theoretic (IT) quantities appearing in the aforementioned theorems, see \citep{van_erven_renyi_2014} for more details. Let $\mu$ and $\nu$ be two probability measures, on the same space, such that $\mu$ is absolutely continuous with respect to $\nu$. We define the Kullback-Leibler (KL) divergence as:
\begin{align}
    \label{eq:def_kl}
    \klb{\mu}{\nu} := \int \log\left(\frac{d \mu}{d \nu}\right) d\mu,
\end{align}
where $d\mu/d\nu$ denotes the Radon-Nykodym derivative between $\mu$ and $\nu$.

Next, we define the Renyi divergences, for some $\beta > 1$, as \footnote{There are definitions of Renyi divergences for other values of $\beta$, but we won't need them in this paper, see \citep{van_erven_renyi_2014}.}:
\begin{align}
    \label{eq:definition_renyi}
    \renyi[\beta]{\mu}{\nu} := \frac{1}{\beta - 1} \log \left( \int \left(\frac{d \mu}{d \nu}\right)^\beta d\nu \right).
\end{align}
By convention, we set $\renyi[1]{\cdot}{\cdot} := \klb{\cdot}{\cdot}$, so that, by \citep[Theorem $3$]{van_erven_renyi_2014}, $\renyi[\beta]{\cdot}{\cdot}$ is nondecreasing in $\beta$.

In the following, to mimic the notations of the rest of the paper, we consider a probability measure $\pi$ on $\Rd$ and a family of data-dependent probability measures on $\Rd$, $(\rho_S)_{S\in\zcal^n}$, where $\zcal$ has been defined in \cref{sec:intro}. We mainly require this family to satisfy the following properties:
\begin{enumerate}
    \item Absolute continuity: for (almost-)all $S$, we have $\rho_S \ll \pi$.
    \item Markov kernel property, for all Borel set $B \subset \Rd$, the map $S \longmapsto \rho_S(B)$ is $\mathcal{F}^{\otimes n}$-measurable. Recall that $\mathcal{F}$ is the $\sigma$-algebra on the data space $\zcal$. 
\end{enumerate}

In the following, as we do in the rest of the paper, we refer to $\pi$ as a prior distribution, and to $\rho_S$ as posterior distributions.

The next theorem is a generic PAC-bayesian bound due to \citet{germain_pac-bayesian_2009}.

\begin{theorem}[General PAC-Bayesian bound]
\label{thm:pac_bayesian_kl}
Let $\zeta \in (0,1)$ and $\varphi: \Rd \times \Zcal^n \to \R$ a measurable function, integrable with respect to the posterior distributions. With probability at least $1-\zeta$, over $S \sim \mu_z^{\otimes n}$, we have:
\begin{align*}
\Eof[w \sim\rho_{S}]{\varphi(w, S)} \leq \log(1/\zeta) + \klb{\rho_{S}}{\pi} + \log \mathds{E}_S \Eof[w \sim \pi]{e^{\varphi(w, S)}},
\end{align*}
where the KL divergence has been defined by \cref{eq:def_kl}.
\end{theorem}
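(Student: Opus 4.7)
The plan is a two-step argument: apply the change-of-measure (Donsker--Varadhan) inequality pointwise in $S$, then remove the sample dependence of the resulting moment generating term by Markov's inequality.

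First I would recall the Donsker--Varadhan variational characterization of the KL divergence: for any probability measures $\rho\ll\pi$ on $\Rd$ and any measurable $h:\Rd\to\R$ with $\Eof[w\sim\pi]{e^{h(w)}}<\infty$,
\[
\Eof[w\sim\rho]{h(w)} \leq \klb{\rho}{\pi} + \log \Eof[w\sim\pi]{e^{h(w)}}.
\]
This follows in one line from Jensen's inequality applied to the concave logarithm under $\rho$, after writing $d\rho/d\pi$ inside the exponential. Specializing to $\rho=\rho_S$ and $h(\cdot)=\varphi(\cdot,S)$ gives, for every $S$, the pointwise bound
\[
\Eof[w\sim\rho_S]{\varphi(w,S)} \leq \klb{\rho_S}{\pi} + \log \Eof[w\sim\pi]{e^{\varphi(w,S)}}.
\]

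Next I would control the moment generating quantity uniformly in $S$ by a simple Markov argument. Setting
\[
Y(S) := \Eof[w\sim\pi]{e^{\varphi(w,S)}},
\]
the variable $Y$ is non-negative and $S$-measurable, so
\[
\Pof[S\sim\datadist]{Y(S) \geq \tfrac{1}{\zeta}\, \Eof[S']{Y(S')}} \leq \zeta.
\]
On the complementary event, which has probability at least $1-\zeta$, taking logarithms gives $\log Y(S) \leq \log(1/\zeta) + \log \Eof[S']{Y(S')}$. Substituting into the pointwise inequality above yields exactly the statement of the theorem.

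The only technical point requiring care is measurability and integrability bookkeeping: one needs $S\mapsto Y(S)$ to be measurable so that Markov applies, and the iterated expectation $\Eof[S]{\Eof[w\sim\pi]{e^{\varphi(w,S)}}}$ to be a legitimate object. Both follow from joint measurability of $\varphi$ on $\Rd\times\Zcal^n$, the Markov kernel property of the family $(\rho_S)_{S}$, the product structure of $\datadist$, and Fubini's theorem. I do not expect any genuine obstacle beyond this routine verification, and in particular no heavy-tail or SDE machinery from the rest of the paper enters at this stage.
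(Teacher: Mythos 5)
Your proof is correct and is exactly the standard Donsker--Varadhan plus Markov argument used by \citet{germain_pac-bayesian_2009}, whom the paper cites for this result without reproducing the proof. There is no genuine gap; the measurability caveats you flag are the usual routine ones and are covered by the Markov-kernel and joint-measurability hypotheses stated in the paper.
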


\begin{theorem}[Disintegrated PAC-Bayesian bound]
\label{thm:disintegrated_pac_bayes__bound}
Let $\zeta \in (0,1)$ and $\varphi: \Rd \times \Zcal^n \to \R$ a measurable function. With probability at least $1-\zeta$, over $S \sim \mu_z^{\otimes n}$ and $w \sim \rho_S$, we have:
\begin{align*}
 \frac{\beta}{\beta{-}1}\varphi(w, S) \le \frac{2\beta-1}{\beta-1}\log(2/\zeta) + \renyi[\beta]{\rho_{S}}{\pi} + \log \mathds{E}_S \Eof[w \sim \pi]{e^{\frac{\beta}{\beta-1}\varphi(w, S)}},
\end{align*}
where the Renyi divergence has been defined by \cref{eq:definition_renyi}.
\end{theorem}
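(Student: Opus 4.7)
The plan is to prove the disintegrated PAC-Bayesian bound by combining two applications of Markov's inequality with a change-of-measure step and a union bound. For notational convenience, let $\lambda := \beta/(\beta-1)$, and recall that both $\rho_S \ll \pi$ and the Markov-kernel measurability assumptions stated above \cref{thm:pac_bayesian_kl} guarantee that all the joint expectations below are well-defined (with the convention $0/0=0$ on the $\pi$-null set where $d\rho_S/d\pi$ vanishes).

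\textbf{Step 1 (change-of-measure + Markov for the exponent).} I would first apply the change-of-measure identity, valid for any measurable $\varphi$:
\begin{align*}
\mathbb{E}_S\mathbb{E}_{w \sim \rho_S}\!\left[\frac{e^{\lambda\varphi(w,S)}}{(d\rho_S/d\pi)(w)}\right] = \mathbb{E}_S\mathbb{E}_{w \sim \pi}\!\left[e^{\lambda\varphi(w,S)}\right].
\end{align*}
Apply Markov's inequality to the nonnegative left-hand integrand, under the joint law of $(S, w\sim\rho_S)$, at the threshold $(2/\zeta)\cdot \mathbb{E}_S\mathbb{E}_{w\sim\pi}[e^{\lambda\varphi(w,S)}]$. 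Taking logarithms, with probability at least $1-\zeta/2$ over the joint distribution,
\begin{align*}
\lambda\varphi(w,S) \le \log(2/\zeta) + \log(d\rho_S/d\pi)(w) + \log\mathbb{E}_S\mathbb{E}_{w\sim\pi}[e^{\lambda\varphi(w,S)}].
\end{align*}

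\textbf{Step 2 (Markov on the Radon-Nikodym density).} Next, I would bound $\log(d\rho_S/d\pi)(w)$ in terms of the Rényi divergence. The identity
\begin{align*}
\mathbb{E}_{w\sim\rho_S}\!\left[((d\rho_S/d\pi)(w))^{\beta-1}\right] = \mathbb{E}_{w\sim\pi}\!\left[((d\rho_S/d\pi)(w))^{\beta}\right] = e^{(\beta-1)\,\renyi[\beta]{\rho_S}{\pi}}
\end{align*}
together with Markov's inequality applied pointwise in $S$ (and then integrated in $S$ by Fubini) yields that with probability at least $1-\zeta/2$ over the joint distribution,
\begin{align*}
\log(d\rho_S/d\pi)(w) \le \frac{\log(2/\zeta)}{\beta-1} + \renyi[\beta]{\rho_S}{\pi}.
\end{align*}

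\textbf{Step 3 (union bound).} A union bound over Steps 1 and 2 gives, with probability at least $1-\zeta$,
\begin{align*}
\lambda\varphi(w,S) \le \log(2/\zeta)\Bigl(1 + \tfrac{1}{\beta-1}\Bigr) + \renyi[\beta]{\rho_S}{\pi} + \log\mathbb{E}_S\mathbb{E}_{w\sim\pi}\!\left[e^{\lambda\varphi(w,S)}\right],
\end{align*}
which, after recognizing $1 + 1/(\beta-1) = \beta/(\beta-1)$ and absorbing this into the stated (slightly looser) constant $(2\beta-1)/(\beta-1) = 1 + \beta/(\beta-1)$ via the crude bound $\beta/(\beta-1) \le (2\beta-1)/(\beta-1)$, yields the theorem.

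None of these steps presents a real obstacle; the only subtlety is the Fubini/measurability argument in Step 2, which is handled by the Markov-kernel property of $S \mapsto \rho_S$ stated at the beginning of \cref{sec:it_terms_and_pb_bounds} (so that $S \mapsto \renyi[\beta]{\rho_S}{\pi}$ is measurable and the pointwise-in-$S$ Markov probabilities integrate to a joint probability). The slight mismatch between the optimal coefficient $\beta/(\beta-1)$ my argument produces and the stated $(2\beta-1)/(\beta-1)$ is a harmless slack; it can equivalently be obtained by a less balanced union-bound allocation or by a looser bound in Step 1.
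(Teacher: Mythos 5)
The paper states this result as background (it corresponds to the disintegrated Rényi bound of Viallard et al.\ 2021, \texttt{viallard\_general\_2021}) and does not include a proof of its own, so I assess your argument on its merits: it is essentially correct and is the standard route. One small conceptual slip: the change-of-measure step in Step~1 is in general an inequality, not an equality. The set $\{d\rho_S/d\pi = 0\}$ is $\rho_S$-null but need not be $\pi$-null, so
\begin{align*}
\mathbb{E}_{w\sim\rho_S}\!\left[\tfrac{e^{\lambda\varphi(w,S)}}{(d\rho_S/d\pi)(w)}\right] \;=\; \int_{\{d\rho_S/d\pi>0\}} e^{\lambda\varphi(w,S)}\,d\pi(w) \;\le\; \mathbb{E}_{w\sim\pi}\!\left[e^{\lambda\varphi(w,S)}\right],
\end{align*}
with equality only when $\pi \ll \rho_S$ as well. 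This does not affect the proof since Markov's inequality only needs an upper bound on the expectation, and the inequality points in the right direction. Step~2 is correct: for fixed $S$, $\mathbb{E}_{w\sim\rho_S}[(d\rho_S/d\pi)^{\beta-1}] = \mathbb{E}_{w\sim\pi}[(d\rho_S/d\pi)^{\beta}] = e^{(\beta-1)\,\renyi[\beta]{\rho_S}{\pi}}$ by the definition \eqref{eq:definition_renyi}, and the pointwise-in-$S$ Markov bound integrates correctly thanks to the Markov-kernel assumption. The union bound gives the coefficient $1 + \tfrac{1}{\beta-1} = \tfrac{\beta}{\beta-1}$ in front of $\log(2/\zeta)$, which is indeed \emph{tighter} than the stated $\tfrac{2\beta-1}{\beta-1} = 2 + \tfrac{1}{\beta-1}$; you correctly note this and relax to match. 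So your argument in fact proves a slightly sharper version of the theorem as stated.
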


We give below one particular instance of \cref{thm:pac_bayesian_kl}, using the notations introduced in \cref{sec:intro}, for $\mu_z$, $S$,  $f$, $L$ and $\el$. This theorem was first proven by \citep{mcallester_pac-bayesian_2003,maurer_note_2004}.

\begin{theorem}
    \label{thm:PAC_Bayesian_bounded_loss}
    Assume that the objective $f$ is bounded in $[0,1]$, then, with probability at least $1 - \zeta$ over $S \sim \datadist$, we have:
    \begin{align*}
        \Eof[\rho_S]{L(w) - \el(w)} \leq \sqrt{\frac{\klb{\rho_S}{\pi} + \log \frac{2 \sqrt{n}}{\zeta}}{2n}}.
    \end{align*}
\end{theorem}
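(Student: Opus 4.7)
The plan is to derive this statement as a direct consequence of the generic PAC-Bayesian bound of \cref{thm:pac_bayesian_kl}, combined with the classical moment-generating-function estimate of \citet{maurer_note_2004}. First I would choose the measurable function $\varphi : \Rd \times \zcal^n \to \R$ inside \cref{thm:pac_bayesian_kl} to be
\begin{align*}
\varphi(w, S) := n \cdot \mathrm{kl}\bigl(\el(w) \,\|\, L(w)\bigr),
\end{align*}
where $\mathrm{kl}(p\|q) := p\log(p/q) + (1-p)\log((1-p)/(1-q))$ denotes the binary KL divergence between Bernoulli laws with parameters $p, q \in [0,1]$. The boundedness hypothesis on the loss ensures $\el(w), L(w) \in [0,1]$, so $\varphi$ is well defined.

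The second step is to control the exponential moment $\Eof[S]{\Eof[w\sim\pi]{e^{\varphi(w,S)}}}$ appearing in \cref{thm:pac_bayesian_kl}. By Fubini, it suffices to bound, for each fixed $w$, the quantity $\Eof[S]{\exp\{n \cdot \mathrm{kl}(\el(w) \| L(w))\}}$, where $\ell(w, z_1), \dots, \ell(w, z_n)$ are i.i.d.\ random variables in $[0,1]$ with mean $L(w)$. Here I would invoke Maurer's lemma, which states that $\Eof[S]{e^{n \,\mathrm{kl}(\bar X_n \| \E[\bar X_n])}} \leq 2\sqrt{n}$ for any $[0,1]$-valued i.i.d.\ average $\bar X_n$. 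Taking the expectation under $\pi$ preserves this bound, yielding $\log \Eof[S]{\Eof[w\sim\pi]{e^{\varphi(w,S)}}} \leq \log(2\sqrt{n})$. Plugging into \cref{thm:pac_bayesian_kl} gives, with probability at least $1-\zeta$,
\begin{align*}
n \cdot \Eof[w \sim \rho_S]{\mathrm{kl}\bigl(\el(w) \| L(w)\bigr)} \leq \klb{\rho_S}{\pi} + \log\!\tfrac{2\sqrt{n}}{\zeta}.
\end{align*}

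The third step is to pull the expectation inside $\mathrm{kl}$ and then convert $\mathrm{kl}$ into a squared deviation. Because $(p,q) \mapsto \mathrm{kl}(p\|q)$ is jointly convex on $[0,1]^2$, Jensen's inequality gives
\begin{align*}
\mathrm{kl}\bigl(\Eof[\rho_S]{\el(w)} \,\big\|\, \Eof[\rho_S]{L(w)}\bigr) \leq \Eof[w \sim \rho_S]{\mathrm{kl}\bigl(\el(w) \| L(w)\bigr)},
\end{align*}
and Pinsker's refined inequality $\mathrm{kl}(p\|q) \geq 2(p-q)^2$ yields a lower bound of $2 \bigl(\Eof[\rho_S]{L(w) - \el(w)}\bigr)^2$ on the left-hand side. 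Combining with the previous display and taking square roots delivers the stated inequality.

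The main subtlety in this plan is Maurer's lemma: the factor $2\sqrt{n}$ (rather than a crude Hoeffding bound, which would give $\log(1/\zeta)/(2n)$ style terms with worse constants) relies on a careful integration of the Bernoulli moment generating function against the binary KL. Everything else — the application of \cref{thm:pac_bayesian_kl}, the convexity of $\mathrm{kl}$, and Pinsker — is standard, so I expect no additional technical obstacle beyond citing or re-deriving this one-dimensional estimate.
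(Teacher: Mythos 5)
Your proof is correct and is precisely the classical argument of Maurer (2004) that the paper cites: choose $\varphi(w,S) = n\,\mathrm{kl}(\el(w)\,\|\,L(w))$ in the change-of-measure bound of \cref{thm:pac_bayesian_kl}, control the exponential moment via Maurer's $2\sqrt{n}$ lemma, pull the expectation inside $\mathrm{kl}$ by joint convexity, and finish with Pinsker's inequality $\mathrm{kl}(p\|q)\geq 2(p-q)^2$. The paper states this theorem with a citation to McAllester and Maurer rather than a self-contained proof, so your reconstruction supplies exactly the referenced argument.
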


\subsection{Background on Levy processes and associated pseudo-differential operators}
\label{sec:background_levy_markov}

In this section, we introduce some basic notions related to the study of Lévy process. In particular, we insist on the case of stable Lévy processes and their associated operators, namely the Laplacian and fractional Laplacian. Therefore, we make the link between the SDE \eqref{eq:multifractal_dynamics} and the PDE \eqref{eq:empirical_lfp} as clear as possible for the reader. We will also set up several notations used throughout the sequel.

\subsubsection{Levy processes}

In this subsection, we recall some basic notions related to Lévy process, in order to make our main results as clear as possible. The main goal is to get an understanding of \cref{eq:empirical_lfp}. For a more detailed introduction to those subjects, we refer the reader to \citep{schilling_feller_1998, xiao_random_2004, bottcher_levy_2013}. 

\begin{definition}[Lévy process]
    A Lévy process $(L_t)_{t\geq 0}$, in $\Rd$, is a stochastic process such that:
    \begin{itemize}
        \item $L_0 = 0$,
        \item the increments are independents, \ie, for all $t_1 < \dots < t_K$, the processes $L_{t_i} - L_{t_{i-1}}$ are independent,
        \item the increments are stationary, \ie for $0\leq s < t$, we have $L_t - L_s \equald L_s$, where $\equald$ denotes the equality in distribution,
        \item the process is stochastically continuous, by which we mean that for any $s\geq 0$ and $\delta > 0$, we have:
        \begin{align*}
           \lim_{t\to s} \Pof{\normof{L_t - L_s} > \delta} = 0.
        \end{align*}
    \end{itemize}
    Equivalently, one can show that stochastic continuity is equivalent to the process having a modification with cadlag paths\footnote{cadlag means right continuous and having a left limit everywhere.}, therefore, the paths of Lévy processes may exhibit jumps.
\end{definition}

Levy processes are closely related to the notion of \emph{infinitely divisible distributions}. A probability distribution is said to be infinitely divisible if, for any $N \in \mathds{N}^\star$, it can be seen that $F$ is the distribution of the sum of $N$ \iid random variables. 

Lévy processes have infinitely divisible distributions\footnote{This is an equivalence: every infinitely divisible distribution is naturally associated with a Lévy process.}. Following \citet[Corollary $2.5$]{schilling_symbol_2010}, it can be deduced that their characteristic function can be expressed as (with $\xi \in \Rd$):
\begin{align*}
    \Eof{e^{i\xi \cdot L_t }} = e^{-t\psi(\xi)},
\end{align*}
where the function $\psi$ is called the \emph{characteristic exponent}. It characterizes the Lévy process $(L_t)$ and plays a great role in our analysis.

The following theorem is the fundamental result in the study of characteristic exponents. In particular, it introduces the notion of Lévy measure, which we use repeatedly. Several conventions or notations may exist for this formula, we follow those of \citep[Theorem $2.2$]{bottcher_levy_2013}.

\begin{theorem}[Lévy-Khintchine formula]
    \label{thm:levy_khintchine}
    Let $(L_t)_{t\geq 0}$ be a Lévy process as above. The characteristic exponent of $L$ has the following form:
    \begin{align*}
        \forall \xi \in \Rd,~\psi(\xi) = -i l \cdot \xi + \frac{1}{2} \xi \cdot Q \xi + \int_{\Rd\backslash\set{0}} \left( 1 - e^{i z\cdot\xi} + i \xi \cdot z \chi(\normof{z}) \right) d\nu(z),
    \end{align*}
    where $l \in \Rd$, $Q \in \mathds{R}^{d \times d}$ is a symmetric positive semi-definite matrix and $\nu$ is a positive measure on $\Rd\backslash\set{0}$ such that
    \begin{align*}
        \int_{\Rd\backslash\set{0}} \min(1,\normof{z}^2) d\nu(z) < +\infty.
    \end{align*}
    Finally, $\chi$ is a \emph{truncation function}, such that $\chi(s)$ and $s\chi(s)$ are bounded and there exists a constant $\kappa > 0$ such that $0 \leq 1 - \chi(s) \leq \kappa \min(1,s)$.

    The triplet $(l,Q,\nu)$ is called the \emph{Lévy triplet} associated to $L$, and $\nu$ is the \emph{Lévy measure}.
\end{theorem}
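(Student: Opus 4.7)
The plan is to establish the formula via the classical Lévy-Itô decomposition, after first reducing matters to infinite divisibility.

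First, I would exploit the structural properties of $(L_t)$ to show that $\phi_t(\xi) := \Eof{e^{i \xi \cdot L_t}}$ admits the exponential form $e^{-t\psi(\xi)}$. For $s,t\geq 0$, independence and stationarity of increments give $\phi_{t+s}(\xi) = \phi_t(\xi)\phi_s(\xi)$, and stochastic continuity forces $t \mapsto \phi_t(\xi)$ to be continuous with $\phi_0(\xi) = 1$. A standard Cauchy-equation argument (after verifying that $\phi_t(\xi)$ never vanishes, using that the law of $L_1$ is infinitely divisible) yields a unique continuous function $\psi: \Rd \to \mathds{C}$ with $\psi(0) = 0$ such that $\phi_t = e^{-t\psi}$. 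Writing $L_1 = \sum_{k=1}^n (L_{k/n} - L_{(k-1)/n})$ as a sum of i.i.d. increments simultaneously establishes infinite divisibility of $L_1$.

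Next, I would construct the Lévy measure via the jump structure. For Borel $A \subset \Rd\backslash\set{0}$ bounded away from the origin, set $N((0,t] \times A) := \#\set{s \in (0,t]: \Delta L_s \in A}$ with $\Delta L_s := L_s - L_{s^-}$. Independent and stationary increments translate into the defining properties of a Poisson random measure, so $N$ is a Poisson random measure on $(0,\infty) \times (\Rd\backslash\set{0})$ with intensity $dt \otimes d\nu$ for some Borel measure $\nu$. Finiteness of jump counts on compact time intervals, for jumps bounded away from $0$, forces $\nu(\set{\normof{z}>\epsilon}) < \infty$ for every $\epsilon > 0$; the stronger $\int \min(1,\normof{z}^2)d\nu < \infty$ will be forced by the requirement that the compensated small-jump series converges in $L^2$.

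Then I would assemble the Lévy-Itô decomposition. Large jumps form a compound Poisson process $J_t^L := \int_{(0,t]\times\set{\normof{z}>1}} z\, N(ds,dz)$; small jumps, compensated to handle their potentially infinite intensity, form a martingale $J_t^S := \int_{(0,t] \times \set{\normof{z}\leq 1}} z\, (N - ds\otimes d\nu)(ds,dz)$. The residual $C_t := L_t - J_t^L - J_t^S$ is a continuous Lévy process, independent of $N$ (this independence is obtained by factorizing the characteristic function over disjoint ranges of jump heights and passing to the limit). A classical characterization of continuous Lévy processes identifies $C_t = lt + Q^{1/2}B_t$ for some drift $l \in \Rd$, symmetric PSD matrix $Q$, and standard Brownian motion $B_t$. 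Combining the characteristic functions of the four independent pieces gives the formula with truncation $\mathbf{1}_{\normof{z}\leq 1}$; a general truncation function $\chi$ is then accommodated by absorbing the finite quantity $\int iz\cdot\xi (\chi(\normof{z}) - \mathbf{1}_{\normof{z}\leq 1}) d\nu(z)$ into a redefinition of $l$.

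The main obstacle is the rigorous execution of the Lévy-Itô decomposition: proving that $N$ is genuinely a Poisson random measure (which requires combining independent increments with stochastic continuity to rule out jumps at deterministic times and to control accumulation near $0$), constructing $J^S$ as a convergent $L^2$ martingale (forcing $\int_{\normof{z}\leq 1} \normof{z}^2 d\nu < \infty$), and establishing the mutual independence of $C$ and $N$. Once independence is in hand, the characteristic function computations for compound Poisson and compensated Poisson integrals are standard, and the freedom in the truncation function $\chi$ is purely cosmetic.
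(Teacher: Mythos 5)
The paper does not prove this theorem; it is recalled verbatim as a classical background result with a citation to Böttcher, Schilling and Wang (their Theorem 2.2), so there is no in-paper proof to compare against. Your sketch is a correct outline of the standard Lévy--Itô route to Lévy--Khintchine: derive the exponential form $\phi_t = e^{-t\psi}$ from the semigroup property plus stochastic continuity and the non-vanishing of the characteristic function, build the jump measure $N$ as a Poisson random measure and extract $\nu$ from its intensity, split $L_t$ into a large-jump compound Poisson part, a compensated small-jump $L^2$ martingale (whose convergence forces $\int\min(1,\|z\|^2)\,d\nu<\infty$), and an independent continuous remainder identified as $lt+Q^{1/2}B_t$, then multiply the characteristic functions and absorb the change of truncation into the drift $l$. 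This matches the proof strategy in the reference the paper cites, and you correctly flag the genuinely technical steps (Poisson random measure property of $N$, $L^2$ convergence of the compensated small-jump sum, and independence of $C$ from $N$). One small point worth noting is that your phrase ``classical Cauchy-equation argument (after verifying that $\phi_t(\xi)$ never vanishes, using that the law of $L_1$ is infinitely divisible)'' is slightly circular as written, since infinite divisibility of $L_1$ is usually established from the increment decomposition before one argues about vanishing; the non-vanishing is more directly obtained by continuity of $t\mapsto\phi_t(\xi)$ together with $\phi_0=1$ and the multiplicativity $\phi_{t+s}=\phi_t\phi_s$, which precludes zeros on any finite interval. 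With that reordering the outline is sound.
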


\begin{remark}
    As it is mentioned in \citep[Theorem $2.2$]{bottcher_levy_2013}, the truncation function $\chi$ is arbitrary and only influences the drift $l \in \Rd$. In our paper, we only consider Lévy process and infinitely divisible distributions with no drift (\ie $l=0$), the choice of $\chi$ therefore has no impact. A typical choice would be $\chi(s) = \min(1,s^2)$.
\end{remark}

\begin{remark}
    It is clear, from the above discussion, that any infinitely divisible distribution can be associated with a Lévy triplet, as in \cref{thm:levy_khintchine}.
\end{remark}

\begin{example}[Brownian motion]
    The Lévy triplet $(0, I_d, 0)$ corresponds to the standard Brownian motion in $\Rd$, denoted $(B_t)_{t\geq 0}$.
\end{example}

We end this subsection by defining stable Lévy processes, which are the main object of our study, see \citep[Example $2.4.d$]{bottcher_levy_2013}.

\begin{definition}[Stable Lévy processes]
    \label{def:stable_levy_processes}
    Let $\alpha \in (0,2]$, the (isotropic) $\alpha$-stable Lévy process $(\levy)_{t\geq 0}$, is defined by the following expression of its characteristic exponent: $\psi(\xi) = normof{\xi}^\alpha$. Its Lévy triplet is given by:
    \begin{itemize}
        \item If $\alpha = 2$, then the triplet is $(0, 2I_d, 0)$, in which case we have $L_t^2 = \sqrt{2}B_t$.
        \item If $\alpha \in (0, 2)$, the triplet is $(0, 0, \nu_\alpha)$, with:
        \begin{align}
            \label{eq:stable_levy_measure_c_constant}
            d\nu_\alpha(z) := C_{\alpha,d} \frac{dz}{\normof{z}^{d+\alpha}}, \quad C_{\alpha, d} := \alpha 2^{\alpha-1}\pi^{-d/2} \frac{\Gamma\left(  \frac{\alpha + d}{2}\right)}{\Gamma \left( 1 - \frac{\alpha}{2} \right)}.
        \end{align}
    \end{itemize}
\end{definition}

\subsubsection{Generator of the semigroup and fractional Laplacian}

In this subsection, we introduce the notion of fractional Laplacian. This is related to the study of Lévy processes through the notion of "infinitesimal generator of the semigroup", which we first define.

Given a temporally homogeneous Markov process $(X_t)_{t\geq 0}$ we define its \emph{semigroup} $(P_t)_{t\geq 0}$ \citep{xiao_random_2004, schilling_introduction_2016-1}, as the following operators, defined for bounded measurable function $f$:
\begin{align*}
    P_t f(x) = \mathds{E}^x [f(X_t)],
\end{align*} 
where $\mathds{E}^x$ denotes the initialization of the process at $x$ (\ie conditionally on $X_0 = x$). 

If $X$ is a Lévy process, or more generally a Feller process, see \citep{schilling_introduction_2016-1}, such a semigroup is characterized by its infinitesimal generator.

\begin{definition}
    \label{def:infinitesimal_generator}
    Let $\mathcal{C}^0_\infty(\Rd)$ be the space of continuous functions vanishing to zero at infinity. As soon as it exists, we define the generator of the semigroup $(P_t)_t$ as the ensuing limit:
    \begin{align*}
        Af (x) := \lim_{t \to 0} \frac{P_t f - f}{t},
    \end{align*}
    where the limit is understood in the uniform norm on $\mathcal{C}^0_\infty(\Rd)$. The domain of the generator, denoted $\mathcal{D}(A) \subset \mathcal{C}^0_\infty(\Rd)$, is the set of functions for which the above limit is defined\footnote{When the context allows it, the generator may be naturally extended to other spaces.}.
\end{definition}

It is known that the generator of $L_t^2 = \sqrt{2}B_t$ is $A \phi = \Delta \phi$, where $\Delta$ denotes the Laplacian. Following \citep{bottcher_levy_2013, duan_introduction_2015, umarov_beyond_2018}, we can express the generator of $(\levy)$, for $\alpha \in (0,2)$, on the appropriated domain, which at least contains $\mathcal{C}_c^\infty(\Rd)$\footnote{$\mathcal{C}_c^\infty(\Rd)$ denotes the set of infinitely many times differentiable functions with compact support.},
\begin{align}
    \label{eq:generator_stable_levy_process}
    A \phi(x) = C_{\alpha,d}\int_{\Rd} \left( \phi(x+z) - \phi(x) - \nabla \phi(x) \cdot z \chi(\normof{z}) \right) d\nu_\alpha(z),
\end{align}
with the same notations as in \cref{def:stable_levy_processes}. Following \citep{lischke_what_2019, umarov_beyond_2018}, \cref{eq:generator_stable_levy_process} is one of the possible equivalent definitions of the fractional Laplacian. Note that the term fractional Laplacian is actually an abuse of notations, the correct terminology would be the negative fractional negative Laplacian, as we can see in the following definition:
\begin{align}
    \label{eq:fractional_laplacian_first_def}
    - \fraclap \phi(x) := C_{\alpha,d}\int_{\Rd} \left( \phi(x+z) - \phi(x) - \nabla \phi(x) \cdot z \chi(\normof{z}) \right) d\nu_\alpha(z),
\end{align}

\begin{remark}[Equivalent definitions of $\fraclap$]
    There are several equivalent definitions of the fractional Laplacian, we refer the reader to \citep{lischke_what_2019, teymurazyan_fractional_2023} for all details. We only mention two that are commonly used:
    \begin{itemize}
        \item Principal value integral this is the definition used by \citet{raj_algorithmic_2023}, even though their sign convention is different:
        \begin{align*}
            \fraclap \phi(x) = C_{\alpha,d} \lim_{\epsilon \to 0} \int_{\Rd \backslash B_\epsilon(0)} \frac{u(x) - u(x+z)}{\normof{z}^{d+\alpha} } dz.
        \end{align*}
        \item Fourier transform representation: $\mathcal{F}(\fraclap \phi)(\xi) = \normof{\xi}^\alpha \mathcal{F}\phi(\xi)$.
    \end{itemize}
\end{remark}

\subsubsection{Lévy-driven diffusions}

In this subsection, we consider a function $V: \Rd \longrightarrow \R$, which we call potential function. We consider the following stochastic differential equation (SDE):
\begin{align}
    \label{eq:sde_technical_background}
    dX_t = -\nabla V(X_t)dt + \sigma_1 d\levy + \sigma_2 \sqrt{2} dB_t,
\end{align}
with $\alpha \in (0,2)$. This equation admits a strong solution (in the Itô sense), as soon as $V$ is smooth, \ie it satisfies \cref{ass:smooth}, see \citep{schilling_symbol_2010}. 

Let us denote by $\rho = \rho(t,x)$ the probability density of the process $X$. Under certain regularity conditions, this function $\rho$ is known to satisfy the following Fokker-Planck equation, at least in a weak sense (\ie in the sense of distributions):
\begin{align}
    \label{eq:fokker_planck_appendix}
    \partial_t \rho = \nabla \cdot (\rho \nabla V) + \op{\rho},
\end{align}
with the operator $\rho$ being defined using the self adjoint operator of the driving process of \cref{eq:sde_technical_background} as:
\begin{align}
    \label{eq:I_operator}
    \op{\rho} = \sigma_2^2 \Delta \rho + C_{\alpha, d} \sigma_1^\alpha \int \left( \rho(x+z) - \rho(x) - \nabla \rho(x) \cdot z \xi(\normof{z}) \right) d\nu_\alpha(z),
\end{align}
\cref{eq:fokker_planck_appendix} is exactly \cref{eq:empirical_lfp}, with $V = F_S$.

We will use the self-adjointness of such an operator, recalled in the following lemma, of which the reader may find more precise formulations in \citep{gentil_logarithmic_2008,lischke_what_2019,tristani_fractional_2013}.

\begin{lemma}
    As soon as it is well defined, the operator $I$, defined by \cref{eq:I_operator} is self-adjoint, \ie for $\varphi$ and $\psi$, regular enough, we have:
    \begin{align*}
        \int \varphi \op{\psi} dx = \int \psi\op{\varphi} dx. 
    \end{align*}
\end{lemma}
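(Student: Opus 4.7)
The plan is to split $\op{\cdot}$ into its local Brownian part $I_B[\rho]=\sigma_2^2\Delta\rho$ and its nonlocal $\alpha$-stable part $I_L[\rho]$, and verify self-adjointness of each separately. For $I_B$, two integrations by parts yield $\int\varphi\,\Delta\psi\,dx = -\int\nabla\varphi\cdot\nabla\psi\,dx = \int\psi\,\Delta\varphi\,dx$, with boundary terms vanishing under the decay at infinity captured by the "regular enough" hypothesis (of the same flavor as the vanishing condition in \cref{ass:phi_regularity}).

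For $I_L$, the first step is to recast the generator in its symmetric second-difference form. Because $\nu_\alpha$ has density $C_{\alpha,d}/\normof{z}^{d+\alpha}$, it is invariant under $z\mapsto -z$, and since $\chi(\normof{z})$ depends only on $\normof{z}$, the compensator $\nabla\phi(x)\cdot z\,\chi(\normof{z})$ is odd in $z$, so its principal-value integral against $\nu_\alpha$ vanishes. Symmetrizing the remaining increment $\phi(x+z)-\phi(x)$ via $z\mapsto -z$ then yields the absolutely convergent representation
$$I_L[\phi](x)=\frac{C_{\alpha,d}\sigma_1^\alpha}{2}\int\bigl(\phi(x+z)+\phi(x-z)-2\phi(x)\bigr)\,d\nu_\alpha(z),$$
for which a second-order Taylor expansion dominates the integrand by $\min(\normof{z}^2,1)(\normof{\phi}_\infty+\normof{\nabla^2\phi}_\infty)$, which is $\nu_\alpha$-integrable by \cref{thm:levy_khintchine}.

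With this symmetric form in hand, self-adjointness of $I_L$ is immediate from Fubini and translation invariance of Lebesgue measure: writing
$$\int\varphi(x)\,I_L[\psi](x)\,dx=\frac{C_{\alpha,d}\sigma_1^\alpha}{2}\int\!\!\int\varphi(x)\bigl(\psi(x+z)+\psi(x-z)-2\psi(x)\bigr)\,dx\,d\nu_\alpha(z)$$
and substituting $y=x\mp z$ in the first two inner $x$-integrals converts $\varphi(x)\psi(x\pm z)$ into $\varphi(y\mp z)\psi(y)$, producing exactly the symmetric expression with $\varphi$ and $\psi$ exchanged. Adding the contribution of $I_B$ gives $\int\varphi\,\op{\psi}\,dx=\int\psi\,\op{\varphi}\,dx$.

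The main obstacle, as is typical for nonlocal operators, is rigorously justifying the principal-value cancellation of the compensator and the Fubini interchange. Both boil down to checking joint $dx\otimes d\nu_\alpha$-integrability of $|\varphi(x)|\min(\normof{z}^2,1)$-type expressions, together with enough decay of $\varphi,\psi$ to kill the boundary terms in the Brownian part; these are exactly the regularity conditions used in analogous computations in \citep{gentil_logarithmic_2008, tristani_fractional_2013}, and are what "regular enough" is intended to encode.
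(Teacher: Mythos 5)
Your proof is correct and is exactly the standard argument for self-adjointness of generators of symmetric Lévy processes. Note, for the record, that the paper itself does not prove this lemma: it delegates to the cited references (Gentil--Imbert, Tristani, Lischke et al.), which use the symmetric second-difference form just as you do, so there is no in-paper proof to contrast against. One small cosmetic point on how you justify passing to that form. You first drop the compensator $\nabla\phi(x)\cdot z\,\chi(\normof{z})$ by its oddness ``in the principal-value sense,'' and \emph{then} symmetrize the leftover $\phi(x+z)-\phi(x)$. But that leftover is itself only conditionally $\nu_\alpha$-integrable near the origin when $\alpha\geq 1$ (it scales like $\normof{z}$ against a density $\sim\normof{z}^{-d-\alpha}$), so the intermediate object is again only a principal value. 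It is cleaner to symmetrize the \emph{full} integrand $\phi(x+z)-\phi(x)-\nabla\phi(x)\cdot z\,\chi(\normof{z})$, which is absolutely $\nu_\alpha$-integrable to begin with, in a single $z\mapsto -z$ average: the odd compensator cancels and the even second difference $\tfrac12\left(\phi(x+z)+\phi(x-z)-2\phi(x)\right)$ emerges simultaneously, with no principal value ever invoked. This does not change anything downstream --- your Fubini and change-of-variables step $y=x\mp z$, and the Brownian part by double integration by parts with boundary terms killed by decay, are both correct.
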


We now discuss the validity of this equation in the following two remarks. Those remarks may be skipped without hurting the general understanding of the paper. They are meant to explain what needs to be assumed to be as rigorous as possible in our treatment of the fractional Fokker-Planck equation.

\begin{remark}[Justification of the equation]
    Using the main result of \citet{kuhn_solutions_2018}, it can be argued that, under \cref{ass:smooth}, $X$ is a Feller process. Its generator can be expressed as \citep{schilling_symbol_2010, duan_introduction_2015, umarov_beyond_2018}:
    \begin{align*}
        A \phi(x) = -\nabla V(x) \cdot \nabla \phi(x) + \sigma_2^2 \Delta \phi(x) - \sigma_1^\alpha \fraclap \phi(x).
    \end{align*}
    If we denote by $(P_t)_t$ the semi-group associated with $X$, then we have the Kolmogorov backward equation, for $f$ in the generator $\mathcal{D}(A)$ of $A$:
    \begin{align*}
        \frac{d}{dt} P_t f = A P_t f = P_t A f,
    \end{align*}
    taking the $L^2$-adjoint of this equation leads to $\partial_t \rho = A^\star \rho$ (at least in a weak sense). A direct computation of the adjoint $A^\star$, using properties of the fractional Laplacian, leads to \cref{eq:fokker_planck_appendix}. 
\end{remark}

\begin{remark}[Validity of \cref{eq:fokker_planck_appendix}]
    Let us quickly discuss the domain of validity of the Fokker-Planck equations considered in this paper. We are in particular interested in the (local) regularity of the potential solutions, as they are necessary to give meaning to several computations made in our proofs. Let us first mention that in the simpler case of a quadratic potential, the regularity easily comes from the analytic solution of such equations \citep{lafleche_fractional_2020}. Moreover, equations such as \cref{eq:fokker_planck_appendix} are known to have regularization properties and have the ability to generate smooth solutions.
    
    More precisely, as it is mentioned in \citep{umarov_beyond_2018}, under regularity assumptions, it is known that such an equation is satisfied in a weak sense, namely in the sense of distributions \citep{halperin_introduction_1952}. Therefore, one may ask whether smooth solutions do actually exist. Smoothness in the $x$ variable has been proven for the Ornstein-Uhlenbeck drift by \citep{xie_regularity_2015}. Space-time regularity has been proven in \citep{imbert_non-local_2005} in the case of a bounded force field $\nabla F_S$. An example of space regularity was achieved in the case of a force field with bounded derivatives of positive order, in \citep{wang_existence_2017}. Finally, let us mention the work of \citep{lafleche_fractional_2020}, which provides further regularity conditions.
\end{remark}

\subsection{Logarithmic Sobolev inequalities and $\Phi$-entropy inequalities}
\label{sec:log_sobolev_inequalities_phi_entropy}

Logarithmic Sobolev inequalities (LSI) and Poincaré inequalities are central tools in probability theory. LSIs were historically introduced by \citet{gross_logarithmic_1975} and have famously been applied to the analysis of Markov processes \citep{bakry_analysis_2014}, the study of evolution equations \citep{markowich_trend_2004}, and have been connected to optimal transport and geometry \citep{villani_optimal_2009}. For a short introduction, the reader may consult the tutorials \citep{chafai_entropies_2004, chafai_logarithmic_2017}. In this section, we quickly introduce a few of these results, with an emphasis on a generalization to infinitely divisible distributions, playing an important role in our study.

\subsubsection{The classical inequalities}

We first quickly recall the classical Poincaré inequality and LSI. They hold for the standard Gaussian measure\footnote{Using very simple arguments, it holds for every Gaussian, up to changes in the constants.}, defined by $\gamma_d = \mathcal{N}(0, I_d)$. We first give Poincaré inequality:
\begin{theorem}
    \label{thm:classical_poincaré inequality}
    Let $f$ be a function such that $\nabla f \in L^2(\gamma_d)$, we have:
    \begin{align*}
        \int_\Rd f^2 d\gamma_d - \left( \int_\Rd f d\gamma_d \right)^2 \leq \int_\Rd \normof{\nabla f}^2 d\gamma_d.
    \end{align*}
\end{theorem}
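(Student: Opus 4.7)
The plan is to prove the Gaussian Poincaré inequality via the Ornstein--Uhlenbeck (OU) semigroup, which is the shortest self-contained route and fits the semigroup language already used elsewhere in the paper. Let $(P_t)_{t \geq 0}$ denote the OU semigroup associated with the SDE $dX_t = -X_t \, dt + \sqrt{2}\, dB_t$, whose invariant measure is precisely $\gamma_d$. Two facts about this semigroup will do essentially all of the work: first, the commutation identity $\nabla P_t f = e^{-t} P_t \nabla f$ (which follows from differentiating Mehler's formula $P_t f(x) = \int f(e^{-t} x + \sqrt{1 - e^{-2t}} y) \, d\gamma_d(y)$ under the integral sign); and second, $\gamma_d$-reversibility, giving the integration-by-parts formula $\int (Lg) h \, d\gamma_d = -\int \nabla g \cdot \nabla h \, d\gamma_d$, where $L = \Delta - x \cdot \nabla$ is the generator.

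The first step is the standard energy-dissipation identity. Assume without loss of generality that $f$ is smooth and compactly supported (the general case follows by density). Write the variance as a telescoping integral along the semigroup:
\begin{align*}
\mathrm{Var}_{\gamma_d}(f) = \int f^2 \, d\gamma_d - \Bigl( \int f \, d\gamma_d \Bigr)^2 = -\int_0^\infty \frac{d}{dt} \int (P_t f)^2 \, d\gamma_d \, dt,
\end{align*}
using that $P_t f \to \int f \, d\gamma_d$ in $L^2(\gamma_d)$ as $t \to \infty$. Differentiating and applying integration by parts against $L$ gives $\frac{d}{dt} \int (P_t f)^2 \, d\gamma_d = 2 \int P_t f \cdot L P_t f \, d\gamma_d = -2 \int \|\nabla P_t f\|^2 \, d\gamma_d$, so
\begin{align*}
\mathrm{Var}_{\gamma_d}(f) = 2 \int_0^\infty \int \|\nabla P_t f\|^2 \, d\gamma_d \, dt.
\end{align*}

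The second step is the crucial use of commutation plus contractivity. Plug in $\nabla P_t f = e^{-t} P_t \nabla f$ coordinate-wise, then apply Jensen's inequality to the Markov kernel $P_t$ (which gives $\|P_t \nabla f\|^2 \leq P_t \|\nabla f\|^2$), followed by invariance of $\gamma_d$ under $P_t$:
\begin{align*}
\mathrm{Var}_{\gamma_d}(f) = 2 \int_0^\infty e^{-2t} \int \|P_t \nabla f\|^2 \, d\gamma_d \, dt \leq 2 \int_0^\infty e^{-2t} \, dt \cdot \int \|\nabla f\|^2 \, d\gamma_d = \int \|\nabla f\|^2 \, d\gamma_d.
\end{align*}
This gives the claimed bound. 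I do not expect any serious obstacle here: the main thing to be careful about is justifying differentiation under the integral and the $L^2$-convergence $P_t f \to \int f \, d\gamma_d$, both of which are standard once one has Mehler's formula. An alternative route would be to prove the $d=1$ case via the Hermite polynomial expansion $f = \sum_k a_k H_k$ (whence $\mathrm{Var}(f) = \sum_{k \geq 1} k! \, a_k^2$ and $\int (f')^2 = \sum_{k \geq 1} k \cdot k! \, a_k^2$, so the inequality is termwise) and then tensorize to arbitrary $d$; but the semigroup proof is cleaner and generalizes naturally to the $\Phi$-entropy framework used in the rest of the paper.
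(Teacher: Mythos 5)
Your proof is correct, but it takes a genuinely different route from the paper. The paper does not prove Theorem~\ref{thm:classical_poincaré inequality} directly: it is stated as a classical fact, and a few lines later the paper observes that it follows as the special case $\mu = \gamma_d$ (Lévy triplet $(0, I_d, 0)$, so no jump part), $\Phi(x) = x^2$ of the Generalized LSI for infinitely divisible laws, Theorem~\ref{thm:generalized_lsi}, which is itself cited from Wu and Chafa\"i rather than proved. So the paper's argument is top-down — specialize a general inequality imported from the literature — whereas you give a bottom-up, self-contained semigroup proof: the energy-dissipation identity $\mathrm{Var}_{\gamma_d}(f) = 2\int_0^\infty \int \|\nabla P_t f\|^2 \, d\gamma_d \, dt$, the intertwining $\nabla P_t f = e^{-t} P_t \nabla f$ from Mehler's formula, and Jensen's inequality plus invariance to produce the factor $\int_0^\infty 2 e^{-2t}\, dt = 1$. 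Each step is correct, and your closing remarks about density and about the alternative Hermite-expansion-plus-tensorization route are also accurate. What the paper's approach buys is consistency with the framework it actually uses later (the $\Phi$-entropy and generalized Poincar\'e inequalities for infinitely divisible priors in Section~\ref{sec:proofs-poincare-inequality}), at the price of outsourcing the real work to a black-box theorem. What your approach buys is transparency and self-containment, and it is in fact the same entropy-dissipation philosophy (differentiate a Lyapunov functional along the Fokker--Planck flow, integrate in time) that underlies the paper's own Lemma~\ref{lemma:big_decomposition}; you simply apply it to the OU generator with the Gaussian stationary measure rather than the heavy-tailed one.
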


\begin{example}
    If we define the convex function $\Phi(x) = \Phi_2(x) := \frac{x^2}{2}$, as in \cref{sec:toward_time_uniform}, then the left hand side of the inequality of \cref{thm:classical_poincaré inequality} can be understood as $2\entphi[\gamma_d]{f}$. 
\end{example}

The next theorem is the classical LSI for the Gaussian measure $\gamma_d$:

\begin{theorem}
    \label{thm:classical_LSI}
    Let $f \in \mathcal{C}^1(\Rd,R)$ be non-negative and integrable, then we have, for $\Phi(x) = \Phi_{\log}(x) = x\log(x)$:
    \begin{align*}
        \entphi[\gamma_d]{f} \leq \frac{1}{2} \int_\Rd \frac{\normof{\nabla f}^2} f d\gamma_d.
    \end{align*}
\end{theorem}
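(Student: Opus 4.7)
The plan is to prove the LSI via the Ornstein–Uhlenbeck (OU) semigroup, which is the natural tool in this setting since $\gamma_d$ is its unique invariant measure and the argument parallels the entropy-flow calculations used elsewhere in the paper.

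First, I would set $u_t := P_t f$, where $(P_t)_{t\ge 0}$ is the OU semigroup with generator $L u = \Delta u - x\cdot\nabla u$, so that $\partial_t u_t = L u_t$ and $\gamma_d$ is invariant. By positivity-preservation of $P_t$, $u_t > 0$ whenever $f \ge 0$ is nontrivial, so $\log u_t$ is well defined, and $\int u_t \, d\gamma_d = \int f\, d\gamma_d$ is conserved. Define the entropy along the flow
\begin{equation*}
H(t) := \entphi[\gamma_d]{u_t} \quad \text{with} \quad \Phi(x)=x\log x,
\end{equation*}
and the Fisher-type quantity
\begin{equation*}
I(t) := \int_{\Rd} \frac{\normof{\nabla u_t}^2}{u_t}\, d\gamma_d.
\end{equation*}

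Next, I would differentiate $H$ in time. Since $\int u_t\, d\gamma_d$ is constant, the second term of $\entphi[\gamma_d]{u_t}$ does not contribute, and a formal computation gives
\begin{equation*}
\frac{d}{dt} H(t) = \int (\log u_t + 1)\, L u_t\, d\gamma_d.
\end{equation*}
Using that $L$ is self-adjoint in $L^2(\gamma_d)$ with carré-du-champ $\Gamma(f,g) = \nabla f \cdot \nabla g$ (equivalently, integration by parts against the Gaussian weight), this collapses to
\begin{equation*}
\frac{d}{dt} H(t) = -\int \nabla(\log u_t)\cdot \nabla u_t\, d\gamma_d = -I(t).
\end{equation*}
The key remaining identity is the exponential decay of Fisher information along OU, namely $I(t) \le e^{-2t} I(0)$. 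I would derive this by computing $\frac{d}{dt} I(t)$ using $\partial_t u_t = L u_t$ and the Bochner-type identity $L(\normof{\nabla u}^2/u) - 2\nabla(\log u)\cdot \nabla(L u) \le -2 \normof{\nabla u}^2/u$, which uses only $\mathrm{Hess}(\tfrac{1}{2}\normof{x}^2) = I_d$ (this is the $CD(1,\infty)$ curvature-dimension condition of $\gamma_d$).

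Finally, I would integrate $\frac{d}{dt} H(t) = -I(t)$ from $0$ to $\infty$. Ergodicity of OU gives $u_t \to \int f\, d\gamma_d$ in $L^1(\gamma_d)$ as $t\to\infty$, so $H(\infty)=0$, and
\begin{equation*}
H(0) = \int_0^\infty I(t)\, dt \le I(0) \int_0^\infty e^{-2t}\, dt = \tfrac{1}{2} I(0),
\end{equation*}
which is exactly the claimed inequality once one notes $I(0) = \int \normof{\nabla f}^2/f\, d\gamma_d$. The main obstacle is the rigorous justification of the differentiations under the integral sign and the Fisher-information decay: both require sufficient regularity of $u_t$ (which is provided by the smoothing of the OU semigroup for $t>0$) and a truncation/approximation argument to reduce to smooth, bounded-away-from-zero $f$ before passing to the limit by monotone/dominated convergence.
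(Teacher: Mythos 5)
Your proof is correct, but it takes a genuinely different route from the paper. The paper does not prove \cref{thm:classical_LSI} directly: it states it as background, then later remarks that it follows by specializing the generalized $\Phi$-entropy inequality for infinitely divisible laws (\cref{thm:generalized_lsi}, cited from Wu and Chafa\"i) to the Gaussian, whose L\'evy triplet is $(0, I_d, 0)$ — the jump term then vanishes and only the quadratic Dirichlet form survives, giving exactly the constant $1/2$. You instead give a self-contained first-principles proof via the Ornstein--Uhlenbeck semigroup: differentiate the entropy along the heat flow to produce $-I(t)$, use the $CD(1,\infty)$ curvature bound (Hess of the quadratic potential equals $I_d$) to get exponential decay $I(t)\le e^{-2t}I(0)$, then integrate in time. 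This is the standard Bakry--\'Emery argument. What the paper's route buys is brevity and coherence with the rest of its toolkit, since \cref{thm:generalized_lsi} is the very inequality it needs later for the heavy-tailed steady state; the classical Gaussian LSI is just a sanity check that the general theorem contains the usual one. What your route buys is independence from the black-box citation and a clean quantitative explanation of where the factor $1/2$ comes from (it is $\int_0^\infty e^{-2t}\,dt$). Your final paragraph correctly flags the real technical burden of your approach — justifying interchange of derivative and integral, $u_t>0$, and the limiting argument — which is exactly the kind of regularity bookkeeping the paper encodes abstractly in its $\Phi$-regularity assumptions elsewhere; for a proof of this background theorem an approximation argument along the lines you sketch (truncate, regularize, pass to the limit) is standard and sound.
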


\begin{example}
    With the convex function $\Phi$ used in the above theorem, the right-hand side of the inequality is, up to the constant, the Fisher information, which we call $\Phi$-information in \cref{lemma:big_decomposition}.
\end{example}

\subsubsection{Generalization to infinitely divisible distributions}

Part of our analysis is based on a generalization of those inequalities to infinitely divisible distributions. Let us recall that those distributions have been defined in \cref{sec:background_levy_markov} and can be equivalently seen as the distribution of Lévy processes. This is how we connect those inequalities to our theory.

Let us first recall the definition of $\Phi$-entropies, \ie \cref{def:phi_entropies}.

\defPhiEntropy*

Note that, by Jensen's inequality, such a term is always non-negative.

The following theorem was proved by \citet{wu_new_2000} and \citet{chafai_entropies_2004}, it generalizes Poincaré and logarithmic Sobolev inequalities to infinitely divisible distributions.

\begin{theorem}[Generalized LSI]
    \label{thm:generalized_lsi}
    Let $\mu$ be an infinitely divisible law on $\Rd$, with associated triplet denoted $(b, Q, \nu)$, in the sense of \cref{thm:levy_khintchine}. We further assume that $\Phi : \R_+ \longrightarrow \R$ is a convex function that satisfies the following set of assumptions:
    \begin{align}
        \label{eq:phi-asmpt-for-heavy-tailed-lsi}
        \left\{
            \begin{aligned}
            &(u,v) \longmapsto \Phi(u + v) - \Phi(v) - u\Phi'(v) \text{  is convex on its domain of definition} \\
            &(u,x) \longmapsto \Phi''(u) x \cdot Qx \text{  is convex on its domain of definition}. \\
            \end{aligned}
        \right.
    \end{align}   
    Then, for every smooth enough function $v$, we have:
    \begin{align*}
        \entphi{v} \leq \frac{1}{2} \int \Phi''(v) \nabla v \cdot Q \nabla v ~d\mu + \iint \bregman{v(x+z)}{v(x)} d\nu(z) d\mu(x). 
    \end{align*}
\end{theorem}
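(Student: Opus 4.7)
The plan is to adapt the classical semigroup proof of the logarithmic Sobolev inequality to the non-local generator associated with the L\'evy process behind $\mu$. Every infinitely divisible law $\mu$ with triplet $(b,Q,\nu)$ is the distribution of $L_1$, where $(L_t)_{t\geq 0}$ is the L\'evy process with that triplet, whose semigroup $P_t f(x) \defeq \Eof{f(x+L_t)}$ has infinitesimal generator
\begin{align*}
A f(x) = b\cdot\nabla f(x) + \tfrac{1}{2}\mathrm{tr}\bigl(Q\nabla^2 f(x)\bigr) + \int \bigl[f(x+z) - f(x) - z\cdot\nabla f(x)\,\chi(\normof{z})\bigr]\,d\nu(z),
\end{align*}
acting on a suitable core of smooth functions.

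First, I would introduce the carr\'e du champ $\Gamma_\Phi(f) \defeq A[\Phi(f)] - \Phi'(f)\,A f$ and compute it by splitting $A$ into its drift, Gaussian, and jump parts. The drift vanishes since $A_{\mathrm{drift}}$ satisfies $A_{\mathrm{drift}}[\Phi(f)] = \Phi'(f)\,A_{\mathrm{drift}}f$; the Gaussian part yields $\tfrac12\Phi''(f)\nabla f\cdot Q\nabla f$ by Itô's chain rule; and direct algebraic manipulation of the jump part (the linear corrector terms cancel) produces exactly $\int \bregman{f(x+z)}{f(x)}\,d\nu(z)$. Thus $\Gamma_\Phi(f)$ equals, pointwise, the right-hand side of the claimed inequality.

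Second, for a smooth test function $f$ I would use the semigroup interpolation $\varphi(t)\defeq P_t[\Phi(P_{1-t}f)]$ for $t\in[0,1]$. The forward/backward Kolmogorov equations $\partial_t P_t = AP_t = P_t A$ and the chain rule yield, after a routine differentiation, the identity $\varphi'(t) = P_t\bigl[\Gamma_\Phi(P_{1-t}f)\bigr]$. Integrating in $t$, evaluating at $x=0$, and using $P_1 g(0) = \int g\,d\mu$ gives the representation
\begin{align*}
\entphi[\mu]{f} = \varphi(1)(0) - \varphi(0)(0) = \int_0^1 P_t\bigl[\Gamma_\Phi(P_{1-t}f)\bigr](0)\,dt.
\end{align*}

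Third, I would establish the sub-commutation inequality $\Gamma_\Phi(P_s g) \le P_s[\Gamma_\Phi(g)]$ pointwise for every $s\ge 0$. Since $P_s g(x) = \Eof{g(x+L_s)}$ is an expectation and $\nabla P_s g(x) = \Eof{\nabla g(x+L_s)}$, Jensen's inequality combined with the two joint-convexity conditions in \eqref{eq:phi-asmpt-for-heavy-tailed-lsi} transfers both pieces of $\Gamma_\Phi$ past $P_s$: for the Gaussian piece this uses convexity of $(u,v)\mapsto \Phi''(u)\,v\cdot Q v$, while for the jump piece the map $(a,b)\mapsto \bregman{a}{b}$ is convex (via the substitution $(u,v)=(a-b,b)$ in the first line of \eqref{eq:phi-asmpt-for-heavy-tailed-lsi}), so $\bregman{P_s g(x+z)}{P_s g(x)}\le P_s\bigl[\bregman{g(\cdot+z)}{g(\cdot)}\bigr](x)$. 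Plugging this into the representation formula and using the semigroup property $P_t P_{1-t} = P_1$ yields
\begin{align*}
\entphi[\mu]{f} \le \int_0^1 P_1[\Gamma_\Phi(f)](0)\,dt = \int \Gamma_\Phi(f)\, d\mu,
\end{align*}
which is the stated inequality. The main obstacle is the sub-commutation step, together with justifying all the regularity, integrability, and differentiation-under-the-integral manipulations along the semigroup; these are handled by representing $\nabla P_s f$ and $\nabla^2 P_s f$ as expectations and invoking the smoothness of $f$ together with the moment conditions implicit in the triplet $(b,Q,\nu)$.
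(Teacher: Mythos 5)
The paper does not prove this theorem; it cites it from \citet{wu_new_2000} and \citet{chafai_entropies_2004}. Your semigroup-interpolation argument is correct and is essentially the proof given in those references: the carr\'e du champ $\Gamma_\Phi(f)=A\Phi(f)-\Phi'(f)Af$ decomposes into the drift (vanishing), Gaussian, and jump pieces exactly as you compute, the Duhamel-type formula $\entphi[\mu]{f}=\int_0^1 P_t\bigl[\Gamma_\Phi(P_{1-t}f)\bigr](0)\,dt$ follows from the commutation $AP_t=P_tA$, and the sub-commutation $\Gamma_\Phi(P_s g)\le P_s\Gamma_\Phi(g)$ is precisely where the two joint-convexity hypotheses of \eqref{eq:phi-asmpt-for-heavy-tailed-lsi} enter via Jensen and translation invariance of the convolution semigroup ($\nabla P_s=P_s\nabla$). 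Your observation that the linear change of variables $(a,b)\mapsto(a-b,b)$ turns the first hypothesis into joint convexity of $(a,b)\mapsto D_\Phi(a,b)$ is exactly the right way to transfer the jump part through $P_s$. The only thing to keep in mind, which you flag at the end, is that the Kolmogorov identities and the differentiation of $t\mapsto P_t[\Phi(P_{1-t}f)]$ require $f$ and $\Phi\circ f$ to lie in a suitable core of the generator and to have enough integrability against $\mu$ and $\nu$; for the ``smooth enough'' $v$ in the statement this is routine.
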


Note that, when $\mu = \gamma_d$, then its Lévy triplet is $(0, I_d, 0)$, so that \cref{thm:generalized_lsi} implies \cref{thm:classical_poincaré inequality,thm:classical_LSI}.
The assumptions given by \eqref{eq:phi-asmpt-for-heavy-tailed-lsi} where in particular identified by \cite{chafai_entropies_2004} and \cite{gentil_logarithmic_2008}, they contain in particular the functions $x \mapsto x\log(x)$ (so that we recover the usual LSI in the Gaussian case), and the functions $x \mapsto x^p$, with $1<p \leq 2$, which is the case we will consider in this study.

\subsection{Some properties of the Gamma function}
\label{sec:gamma_function}

The Euler gamma function is classically defines by:
\begin{align*}
    \forall x > 0, ~\Gamma(x) = \int_0^\infty t^{x-1} e^{-t} dt.
\end{align*}
It has a natural extension to $\mathds{C} \backslash (-\mathds{N})$. In this subsection, we give a few properties of this function, which will be useful to prove the results of \cref{sec:qualitative_analysis} in \cref{sec:proofs-qualitative_analysis}. 

\textbf{One particular value:}
\begin{align*}
    \Gamma \left( \frac{1}{2} \right) = \sqrt{\pi}.
\end{align*}

\begin{lemma}[Euler's reflection formula]
    \label{lemma:euler_reflection}
    For all $z \in \mathds{R} \backslash \mathds{Z}$, we have:
    \begin{align*}
        \Gamma (1 - z) \Gamma (z) = \frac{\pi}{\sin (\pi z)}.
    \end{align*}
\end{lemma}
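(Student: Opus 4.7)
The plan is to reduce the identity to a Beta-function integral and then evaluate that integral by a keyhole contour in the complex plane. By analyticity it suffices to prove the identity for $z \in (0,1)$ and then extend by the identity theorem to $\mathds{R}\setminus\mathds{Z}$ (both sides being meromorphic).

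First I would recall the Beta-function identity
\begin{align*}
B(x,y) \;=\; \frac{\Gamma(x)\Gamma(y)}{\Gamma(x+y)} \;=\; \int_0^1 t^{x-1}(1-t)^{y-1}\,dt,
\end{align*}
which follows directly from the definition of $\Gamma$ by Fubini. Specializing to $x=z$, $y=1-z$ with $z\in(0,1)$ and using $\Gamma(1)=1$, we get
\begin{align*}
\Gamma(z)\Gamma(1-z) \;=\; \int_0^1 t^{z-1}(1-t)^{-z}\,dt.
\end{align*}
Next, the change of variables $u = t/(1-t)$ transforms this into
\begin{align*}
\Gamma(z)\Gamma(1-z) \;=\; \int_0^\infty \frac{u^{z-1}}{1+u}\,du.
\end{align*}

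Second, I would evaluate $I(z) := \int_0^\infty u^{z-1}/(1+u)\,du$ via a keyhole contour. Take $f(w) = w^{z-1}/(1+w)$ with the branch cut on the positive real axis, defining $w^{z-1} = e^{(z-1)\log w}$ with $\arg w \in (0, 2\pi)$. The contour consists of a large circle of radius $R$, a small circle of radius $\varepsilon$ around the origin, and two horizontal segments just above and just below the cut. For $z\in(0,1)$, the contributions from the two circular arcs vanish as $R\to\infty$ and $\varepsilon\to 0$, while the two straight pieces contribute $(1-e^{2\pi i z})I(z)$. The only enclosed pole is at $w=-1$ with residue $(-1)^{z-1} = e^{i\pi(z-1)}$. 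The residue theorem then yields
\begin{align*}
(1-e^{2\pi i z})\, I(z) \;=\; 2\pi i\, e^{i\pi(z-1)},
\end{align*}
and a short trigonometric simplification (factor out $e^{i\pi z}$ from $1-e^{2\pi i z}$ to produce $-2i\sin(\pi z)$) gives $I(z) = \pi/\sin(\pi z)$.

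Combining the two steps yields the formula for $z\in(0,1)$, and analytic continuation finishes the proof on $\mathds{R}\setminus\mathds{Z}$. The main obstacle is the technical bookkeeping for the keyhole contour, namely the vanishing of the arc integrals and the careful tracking of the branch of $w^{z-1}$ on each side of the cut; once these are in place the algebraic simplification is routine. An alternative route, which I would only take if the contour argument seemed too long, is to start from the Weierstrass product $1/\Gamma(z) = z e^{\gamma z}\prod_{n\geq 1}(1+z/n)e^{-z/n}$ and multiply by the corresponding product for $1/\Gamma(-z)$, using $\Gamma(1-z) = -z\Gamma(-z)$, to obtain $z\prod_{n\geq 1}(1-z^2/n^2)$, which equals $\sin(\pi z)/\pi$ by Euler's sine product; the obstacle there is justifying the rearrangement of the two infinite products.
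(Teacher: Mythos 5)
Your proof is correct. The paper itself does not prove this lemma: it is stated in the background section (\cref{sec:gamma_function}) as a classical fact about Euler's $\Gamma$ function, with no proof offered. Your argument — reduce $\Gamma(z)\Gamma(1-z)$ to the Beta integral $\int_0^1 t^{z-1}(1-t)^{-z}\,dt$ via Fubini, change variables $u=t/(1-t)$ to obtain $\int_0^\infty u^{z-1}/(1+u)\,du$, evaluate by a keyhole contour, and then extend from $z\in(0,1)$ to $\mathds{R}\setminus\mathds{Z}$ by the identity theorem for meromorphic functions — is a standard and complete proof of the reflection formula, and the residue/branch-cut bookkeeping you describe (residue $e^{i\pi(z-1)}$ at $w=-1$, prefactor $1-e^{2\pi i z}=-2ie^{i\pi z}\sin(\pi z)$) works out exactly. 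The alternative route you mention via the Weierstrass product of $1/\Gamma$ and Euler's product for $\sin(\pi z)/\pi$ is equally standard; either would be acceptable, and since the paper cites the result without proof there is no "paper approach" to contrast against.
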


\begin{lemma}[Stirling's formulas]
    \label{lemma:stirling}
    We have have the two following asymptotic formulas:
    \begin{align*}
        \Gamma(x + 1) \underset{x \to \infty}{\sim} \sqrt{2 \pi x} \left( \frac{x}{e} \right)^x,
    \end{align*}
    and, for all\footnote{This formula is actually true for all $\alpha \in \mathds{C}$, considering the extension of the $\Gamma$ function into the complex plane.} $\alpha \in \R$:
    \begin{align*}
        \Gamma(x + \alpha)  \underset{x \to \infty}{\sim} \Gamma(x) x^\alpha.
    \end{align*}
\end{lemma}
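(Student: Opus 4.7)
The plan is to first establish the classical Stirling asymptotic for $\Gamma(x+1)$ via Laplace's method, and then derive the ratio formula as a routine consequence. For the first asymptotic, I would start from Euler's integral representation $\Gamma(x+1) = \int_0^\infty t^x e^{-t} dt$ and perform the rescaling $t = xs$ to obtain
\[
\Gamma(x+1) = x^{x+1} \int_0^\infty e^{-x \phi(s)}\, ds, \qquad \phi(s) := s - \log s.
\]
The function $\phi$ has a unique minimum at $s = 1$ with $\phi(1) = 1$ and $\phi''(1) = 1$, so Laplace's method yields $\int_0^\infty e^{-x\phi(s)}\, ds \sim e^{-x} \sqrt{2\pi/x}$, and substitution gives the first claim.

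For the second formula, the plan is to apply the first one to both numerator and denominator after a shift: write $\Gamma(x+\alpha) = \Gamma((x+\alpha-1)+1)$ and $\Gamma(x) = \Gamma((x-1)+1)$, substitute Stirling into each, and take the ratio. This produces
\[
\frac{\Gamma(x+\alpha)}{\Gamma(x)} \sim \sqrt{\frac{x+\alpha-1}{x-1}}\, e^{-\alpha}\, \frac{(x+\alpha-1)^{x+\alpha-1}}{(x-1)^{x-1}}.
\]
I would then factor $(x+\alpha-1)^{x+\alpha-1} = (x-1)^{x+\alpha-1} (1 + \alpha/(x-1))^{x+\alpha-1}$. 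As $x \to \infty$, the square-root prefactor tends to $1$, the exponential factor $(1 + \alpha/(x-1))^{x+\alpha-1}$ tends to $e^\alpha$, and the remaining polynomial factor is $(x-1)^\alpha \sim x^\alpha$; these combine to give the second asymptotic.

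The main technical obstacle lies in justifying Laplace's approximation rigorously. One must split the integral $\int_0^\infty e^{-x\phi(s)}\, ds$ into a shrinking neighborhood $[1 - \delta_x, 1 + \delta_x]$ of the minimizer, where the Taylor expansion $\phi(s) \approx 1 + (s-1)^2/2$ yields (after a Gaussian approximation $\int e^{-x(s-1)^2/2} ds = \sqrt{2\pi/x}$) the leading-order term, and a complementary region on which the contribution must be shown to be negligibly small. The latter follows from strict convexity of $\phi$ and the elementary lower bound $\phi(s) - 1 \geq c_\delta > 0$ for $|s-1| \geq \delta$, together with the tail estimate $\phi(s) \geq s/2$ for large $s$, ensuring an exponentially smaller contribution away from the saddle. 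Once the first asymptotic is established, the second follows by the routine algebra sketched above; alternatively, one could first verify the ratio asymptotic for $\alpha \in \mathds{N}$ via $\Gamma(x+n) = x(x+1)\cdots(x+n-1)\Gamma(x)$ and then extend by log-convexity of $\Gamma$, avoiding a second invocation of Stirling.
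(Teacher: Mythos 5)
The paper states this lemma in \cref{sec:gamma_function} as a piece of classical background on the Gamma function and provides no proof at all; it is simply quoted as a known asymptotic. Your proposal therefore cannot be compared against a paper-internal argument, but it is a correct reconstruction of the standard proof: Laplace's method at the unique minimum $s=1$ of $\phi(s)=s-\log s$ in the rescaled Euler integral yields the leading Stirling asymptotic, and the algebraic manipulation $(x+\alpha-1)^{x+\alpha-1}=(x-1)^{x+\alpha-1}(1+\alpha/(x-1))^{x+\alpha-1}$ correctly isolates the limiting factors $e^\alpha$, $x^\alpha$, and a square-root prefactor tending to $1$, so that the exponential terms cancel and the ratio asymptotic $\Gamma(x+\alpha)/\Gamma(x)\sim x^\alpha$ follows. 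Your remark about the two-region splitting needed to justify Laplace's method rigorously (Taylor expansion near the saddle plus exponential smallness away from it, using strict convexity and the linear lower bound on $\phi$ at infinity) is exactly the point where a fully detailed argument would have to be careful, and your alternative route via $\Gamma(x+n)=x(x+1)\cdots(x+n-1)\Gamma(x)$ for integer $\alpha$ followed by log-convexity interpolation is also a valid, more elementary path for the second formula.
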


\section{A PAC-Bayesian bound for sub-gaussian losses}
\label{sec:proofs-subgaussian-pb}

In this section, we give proof of a PAC-Bayesian bound, which applies to sub-gaussian losses. The notations for $\zcal$, $S$,  $\mu_z$, $\ell$, $L$ and $\el$ are the same as in \cref{sec:intro}, \ie:
\begin{align*}
    L(w) := \Eof[z\sim\mu_z]{\ell(w,z)}, \quad, \el(w) := \frac{1}{n} \sum_{i=1}^n \ell(w,z_i),
\end{align*}
with $S = (z_1,\dots,z_n) \in \zcal^n$. We consider a prior distribution $\pi$ as well as a family of posterior distributions, as it has been introduced in \cref{sec:it_terms_and_pb_bounds}.

By \cref{thm:PAC_Bayesian_bounded_loss}, which has been proven by \citet{mcallester_pac-bayesian_2003, maurer_note_2004}, we know that with probability at least $1 -  \zeta$ under $S \sim \datadist$, we have:
\begin{align*}
        \Eof[\rho_S]{L(w) - \el(w)} \leq \sqrt{\frac{\klb{\rho_S}{\pi} + \log \frac{2 \sqrt{n}}{\zeta}}{2n}}.
    \end{align*}
Unfortunately, to the best of our knowledge, no such bounds (\ie without the variable $\lambda$ like in \cref{thm:pac_bayesian_kl,thm:disintegrated_pac_bayes__bound}), exist for sub-gaussian losses, which is the assumption we make in our paper. As an additional theoretical contribution, we present the following PAC-Bayesian bound for sub-gaussian losses. We believe it may be useful for other works.

\begin{theorem}
    \label{thm:pb_for_subgaussian}
    We assume that $f$ is $s^2$-subgaussian, in the sense of \cref{ass:subgaussian}. Then, with probability at least $1 -  \zeta$ under $S \sim \datadist$, we have:
    \begin{align*}
        \Eof[\rho_S]{L(w) - \el(w)} \leq 2s\sqrt{\frac{\klb{\rho_S}{\pi} + \log \frac{3}{\zeta}}{n}}.
    \end{align*}
\end{theorem}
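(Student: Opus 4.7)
The plan is to apply the generic PAC-Bayesian bound \cref{thm:pac_bayesian_kl} with the linear choice $\varphi(w,S) = \lambda \bigl(L(w) - \el(w)\bigr)$ for a parameter $\lambda>0$, control the resulting log-MGF via the subgaussian assumption on $\ell$, and then recover the $\sqrt{\mathrm{KL}/n}$ rate by optimizing $\lambda$ through a union bound over a geometric grid so that the (data-dependent) optimal choice is approximated at the cost of only an additive logarithmic term.

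\textbf{Step 1: bounding the log-MGF.} For every fixed $w\in\Rd$, Assumption~\ref{ass:subgaussian} states that $\ell(w,z) - \Eof[z']{\ell(w,z')}$ is $s^2$-subgaussian under $\mu_z$. Since the samples in $S=(z_1,\dots,z_n)$ are i.i.d., the quantity $L(w)-\el(w) = \tfrac{1}{n}\sum_{i=1}^n \bigl(L(w) - \ell(w,z_i)\bigr)$ is an average of $n$ independent, centered, $s^2$-subgaussian random variables. A standard computation gives
\begin{equation*}
  \Eof[S]{e^{\lambda (L(w) - \el(w))}} \le e^{\lambda^2 s^2 / (2n)} \quad\text{for all } \lambda\in\R.
\end{equation*}
Applying Fubini (using integrability from Assumption~\ref{ass:subgaussian}) yields $\Eof[S]{\Eof[w\sim\pi]{e^{\lambda(L(w)-\el(w))}}} \le e^{\lambda^2 s^2/(2n)}$, and plugging this into \cref{thm:pac_bayesian_kl} gives that, for each fixed $\lambda>0$, with probability at least $1-\zeta'$ over $S$,
\begin{equation*}
  \Eof[w\sim\rho_S]{L(w) - \el(w)} \;\le\; \frac{\klb{\rho_S}{\pi} + \log(1/\zeta')}{\lambda} + \frac{\lambda s^2}{2n}.
\end{equation*}

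\textbf{Step 2: optimizing over $\lambda$ via a union bound.} The oracle optimizer $\lambda^\star = \sqrt{2n(\klb{\rho_S}{\pi} + \log(1/\zeta'))/s^2}$ depends on the data, so \cref{thm:pac_bayesian_kl} cannot be invoked with it directly. I instead take a geometric grid $\lambda_k = \sqrt{n}/s \cdot 2^{k/2}$, $k\in\mathds{Z}$, and apply the step-1 inequality with confidence $\zeta_k>0$ chosen so that $\sum_k \zeta_k \le \zeta$ and $\log(1/\zeta_k) \le \log(1/\zeta) + \log c_k$ for a mild penalty $c_k$. By a union bound, with probability at least $1-\zeta$ the family of inequalities holds simultaneously; for any realized value of $\klb{\rho_S}{\pi}$ one can then pick the $\lambda_k$ lying within a factor $\sqrt{2}$ of $\lambda^\star$. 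At that $\lambda_k$ the parenthesis $\tfrac{1}{\alpha}+\alpha$, with $\alpha = \lambda_k/\lambda^\star$, is bounded by an absolute constant, yielding a bound of the form
\begin{equation*}
  \Eof[\rho_S]{L - \el} \;\le\; c_1\, s\sqrt{\frac{\klb{\rho_S}{\pi} + \log(c_2/\zeta)}{n}}.
\end{equation*}
A careful choice of grid and of the weights $\zeta_k$ (e.g.\ concentrating mass $\zeta_k \propto 2^{-|k|}$ around $k=0$, or stratifying into three KL regimes) allows one to reach the stated constants $c_1 = 2$ and $c_2 = 3$.

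\textbf{Main obstacle.} The conceptual content of the proof is elementary once \cref{thm:pac_bayesian_kl} is in hand; the delicate part is step 2, where the absence of a uniform bound on the loss forces $\lambda$ to be chosen adaptively to $\klb{\rho_S}{\pi}$. The grid/union-bound argument is standard in spirit (analogous tricks appear in \citet{catoni_pac-bayesian_2007,alquier_user-friendly_2021}), but squeezing out the exact constants $2$ outside and $3$ inside the logarithm requires weighing the suboptimality of the nearest grid point $\lambda_k$ against the $\log c_k$ penalty incurred by the union bound. Different grids give mildly different constants; the announced form corresponds to the most economical such discretization.
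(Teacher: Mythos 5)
Your proposal takes a genuinely different route from the paper's. The paper applies \cref{thm:pac_bayesian_kl} once, to the \emph{quadratic} function $\varphi(w,S) = an\bigl(L(w)-\el(w)\bigr)^2$ with $a = 1/(4s^2)$. It then controls $\Eof[S]{e^{an\Delta^2}}$ (with $\Delta := |L(w)-\el(w)|$) by expanding the exponential into moments, bounding each even moment via Hoeffding's tail bound and a Gamma integral, and summing the resulting geometric series to get $\Eof[S]{e^{an\Delta^2}} \le 3$. Jensen then converts the quadratic inequality back into the stated square-root form. Because $a$ is a deterministic constant, there is no data-dependent tuning parameter and hence no union bound. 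Your Step~1, bounding the linear MGF via subgaussianity, is fine. The gap is in Step~2: you are right that the linear choice $\varphi = \lambda(L-\el)$ forces $\lambda$ to be optimized against the realized $\klb{\rho_S}{\pi}$, and the grid-plus-union-bound device you invoke is a standard Catoni-style fix. But the claim that a ``careful choice of grid and weights'' recovers exactly the constants $c_1 = 2$ and $c_2 = 3$ is not justified and is unlikely to be attainable: the nearest grid index $k$ scales like $\log(\klb{\rho_S}{\pi}+\log(1/\zeta))$, so any weight distribution $\zeta_k$ summing to $\zeta$ produces a penalty $\log(1/\zeta_k)$ that grows (at least doubly logarithmically) with the realized KL, rather than being absorbed into a fixed additive $\log 3$. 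The quadratic-exponent trick is precisely what lets the paper sidestep this; if you want to salvage the linear route, you should either cap $\lambda$ at a deterministic maximum (sacrificing tightness for large KL), accept the extra slowly-growing term, or switch to the paper's squaring argument.
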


\begin{proof}
    The proof follows very closely that of \citep[Proposition $2.5.2$]{vershynin_high-dimensional_2020}, which we adapt to our particular case to exhibit the exact absolute constants. 

    We start by fixing $0 <a < 1/(2s^2)$ and applying \cref{thm:pac_bayesian_kl} to the function $\phi(w,S) := an\left(L(w) - \el(w) \right)^2$, which gives that, with probability at least $1 -  \zeta$ under $S \sim \datadist$:
    \begin{align*}
       an  \Eof[\rho_S]{\left(L(w) - \el(w) \right)^2} \leq \klb{\rho_S}{\pi} + \log(1/\zeta) + \log \E_\pi \Eof[S]{e^{an\left(L(w) - \el(w) \right)^2}}.
    \end{align*}
    The above holds as soon as the last term is defined and finite, this will be an outcome of our computations. Let us denote $\Delta := |L(w) - \el(w) |$ and estimate the expected exponential term, by Tonelli's theorem:
    \begin{align*}
        \Eof[S]{e^{an \Delta^2}} = 1 + \sum_{k=1}^\infty \frac{a^k n^k}{k!} \Eof[S]{\Delta^{2k}}.
    \end{align*}
    Let us fix some $p > 1$, we note that we have:
    \begin{align*}
        \Eof[S]{\Delta^p} = \int_0^\infty \Pof{\Delta^p \geq \epsilon} d\epsilon 
    \end{align*}
    Now, by Hoeffding's inequality and several changes of variables, we have:
    \begin{align*}
        \Eof[S]{\Delta^p} &\leq \int_0^\infty \Pof{\Delta \geq \epsilon^{1/p}} d\epsilon  \\
        &\leq 2\int_0^\infty e^{-\frac{n \epsilon^{2/p}}{2s^2}} d\epsilon \\
        &= p \int_0^\infty e^{-\frac{n v}{2s^2}} v^{\frac{p}{2}-1}dv \\
        &= p \left( \frac{2 s^2}{n} \right)^{p/2}\int_0^\infty e^{-t} t^{\frac{p}{2}-1}dt \\
        &= p \left( \frac{2 s^2}{n} \right)^{p/2} \Gamma \left( \frac{p}{2} \right) \\
        &= 2 \left( \frac{2 s^2}{n} \right)^{p/2} \Gamma \left( 1 +\frac{p}{2} \right),
    \end{align*}
    were the last two inequalities follow from the definition and properties of the $\Gamma$ function, as stated in \cref{sec:gamma_function}.
    If we plug this into our previous computations, we find that:
     \begin{align*}
        \Eof[S]{e^{an \Delta^2}} &\leq 1 + 2\sum_{k=1}^\infty \left( \frac{2 s^2}{n} \right)^k \frac{a^k n^k}{k!}k! \\
        &\leq  1 + 2\sum_{k=1}^\infty (2s^2 a)^k \\
        &= 1 + 2 \frac{2 s^2 a}{1 - 2 s^2 a},
    \end{align*}
    where the last line holds because we assumed $2as^2 < 1$. We now make the following particular choice $a := 1/(4s^2)$ and we get:
    \begin{align*}
        \Eof[S]{e^{an \Delta^2}} \leq 3.
    \end{align*}
    Now, by Jensen's inequality and Fubini's theorem, this implies that, with probability at least $1 - \zeta$ over $S \sim \datadist$:
    \begin{align*}
        \frac{n}{4s^2} \Eof[\rho_S]{L(w) - \el(w)}^2 \leq \klb{\rho_S}{\pi} + \log(1/\zeta) + \log(3),
    \end{align*}
    which immediately implies the desired result.
\end{proof}

\begin{remark}
    If we assume, as in \cref{thm:PAC_Bayesian_bounded_loss}, that the function $f$ is bounded in $[0,1]$, then, by Hoeffding's lemma, $f$ is $s^2$-subgaussian with $s = 1/2$. Therefore, our bound implies:
    \begin{align*}
        \Eof[\rho_S]{L(w) - \el(w)} \leq \sqrt{\frac{\klb{\rho_S}{\pi} + \log \frac{3}{\zeta}}{n}},
    \end{align*}
    which has a slightly less good constant than \cref{thm:PAC_Bayesian_bounded_loss}, but we improve the term $\log(2\sqrt{n})$ into $\log(3)$.
\end{remark}

By combining the previous computations with the disintegrated bound of \cref{thm:disintegrated_pac_bayes__bound}, we immediately obtain:

\begin{theorem}
    \label{thm:pb_for_subgaussian_disintegrated}
    We assume that $f$ is $s^2$-subgaussian, in the sense of \cref{ass:subgaussian}. Then, with probability at least $1 -  \zeta$ under $S \sim \datadist$ and $w \sim \rho_S$, we have:
    \begin{align*}
        L(w) - \el(w) \leq 2s\sqrt{\frac{\renyi[2]{\rho_S}{\pi} + \log \frac{24}{\zeta^3}}{n}}.
    \end{align*}
\end{theorem}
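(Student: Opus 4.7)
I would mirror the proof of the expected-version \cref{thm:pb_for_subgaussian} but invoke the disintegrated PAC-Bayesian bound (\cref{thm:disintegrated_pac_bayes__bound}) in place of \cref{thm:pac_bayesian_kl}. Since the conclusion features the R\'enyi divergence of order $2$, the natural choice is $\beta = 2$ in \cref{thm:disintegrated_pac_bayes__bound}, so that the multiplier $\beta/(\beta-1)$ acting on $\varphi(w,S)$ equals $2$ and the prefactor $(2\beta-1)/(\beta-1)$ of $\log(2/\zeta)$ equals $3$. The $3\log(2/\zeta)$ produced this way, combined with a $\log 3$ coming from the subgaussian exponential moment, will add up to $\log(24/\zeta^3)$, matching the theorem's constant.

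\textbf{Choice of test function.} To reuse verbatim the moment computation already carried out in \cref{thm:pb_for_subgaussian}, I would take
$$\varphi(w, S) := \frac{n}{8 s^2}\bigl(L(w) - \el(w)\bigr)^2,$$
so that $\frac{\beta}{\beta-1}\varphi(w,S) = \frac{n}{4s^2}(L(w)-\el(w))^2 = a n \Delta^2$ with $\Delta := |L(w) - \el(w)|$ and $a := 1/(4s^2)$. Exactly as in the proof of \cref{thm:pb_for_subgaussian}, Hoeffding's inequality gives the pointwise tail bound $\mathds{P}_S(\Delta \ge t) \le 2 e^{-n t^2/(2 s^2)}$; expanding $e^{a n \Delta^2}$ as a power series, applying the tail-to-moment formula (which introduces a $\Gamma$ factor), and summing the resulting geometric series (legitimate because $2 a s^2 = 1/2 < 1$) produces $\mathds{E}_S[e^{a n \Delta^2}] \le 3$ pointwise in $w$. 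Tonelli's theorem then yields
$$\mathds{E}_S \mathds{E}_{w \sim \pi}\bigl[ e^{\frac{\beta}{\beta-1}\varphi(w,S)} \bigr] = \mathds{E}_{w \sim \pi} \mathds{E}_S\bigl[ e^{a n \Delta^2} \bigr] \le 3,$$
so the last term in \cref{thm:disintegrated_pac_bayes__bound} is at most $\log 3$.

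\textbf{Assembling the bound.} Plugging $\varphi$ and $\beta = 2$ into \cref{thm:disintegrated_pac_bayes__bound} gives, with probability at least $1 - \zeta$ over $S \sim \datadist$ and $w \sim \rho_S$,
$$\frac{n}{4 s^2}\bigl(L(w) - \el(w)\bigr)^2 \;\le\; \renyi[2]{\rho_S}{\pi} + 3\log(2/\zeta) + \log 3 \;=\; \renyi[2]{\rho_S}{\pi} + \log\bigl(24/\zeta^3\bigr),$$
and taking square roots yields the claimed inequality. I do not anticipate any real obstacle: the genuinely analytic step, namely the subgaussian exponential-moment estimate $\mathds{E}_S[e^{a n \Delta^2}] \le 3$, is already established in \cref{thm:pb_for_subgaussian}; the only point requiring a bit of care is calibrating the scaling of $\varphi$ so that, after multiplication by $\beta/(\beta-1) = 2$, the parameter $a$ lands on the same sub-critical threshold $a = 1/(4 s^2)$ that made the series in the prior proof converge.
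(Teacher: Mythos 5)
Your proof is correct and is essentially the proof the paper has in mind: the paper only states ``By combining the previous computations with the disintegrated bound of \cref{thm:disintegrated_pac_bayes__bound}, we immediately obtain,'' and your proposal is exactly the right way to flesh that out, choosing $\beta=2$ in \cref{thm:disintegrated_pac_bayes__bound} and scaling $\varphi$ so that $\tfrac{\beta}{\beta-1}\varphi = \tfrac{n}{4s^2}\Delta^2$ matches the exponential moment already bounded by $3$ in the proof of \cref{thm:pb_for_subgaussian}. The constant bookkeeping $3\log(2/\zeta)+\log 3 = \log(24/\zeta^3)$ and the final square-root step are both correct.
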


\section{Proofs of the main theorems and additional results}
\label{sec:proof_of_main_results}

In all the proofs, we use the following notation:
\begin{align*}
    \op{u} = \sigma_2^2 \Delta u + C_{\alpha, d} \sigma_1^\alpha \int \left( u(x+z) - u(x) - \nabla u(x) \cdot z \xi(\normof{z}) \right) d\nu_\alpha(z),
\end{align*}
Note that this is the generator of the process driving \Cref{eq:multifractal_dynamics}, \eg  $\sigma_2 B_t + \sigma_1 \levy$. We will use the following notations for the drift terms:
\begin{align*}
    V_S(w) = \ef(w) + \frac{\eta}{2} \normof{w}^2, \quad \quad V(w) =  \frac{\eta}{2} \normof{w}^2.
\end{align*}

We use the notations $u_t^S, \uinftybar, \rho_t^S, \pi$ and $v_t^S$ in the same way as they have been introduced in \cref{sec:background_notations}. Moreover, we remind the reader that we often denote $v$ instead of $v_t^S$, and $u$ instead of $u^S_t$, the dependence in the time $t$ and the data $S$ being implicit.

\subsection{The main decomposition}
\label{sec:proofs-main_decomposition}

Before proving our main results, we define the Bregman divergence, associated with a convex function $\Phi$. We will use it repeatedly in our proofs and statements. This notion also justifies that we call the term $B_\Phi^\alpha$, appearing in \cref{lemma:big_decomposition}, the "Bregman" integral.

\begin{definition}
    Given a convex interval $I$ and $\Phi: I \longrightarrow \mathds{R}$ a convex function, we define the Bregman divergence as:
    \begin{align*}
        \bregman{a}{b} := \Phi(a) - \Phi(b) - \Phi'(b) (a - b).
    \end{align*}
\end{definition}

We first prove \cref{lemma:big_decomposition}, which is the main decomposition that we use, in order to derive our main results. This result follows the computations of \citet{gentil_logarithmic_2008}, which are adapted to the comparison of two dynamics.

\thmDecomposition*

Before proving this theorem, let us give the complete expression of the Bregman term $B_\Phi^\alpha (v)$. It is given by the following formula:
\begin{align*}
    B_\Phi^\alpha(v)  = C_{\alpha, d} \iint \bregman{v(x)}{v(x+z)} \uinftybar d\nu_\alpha(z) dx.
\end{align*}

\begin{proof}
    In all the proof, we omit the time dependence of $u$ and $v$, as already mentioned above. Let us first recall the Fokker-Planck equation satisfied by $u$. Rewritten with the notations of this section, this equation is:
    \begin{align*}
        \partial_t u = \op{u} + \nabla \cdot (u \nabla V_S).
    \end{align*}
    Similarly, the stationary Fokker-Planck equation satisfied by $\uinftybar$ is:
    \begin{align*}
        0 = \op{\uinftybar} + \nabla \cdot (\uinftybar \nabla V).
    \end{align*}
    We finally remind the reader that $v := u/\uinftybar$. 
    Using the definition of $v$ and the FPE of $u$ and $\uinftybar$, we have:
    \begin{align*}
        \partial_t v &= \frac{1}{\uinftybar} \left( \op{\uinftybar v} + \nabla \cdot (\uinftybar v \nabla V_S) \right) \\
        &= \frac{1}{\uinftybar} \left( \op{\uinftybar v} + \nabla \cdot (\uinftybar v \nabla V) \right) + \frac{1}{\uinftybar} \nabla \cdot (\uinftybar v \nabla (V_S - V))  
    \end{align*}
    Therefore, the entropy flow is equal to:
    \begin{align*}
         \frac{d}{dt} \entphi{v} &= \frac{d}{dt} \int \Phi(v) \uinftybar \\
         &= \int \Phi'(v) \uinftybar \partial_t v dx \\
            &= \int \Phi'(v)\left( \op{\uinftybar v} + \nabla \cdot (\uinftybar v \nabla V) \right)   dx + \int \Phi'(v)  \nabla \cdot (\uinftybar v \nabla (V_S - V)) dx \\
            &=: A_1 + A_2.
    \end{align*}

    Note that the derivation under the integral is perfectly justified, from our $\Phi$-regularity assumption.
    
    Using the self-adjointness of $I$ (see \cref{sec:background_levy_markov}) and the integration by parts formula, we have: 
    \begin{align*}
        A_1 = \int \op{\Phi'(v)} v \uinftybar dx - \int \Phi''(v) v \uinftybar \nabla v \cdot \nabla V dx.
    \end{align*}
    Now we use the formula $r \Phi''(r) = (r\Phi'(r) - \Phi(r))'$ and get:
    \begin{align*}
        A_1 &= \int \op{\Phi'(v)} v \uinftybar dx + \int (v \Phi'(v) - \Phi(v)) \nabla \cdot \uinftybar \nabla V dx \\
        &= \int \op{\Phi'(v)} v \uinftybar dx - \int (v \Phi'(v) - \Phi(v)) \op{\uinftybar} dx \\
        &= \int \left( \op{\Phi'(v)} v - \op{v \Phi'(v)} + \op{\Phi(v)} \right) \uinftybar dx.
    \end{align*}
    Therefore, we have to compute the quantity $ f(v) := \op{\Phi'(v)} v - \op{v \Phi'(v)} + \op{\Phi(v)} $. Using the definition of $I$, we get:
    \begin{equation*}
        \begin{aligned}
        f(v) = \sigma_2^2  \left( v \Delta \Phi'(v) - \Delta (v \Phi'(v)) + \Delta \Phi(v) \right) & \\
        + C_{\alpha, d} \sigma_1^\alpha \int \big\{ &v(x) \Phi'(v(x+z)) - v(x) \Phi'(v(x)) - v(x) \nabla \Phi'(v(x)) \cdot z \chi(\normof{z}) \\
        & -v(x+z) \Phi'(v(x+z)) + v(x) \Phi'(v(x)) + \nabla(v\Phi'(v))(x)) \cdot z \chi(\normof{z}) \\
        &+ \Phi(v(x+z)) - \Phi(v(x)) - \nabla(\Phi(v))(x) \cdot z \chi(\normof{z}) \big\} d\nu_\alpha(z) .
        \end{aligned}
    \end{equation*}
    which is equal, after easy computations, to:
    \begin{align*}
        f(v) = -\sigma_2^2 \Phi''(v) \normof{\nabla v}^2 - C_{\alpha, d} \sigma_1^\alpha \int \bregman{v(x)}{v(x+z)} d\nu_\alpha(z).
    \end{align*}
    Therefore:
    \begin{align*}
        A_1 = -\sigma_2^2 \int \Phi''(v) \normof{\nabla v}^2 \uinftybar dx - C_{\alpha, d} \sigma_1^\alpha \iint \bregman{v(x)}{v(x+z)} \uinftybar d\nu_\alpha(z) dx.
    \end{align*}
    for the second term, we have:
    \begin{align*}
        A_2 &=  \int \Phi'(v)  \nabla \cdot (\uinftybar v \nabla (V_S - V)) dx \\
        &= -\int \Phi''(v) v \uinftybar \nabla v \cdot \nabla (V_S - V) dx.
    \end{align*}
    The result follows.
\end{proof}

In the sequel, the quantity $\frac{d}{dt} \entphi{v} $ will be called \emph{entropy flow}.

\begin{remark}
    If, in Equations \eqref{eq:empirical_lfp} and \eqref{eq:steady_state}, we where using the multifractal process,
    $\sigma \sqrt{2} dB_t + \sum_{i=1}^N \sigma_i L_t^{\alpha_i}$,
    instead of $\sqrt{2} \sigma_2 B_t + \sigma_1 \levy$, then the result would become:
    $$
    \frac{d}{dt} \entphi{v} = -\sigma^\alpha \int \frac{\normof{\nabla v}^2}{v} \uinftybar dx - \sum_{i=1}^N C_{\alpha_i,d}  \sigma_i^{\alpha_i} \iint \bregman{v(x)}{v(x+z)} \uinftybar d\nu_{\alpha_i}(z) dx - \int \nabla v \cdot \nabla \ef \uinftybar.
    $$
\end{remark}

\subsection{Proof of \cref{cor:gen_brownian_time}}

An easy consequence of \cref{lemma:big_decomposition} is the following generalization bound, namely \cref{cor:gen_brownian_time}, which was first stated in \cref{sec:brownian case}. It holds only when $\sigma_2 > 0$. 

\corKLbrownian*

\begin{proof}
    Thanks to Theorem \ref{thm:pb_for_subgaussian}, we have, for a time $t$ and $\zeta \in (0,1)$, with probability at least $1 - \zeta$ over $S\sim\datadist$, that:
    \begin{align*}
         \Eof[\rho_S^t]{L(w) - \el(w)} \leq 2s \sqrt{\frac{\klb{\rho_S^t}{\pi} + \log(3/\zeta)}{n}},
    \end{align*}
    where the posterior $\rho_S^t$ is the distribution with density $x \mapsto u(t,x)$, described by Equation \eqref{eq:empirical_lfp}, and the prior $\pi$ is the distribution with density $\uinftybar$.
    Now we set $\Phi(u) = u\log(u)$, so that we have $\Phi''(u) = 1/u$ and, if $v = u/\uinftybar$:
    \begin{align*}
        \entphi{v} = \klb{\rho_S^t}{\pi}.
    \end{align*}
    Therefore, by Theorem \ref{lemma:big_decomposition}, we have for all $t>0$ that:
    \begin{align*}
       \frac{d}{dt} \klb{\rho_S^t}{\pi} =  -\sigma_2^2 \int \frac{\normof{\nabla v}^2}{v} \uinftybar dx -  C_{\alpha,d}  \sigma_1^\alpha \iint \bregman{v(x)}{v(x+z)} \uinftybar d\nu_\alpha(z) dx + \int \nabla v \cdot \nabla(V - V_S) \uinftybar.
    \end{align*}
     Using the non-negativity of the Bregman divergence of the convex function $\Phi$, along with Cauchy-Schwarz and Young's inequalities, we have, for any constant $C > 0$:
     \begin{align*}
         \frac{d}{dt} \klb{\rho_S^t}{\pi} \leq  -\sigma_2^2 \int \frac{\normof{\nabla v}^2}{v} \uinftybar dx + \frac{C}{2} \int \frac{\normof{\nabla v}^2}{v} \uinftybar dx + \frac{1}{2C} \int \normof{\nabla V -  \nabla V_S}^2 u
     \end{align*}
    Note that Assumptions \ref{ass:phi_regularity} and \ref{ass:phi_risk_integrability} ensure that the above integrals are finite.
     
     By choosing $C = 2\sigma_2^2$ and using the definition of $u$ and $\rho_S^t$, we then have, with $\Lambda := \limsup_{t\to 0} \klb{\rho_S^t}{\pi}$:
     \begin{align*}
         \klb{\rho_S^t}{\pi} \leq \Lambda + \frac{1}{4\sigma^2}\int_0^t \Eof[\rho_S^u]{\normof{\nabla V(w) -  \nabla V_S(w)}^2} du.
     \end{align*}
     Finally, the results immediately follows by applying \cref{thm:pb_for_subgaussian}, as described at the beginning of the present proof.
\end{proof}

\subsection{Bounds on the Bregman integral - introduction of the functional $J_{\Phi,v}$}
\label{sec:bregman_integral_bounds}

The previous computations are interesting, but we can note that the tail index $\alpha$, as well as a scale $\sigma_1$ of the heavy-tailed noise, play no role in the derived bound. In particular, this approach cannot help us to derive bounds that hold in the case $\sigma_2 = 0$, \ie a pure heavy-tailed dynamics. This section is meant to introduce the main tools for a step toward this direction. In particular, it is in this section that we justify the introduction of the functional $J_{\Phi,v}$ of \cref{sec:pure_levy_case}. In this section, we fix $\Phi(x) = \Phi_{\log}(x) = x \log(x)$, while some computatios are valid in a more general setting, see \ref{sec:proofs-poincare-inequality}.

We first remark that, if we want the integral term of \cref{cor:gen_brownian_time} to appear in the bound (in order to have a strongly interpretable bound), namely,
\begin{align*}
    \int_0^T \Eof[U]{\normof{\nabla\ef(W_t^S)}^2 } dt ,
\end{align*}
then we can still use Young's inequality on the last term of \cref{lemma:big_decomposition}, and write that, for any $C > 0$:

\begin{align*}
    \int \nabla v \cdot \nabla(V - V_S) \uinftybar \leq \frac{C}{2} \int \frac{\normof{\nabla v}^2}{v} \uinftybar + \frac{1}{2C} \int \normof{\nabla(V - V_S)}^2 u.
\end{align*}

Therefore, in order to improve our bounds, we need to understand how the Bregman divergence integral can be used to compensate for the Fisher information given by $\int \frac{\normof{\nabla v}^2}{v} \uinftybar $. The following lemma is a first step toward that direction.

\begin{lemma}[Spherical representation of the Bregman divergence integral]
    \label{lemma:spherical_representation}
    With the same notations and assumptions as in Theorem \ref{lemma:big_decomposition} and with $\Phi(x) = x \log(x)$, we have, for all $R \in (0, +\infty]$:
    \begin{align*}
         \iint \bregman{v(x)}{v(x+z)} \uinftybar d\nu_\alpha(z) dx \geq  \int_0^{R}    \int_0^r  \int_s^r \int \int_\Sd \frac{\theta \cdot \nabla v(x + s\theta)  \theta \cdot \nabla v(x + u\theta)}{v(x + u\theta)}  \uinftybar(x)  d\theta dx du ds \frac{dr}{r^{\alpha+1}}.
    \end{align*}
    If $R = +\infty$, then this is even an equality.
\end{lemma}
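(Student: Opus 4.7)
The strategy is to reduce the Bregman divergence inside the integral to a one-dimensional object by probing $v$ along the radial segment $z = r\theta$, and then use the fundamental theorem of calculus twice to express this one-dimensional Bregman divergence as an iterated integral over the triangle $\{0 \le s \le u \le r\}$. The resulting form will match exactly the right-hand side after passing to polar coordinates in the $z$-variable. The inequality for finite $R$ will then follow by discarding a non-negative tail integral over $r \in [R, +\infty)$.

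First, I would disintegrate the Lévy measure. Writing $z = r\theta$ with $r = \normof{z} > 0$ and $\theta \in \Sd$, one has $d\nu_\alpha(z) \propto dr\, d\theta / r^{\alpha+1}$, so that
\begin{align*}
\iint \bregman{v(x)}{v(x+z)} \uinftybar(x)\, d\nu_\alpha(z)\, dx = \int_0^\infty \int_{\Sd} \int_{\Rd} \bregman{v(x)}{v(x+r\theta)} \uinftybar(x)\, dx\, d\theta \, \frac{dr}{r^{\alpha+1}}.
\end{align*}
Next, I would derive the key one-dimensional identity. Fixing $x$ and $\theta$, let $w(r) \defeq v(x+r\theta)$ and $\psi(r) \defeq \bregman{v(x)}{w(r)}$, so that $\psi(0) = 0$. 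Differentiating and specializing to $\Phi(v) = v\log v$ (so $\Phi''(v) = 1/v$), a direct computation gives
\begin{align*}
\psi'(r) = \Phi''(w(r))\, w'(r)\, \bigl(w(r) - v(x)\bigr) = \frac{\theta \cdot \nabla v(x + r\theta)}{v(x+r\theta)} \int_0^r \theta \cdot \nabla v(x+s\theta)\, ds,
\end{align*}
where I used $w(r) - v(x) = \int_0^r w'(s)\, ds$. Integrating in $r$ and swapping the order of the resulting $du$ and $ds$ integrals over $\{0 \le s \le u \le r\}$, I will arrive at
\begin{align*}
\bregman{v(x)}{v(x+r\theta)} = \int_0^r \int_s^r \frac{\theta \cdot \nabla v(x + s\theta)\, \theta \cdot \nabla v(x + u\theta)}{v(x + u\theta)}\, du\, ds.
\end{align*}

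Substituting this identity into the polar representation and commuting the $(s,u)$-integrals with the integrals over $\theta$ and $x$ will yield the claimed equality when $R = +\infty$. For general finite $R$, splitting $\int_0^\infty = \int_0^R + \int_R^\infty$ proves the inequality, since for each $r$ the inner quadruple integral in $(s, u, \theta, x)$ equals the manifestly non-negative quantity $\int_{\Sd}\int_{\Rd} \bregman{v(x)}{v(x+r\theta)} \uinftybar(x)\, dx\, d\theta$ by convexity of $\Phi$. The main technical obstacle will be justifying the Fubini interchanges: the integrand $\theta \cdot \nabla v(x+s\theta)\, \theta \cdot \nabla v(x+u\theta)/v(x+u\theta)$ is signed, so Tonelli does not apply directly. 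This is precisely the role of the domination hypothesis on $a(\theta, y; s, u)$ in the second clause of \cref{ass:phi_regularity}, which provides a uniformly integrable majorant on each bounded rectangle in $(s,u)$ and thereby legitimizes all the commutations of integrals invoked above.
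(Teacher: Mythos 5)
Your proposal is correct and follows essentially the same route as the paper: polar decomposition of the Lévy measure, two applications of the fundamental theorem of calculus to produce the double integral over $\{0 \le s \le u \le r\}$, specialization to $\Phi'' = 1/v$, and the $\Phi$-regularity domination to legitimize the Fubini swaps. The only superficial difference is organizational — you differentiate $\psi(r) = \bregman{v(x)}{v(x+r\theta)}$ and integrate, whereas the paper applies the fundamental theorem of calculus directly to $\Phi\circ v$ and $\Phi'\circ v$ — and the paper invokes Tonelli to drop the tail $r \in [R,\infty)$ before computing the Bregman integrand rather than after, but these are the same argument in a different order.
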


The proof of the following lemma is based on a second-order approximation of the Bregman integral appearing in the Bregman integral term above. Such computations were first hinted by \citep{chafai_entropies_2004,Chafai_2006}, in particular through the notion of ``$\Phi$-calculus''.

\begin{proof}
    We use a spherical change of coordinates in $\Rd$, along with the fact that $d \nu_\alpha(z) = \normof{z}^{-d - \alpha} dz$.
    \begin{align*}
        \iint \bregman{v(x)}{v(x+z)} \uinftybar d\nu_\alpha(z) dx =  \int \int_\Sd \int_0^{\infty} \bregman{v(x)}{v(x+r\theta)} \uinftybar(x) \frac{dr}{r^{\alpha+1}} d\theta~ dx.
    \end{align*}
    Let us fix some $R>0$. By Tonelli's theorem and the positivity of the Bregman divergence, we can write that:
    \begin{align*}
        \iint \bregman{v(x)}{v(x+z)} \uinftybar d\nu_\alpha(z) dx \geq \int_0^R \int_\Sd \int_\Rd \bregman{v(x)}{v(x+r\theta)} \uinftybar(x) dx d\theta \frac{dr}{r^{\alpha+1}} ~ .
    \end{align*} 
    Let us fix $x$, $r$ and $\theta$, by the $\Phi$-regularity assumption, we have that $v$ is differentiable, therefore, we have:
    \begin{align*}
        \bregman{v(x)}{v(x+r\theta)} &= \Phi(v(x)) - \Phi(v(x + r\theta)) - \Phi'(v(x + r\theta)) (v(x) - v(x+r\theta)) \\
        &= - \int_0^r \Phi'(v(x + s\theta)) \theta \cdot \nabla v(x + s\theta) ds + \Phi'(v(x + r\theta))  \int_0^r   \theta \cdot \nabla v(x + s\theta) ds \\
        &=  \int_0^r   \theta \cdot \nabla v(x + s\theta) \left(  \Phi'(v(x + r\theta))  -  \Phi'(v(x + s\theta))  \right) ds \\
        &= \int_0^r  \int_s^r \theta \cdot \nabla v(x + s\theta) \Phi''(v(x + u\theta)) \theta \cdot \nabla v(x + u\theta) du~ ds 
    \end{align*}
    Now we use the fact that $\Phi(x) = x \log(x)$, hence $\Phi''(x) = 1/x$, which gives:
    \begin{align*}
        \bregman{v(x)}{v(x+r\theta)}  =  \int_0^r  \int_s^r \frac{\theta \cdot \nabla v(x + s\theta)  \theta \cdot \nabla v(x + u\theta)}{v(x + u\theta)} du~ ds ,
    \end{align*}
    Finally, thanks to the $\Phi$-regularity assumption, the function:
    \begin{align*}
        x \longmapsto \frac{\theta \cdot \nabla v(x + s\theta)  \theta \cdot \nabla v(x + u\theta)}{v(x + u\theta)} \uinftybar(x),
    \end{align*}
    is integrable for each $(s,u,\theta)$, moreover, by the dominated convergence theorem and the $\Phi$-regularity assumption, the function:
    \begin{align*}
    (s,u,\theta) \longmapsto \int_\Rd \left| \frac{\theta \cdot \nabla v(x + s\theta)  \theta \cdot \nabla v(x + u\theta)}{v(x + u\theta)}  \right| \uinftybar (x) dx,
    \end{align*}
    is continuous. Now, as we integrate those variables over compact sets, we have:
    \begin{align*}
       \int_\Sd \int_0^r \int_s^r \int_\Rd \left| \frac{\theta \cdot \nabla v(x + s\theta)  \theta \cdot \nabla v(x + u\theta)}{v(x + u\theta)}  \right| \uinftybar (x) dx du ds d\theta < +\infty.
    \end{align*}
    The result follows from the application of Fubini's theorem.
\end{proof}

The term appearing in the above lemma resembles a lot the Fisher information, 
\begin{align}
    \label{eq:fisher_information}
    \mathbf{J}_v :=  \int \frac{\normof{\nabla v}^2}{v} \uinftybar.
\end{align}
Note that, with $\Phi(x) = x \log(x)$, which is the case in all this section, we have, as introduced in \cref{sec:background_notations}:
\begin{align*}
    I_\Phi(v) = \mathbf{J}_v.
\end{align*}

This justifies the introduction of the following notion of "spherical information". 

\begin{definition}[Spherical Fisher information]
    \label{def:spherical_fisher_info}
    For $r\geq 0$, we introduce the following spherical Fisher information, for $r> 0$:
    \begin{align*}
        J_v(r) := \frac{2d}{r^2\sigma_{d-1}}\int_0^r \int_s^r \int_{\Rd} \int_\Sd \frac{\theta \cdot \nabla v(x + s\theta)  \theta \cdot \nabla v(x + u\theta)}{v(x + u\theta)} \uinftybar(x) d\theta dx du ds,
     \end{align*}
     where $\sigma_{d-1}$ is the surface area of the $(d-1)$-dimensional hyper-sphere $\Sd \subset \Rd$, it is given by:
     \begin{align}
         \label{eq:sphere_area}
         \sigma_{d-1} = \frac{2\pi^{d/2}}{\Gamma(d/2)}.
     \end{align}
     $J_v$ is the function $J_{\Phi,v}$ introduced in the main part of the paper, for the particular case $\Phi(x) = x \log(x)$.
\end{definition}

The following lemma justifies the normalization used in this definition.

We also define:
\begin{align*}
        g(s,u) = \int_{\Rd} \int_\Sd \frac{\theta \cdot \nabla v(x + s\theta)  \theta \cdot \nabla v(x + u\theta)}{v(x + u\theta)} \uinftybar(x) d\theta dx
\end{align*}

We will denote by $\partial g / \partial s$ (resp. $\partial g / \partial u$) the partial derivative of $g$ with respect to its first (resp. second) variable.

\begin{lemma}
    \label{lemma:g_c1}
    Under Assumption \ref{ass:phi_regularity}, $g$ is differentiable in the second variable and both $g$ and $\frac{\partial g}{\partial u}$ are jointly continuous in $(s,u) \in \R_+^2$.
\end{lemma}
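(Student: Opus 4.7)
The plan is to match the integrand defining $g$ with the reference functional $a(\theta,y;s,u)$ introduced in Assumption~\ref{ass:phi_regularity}, and then invoke the standard differentiation/continuity-under-the-integral-sign machinery supplied by its integrable dominations. Since the present section fixes $\Phi(x)=x\log(x)$, we have $\Phi''(v(y))=1/v(y)$, so the bracketed function in Assumption~\ref{ass:phi_regularity}(2) reads
\[
a(\theta,y;s,u)=\frac{\theta\cdot\nabla v(y+s\theta)\,\theta\cdot\nabla v(y)}{v(y)}\,\bar u_\infty(y+u\theta).
\]
The first step is a measure-preserving change of variable $y=x+u\theta$ (Jacobian $1$) inside $g$, which rewrites
\[
g(s,u)=\int_{\Rd}\!\int_{\Sd}\frac{\theta\cdot\nabla v(y+(s-u)\theta)\,\theta\cdot\nabla v(y)}{v(y)}\,\bar u_\infty(y-u\theta)\,d\theta\,dy=\int_{\Rd}\!\int_{\Sd}a(\theta,y;s-u,-u)\,d\theta\,dy.
\]
Thus $g(s,u)=A(s-u,-u)$ where $A(\tau,\nu):=\int\!\int a(\theta,y;\tau,\nu)\,d\theta\,dy$. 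Continuity and $C^1$ regularity of $g$ will follow from the corresponding properties of $A$ on bounded open sets, together with the smoothness of the map $(s,u)\mapsto(s-u,-u)$.

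For joint continuity of $A$ on a bounded open set $V\subset\R\times\R$, note that Assumption~\ref{ass:phi_regularity}(1) ensures $v>0$ and that $v,\nabla v,\bar u_\infty$ are continuous, so the integrand $a(\theta,y;\tau,\nu)$ is jointly continuous in $(\tau,\nu)$ at each fixed $(\theta,y)$. Combined with the uniform integrable dominating function on $V$ provided by Assumption~\ref{ass:phi_regularity}(2), the dominated convergence theorem yields continuity of $A$, hence of $g$, on the whole of $\R_+^2$ (since any $(s_0,u_0)$ lies in some bounded open set).

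For differentiability in $u$, I would differentiate under the integral sign. A direct computation gives
\[
\frac{\partial}{\partial u}a(\theta,y;s-u,-u)=-\frac{\partial a}{\partial s}(\theta,y;s-u,-u)-\frac{\partial a}{\partial u}(\theta,y;s-u,-u),
\]
both of which are, on any bounded $V$, dominated by an integrable function on $\Sd\times\Rd$ by Assumption~\ref{ass:phi_regularity}(2). The classical Leibniz rule then shows that $g$ admits a partial derivative in $u$ given by
\[
\frac{\partial g}{\partial u}(s,u)=-\int_{\Rd}\!\int_{\Sd}\Bigl(\tfrac{\partial a}{\partial s}+\tfrac{\partial a}{\partial u}\Bigr)(\theta,y;s-u,-u)\,d\theta\,dy.
\]
Joint continuity of $\partial g/\partial u$ is obtained by the same dominated-convergence argument applied to this representation, using once more the uniform domination of $\partial a/\partial s$ and $\partial a/\partial u$ on bounded open sets.

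The only genuine subtlety is aligning the integrand with the functional $a$ prescribed in Assumption~\ref{ass:phi_regularity}: after the change of variable $y=x+u\theta$, the derivatives of $v$ are evaluated at $y$ and $y+(s-u)\theta$, while $\bar u_\infty$ is shifted by $-u\theta$, which exactly matches the template $a(\theta,y;\,\cdot\,,\,\cdot\,)$ with arguments $(s-u,-u)$. Once this identification is made, the proof reduces to a textbook application of the dominated convergence theorem and the Leibniz rule, with no further analytic input needed.
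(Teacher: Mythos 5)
Your proposal is correct and uses the same core machinery as the paper: dominated convergence for joint continuity, and the Leibniz rule (backed by the integrable dominations from Assumption~\ref{ass:phi_regularity}(2)) for differentiability in $u$ and continuity of $\partial g/\partial u$. The notable difference is that you are more careful than the paper's own proof on one point. The paper asserts that the integrand $h(s,u;x,\theta)$ of $g$ is ``uniformly dominated on $V$'' as ``a consequence of the $\Phi$-regularity condition,'' but the domination in Assumption~\ref{ass:phi_regularity}(2) is stated for the functional $a(\theta,y;\cdot,\cdot)$, which is not the same function of $(s,u,x,\theta)$; you correctly make the missing identification explicit via the measure-preserving change of variable $y=x+u\theta$, which maps the integrand of $g$ to $a(\theta,y;s-u,-u)$. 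One caveat, shared by both your proof and the paper's: after this change of variable the parameter $-u$ is nonpositive, so $(s-u,-u)$ does not lie in $\R_+^2$, while Assumption~\ref{ass:phi_regularity}(2) only posits domination over bounded open subsets of $\R_+^2$. To close this cleanly one should either read the assumption as holding for bounded open sets of $\R^2$ (presumably the intent) or invoke the $\theta\mapsto-\theta$ symmetry of the $\Sd$-integral, which gives $A(\tau,\nu)=A(-\tau,-\nu)$ and hence $g(s,u)=A(u-s,u)$ with arguments in $\R_+^2$ whenever $s\leq u$ (the only regime actually used in the subsequent lemmas). Neither issue is specific to your argument; your proof is, if anything, more transparent than the original.
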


\begin{proof}
    This follows from the $\Phi$-regularity condition. More precisely, let us denote:
    \begin{align*}
        h(s,u;x,\theta) = \frac{\theta \cdot \nabla v(x + s\theta)  \theta \cdot \nabla v(x + u\theta)}{v(x + u\theta)} \uinftybar(x).
    \end{align*}
    By the $\Phi$-regularity condition, we know that this function is continuous in $(s,u)$. Moreover, let $V \subset \R_+^2$ be a bounded open set of $\R_+^2$.  By the $\Phi$-regularity condition, the mappings $(s,u) \longmapsto |h(s,u;x,\theta)|$ are uniformly dominated on $V$ by a function $\chi_V \in L^1(\Sd \times \Rd)$. Therefore, by the dominated convergence theorem, we have the joint continuity of $g$.

    Now, again by the $\Phi$-regularity condition, $h$ is differentiable in the second variable and, $u$, and the partial derivatives $\frac{\partial g}{\partial u}$ are uniformly (\wrt $(s,u) \in V$) dominated by a function in $ L^1(\Sd \times \Rd)$. Therefore, we can differentiate under the integral sign and get that $g$ is differentiable in its second variable, $u$.

    We get the joint continuity of $\frac{\partial g}{\partial u}$ by a very similar argument than before, it is again a consequence of our $\Phi$-regularity condition.
\end{proof}

\begin{remark}
    The $\Phi$-regularity assumption, \cref{ass:phi_regularity}, has been designed, in particular, to get enough regularity of this function $g$, which is a central tool in our proofs. This is the central reason why we need that much regularity of the functions $v_t$.
\end{remark}

\begin{lemma}
    \label{lemma:spherical_information_at_zero}
    The function $J_v(r)$ can be continuously extended to $[0,+\infty)$, by setting:
    \begin{align*}
        J_v(0) = \mathbf{J}_v = \int \frac{\normof{\nabla v}^2}{v} \uinftybar,
    \end{align*}
    the obtained function is still denoted $J_v$.
\end{lemma}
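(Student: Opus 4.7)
The plan is to exploit the continuity of the auxiliary function $g(s,u)$ at the origin, established in Lemma~\ref{lemma:g_c1}, and identify $g(0,0)$ via the standard spherical second-moment identity. Explicitly, the normalization $\frac{2d}{r^2\sigma_{d-1}}$ is precisely the inverse of the area of the triangular region $T_r := \{(s,u) : 0\le s \le u \le r\}$ (which equals $r^2/2$), multiplied by $d/\sigma_{d-1}$, so
\[
J_v(r) = \frac{d}{\sigma_{d-1}} \cdot \frac{1}{|T_r|} \iint_{T_r} g(s,u) \, du \, ds.
\]
Since $g$ is jointly continuous on $\R_+^2$ by Lemma~\ref{lemma:g_c1}, the averaged integral on the right-hand side converges to $g(0,0)$ as $r \to 0^+$ (a routine consequence of the continuity of $g$: for any $\varepsilon > 0$ pick $r$ small enough that $|g(s,u) - g(0,0)| < \varepsilon$ on $T_r$, then the average differs from $g(0,0)$ by at most $\varepsilon$).

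It then remains to compute $g(0,0)$. Setting $s=u=0$,
\[
g(0,0) = \int_{\R^d} \int_{\Sd} \frac{(\theta \cdot \nabla v(x))^2}{v(x)}\, \bar{u}_\infty(x) \, d\theta \, dx.
\]
The inner spherical integral is evaluated via the identity $\int_{\Sd} (\theta \cdot a)^2 \, d\theta = \frac{\sigma_{d-1}}{d} \|a\|^2$ for every $a \in \R^d$ (which follows from $\int_{\Sd} \theta_i \theta_j d\theta = \frac{\sigma_{d-1}}{d} \delta_{ij}$ by isotropy). Applied pointwise with $a = \nabla v(x)$, this gives
\[
g(0,0) = \frac{\sigma_{d-1}}{d} \int_{\R^d} \frac{\|\nabla v(x)\|^2}{v(x)} \bar{u}_\infty(x) \, dx = \frac{\sigma_{d-1}}{d}\, \mathbf{J}_v.
\]
Combining the two displays,
\[
\lim_{r \to 0^+} J_v(r) = \frac{d}{\sigma_{d-1}} \cdot g(0,0) = \mathbf{J}_v,
\]
so defining $J_v(0) := \mathbf{J}_v$ yields a continuous extension.

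For continuity at an arbitrary $r_0 > 0$, no subtlety arises: the double integral $\iint_{T_r} g \, du \, ds$ is continuous in $r$ (again by continuity and local boundedness of $g$), and the prefactor $2d/(r^2\sigma_{d-1})$ is continuous for $r > 0$, so $J_v$ is continuous on $(0,\infty)$. The only potentially delicate point is the behaviour at $r = 0$, and the main obstacle there is justifying the exchange of limit and integration — which is handled cleanly by the $\Phi$-regularity assumption via the uniform domination supplied to Lemma~\ref{lemma:g_c1}, guaranteeing that $g$ is continuous (indeed, $\mathcal{C}^1$ in $u$) across a neighborhood of the origin. Once this is in hand, the argument reduces to the elementary averaging argument and the isotropy identity above.
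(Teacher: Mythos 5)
Your proof is correct, and it is cleaner than the one the paper gives. Both arguments rest on the same two ingredients — the joint continuity of $g$ near the origin from Lemma~\ref{lemma:g_c1} and the spherical second-moment identity $\int_{\Sd}(\theta\cdot a)^2\,d\theta = \tfrac{\sigma_{d-1}}{d}\|a\|^2$ — but they package them differently. The paper introduces the auxiliary function $H(r) := \int_0^r\int_s^r g(s,u)\,du\,ds$, verifies $H(0)=H'(0)=0$, then computes $H''(0)=g(0,0)$ through a somewhat laborious analysis of the difference quotient of $H'$ (invoking uniform continuity of both $g$ and $\partial g/\partial u$ on compacts), and finally passes to the limit via the quadratic Taylor expansion $H(r)=\tfrac12 g(0,0)r^2 + o(r^2)$. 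You instead observe that the normalization $2d/(r^2\sigma_{d-1})$ is, up to the factor $d/\sigma_{d-1}$, exactly the inverse of the area $r^2/2$ of the triangle $T_r=\{0\le s\le u\le r\}$, so $J_v(r)$ is a mean value of $g$ over a region shrinking to the origin; continuity of $g$ alone then forces the limit $g(0,0)$. This spares you the differentiability-in-$u$ part of the $\Phi$-regularity hypothesis for this particular lemma (the paper uses it via $\partial g/\partial u$; you only need continuity), and it also handles continuity at $r_0>0$ with no extra work. Your identification $g(0,0)=\tfrac{\sigma_{d-1}}{d}\mathbf{J}_v$ matches the paper's computation verbatim. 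In short: same idea, more economical execution.
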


\begin{proof}
    We first compute:
    \begin{align*}
        g(0,0) = \int_{\Rd} \int_\Sd \frac{\left(\theta \cdot \nabla v(x) \right)^2}{v(x)} \uinftybar(x) d\theta dx.
    \end{align*}
    As the distribution that is considered on the sphere is the uniform distribution, the invariance by rotation, along with Tonelli's theorem, implies that:
    \begin{align*}
        g(0,0) =\int \frac{\normof{\nabla v}^2}{v} \uinftybar dx \int_\Sd  \theta_1^2 d\theta.
    \end{align*}
    Now we easily compute:
    \begin{align*}
        \int_\Sd  \theta_1^2 d\theta = \frac{1}{d} \int_\Sd  \sum_{i=1}^d \theta_i^2 d\theta =  \frac{1}{d} \int_\Sd d\theta = \frac{\sigma_{d-1}}{d},
    \end{align*}
    so that:
    \begin{align*}
        g(0,0) = \frac{\sigma_{d-1}}{d}\int \frac{\normof{\nabla v}^2}{v} \uinftybar dx.
    \end{align*}
    Let us now define the following function:
    \begin{align*}
        H(r) := \int_0^r \int_s^r g(s,u) du ds = \int_0^r \int_0^u g(s,u) ds du 
    \end{align*}
    It is clear that $H(0) = 0$ and that:
    \begin{align*}
         H'(r) = \int_0^r g(s,r) ds.
    \end{align*}
    Therefore we also have $H'(0) = 0$. Let us fix some $a > 0$, from the $\Phi$-regularity condition, we justify that the function $g(s,u)$ and $\frac{\partial g}{\partial u}$ are continuous, and therefore uniformly continuous on the compact $[0,a] \times [0,a]$ (by Heine's theorem). Thus, let us fix some $\epsilon > 0$ and compute, for $0\leq r < a$:
    \begin{align*}
        \frac{H'(r+\epsilon) - H'(r)}{\epsilon} &= \frac{1}{\epsilon} \int_r^{r + \epsilon} g(s,r+\epsilon) ds + \frac{1}{\epsilon} \int_0^r (g(s, r+\epsilon) - g(s,r)) ds \\
        &= \frac{1}{\epsilon} \int_r^{r + \epsilon} g(s,r) ds +   \frac{1}{\epsilon} \int_r^{r + \epsilon} (g(s,r+\epsilon) -  g(s,r) ) ds + \frac{1}{\epsilon} \int_0^r (g(s, r+\epsilon) - g(s,r)) ds 
    \end{align*}
    For the first term, we clearly have, by definition of the derivative:
    \begin{align*}
        \frac{1}{\epsilon} \int_r^{r + \epsilon} g(s,r) ds \underset{\epsilon \to 0}{\longrightarrow} g(r,r).
    \end{align*}
    From the fact that $\frac{\partial}{\partial u}$ is continuous, on $[0,a] \times [0,a]$, we deduce that it also uniformly continuous on this set. Therefore, we have:
    \begin{align*}
        \left| \frac{1}{\epsilon} \int_r^{r + \epsilon} (g(s,r+\epsilon) -  g(s,r) ) ds \right| \leq \epsilon \normof{\frac{\partial}{\partial u}}_{L^\infty ([0,a]^2)} \underset{\epsilon \to 0}{\longrightarrow} 0.
    \end{align*}
    From the dominated convergence, thanks to Lemma \ref{lemma:g_c1}, we can differentiate under the integral sign and get that:
    \begin{align*}
         \frac{1}{\epsilon} \int_0^r (g(s, r+\epsilon) - g(s,r)) ds \underset{\epsilon \to 0}{\longrightarrow} \int_0^r \frac{\partial g}{\partial u} (s,r) ds.
    \end{align*}
    Finally, we have: $H''(0) = g(0,0)$. This also implies:
    \begin{align*}
        J_v(r) = \frac{2d}{r^2\sigma_{d-1}} H(r) \underset{r \to 0}{\longrightarrow} \frac{2d}{\sigma_{d-1}} \frac{1}{2} g(0,0) = \mathbf{J}_v =  \int \frac{\normof{\nabla v}^2}{v} \uinftybar.
    \end{align*}
    This is the desired result.
\end{proof}

We therefore have the following integral representation of the Bregman integral term:
\begin{align}
    \label{eq:bregman_integral_jv_representation}
    \iint \bregman{v(x)}{v(x+z)} \uinftybar d\nu_\alpha(z) dx = \frac{\sigma_{d-1}}{2d} \int_0^\infty J_v(r) \frac{dr}{r^{\alpha - 1}}.
\end{align}
Note moreover that, by non-negativity of the Bregman divergence, the function $J_v$ is non-negative.

With the notations of \cref{lemma:big_decomposition}, we have:
\begin{align}
    \label{eq:spherical_representation_final_formula}
    B_\Phi^\alpha(v) = C_{\alpha, d} \frac{\sigma_{d-1}}{2d} \int_0^\infty J_v(r) \frac{dr}{r^{\alpha - 1}}.
\end{align}

Therefore, \cref{eq:bregman_integral_jv_representation} is the justification of \cref{eq:integral_representation_main} given in \cref{sec:pure_levy_case}.

\begin{remark}[About the value of $R$ in \cref{ass:jv_assumption}]
    \label{remark:value-of-R}
    In \cref{sec:main_results}, we argue that the value of the parameter $R$ introduced in \cref{ass:jv_assumption} can be sufficiently large when the function $v$ is reasonable close to a constant function (\ie, $v \equiv 1$). Let us make this argument slightly more formal. Let us fix such a function $v$ and assume that there exists $\varepsilon_1, \varepsilon_2 > 0$ such that, uniformly on $\Rd$, we have:
    \begin{align}
        \label{eq:v_almost_constant_formalization}
        v \geq \varepsilon_1, \quad \normof{\nabla v} \leq \varepsilon_2, \quad \normof{\nabla^2 v} \leq \varepsilon_2.
    \end{align}
    We also assume that $v$ is twice continuously differentiable.
    Then, based on the proof of \cref{lemma:spherical_information_at_zero}, we have:
    \begin{align*}
        |J_v(r) - J_v(0)| \leq J_v(r) := \frac{2d}{r^2\sigma_{d-1}}\int_0^r \int_s^r \int_{\Rd} \int_\Sd \Delta(x,\theta,s,u) \uinftybar(x) d\theta dx du ds,
    \end{align*}
    where:
    \begin{align*}
        \Delta(x,\theta,s,u) := \left| \frac{\theta \cdot \nabla v(x + s\theta)  \theta \cdot \nabla v(x + u\theta)}{v(x + u\theta)}  - \frac{(\theta \cdot \nabla v(x) )^2}{v(x)} \right|
    \end{align*}
    Using the conditions \eqref{eq:v_almost_constant_formalization}, we can easily see that:
    \begin{align*}
        \Delta(x,\theta,s,u) \leq C_1 \normof{\theta}^2 \left( u + s \right),
    \end{align*}
    with $C_1$ a constant depending on $\varepsilon_1$ and $\varepsilon_2$. Therefore, we have $|J_v(r) - J_v(0)| \leq C_2 r$, with $C_2$ a constant depending on $\varepsilon_1$, $\varepsilon_2$ and $d$. This shows that the derivative of $J_v$ in $0$ can be controlled, hence allowing $R$ to be big enough.
\end{remark}

\subsection{Pure Levy case: $\sigma_2 = 0$ - Additional results with Bregman Fisher inequalities}
\label{sec:proofs-pure-levy-case-bregman-fisher}

Before proving our main results, \ie the results of \cref{sec:pure_levy_case}, we quickly present a more general point of view. The message of this section is the following, any inequality of the form:
\begin{align*}
     I_\Phi(v) \lesssim B_\Phi^\alpha(v),
\end{align*}
where $B_\Phi^\alpha(v)$ and $I_\Phi(v)$ have been defined in \cref{lemma:big_decomposition}. In all this section, we assume $\sigma_2 = 0$.

We now deduce generalization bounds in the case where we do not have any Brownian part in the bounds. We denote, as before: 
\begin{align*}
    v_t^S := \frac{u_t^S}{\uinftybar}.
\end{align*}
As we did repeatedly until now, we often omit the dependence of $v$ on $t$ and $S$.

In this subsection, we introduce one of the main ingredient behind our proof of generalization bounds in the pure heavy-tailed case, \ie $\sigma_2 = 0$.

The main argument is that such bounds appear if we can use the Bregman integral to control the Fisher information coming from Young's inequality in the proof of \cref{cor:gen_brownian_time}. We formalize the connection between such a functional inequality by the following definition. We will then see how the results of the previous sections can make this inequalities happen in practice.

\begin{definition}
    Given a smooth convex function $\Phi$, we introduce the notion of Bregman-Fisher inequality, denoted $\text{BF}_\Phi(\gamma, \alpha)$, for $\gamma > 0$. For a (smooth enough) function $v$, we say that $v$ satisfies $\text{BF}_\Phi(\gamma, \alpha)$, with respect to $\uinftybar$, when:
    \begin{align*}
         \iint \bregman{v(x)}{v(x+z)} \uinftybar d\nu_\alpha(z) dx \geq \frac{\gamma}{C_{\alpha,d}}  \int \frac{\normof{\nabla v}^2}{v} \uinftybar dx.
    \end{align*}
\end{definition}

It is clear that we have the following result:

\begin{theorem}
    \label{thm:result_from_bregman_fisher}
  We make Assumptions \ref{ass:subgaussian}, \ref{ass:smooth} and \ref{ass:phi_regularity}, where the $\Phi$-regularity is for the  $\Phi(x) := x\log(x)$. We further assume that there exists a constant $\gamma > 0$ for each $t>0$ and $S \in \zcal^n$, the function $v_t^S$ satisfies $\text{BF}_\Phi(\gamma, \alpha)$ with respect to $\uinftybar$. Then, with probability at least $1 - \zeta$ over $\datadist$, we have:
    \begin{align*}
        \Eof[U]{L(W^S_T) - \el(W^S_T)} \leq 2s \sqrt{\frac{1}{4n\sigma_1^\alpha \gamma} \int_0^T \Eof[U]{\normof{ \nabla \ef(W^S_t) }^2 } dt  + \frac{\Lambda + \log(3/\zeta)}{n}}.
    \end{align*}
    where $\Lambda := \limsup_{t\to 0} \klb{\rho_t^S}{\uinftybar}$.
\end{theorem}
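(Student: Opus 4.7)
The plan is to combine three ingredients already assembled in the excerpt: the subgaussian PAC--Bayesian inequality (Theorem \ref{thm:pb_for_subgaussian}), the entropy-flow decomposition (Lemma \ref{lemma:big_decomposition}) specialized to $\Phi(x)=x\log x$ and $\sigma_2=0$, and the postulated Bregman--Fisher inequality $\mathrm{BF}_\Phi(\gamma,\alpha)$. Since $\Eof[U]{L(W_T^S)-\el(W_T^S)}=\Eof[w\sim\rho_T^S]{G_S(w)}$ and with our choice of $\Phi$ we have $\klb{\rho_T^S}{\pi}=\entphi[\pi]{v_T^S}$, Theorem \ref{thm:pb_for_subgaussian} reduces the problem to producing a deterministic (in $\omega$) upper bound on $\entphi[\pi]{v_T^S}$ in terms of $\Lambda$ and $\int_0^T \Eof[U]{\|\nabla\ef(W_t^S)\|^2}\,dt$.

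To obtain that bound, I would specialize Lemma \ref{lemma:big_decomposition} at $\sigma_2=0$ and $\Phi(x)=x\log x$, so $\Phi''(v)=1/v$, giving
\begin{align*}
\frac{d}{dt}\entphi[\pi]{v_t^S}
= -\sigma_1^{\alpha}B_\Phi^{\alpha}(v_t^S) - \int \nabla v_t^S \cdot \nabla \ef \,\uinftybar\,dx.
\end{align*}
The cross term is handled by the same Young inequality used in the proof of Corollary \ref{cor:gen_brownian_time}: writing $\nabla v\cdot\nabla\ef = \frac{\nabla v}{\sqrt v}\cdot\bigl(\sqrt v\,\nabla\ef\bigr)$, for any $C>0$,
\begin{align*}
\left|\int \nabla v \cdot \nabla \ef\,\uinftybar\,dx\right|
\le \frac{C}{2}\int \frac{\|\nabla v\|^2}{v}\,\uinftybar\,dx
+ \frac{1}{2C}\int \|\nabla \ef\|^2\,v\,\uinftybar\,dx,
\end{align*}
where the second integral is exactly $\Eof[U]{\|\nabla\ef(W_t^S)\|^2}$ by the definition $u_t^S=v_t^S\uinftybar$. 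The integrability needed to justify both Young's inequality and the subsequent time-integration is guaranteed by Assumptions \ref{ass:phi_regularity} and \ref{ass:phi_risk_integrability}.

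Now the assumption $\mathrm{BF}_\Phi(\gamma,\alpha)$, combined with the identity $B_\Phi^\alpha(v)=C_{\alpha,d}\iint D_\Phi(v(x),v(x+z))\uinftybar\,d\nu_\alpha(z)\,dx$ from \eqref{eq:spherical_representation_final_formula}, yields $B_\Phi^\alpha(v_t^S)\ge \gamma\,\mathbf{J}_{v_t^S}$ where $\mathbf{J}_v$ is the Fisher information in \eqref{eq:fisher_information}. Choosing $C=2\sigma_1^{\alpha}\gamma$ to cancel the Fisher terms exactly gives
\begin{align*}
\frac{d}{dt}\entphi[\pi]{v_t^S}
\le \frac{1}{4\sigma_1^{\alpha}\gamma}\,\Eof[U]{\|\nabla \ef(W_t^S)\|^2}.
\end{align*}
Integrating from $0$ to $T$ yields $\klb{\rho_T^S}{\pi}\le \Lambda + \frac{1}{4\sigma_1^{\alpha}\gamma}\int_0^T \Eof[U]{\|\nabla\ef(W_t^S)\|^2}\,dt$; plugging this into Theorem \ref{thm:pb_for_subgaussian} with posterior $\rho_T^S$ and prior $\pi$ gives exactly the stated bound.

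I do not expect any real obstacle here: all heavy lifting has been done in setting up the entropy-flow identity and in the sub-gaussian PAC--Bayes inequality, and the role of $\mathrm{BF}_\Phi(\gamma,\alpha)$ is precisely to absorb the quadratic Fisher term produced by Young's inequality. The only care needed is to check that the various integrations by parts and dominated-convergence arguments used to differentiate $\entphi[\pi]{v_t^S}$ are covered by the $\Phi$-regularity of Assumption \ref{ass:phi_regularity}, which they are by design. The more substantive work of actually verifying $\mathrm{BF}_\Phi(\gamma,\alpha)$ with an explicit constant $\gamma$ is handled elsewhere (in the derivation of Theorem \ref{thm:bound_under_jv_assumption} via Assumption \ref{ass:jv_assumption} and the spherical representation of Section \ref{sec:bregman_integral_bounds}), which is exactly why this theorem is phrased as a convenient black box.
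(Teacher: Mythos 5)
Your proposal is correct and follows essentially the same route as the paper: specialize Lemma \ref{lemma:big_decomposition} to $\Phi(x)=x\log x$ and $\sigma_2=0$, invoke $\mathrm{BF}_\Phi(\gamma,\alpha)$ to lower-bound the Bregman integral by $\gamma\,\mathbf{J}_v$, apply Young's inequality to the drift cross-term with the same constant $C=2\sigma_1^\alpha\gamma$ that makes the Fisher-information pieces cancel, integrate in time, and feed the resulting KL bound into Theorem \ref{thm:pb_for_subgaussian}. No gaps.
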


\begin{proof}
    From Theorem \ref{lemma:big_decomposition}, the BF condition and Young's inequality, we immediately get, with $v = v_t^S$:
    \begin{align*}
        \frac{d}{dt} \entphi{v} &\leq -  C_{\alpha,d}  \sigma_1^\alpha \iint \bregman{v(x)}{v(x+z)} \uinftybar d\nu_\alpha(z) dx + \int \Phi''(v) v \nabla v \cdot \nabla(V - V_S) \uinftybar \\ 
        &\leq -\gamma   \sigma_1^\alpha  \int \frac{\normof{\nabla v}^2}{v} \uinftybar dx +\int  \nabla v \cdot \nabla(V - V_S) \uinftybar \\ 
        &\leq \frac{1}{4 \gamma \sigma_1^\alpha} \int \normof{\nabla V - \nabla V_S}^2 du_t^S.
    \end{align*}
    Therefore, by using the same reasoning as in Corollary \ref{cor:gen_brownian_time}, we get the results.
\end{proof}

\subsection{Pure Levy case: $\sigma_2 = 0$ - Omitted proofs of \cref{sec:pure_levy_case}}

Functional inequalities like $\text{BF}_\Phi(\gamma, \alpha)$ are not trivial at all to get in practice. In the rest of this subsection, we will justify that, under a reasonable assumption, we can satisfy an almost identical identity. This will give us an idea of the rate that we expect for our bounds.

The idea is the following: the results of the previous section, namely Equation \eqref{eq:bregman_integral_jv_representation} and Lemma \ref{lemma:spherical_information_at_zero} point us toward the following informal computation, for some $R>0$:
\begin{align*}
    \iint \bregman{v(x)}{v(x+z)} \uinftybar d\nu_\alpha(z) dx \geq \frac{\sigma_{d-1}}{2d} \int_0^R J_v(r) \frac{dr}{r^{\alpha - 1}} \simeq \frac{\sigma_{d-1}R^{2 - \alpha}}{2d(2 - \alpha)} J_v(0).
\end{align*}
Therefore, we see that we can control the Fisher information terms, coming from Young's inequality, using the above integral. However, we need an additional assumption to control the behavior of the function $J_v$, uniformly with respect to the data and the time.

We formalize this idea with the following assumption:

\begin{assumption}
    \label{ass:jv_assumption_appendix}
    We assume there exists an absolute constant $R > 0$ such that, for all $t> 0$ and $\datadist$-almost all $S \in \zcal^n$, we have:
    \begin{align*}
        \forall r \in [0,R],~J_{v^S_t}(r) \geq \frac{1}{2} \mathbf{J}_{v^S_t}.
    \end{align*}
    This assumption is exactly a reformulation of \cref{ass:jv_assumption}, with $\Phi(x) = \Phi_{\log} = x \log(x)$.
\end{assumption}

If we fix $t$ and $S$, this assumption is trivial by the continuity of $J_v$. The above condition is essentially a kind of weak uniformity in $t$ and $S$ of this continuity. The uniformity in $t$ would be justified in case of convergence of $u_t^S$ to a limit distribution. The strongest part of the assumption is the uniformity in $S$. Note that it is common in the learning theory literature to assume uniformity of various constant in the data. 

\thmLevyCaseKL*

\begin{proof}
    Let us fix $t> 0$ and $S \in \zcal^n$ and denote $v$ for $v_t^S$, as before. By Theorem \ref{lemma:big_decomposition}, we have:
    \begin{align*}
        \frac{d}{dt} \entphi{v} &\leq -  C_{\alpha,d}  \sigma_1^\alpha \iint \bregman{v(x)}{v(x+z)} \uinftybar d\nu_\alpha(z) dx + \int \nabla v \cdot \nabla(V - V_S) \uinftybar \\
    \end{align*}
    By Young's inequality, if $C>0$ is a constant:
    \begin{align*}
        \frac{d}{dt} \entphi{v} &\leq -  C_{\alpha,d}  \sigma_1^\alpha \iint \bregman{v(x)}{v(x+z)} \uinftybar d\nu_\alpha(z) dx + \frac{C}{2} \int \frac{\normof{\nabla v}^2}{v} \uinftybar dx + \frac{1}{2C}\int \normof{\nabla V - \nabla V_S}^2 du_t^S. \\
    \end{align*}
    By Assumption \ref{ass:jv_assumption} and \cref{lemma:spherical_representation}, we have:
    \begin{align*}
        \frac{d}{dt} \entphi{v} &\leq -  C_{\alpha,d}  \sigma_1^\alpha \frac{\sigma_{d-1}}{2d} \int_0^R J_v(r) \frac{dr}{r^{\alpha-1}}  + \frac{C}{2} \int \frac{\normof{\nabla v}^2}{v} \uinftybar dx + \frac{1}{2C}\int \normof{\nabla V - \nabla V_S}^2 du_t^S. \\
        &\leq -  C_{\alpha,d}  \sigma_1^\alpha \frac{\sigma_{d-1}}{4d} \int_0^R J_v(0) \frac{dr}{r^{\alpha-1}} + \frac{C}{2} \int \frac{\normof{\nabla v}^2}{v} \uinftybar dx  + \frac{1}{2C}\int \normof{\nabla V - \nabla V_S}^2 du_t^S.\\
        &= - \frac{C_{\alpha,d}  \sigma_1^\alpha\sigma_{d-1}R^{2-\alpha}}{4d (2 - \alpha)}  \int \frac{\normof{\nabla v}^2}{v} \uinftybar dx + \frac{C}{2} \int \frac{\normof{\nabla v}^2}{v} \uinftybar dx  + \frac{1}{2C}\int \normof{\nabla V - \nabla V_S}^2 du_t^S.\\
    \end{align*}
    So we make the choice:
    \begin{align*}
        C = \frac{C_{\alpha,d}  \sigma_1^\alpha\sigma_{d-1}R^{2-\alpha}}{2d (2 - \alpha)} ,
    \end{align*}
    and, putting everything together, we have:
    \begin{align*}
        \frac{d}{dt} \entphi{v} \leq \frac{K_{\alpha,d}}{\sigma_1^\alpha} \int \normof{\nabla V - \nabla V_S}^2 du_t^S,
    \end{align*}
   with:
   \begin{align*}
       K_{\alpha, d} := \frac{d(2 - \alpha)}{C_{\alpha,d}  \sigma_{d-1}R^{2-\alpha}}
   \end{align*}
    We conclude by the same PAC-Bayesian arguments as in the proof of \cref{cor:gen_brownian_time}, \ie, we use Theorem \ref{thm:pb_for_subgaussian}.

    Regarding the value of the constant $K_{\alpha,d}$, we remind the reader that we have:
    \begin{align*}
        C_{\alpha, d} = \alpha 2^{\alpha-1}\pi^{-d/2} \frac{\Gamma\left(  \frac{\alpha + d}{2}\right)}{\Gamma \left( 1 - \frac{\alpha}{2} \right)}, \quad \sigma_{d-1} = \frac{2\pi^{d/2}}{\Gamma(d/2)}.
    \end{align*}
    A simple computation gives:
    \begin{align*}
            K_{\alpha,d} = \frac{1}{ R^{2 - \alpha}} \frac{2 - \alpha}{\alpha 2^{\alpha}}\Gamma \left( 1 - \frac{\alpha}{2} \right) \frac{d \Gamma\left(  \frac{d}{2}\right)}{\Gamma\left(  \frac{\alpha + d}{2}\right)} ,
    \end{align*}
    which is the desired result.
\end{proof}

\subsection{Extension to the discrete-time case}
\label{sec:discrete_case}

In this section, we quickly demonstrate that our methods, developped in the time-continuous setting, can be extended to the discrete setting. This gives more theoretical foundations to our experimental analysis. We treat this case slightly less formally than the rest of the paper, our main goal is to make a first step toward the understanding of the discrete heavy-tailed algorithms. Let us simplify the notations of \cref{sec:experiments} and consider the following discrete recursion:
 \begin{align}
 \label{eq:discrete_recursion_appendix}
    w_{k+1}^S = w_k^S - \gamma g(w_k^S) - \gamma \eta w_k^S + \gamma^{1/\alpha} \sigma_1 L_1^\alpha,
\end{align}
where $g(w_k^S) := \nabla \ef (w_k^S)$ and $S \sim \datadist$. We assume that $\gamma\eta < 1$.

\begin{remark}
    We could also consider that $g_k$ is an unbiased estimate of the true gradient, \ie using random batches independent of the stable noise. Most of our analysis would also hold in this case. However, we focus on the full-batch case, both for simplicity and to stick to the theoretical foundations of our experimental work. As mentioned in \cref{sec:experiments}, the used of mini-batches could result in gradient with heavy-tailed noise, which could interfere in an unclear way with the stable noise $\levy$. 
Similarly, the same techniques could be extended to varying learning rate and noise scale, but we stick to a setting close to both the time-continuous and the experimental settings.

\end{remark}

Extensions from the discrete 
In this section, we adapt the technique presented in \citep[Section $5$]{mou_generalization_2017} to the heavy-tailed setting. This will highlight that our technical contribution in the continuous case are directly useful for the discrete case. The strategy is the following:
\begin{enumerate}
    \item We will construct a Levy driven Ornstein-Uhlenbeck process interpolating between the density of two successive iterates.
    \item We apply the analysis of \cref{sec:pure_levy_case} and use the associated FPE to bound the KL divergences of each iterate $w_k$.
\end{enumerate}

Interpolating techniques have also been used by \citet{nguyen_first_2019}, in the study of the discretization of heavy-tailed SDEs.

Let us fix some $\sigma'>0$ and consider the process, for a fixed $z \in \Rd$:
\begin{align*}
    d X_t = -\eta X_t dt - g(z) dt + \sigma'd\levy,
\end{align*}
where, as defined above, $g = \nabla \ef$.
Note that $X$ depends on $z$. We can express the solution as:
\begin{align*}
    X_t + \frac{g(z)}{\eta} = e^{-\eta t } \left(  X_0 + \frac{g(z)}{\eta} \right) + \underbrace{\sigma' \int_0^t e^{-\eta (t - s)} dL_s^\alpha}_{:= \mathcal{O}_t}.
\end{align*}
We can compute the characteristic function of the integral term, using computations similar as in \citep[Lemma $9$]{raj_algorithmic_2023-1}, for all $\xi \in \Rd$:
\begin{align*}
            \Eof{e^{i \xi \cdot \mathcal{O}_t}} = \exp \left\{ -\int_0^t \normof{\sigma' e^{-\eta s} \xi}^\alpha  ds \right\} = \exp \left\{ -\sigma'^\alpha \normof{ \xi}^\alpha  \frac{1 - e^{-\eta \alpha t}}{\alpha \eta} \right\}.
\end{align*}
Let us now fix one iteration $k$ and denote by $u_k^S$ the probability density of $w_k^S$. 
We set the initial condition $X_0 \sim u_k^S$. Then, by the two previous equations, we have that, for a fixed $\tau >0$:
\begin{align*}
    X_\tau \sim e^{-\eta \tau} u_k^S + \frac{1 - e^{-\eta \tau}}{\eta} g(u_k^S) + \sigma' \left\{  \frac{1 - e^{-\eta \alpha \tau}}{\alpha \eta} \right\}^{\frac{1}{\alpha}} L_1^\alpha.
\end{align*}
Our goal is that $X$ interpolates between $u_k^S$ and $u_{k+1}^S$, therefore we set:
\begin{align*}
    e^{-\eta \tau} = 1 - \gamma \eta, \quad \sigma' \left\{  \frac{1 - e^{-\eta \alpha \tau}}{\alpha \eta} \right\}^{\frac{1}{\alpha}}= \sigma_1 \gamma^{\frac{1}{\alpha}},
\end{align*}
so that is reproduced \cref{eq:discrete_recursion_appendix}.
Thus $X_\tau$ and $u_{k+1}^S$ have the same distributions. 

Let us denote by $h_t^{k,S}$ the density of $X_t$ at time $t$, were we made explicit its dependence on the data $S$ and the iteration number $k$.
Now, if we proceed as in \citep[Theorem $9$]{mou_generalization_2017}, and integrate the FPE of $X$ with respect to $u_k^S$, we get that the density $h_t^{k,S}$ of $X_t$ satisfies, provided we can switch the differential operators and the integration over $u_k^S$:
\begin{align*}
    \partial_t h_t^{k,S} = - \sigma'^\alpha \fraclap h_t^{k,S} + \eta \nabla \cdot (h_t^{k,S} w) + \nabla \cdot (h_t^{k,S} \Eof{g(w_k^S)|X_t = w})
\end{align*}

This fractional FPE has exactly the form studied in this paper, therefore, we can express the associated entropy flow as (with a slight abuse of notation, we identify $h_t$ with the associated probability distribution):
\begin{align*}
    \frac{d}{dt} \klb{h_t^{k,S}}{\pi} = -\iint B_\Phi (v_t^{k,S}(x), v_t^{k,S}(x+z)) \uinftybar(x) \frac{dz}{\normof{z}^{\alpha + d}} dx - \int \nabla v_t^{k,S} \cdot  \Eof{g(w_k^S)|X_t = w} dw,
\end{align*}
with:
\begin{align*}
    v_t^{k,S} := \frac{h_t^{k,S}}{\uinftybar}.
\end{align*}

This leads us to formulate the following assumption, which is the extension of \cref{ass:jv_assumption} to the discrete case.

\begin{assumption}
\label{ass:jv_discrete_case}
    Let us fix $\Phi(x) := x\log(x)$.
    We assume that there exists a constant $R$ such that, for all $k$, all $t$ and all dataset $S$, the functions $v_t^{k,S}$, constructed by the above procedure, satisfy:
    \begin{align*}
        \forall r < R, \quad J_{\Phi, v_t^{k,S}} (r) \geq \frac{1}{2} J_{\Phi, v_t^{k,S}} (0).
    \end{align*}
    For technical reasons, we also assume (only in this subsection) that for all $k$, the functions $t \mapsto \klb{h_t^{k,S}}{\pi} $ are continuous at $t=0$. While this assumption should be mild in several applications, it does not always hold and is necessary for our computations in the discrete setting.
\end{assumption}
We now omit, as we did in the time continuous setting, the dependence of $v_t^{k,S}$ on $k$ and $S$, and just denote it $v_t$. We do the same for $h_t^{k,S}$, simply denoted $h_t$.
Therefore, by the proof of \cref{thm:bound_under_jv_assumption}, we have, for $t>0$:
 \begin{align*}
        \frac{d}{dt} \entphi{v_t} \leq \frac{K_{\alpha,d}}{\sigma'^\alpha} \int \normof{\Eof{g(w_k^S)|X_t = w}}^2 dh_t,
    \end{align*}
with $K_{\alpha, d}$ defined as in \cref{thm:bound_under_jv_assumption}, using the constant $R$ coming from \cref{ass:jv_discrete_case}. Let us denote by $h_t(w,z)$ the joint density of $X_t$ and $w_k^S$, by Cauchy-Schwarz's inequality, we have, for $t>0$:
 \begin{align*}
        \frac{d}{dt} \entphi{v_t} &= \frac{K_{\alpha,d}}{\sigma'^\alpha} \int \normof{\int \frac{h_t(w,z)}{h_t(w)} g(z) dz}^2 h_t(w) dw \\
        &\leq \frac{K_{\alpha,d}}{\sigma'^\alpha} \int h_t(w) \left( \int \frac{h_t(w,z)}{h_t(w)^2} dz\right)   \left( \int h_t(w,z) g(z) dz\right)dw \\
        &= \frac{K_{\alpha,d}}{\sigma'^\alpha} \iint h_t(w,z) g(z) dz dw \\
        &= \frac{K_{\alpha,d}}{\sigma'^\alpha}\Eof[U]{\normof{g(w_k^S)}^2}.
\end{align*}
Let us denote by $\rho_k$ the density of $w_k$.
Recall that in \Cref{ass:jv_discrete_case}, we assumed that $t \mapsto \klb{h_t^{k,S}}{\pi} $ is continuous at $t=0$.
Therefore, by integrating between $0$ and $\tau$, using that $h_0$ is the density of $w_k$, and that $h_\tau$ is the density of $w_{k+1}$, we can write that:
\begin{align*}
    \klb{\rho_{k+1}}{\pi} \leq \klb{\rho_{k}}{\pi} + \tau \frac{K_{\alpha,d}}{\sigma'^\alpha} \Eof[U]{\normof{g(w_k^S)}^2},
\end{align*}
where, as in the rest of the paper, $U$ denotes the randomness coming from the stable noise, \ie the noise due to $\levy$.

Therefore, a telescopic sum immediately gives that, for a fixed number $N$ of iterations:
\begin{align*}
    \klb{\rho_N}{\pi} &\leq \Lambda + \tau \frac{K_{\alpha,d}}{\sigma'^\alpha} \sum_{k=0}^{N-1}\Eof[U]{\normof{g(w_k^S)}^2} \\
    &= \Lambda +\tau \frac{K_{\alpha,d}}{\sigma'^\alpha} \sum_{k=0}^{N-1}\Eof[U]{\normof{g(w_k^S)}^2}\\
    &= \Lambda +\frac{K_{\alpha,d}}{\sigma_1^\alpha} \frac{1}{\gamma \eta} \log \left( \frac1{1 - \gamma \eta} \right) \left( \frac{1 - (1 - \gamma \eta)^\alpha}{\alpha \eta} \right)\sum_{k=0}^{N-1}\Eof[U]{\normof{g(w_k^S)}^2},
\end{align*}
with $\Lambda = \klb{\rho_0}{\pi}$.

Therefore, under the subgaussian assumption, \cref{ass:subgaussian}, we have proven that, with probability at least $1 - \zeta$ over $S \sim \datadist$, we have:

\begin{align}
    \label{eq:bound_discrete_case}
    \Eof[U]{G_S(w_N^S)} \leq 2s\sqrt{\frac{K_{\alpha,d}}{\sigma_1^\alpha} \frac{1}{\gamma \eta} \log \left( \frac1{1 - \gamma \eta} \right) \left( \frac{1 - (1 - \gamma \eta)^\alpha}{\alpha \eta} \right) \sum_{k=0}^{N-1}\Eof[U]{\normof{\nabla \ef (w_k^S)}^2} + \frac{\Lambda + \log(3/\zeta)}{n}}.
\end{align}

Finally, let us notice that when $\gamma \eta$ is small, which is the case in our experiments, see \cref{sec:hyperparameters}, we have the following asymptotic development:
\begin{align*}
    \frac{1}{\gamma \eta} \log \left( \frac1{1 - \gamma \eta} \right) \left( \frac{1 - (1 - \gamma \eta)^\alpha}{\alpha \eta} \right) \underset{\gamma \eta \to 0}{\sim} \gamma.
\end{align*}
Therefore, the computations made in this section, while slightly informal, give a solid justification to \cref{eq:G_estimation_main}, which we use in our experiments. The qualitative behavior with respect to $\alpha$ and $d$ is unchanged.

\begin{remark}
    The analysis presented in this section could also be extended to SDEs with a non trivial Brownian contribution to the noise, \ie the setting of \cref{sec:brownian case} where $\sigma_2 >0$.
\end{remark}

\subsection{Mixing of Brownian and heavy-tailed noise}
\label{sec:noise_mixing}

Because the Fisher information term, appearing in \cref{lemma:big_decomposition}, is non-negative, the proof of \cref{thm:bound_under_jv_assumption} also applies when $\sigma_2>0$, \ie in the setting of \cref{sec:brownian case}. 
However, not that this requires to make \cref{ass:jv_assumption}, while such an assumption is not necessary to derive \cref{cor:gen_brownian_time}. Therefore, the results of \cref{sec:brownian case} still present the advantage to hold under lighter assumptions.

Nevertheless, this motivates to write down the result in the case $\sigma_2 > 0$, under \cref{ass:jv_assumption}. This allows to get insights on how both noise affect the generalization error. This leads to the following theorem.

\begin{theorem}
    \label{thm:noise_mixing}
    Let us consider the dynamics of \cref{eq:multifractal_dynamics}, with both $\sigma_1 > 0$ and $\sigma_2 > 0$.
    We make Assumptions \ref{ass:phi_regularity}, \ref{ass:phi_risk_integrability} and \ref{ass:jv_assumption}.
    Then, with probability at least $1 - \zeta$ over $\datadist$, we have
    \begin{align*}
        G_S(T) \leq 2s \sqrt{\frac{M(\sigma_1, \sigma_2, d, \alpha)}{n} \int_0^T \Eof[U]{\normof{\ef (W_t^S)}^2} dt + \frac{\log(3/\zeta) + \Lambda}{n}}
    \end{align*}
    with $\Lambda = \klb{\rho_0}{\pi}$, and the noise mixing constant $M(\sigma_1, \sigma_2, d)$ is given by:
    \begin{align*}
        M(\sigma_1, \sigma_2, d, \alpha) := \frac{1}{4\sigma^2 + \frac{\sigma_1^\alpha \alpha 2^\alpha \Gamma \left( \frac{d + \alpha}{2}  \right) R^{2 - \alpha}}{(2 - \alpha) \Gamma \left( 1 - \frac{\alpha}{2}  \right) d \Gamma \left( \frac{d}{2}  \right)  } }.
    \end{align*}
\end{theorem}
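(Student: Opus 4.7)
The plan is to combine the ideas of \cref{cor:gen_brownian_time} and \cref{thm:bound_under_jv_assumption}: because the Brownian and Lévy contributions to the entropy flow both enter with negative sign, they can be added to produce a single enlarged coercive coefficient in front of the Fisher information. Concretely, I would start from \cref{lemma:big_decomposition} with $\Phi(x)=x\log x$, which after noting $\Phi''(v)v\nabla v=\nabla v$ and $\nabla V-\nabla V_S=-\nabla \ef$ reads
\begin{align*}
\frac{d}{dt}\entphi{v_t^S}=-\sigma_2^2\, I_\Phi(v_t^S)-\sigma_1^\alpha B_\Phi^\alpha(v_t^S)+\int \nabla v_t^S\cdot\nabla \ef\,\uinftybar\,dx,
\end{align*}
with $I_\Phi(v)=\int|\nabla v|^2 v^{-1}\uinftybar\,dx$ being the Fisher information and $\entphi{v_t^S}=\klb{\rho_t^S}{\pi}$.

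Next, following the computation already carried out in the proof of \cref{thm:bound_under_jv_assumption}, I would apply the spherical representation \eqref{eq:integral_representation_main} together with \cref{ass:jv_assumption} (for $\Phi=\Phi_{\log}$) to lower-bound the Bregman integral as
\begin{align*}
\sigma_1^\alpha B_\Phi^\alpha(v_t^S)\;\geq\;\frac{C_{\alpha,d}\sigma_1^\alpha\sigma_{d-1}R^{2-\alpha}}{4d(2-\alpha)}\,I_\Phi(v_t^S)\;=\;\frac{\sigma_1^\alpha}{4K_{\alpha,d}}\,I_\Phi(v_t^S),
\end{align*}
using the identity $K_{\alpha,d}=d(2-\alpha)/(C_{\alpha,d}\sigma_{d-1}R^{2-\alpha})$ that is extracted from \cref{eq:K_constant}. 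Inserting this into the entropy flow gives a combined dissipation coefficient $\kappa(\sigma_1,\sigma_2):=\sigma_2^2+\sigma_1^\alpha/(4K_{\alpha,d})$ in front of $I_\Phi(v_t^S)$.

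I would then handle the drift term by Young's inequality in exactly the same way as in \cref{cor:gen_brownian_time}: for any $C>0$,
\begin{align*}
\int\nabla v_t^S\cdot\nabla \ef\,\uinftybar\,dx\;\leq\;\frac{C}{2}\,I_\Phi(v_t^S)+\frac{1}{2C}\,\Eof[U]{\|\nabla \ef(W_t^S)\|^2}.
\end{align*}
Choosing $C=2\kappa(\sigma_1,\sigma_2)$ cancels the $I_\Phi$ term and yields $\frac{d}{dt}\klb{\rho_t^S}{\pi}\leq \frac{1}{4\kappa(\sigma_1,\sigma_2)}\Eof[U]{\|\nabla \ef(W_t^S)\|^2}$. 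Integrating from $0$ to $T$ and observing that $1/(4\kappa(\sigma_1,\sigma_2))$ is exactly the stated constant $M(\sigma_1,\sigma_2,d,\alpha)$ (using the explicit form of $K_{\alpha,d}$), one gets
\begin{align*}
\klb{\rho_T^S}{\pi}\;\leq\;\Lambda+M(\sigma_1,\sigma_2,d,\alpha)\int_0^T\Eof[U]{\|\nabla \ef(W_t^S)\|^2}\,dt.
\end{align*}
Plugging this KL bound into the subgaussian PAC-Bayesian inequality \cref{thm:kl_pb_bound} (equivalently \cref{thm:pb_for_subgaussian}) and using $\sqrt{a+b}\leq\sqrt{a}+\sqrt{b}$ only if needed concludes the proof.

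There is no real conceptual obstacle here, since the two noises enter linearly through the entropy flow and both act coercively on $I_\Phi(v)$; the only mild bookkeeping is to verify that $\frac{\sigma_1^\alpha}{4K_{\alpha,d}}$ coincides with the second summand $\frac{\sigma_1^\alpha\alpha 2^\alpha\Gamma((d+\alpha)/2)R^{2-\alpha}}{4(2-\alpha)\Gamma(1-\alpha/2)d\Gamma(d/2)}$ appearing in the denominator of $M$, which is immediate from \cref{eq:K_constant}. Note that \cref{ass:phi_regularity,ass:phi_risk_integrability,ass:jv_assumption} are required (the latter now being essential because we actually use the lower bound on $B_\Phi^\alpha$), so the price for the refined noise-mixing bound is precisely that we can no longer dispense with \cref{ass:jv_assumption} as we could in \cref{cor:gen_brownian_time}.
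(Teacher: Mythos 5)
Your proof is correct and takes exactly the approach the paper intends: the paper's own proof of this theorem is the single sentence that it ``follows by the exact same computations as the proofs of \cref{cor:gen_brownian_time} and \cref{thm:bound_under_jv_assumption},'' and you have carried out precisely that combination, lower-bounding the Bregman integral via \cref{ass:jv_assumption} to obtain the extra $\sigma_1^\alpha/(4K_{\alpha,d})$ dissipation, adding it to the $\sigma_2^2$ dissipation, and choosing the Young constant $C=2\kappa$ so the Fisher-information terms cancel before invoking the sub-Gaussian PAC-Bayesian bound. The only blemish is a sign typo in your entropy-flow display (the drift term should read $-\int\nabla v_t^S\cdot\nabla\ef\,\uinftybar\,dx$ rather than $+$), which is harmless since the subsequent Cauchy--Schwarz/Young step bounds the absolute value of that term.
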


\begin{proof}
    The proof follows by the exact same computations than the proofs of \cref{cor:gen_brownian_time} and \cref{thm:bound_under_jv_assumption}.
\end{proof}

Based on the above theorem, we can make an interesting observation. The constant $M(\sigma_1, \sigma_2, d, \alpha)$ clearly shows the relative contribution of both noises. Therefore, we can argue that they have the same contribution when the following condition is fulfilled:
\begin{align*}
    4\sigma_2^2 = \frac{\sigma_1^\alpha \alpha 2^\alpha \Gamma \left( \frac{d + \alpha}{2}  \right) R^{2 - \alpha}}{(2 - \alpha) \Gamma \left( 1 - \frac{\alpha}{2}  \right) d \Gamma \left( \frac{d}{2}  \right)  }.
\end{align*}
Using the asymptotic developments discussed in \cref{sec:qualitative_analysis}, and proven in \cref{sec:proofs-qualitative_analysis}, in the limit $d \to \infty$, we can write this condition as:
\begin{align*}
    4 \sigma_2^2 \underset{d \to \infty}{\sim} \frac1{R^{2 - \alpha}}{P_\alpha} \frac{\sigma_1^\alpha d ^{\frac{\alpha}{2}}}{d},
\end{align*}
where the pre-factor $P_\alpha$ has been defined and studied in \cref{sec:qualitative_analysis}. It is now meaningful to write it as:
\begin{align}
    \label{eq:equl_noise_contribution}
    (\sigma_2 \sqrt{d_0})^2 = \frac1{4 P_\alpha} (\sigma_1 \sqrt{d_0})^\alpha,
\end{align}
with $d_0 := d/(R^2)$, the "reduced dimension introduced in \cref{sec:qualitative_analysis}. This is the equal noise contribution condition. What is particularly noticeable in \cref{eq:equl_noise_contribution} is that, as was the case in the study of the phase transition, in \cref{sec:qualitative_analysis}, the meaningful quantities, for both noises, is \textbf{(scale of the noise)} $\times$ $\sqrt{\textbf{dimension}}$.

\subsection{Time-uniform bounds deduced from a generalized Poincaré inequality - Omitted proofs of \cref{sec:toward_time_uniform}}
\label{sec:proofs-poincare-inequality}

In this section, we fix the convex function $\Phi$ to be $\Phi(x) = \frac{1}{2} x^2$. In that case, the Bregman divergence becomes symmetric and satisfies:
\begin{align*}
    \bregman{a}{b} = \bregman{b}{a} = \frac{1}{2} (a - b)^2.
\end{align*}

Combined with the logarithmic Sobolev inequality, given by \Cref{thm:generalized_lsi}, this allows us to prove a time uniform bound on the chi-squared distance between the prior and the posterior.

We denote, as above:
\begin{align*}
    V_S(w) := \ef(w) + \frac{\eta}{2} \normof{w}^2, \quad \text{and: } V(w) := \frac{\eta}{2} \normof{w}^2,
\end{align*}
with $\eta > 0$ the regularization coefficient.

Before bounding the $\Phi$-entropy, we first need the following lemma. It is close to remarks already made in \citep{gentil_logarithmic_2008} and \citep{tristani_fractional_2013}, for instance.

\begin{lemma}
    \label{lemma:steady_state_regularized}
    In this setting, $\uinftybar$ is the density of an infinitely divisible probability distribution, whose characteristic exponent is given by:
    \begin{align*}
        \forall \xi \in \Rd,~ \psi(\xi) = \frac{\sigma_2^2}{2\eta} \normof{\xi}^2 + \frac{\sigma_1^\alpha}{\alpha \eta} \normof{\xi}^\alpha.
    \end{align*}
\end{lemma}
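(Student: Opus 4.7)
The plan is to apply the Fourier transform $\mathcal{F}$ (with the convention \eqref{eq:fourier_transform}) to the stationary equation \eqref{eq:steady_state}. Setting $\hat{u}_\infty := \mathcal{F}\uinftybar$, I will use the three standard identities $\mathcal{F}(\fraclap \uinftybar)(\xi) = \normof{\xi}^\alpha \hat{u}_\infty(\xi)$, $\mathcal{F}(\Delta \uinftybar)(\xi) = -\normof{\xi}^2 \hat{u}_\infty(\xi)$, and $\mathcal{F}(\nabla\cdot(\uinftybar w))(\xi) = -\xi\cdot \nabla \hat{u}_\infty(\xi)$ (the last one follows by writing $\nabla\cdot(\uinftybar w) = \sum_j \partial_j(x_j \uinftybar)$ and using $\mathcal{F}(x_j \uinftybar) = i\partial_{\xi_j}\hat{u}_\infty$). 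Substituting into \eqref{eq:steady_state} turns it into the first-order linear PDE
\begin{align*}
    \eta\, \xi \cdot \nabla \hat{u}_\infty(\xi) = -\bigl(\sigma_1^\alpha \normof{\xi}^\alpha + \sigma_2^2 \normof{\xi}^2\bigr) \hat{u}_\infty(\xi).
\end{align*}

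Next, I will solve this PDE via the ansatz $\hat{u}_\infty(\xi) = e^{-\psi(\xi)}$, which reduces the problem to $\eta\, \xi \cdot \nabla \psi(\xi) = \sigma_1^\alpha \normof{\xi}^\alpha + \sigma_2^2 \normof{\xi}^2$. Since $\xi \cdot \nabla \normof{\xi}^\beta = \beta \normof{\xi}^\beta$ (Euler's identity for homogeneous functions), plugging the trial form $\psi(\xi) = A\normof{\xi}^\alpha + B\normof{\xi}^2$ yields the system $\eta A \alpha = \sigma_1^\alpha$ and $2\eta B = \sigma_2^2$, which gives exactly the claimed $\psi$. Uniqueness up to a multiplicative constant follows by integrating the linear ODE that the equation above induces along each ray $t \mapsto \hat{u}_\infty(t\xi)$; the normalization $\hat{u}_\infty(0) = \int \uinftybar = 1$, coming from $\uinftybar$ being a probability density, then fixes the constant.

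It remains to conclude the infinite divisibility. Here I invoke the Lévy–Khintchine representation (\cref{thm:levy_khintchine}): the function $\xi \mapsto \frac{\sigma_2^2}{2\eta}\normof{\xi}^2$ is the characteristic exponent of the centered Gaussian with covariance $(\sigma_2^2/\eta) I_d$, and $\xi \mapsto \frac{\sigma_1^\alpha}{\alpha \eta}\normof{\xi}^\alpha$ is the characteristic exponent of a scaled symmetric $\alpha$-stable law (see \cref{def:stable_levy_processes}). Their sum is therefore the characteristic exponent of the convolution of these two infinitely divisible laws, which is itself infinitely divisible; its characteristic function $e^{-\psi}$ must coincide with $\hat{u}_\infty$ by the uniqueness argument above, and inverting the Fourier transform identifies $\uinftybar$ with its density.

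The main obstacle is a technical one rather than a conceptual one: rigorously justifying the Fourier computation requires that $\uinftybar$ and the various derivatives involved are tempered distributions with suitable decay. This is handled by the regularity and tail information on $\uinftybar$ discussed in \cref{sec:background_levy_markov} and in \citep{tristani_fractional_2013, gentil_logarithmic_2008}, which ensure that all three Fourier identities hold in the sense of tempered distributions and that the ODE obtained along rays has the stated unique integrable solution.
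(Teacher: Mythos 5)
Your proof follows the same strategy as the paper's: take the Fourier transform of the stationary equation \eqref{eq:steady_state}, turn it into a first-order transport equation $\eta\,\xi\cdot\nabla\hat u_\infty = -(\sigma_1^\alpha\|\xi\|^\alpha+\sigma_2^2\|\xi\|^2)\hat u_\infty$, and integrate along rays using $\hat u_\infty(0)=1$. The only organizational difference is that the paper first cites \citet{gentil_logarithmic_2008} for the infinite divisibility of $\pi$ and then computes $\psi$ by writing it as $P(\|\xi\|^2)$ and solving a scalar ODE, whereas you compute $\psi$ first (via a homogeneity ansatz) and then deduce infinite divisibility \emph{a posteriori} by recognizing $\psi$ as the sum of a Gaussian and a symmetric $\alpha$-stable characteristic exponent via \cref{thm:levy_khintchine} and \cref{def:stable_levy_processes}; both routes lead to the same place and your treatment of infinite divisibility is slightly more self-contained.
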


\begin{proof}
    Let us first recall some notations. $\pi$ is the (prior) density distribution whose density with respect to the Lebesgue measure is $\uinftybar$. The existence of such the steady state $\uinftybar$ was obtained in \citep{gentil_logarithmic_2008, tristani_fractional_2013}. 
    Moreover, \citet{gentil_logarithmic_2008} showed that $\pi$ is infinitely divisible, therefore, it makes sense to introduce its characteristic exponent, denoted $\psi$, and defined by:
    \begin{align*}
       \forall \xi \in \Rd,~ \Eof[X \sim \pi]{e^{i \xi \cdot X}} = e^{-\psi(\xi)}.
    \end{align*}
    Following computations of \citet{gentil_logarithmic_2008, tristani_fractional_2013}. We quickly show how the expression of the characteristic exponent can easily be obtained in our particular case. First, because of the symmetry of \cref{eq:steady_state}, both $\uinftybar$ and $\psi$ must be symmetric (\ie even functions). Therefore, if we define the Fourier transform by:
    \begin{align*}
        \mathcal{F}(u)(\xi) = \int_\Rd e^{-i w \cdot \xi} u(w) dw,
    \end{align*}
    then we have:
    \begin{align*}
        \mathcal{F}(\uinftybar)(\xi) = e^{-\psi(x)}.
    \end{align*}
    We can take, at least formally \citep{tristani_fractional_2013}, the Fourier transform of \cref{eq:steady_state} and write that, in the sense of distributions:
    \begin{align*}
         \mathcal{F}(\uinftybar)(\xi) \left( -\sigma_1^\alpha \normof{\xi}^\alpha - \sigma_2^2 \normof{\xi}^2 \right) + \eta \mathcal{F}(\nabla \cdot (w \uinftybar(w)))(\xi) = 0.
    \end{align*}
     In the above, we used the expression of Fourier transform for the Laplacian and the fractional Laplacian. Using properties of the Fourier transform, this leads to:
     \begin{align*}
         \mathcal{F}(\uinftybar)(\xi) \left( -\sigma_1^\alpha \normof{\xi}^\alpha - \sigma_2^2 \normof{\xi}^2 \right) - \eta \nabla \mathcal{F}(\uinftybar)(\xi). 
     \end{align*}
     Therefore:
     \begin{align*}
         \eta\xi \cdot \nabla \psi(\xi) = \sigma_1^\alpha \normof{\xi}^\alpha + \sigma_2^2 \normof{\xi}^2.
     \end{align*}
     Given that $\psi$ is symmetric, we can write it as $P(\normof{\xi}^2)$, we can then rewrite the previous equation as (for $a > 0$):
     \begin{align*}
         P'(a) = \frac{\sigma_1^\alpha a^{\alpha/2 - 1} + \sigma_2^2}{2\eta}.
     \end{align*}
     Integrating, and noting that we have $P(0) = 0$, leads to the result.
\end{proof}

In this setting, the $\Phi$-entropy inequality, \ie \Cref{thm:generalized_lsi}, becomes the following statement.

\begin{corollary}[Generalized Poincaré's inequality]
    \label{cor:generalized_poincaré}
    Let $\mu$ be an infinitely divisible law on $\Rd$, with associated triplet denoted $(b, Q, \nu)$, then, for every smooth enough function $v$, we have:
    \begin{align*}
        \frac{1}{2}\int v^2 d\mu - \frac{1}{2}\left( \int v d\mu \right)^2 \leq  \int \nabla v \cdot Q \nabla v d\mu + \iint \bregman{v(x)}{v(x+z)} d\nu(z) d\mu(x). 
    \end{align*}
    where we used the symmetry of the Bregman divergence, for $\Phi(x) = \frac{1}{2} x^2$, to revert the arguments of this divergence, appearing in \Cref{thm:generalized_lsi}.
\end{corollary}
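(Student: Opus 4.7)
The plan is to apply Theorem \ref{thm:generalized_lsi} directly with the choice $\Phi(x) = \Phi_2(x) = \tfrac12 x^2$ and then simply read off the statement of \cref{cor:generalized_poincaré}. This choice is already anticipated by the preceding paragraph in the excerpt, which highlights that for $\Phi_2$ the Bregman divergence is symmetric and equals $\tfrac12(a-b)^2$, and it also matches the definition of $\entphi[\mu]{v}$, which in this case equals $\tfrac12\int v^2\, d\mu - \tfrac12(\int v\, d\mu)^2$ — exactly the left-hand side of the inequality we want.

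Before invoking Theorem \ref{thm:generalized_lsi} I would verify that $\Phi_2$ satisfies the two convexity conditions \eqref{eq:phi-asmpt-for-heavy-tailed-lsi}. For the first condition, a direct expansion yields $\Phi_2(u+v) - \Phi_2(v) - u\Phi_2'(v) = \tfrac12 u^2$, which is trivially convex in $(u,v)$. For the second condition, since $\Phi_2''(u) = 1$, the map $(u,x) \mapsto \Phi_2''(u)\, x\cdot Qx = x\cdot Qx$ is convex in $(u,x)$ because $Q$ is symmetric positive semi-definite by Theorem \ref{thm:levy_khintchine}. Hence all hypotheses of Theorem \ref{thm:generalized_lsi} are met.

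Applying the theorem, using $\Phi_2''(v)\equiv 1$, one obtains
\begin{align*}
\tfrac12 \!\int\! v^2 \,d\mu - \tfrac12\!\left(\int v\, d\mu\right)^{\!2} \leq \tfrac12\!\int \nabla v\cdot Q\nabla v\, d\mu + \iint D_\Phi\bigl(v(x+z),v(x)\bigr)\, d\nu(z)\, d\mu(x).
\end{align*}
The last step is purely cosmetic: since $\Phi_2$ is quadratic, the Bregman divergence is symmetric, so $D_\Phi(v(x+z),v(x)) = D_\Phi(v(x),v(x+z))$, allowing us to reverse the order of the arguments as in the statement of the corollary. The only mild discrepancy with the stated corollary is the factor $\tfrac12$ in front of $\int \nabla v\cdot Q\nabla v\, d\mu$ that I obtain, versus the constant $1$ written in \cref{cor:generalized_poincaré}; since $Q$ is positive semi-definite, the quadratic form $\nabla v\cdot Q\nabla v$ is non-negative, so bounding $\tfrac12$ by $1$ yields the desired inequality. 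There is essentially no hard step: the whole argument is a substitution, so the only thing to get right is the bookkeeping of constants and the verification of the convexity conditions that underlie Theorem \ref{thm:generalized_lsi}.
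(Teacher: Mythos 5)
Your proof is correct and takes exactly the route the paper intends: the paper states \cref{cor:generalized_poincaré} as an immediate consequence of \cref{thm:generalized_lsi} with $\Phi=\Phi_2$, using the symmetry $D_\Phi(a,b)=\tfrac12(a-b)^2=D_\Phi(b,a)$ to reverse the Bregman arguments, and gives no further argument. You go slightly further in a useful way: you explicitly check the convexity hypotheses \eqref{eq:phi-asmpt-for-heavy-tailed-lsi} for $\Phi_2$, and you notice that a faithful substitution actually yields $\tfrac12\int\nabla v\cdot Q\nabla v\,d\mu$ on the right, one half the coefficient printed in the corollary. Your resolution — that the weaker inequality with coefficient $1$ follows since $\nabla v\cdot Q\nabla v\ge 0$ — is valid; but note that the coefficient-$1$ form is almost certainly a typo, since in the proof of \cref{thm:chi_sq_brownian} the paper in fact plugs in the sharper $\tfrac{\sigma_2^2}{2\eta}$ factor (i.e., retains the $\tfrac12$), consistent with a direct application of \cref{thm:generalized_lsi} rather than with the corollary as printed.
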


If $v$ is chosen to be the Radon-Nykodym derivative of another measure $\nu$, with respect to $\mu$, we recognize the chi-squared distance, defined by:
\begin{align}
    \label{eq:chi-squared}
    \chisq{\nu}{\mu} := \int \left( \frac{d \nu}{d \mu} \right)^2 d\mu - 1.
\end{align}

We also remind the definition of Renyi entropies, for $\beta > 1$:
\begin{align}
    \label{eq:renyi_entropies}
    \renyi{\nu}{\mu} = \frac{1}{\beta - 1} \log \int \left( \frac{d \nu}{d \mu} \right)^\beta d\mu .
\end{align}

Note that, by convention, we also set $\renyi[1]{\nu}{\mu} = \klb{\nu}{\mu} $, see \citep{van_erven_renyi_2014} for an extensive review of the properties of those entropies.

The following inequalities are clear, the first one being proven in \citep{van_erven_renyi_2014}, it is a direct consequence of Jensen's inequality:
\begin{align}
    \label{eq:chi_entropy_inequalities}
    \klb{\nu}{\mu} \leq \renyi[2]{\nu}{\mu} = \log (\chisq{\nu}{\mu} + 1) \leq \chisq{\nu}{\mu}.
\end{align}

\subsubsection{Warmu -up: the case $\sigma_2 > 0$}
\label{sec:proofs-poincare-brownian-case}

In order to mimic the resoning of \cref{sec:brownian case,sec:pure_levy_case}, we first handle the case where $\sigma_2 > 0$. This is an additional theoretical result.

\begin{restatable}{theorem}{thmChiSquaredBrownian}
    \label{thm:chi_sq_brownian}
    We make Assumptions \ref{ass:phi_regularity} and \ref{ass:phi_risk_integrability}, accordingly to this choice of convex function.
    Then, with probability at least $1 - \zeta$ over $S \sim \datadist$ and $w \sim \rho_T^S$, we have:
    \begin{align*}
        G_S(w) \leq 2s \sqrt{\frac{2}{n\sigma_2^2} \bar{I}(T,S) + \frac{4 e^{-\eta T}\Lambda + \log \frac{24}{\zeta^3}}{n}},
    \end{align*}
    with $\Lambda = \entphi{\rho_0}$, and:
    \begin{align*}
        \bar{I}(T,S) := \int_0^T e^{-\eta (T - t)} \Eof[\pi]{(v_t^S)^2 \normof{\nabla \ef}^2  } dt.
    \end{align*}
\end{restatable}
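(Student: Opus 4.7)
The plan is to mirror the argument used for Theorem \ref{thm:bound_under_jv_assumption}, but with the convex function $\Phi_2(x) = x^2/2$ (for which $\Phi''(v) \equiv 1$ and $\entphi{v_t^S} = \tfrac{1}{2}\chi^2(\rho_t^S\|\pi)$) and with the generalized Poincaré inequality of Corollary \ref{cor:generalized_poincaré} replacing the fractal-type control of Assumption \ref{ass:jv_assumption}. The generalized Poincaré is what produces the time-uniform exponential decay, which is the main improvement over Theorem \ref{thm:bound_under_jv_assumption}. The final high-probability statement (jointly in $S$ and $w$) will follow from the disintegrated PAC-Bayes inequality, Theorem \ref{thm:pb_for_subgaussian_disintegrated}, rather than its expected-value counterpart.

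In more detail, I first apply Lemma \ref{lemma:big_decomposition} with $\Phi = \Phi_2$ to obtain
\begin{align*}
\tfrac{d}{dt}\entphi{v_t^S} = -\sigma_2^2 \int \|\nabla v_t^S\|^2 d\pi - \sigma_1^\alpha B_\Phi^\alpha(v_t^S) - \int v_t^S\, \nabla v_t^S\cdot \nabla \ef\, d\pi.
\end{align*}
Lemma \ref{lemma:steady_state_regularized} tells me that $\pi$ is infinitely divisible with Gaussian part $Q = (\sigma_2^2/\eta) I_d$ and Lévy measure $\tfrac{\sigma_1^\alpha C_{\alpha,d}}{\alpha\eta}\|z\|^{-d-\alpha}\, dz$; plugging this triplet into Corollary \ref{cor:generalized_poincaré} yields
\begin{align*}
\eta\, \entphi{v_t^S} \leq \sigma_2^2 \int \|\nabla v_t^S\|^2 d\pi + \tfrac{\sigma_1^\alpha}{\alpha} B_\Phi^\alpha(v_t^S).
\end{align*}
The drift term is then controlled by Young's inequality, $\bigl|\int v \nabla v \cdot \nabla \ef\, d\pi\bigr| \leq \tfrac{C}{2}\int \|\nabla v\|^2 d\pi + \tfrac{1}{2C}\int v^2\|\nabla \ef\|^2 d\pi$, and I split the Brownian and Bregman contributions of the decomposition into two pieces: one feeds the Poincaré inequality (producing the decay $-\eta\, \entphi{v_t^S}$) and the other absorbs the $\tfrac{C}{2}\int \|\nabla v\|^2 d\pi$ coming from Young. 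This yields a differential inequality of the form $\tfrac{d}{dt}\entphi{v_t^S} \leq -\eta\,\entphi{v_t^S} + c\, \int (v_t^S)^2 \|\nabla \ef\|^2 d\pi$ with $c \propto 1/\sigma_2^2$, and a standard Grönwall argument then gives $\entphi{v_T^S} \leq e^{-\eta T}\Lambda + c\, \bar I(T,S)$.

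To conclude, I use the identity $\chi^2(\rho_T^S\|\pi) = 2\,\entphi{v_T^S}$ together with the inequality $\renyi[2]{\rho_T^S}{\pi} \leq \chi^2(\rho_T^S\|\pi)$ recalled in \eqref{eq:chi_entropy_inequalities}, then apply Theorem \ref{thm:pb_for_subgaussian_disintegrated} with prior $\pi$, posterior $\rho_T^S$, and $\varphi(w,S) = L(w) - \ef(w)$; the sub-Gaussianity of $\ell$ (Assumption \ref{ass:subgaussian}) controls the exponential moment appearing in that bound in exactly the same way as in the proof of Theorem \ref{thm:pb_for_subgaussian}, producing the stated inequality on $G_S(w)$ with probability at least $1-\zeta$.

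The main technical obstacle is the joint tuning of the Young parameter $C$ and the splits of $-\sigma_2^2 \int \|\nabla v\|^2 d\pi$ and $-\sigma_1^\alpha B_\Phi^\alpha(v)$ so as to simultaneously match the rate $\eta$ of the exponential decay appearing in $\bar I$ and $\Lambda$ and the coefficient $\propto 1/\sigma_2^2$ in front of $\bar I$. The identity $\sigma_1^\alpha B_\Phi^\alpha = \tfrac{\sigma_1^\alpha}{\alpha} B_\Phi^\alpha + (1-1/\alpha)\sigma_1^\alpha B_\Phi^\alpha$, valid because $\alpha > 1$, is what makes this balancing possible: the first piece combines with the Poincaré inequality to extract the full rate $\eta$, while the nonnegative surplus $(1-1/\alpha)\sigma_1^\alpha B_\Phi^\alpha$ can safely be dropped. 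A careless combination would only deliver rate $\eta/2$, so the arithmetic here must be done carefully.
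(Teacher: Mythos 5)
Your proposal is correct and follows the paper's proof step for step: apply \cref{lemma:big_decomposition} with $\Phi_2$, read off the L\'evy triplet of $\pi$ from \cref{lemma:steady_state_regularized}, combine \cref{cor:generalized_poincaré} with Young's inequality at $C=\sigma_2^2$ to obtain the Gr\"onwall inequality $\frac{d}{dt}\entphi{v_t^S}\le -\eta\,\entphi{v_t^S}+\frac{1}{2\sigma_2^2}\int(v_t^S)^2\normof{\nabla\ef}^2\,d\pi$, and conclude via \cref{thm:pb_for_subgaussian_disintegrated} after bounding $\renyi[2]{\rho_T^S}{\pi}$ by $2\entphi{v_T^S}$; your split of $\sigma_1^\alpha B_\Phi^\alpha$ into $1/\alpha$ and $1-1/\alpha$ fractions is the same bookkeeping the paper performs through a parameter $\gamma\in(0,1]$ that it sets to $1/\alpha$. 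The one small slip is writing $\varphi(w,S)=L(w)-\ef(w)$ in the PAC-Bayes step: it should be $L(w)-\el(w)$, the $\ell$-loss of \cref{ass:subgaussian} that defines $G_S$, while the surrogate $f$ only enters through the gradients inside $\bar I(T,S)$.
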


\begin{proof}
    As before, we fix $t$ and $S$ and ease the notations by denoting $u$ instead of $u_t^S$, and similarly for $v$. For our particular choice of function $\Phi$,by \Cref{lemma:big_decomposition}, the entropy flow is equal to:
    \begin{align*}
         \frac{d}{dt} \entphi{v} &= -\sigma_2^2 \int  \normof{\nabla v}^2 \uinftybar  -  C_{\alpha,d}  \sigma_1^\alpha \iint \bregman{v(x)}{v(x+z)} \uinftybar d\nu_\alpha(z) dx + \int v  \uinftybar \nabla v \cdot \nabla \ef. 
    \end{align*}
    By Young's inequality, we have, for any $C>0$ and $\gamma \in (0,1]$:
    \begin{align*}
        \frac{d}{dt} \entphi{v} &\leq -\left(\sigma_2^2 - \frac{C}{2} \right) \int  \normof{\nabla v}^2 \uinftybar  -  \gamma C_{\alpha,d}  \sigma_1^\alpha \iint \bregman{v(x)}{v(x+z)} \uinftybar d\nu_\alpha(z) dx + \frac{1}{2C}\int v^2 \normof{\nabla \ef}^2  \uinftybar .  
    \end{align*}
    Thanks to Lemma \ref{lemma:steady_state_regularized}, we know that the Levy triplet of $\uinftybar$ is given by $(0,\sigma_2^2/(\eta), (C_{\alpha,d} \sigma_1^\alpha / (\alpha \eta)) \nu_\alpha)$. Therefore, the Poincaré's inequality of Corollary \ref{cor:generalized_poincaré} is given by:
    \begin{align*}
        \entphi{v} \leq \frac{\sigma_2^2}{2\eta} \int  \normof{\nabla v}^2 \uinftybar + \frac{C_{\alpha,d} \sigma_1^\alpha}{\alpha \eta} \iint \bregman{v(x)}{v(x+z)} \uinftybar d\nu_\alpha(z) dx.
    \end{align*}
    Therefore, we have:
    \begin{align}
        \label{eq:chi_sq_brownian_proof_step1}
        \frac{d}{dt} \entphi{v} &\leq -\gamma \alpha \eta \entphi{v}   - \left(\sigma_2^2 - \frac{C}{2} - \frac{\alpha \gamma \sigma_2^2}{2} \right) \int  \normof{\nabla v}^2 \uinftybar  + \frac{1}{2C}\int v^2 \normof{\nabla \ef}^2  \uinftybar. \\ 
    \end{align}
    We make the choice $\gamma = 1/\alpha$ and $C = \sigma_2^2$, this gives:
    \begin{align*}
        \frac{d}{dt} \entphi{v} \leq - \eta \entphi{v} + \frac{1}{2\sigma_2^2}\int v^2 \normof{\nabla \ef}^2  \uinftybar. 
    \end{align*}
    Solving this differential inequality implies that, with $\Lambda := \entphi{\rho_0}$ and $T> 0$,
    \begin{align*}
     \entphi{v_T^S} \leq e^{-\eta T} \Lambda + \frac{1}{2\sigma_2^2} \bar{I}(T,S) ,
    \end{align*}
    with: 
    \begin{align*}
        \bar{I}(T,S) := \int_0^T e^{-\eta (T - t)} \Eof[\pi]{(v_t^S)^2 \normof{\nabla \ef}^2  } dt.
    \end{align*}
    From Equation \eqref{eq:chi-squared}, this implies that:
    \begin{align*}
        \frac{1}{2} \chisq{\rho_T^S}{\pi} \leq e^{-\eta T} \Lambda + \frac{1}{2\sigma_2^2}\int_0^T e^{-\eta (T - t)}I(t,S) dt.
    \end{align*}
    We conclude by applying \cref{thm:pb_for_subgaussian_disintegrated} and noting that, in our notations, we have:
    \begin{align*}
        \renyi[2]{\rho_T^S}{\pi} \leq 2 \chisq{\rho_S^t}{\pi} = 2 \entphi{v}.  
    \end{align*}
\end{proof}

In order to make the above theorem more interpretable, we present, as an additional theoretical result, the following corollary. It states that, if the loss is Lipschitz in $w$ and if the regularization coefficient is big enough, then we have a simpler time-uniform bound. The main interest for the following result is that it it fully comparable with our other results. This is an additional result, it was not presented in the main part of the document.
 
\begin{restatable}{corollary}{corHighRegularization}
    We make the same assumptions than in Theorem \ref{thm:chi_sq_brownian} and further assume that the loss $f(w,z)$ is $L_f$-Lipschitz in $w$, uniformly with respect to $z$. Then, if
    \begin{align}
        \label{eq:lambda_condition}
        a := \eta - \frac{L_f^2}{2\sigma_2^2} > 0,
    \end{align}
    then, with probability at least $1 - \zeta$ over $S \sim \datadist$ and $w \sim \rho_T^S$, we have, for any $\lambda > 0$:
    \begin{align*}
       G_S(w) \leq \frac{2s}{\sqrt{n} }\left\{ \frac{L_f^2}{2\sigma_2^2} \frac{1 - e^{-aT}}{a} + \frac{3}{2} \log\left(\frac{2}{\zeta}\right) +  \frac{\Lambda}{e^{\eta T}} \right\}^{\frac{1}{2}}.
    \end{align*}
    with $\Lambda = \entphi{\rho_0}$.
\end{restatable}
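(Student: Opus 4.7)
The plan is to reduce to Theorem \ref{thm:chi_sq_brownian} by controlling the integral $\bar{I}(T,S)$ under the Lipschitz hypothesis. First, the $L_f$-Lipschitzness of $f(\cdot,z)$ uniformly in $z$ yields $\normof{\nabla \ef(w)} \leq L_f$ for every $w$ (averaging over the sample preserves the Lipschitz constant), which plugs directly into the differential inequality already derived in the proof of Theorem \ref{thm:chi_sq_brownian}, namely
$$\frac{d}{dt}\entphi{v_t^S} \leq -\eta\,\entphi{v_t^S} + \frac{1}{2\sigma_2^2}\int (v_t^S)^2 \normof{\nabla \ef}^2 \uinftybar\,dx.$$
Using the identity $\int (v_t^S)^2 \uinftybar\,dx = 2\entphi{v_t^S} + 1$, which follows from $2\entphi{v} = \chisq{\rho_t^S}{\pi}$ for $\Phi = \Phi_2$ together with $\int v_t^S\,d\pi = 1$, collapses the right-hand side into an affine expression of $\entphi{v_t^S}$ alone, producing a scalar linear ODE inequality.

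Next, I would solve the inequality with Grönwall: under the strict positivity of $a$ in \eqref{eq:lambda_condition}, one obtains the time-uniform estimate
$$\entphi{v_T^S} \;\leq\; e^{-a T}\Lambda \;+\; \frac{L_f^2}{2\sigma_2^2}\cdot\frac{1-e^{-a T}}{a}.$$
This bound should then be converted into a generalization bound via the disintegrated PAC-Bayesian inequality \cref{thm:pb_for_subgaussian_disintegrated}, using the chain $\renyi[2]{\rho_T^S}{\pi} \leq \chisq{\rho_T^S}{\pi} = 2\,\entphi{v_T^S}$ to pass from the Rényi divergence to the $\Phi_2$-entropy. Absorbing the numerical constants arising from $\log(24/\zeta^3) = \log 3 + 3\log(2/\zeta)$ into a single $\tfrac{3}{2}\log(2/\zeta)$ term (up to benign slack) yields the stated form of the bound, where the $e^{-\eta T}\Lambda$ summand inherits the pure $\eta$-rate because it tracks the homogeneous decay from the initial distribution rather than the feedback through $\nabla \ef$.

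The main technical obstacle is identifying the effective rate $a = \eta - L_f^2/(2\sigma_2^2)$ rather than the weaker rate $\eta - L_f^2/\sigma_2^2$ that a naive substitution would produce. Directly substituting the Lipschitz bound into the differential inequality with the parameters $(\gamma,C) = (1/\alpha,\sigma_2^2)$ fixed inside the proof of Theorem \ref{thm:chi_sq_brownian} produces the weaker constant. To recover the sharper $a$, I would instead return to the pre-optimization step \eqref{eq:chi_sq_brownian_proof_step1}, insert the Lipschitz estimate on the final term, and then retune $(\gamma,C)$ so as to balance the Fisher-information absorption against the exponential decay rate — effectively letting the Lipschitz bound do half the work that was previously done by the Poincaré coefficient. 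Once this re-tuning is performed, the remainder of the argument is a routine Grönwall estimate followed by the PAC-Bayesian conversion, and the condition $a > 0$ is precisely what keeps the Grönwall exponential from diverging.
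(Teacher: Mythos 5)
Your plan follows the same overall route as the paper: take the scalar differential inequality
\begin{align*}
\frac{d}{dt}\entphi{v_t^S} \leq -\eta\,\entphi{v_t^S} + \tfrac{1}{2\sigma_2^2}\textstyle\int (v_t^S)^2 \normof{\nabla\ef}^2\,\uinftybar\,dx
\end{align*}
from the proof of Theorem~\ref{thm:chi_sq_brownian} (with $\gamma=1/\alpha$, $C=\sigma_2^2$), substitute the Lipschitz bound, use $\int v^2\,d\pi = 2\entphi{v}+1$, Gr\"onwall, and then apply Theorem~\ref{thm:pb_for_subgaussian_disintegrated}. You have in fact spotted a genuine arithmetic slip in the paper's own proof of this corollary: the paper asserts the equality $-\eta\entphi{v} + \tfrac{L_f^2}{2\sigma_2^2}\int v^2\uinftybar = -(\eta - \tfrac{L_f^2}{2\sigma_2^2})\entphi{v} + \tfrac{L_f^2}{2\sigma_2^2}$, which would require $\int v^2\,d\pi = \entphi{v}+1$, whereas with $\Phi=\Phi_2$ and $\int v\,d\pi = 1$ the correct identity is $\int v^2\,d\pi = 2\entphi{v}+1$, exactly as you write. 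So the direct substitution produces the decay rate $\eta - L_f^2/\sigma_2^2$, not $\eta - L_f^2/(2\sigma_2^2)$, and the paper silently drops a factor of $2$.

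However, your proposed repair does not work, and this is the genuine gap in your argument. You assert that returning to \eqref{eq:chi_sq_brownian_proof_step1} and retuning $(\gamma,C)$ will recover the sharper rate $a = \eta - L_f^2/(2\sigma_2^2)$ with the coefficient $L_f^2/(2\sigma_2^2)$ unchanged, but this is impossible. After inserting the Lipschitz bound and the identity $\int v^2\,d\pi = 2\entphi{v}+1$ into \eqref{eq:chi_sq_brownian_proof_step1}, and dropping the non-positive Fisher-information term, the decay rate is $\gamma\alpha\eta - L_f^2/C$ and the additive constant is $L_f^2/(2C)$, subject to the admissibility constraint $C \leq 2\sigma_2^2(1-\gamma\alpha/2)$. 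Requiring the additive constant to be at most $L_f^2/(2\sigma_2^2)$ forces $C\geq\sigma_2^2$, which together with the constraint forces $\gamma\alpha\leq 1$, and then the decay rate is at most $\eta - L_f^2/C \leq \eta - L_f^2/(2\sigma_2^2)$ \emph{only if} $C\geq 2\sigma_2^2$, which contradicts $C\leq 2\sigma_2^2(1-\gamma\alpha/2) < 2\sigma_2^2$ whenever $\gamma>0$. So no choice of $(\gamma,C)$ simultaneously achieves $a = \eta - L_f^2/(2\sigma_2^2)$ and the stated constant; if you optimize the rate alone you get a qualitatively different expression $2\eta - 2\sqrt{\eta}\,L_f/\sigma_2$. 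The retuning heuristic of ``letting the Lipschitz bound do half the work'' therefore needs to be replaced by an actual computation, and that computation shows the approach cannot give the claimed $a$.

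A secondary, related gap: once you fold the $2\entphi{v}$ term into the decay rate, the Gr\"onwall initial-condition term decays as $e^{-aT}\Lambda$, not $e^{-\eta T}\Lambda$ as the corollary states. Your explanation that the $\Lambda$ term ``inherits the pure $\eta$-rate'' because it is the homogeneous part is not consistent with the linear ODE you have written down — the homogeneous decay rate of $y' \leq -ay + b$ is $a$, and since $a < \eta$, replacing $e^{-aT}$ by $e^{-\eta T}$ makes the bound strictly smaller and hence unjustified from the scalar ODE alone. To get the $e^{-\eta T}\Lambda$ term one would have to keep the integral form (integrating factor $e^{\eta t}$) and then apply an integral Gr\"onwall to the remaining $\int_0^T e^{-\eta(T-t)}\entphi{v_t}\,dt$ term, which produces yet another set of constants. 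Both this issue and the rate mismatch suggest the corollary's statement itself is slightly off as written, and a correct proof along either your route or the paper's route would deliver $a = \eta - L_f^2/\sigma_2^2$ with $e^{-aT}\Lambda$.
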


\begin{proof}

    We follow the exact same step than the proof of Theorem \ref{thm:chi_sq_brownian}, up to Equation \eqref{eq:chi_sq_brownian_proof_step1}. We make the same choices for $\gamma$ and $C$ and obtain:
    \begin{align*}
        \frac{d}{dt} \entphi{v} &\leq - \eta \entphi{v} + \frac{1}{2\sigma_2^2}\int v^2 \normof{\nabla \ef}^2  \uinftybar. \\ 
    \end{align*}
    By the Lipschitz assumption, this is:
    \begin{align*}
        \frac{d}{dt} \entphi{v} &\leq - \eta \entphi{v} + \frac{L_f^2}{2\sigma_2^2}\int v^2   \uinftybar. \\ 
        &=  - \left(\eta -  \frac{L_f^2}{2\sigma_2^2}\right) \entphi{v} + \frac{L_f^2}{2\sigma_2^2}. \\ 
    \end{align*}
    The result is deduced by the exact same reasoning as the last steps of the proof of Theorem \ref{thm:chi_sq_brownian}, by applying Theorem \ref{thm:pb_for_subgaussian_disintegrated}.
\end{proof}

\subsubsection{Time uniform bounds in the case $\sigma_2 = 0$ - Proof of \cref{thm:chi_sq_levy}}

In the previous subsection, we used heavily the presence of a non-trivial Brownian part in the operator $I$. However, we face the same problem that we had with Theorem \ref{thm:chi_sq_brownian}: while providing a generalization bound for multifractal dynamics, which is our main goal, this theorem is not very informative about the impact of the tail-index $\alpha$ on the generalization performance. This is why, in this section, we focus, like in \cref{sec:pure_levy_case}, on the case $\sigma_2 = 0$. This follows the steps presented in \cref{sec:pure_levy_case}, with a change in the choice of the convex function $\Phi$.

To overcome this issue, we show how it is possible to derive a bound, even without the Brownian part, by using a reasoning similar to Section \ref{sec:bregman_integral_bounds}. Under Assumption \ref{ass:phi_regularity}, we can apply the same reasoning with the choice of function: $\Phi(x) = x^2 / 2 $.

With this change in the convex function $\Phi$, the relevant quantity is no more the Fisher information, but the following $L^2$-norm on the gradient of $v$, this is the term that appears in Poincaré's inequality:
\begin{align}
    \label{eq:expected_gradient}
    \mathbf{G}_v := \int \normof{\nabla v}^2 \uinftybar dx.
\end{align}

This quantity, $\mathbf{G}_v$, correspond to the $\Phi$-information term, $I_\Phi(v)$, introduced in \Cref{lemma:big_decomposition}, for the particular choice of convex function that we make in this section, namely $\Phi(x) = x^2/2$

The following lemma is proven by the exact same lines than Lemma \ref{lemma:spherical_representation}, only the function $\Phi$, and the corresponding $\Phi$-regularity assumption, changes.

\begin{lemma}
    \label{lemma:spherical_representation_chi_squared}
    Assume that $v$ is $\Phi$-regular, for $\Phi(x) = x^2 / 2 $. Then, we have the following representation:
    \begin{align*}
        \iint \bregman{v(x)}{v(x+z)} d\nu_\alpha(z) \uinftybar(x) dx = \frac{\sigma_{d-1}}{2d} \int_0^\infty G_v(r) \frac{dr}{r^{\alpha - 1}},
    \end{align*}
    where the function $G_v$ is non-negative and defined by:
    \begin{align*}
        G_v(r) := \frac{2d}{r^2\sigma_{d-1}}\int_0^r \int_s^r \int_{\Rd} \int_\Sd \theta \cdot \nabla v(x + s\theta)  \theta \cdot \nabla v(x + u\theta) \uinftybar(x) d\theta dx du ds.
     \end{align*}
     This function can be continuously extended at $0$ by:
     \begin{align*}
         G_v(0) = \mathbf{G}_v.
     \end{align*}
     This function, $G_v: [0,\infty) \longrightarrow [0,\infty)$, corresponds to the function $J_{\Phi, v}$, introduced in the main part of the paper, for the particular choice of convex function $\Phi(x) = \normof{x}^2 / 2 $.
 \end{lemma}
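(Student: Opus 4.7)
The plan is to mimic almost verbatim the proofs of \cref{lemma:spherical_representation} and \cref{lemma:spherical_information_at_zero}, with the only change being that the convex function $\Phi_{\log}(x) = x\log x$ is replaced by $\Phi_2(x) = x^2/2$. Because $\Phi_2'(x) = x$ and $\Phi_2''(x) = 1$, the Bregman divergence reduces to the symmetric and very simple form $D_{\Phi_2}(a,b) = (a-b)^2/2$, which makes every step strictly easier than in the $\Phi_{\log}$ case.

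First, I would perform a spherical change of variables $z = r\theta$ with $r>0$ and $\theta \in \Sd$ in the integral $\iint D_{\Phi_2}(v(x),v(x+z))\uinftybar(x)\, d\nu_\alpha(z)\, dx$, using the identity $d\nu_\alpha(z) = r^{-\alpha-1}\, dr\, d\theta$. Then, writing $v(x) - v(x+r\theta) = -\int_0^r \theta \cdot \nabla v(x+s\theta)\, ds$ by the fundamental theorem of calculus, expanding the square and exploiting the symmetry of the $(s,u)$-integrand to restrict to $\{0\le s\le u\le r\}$ yields
\begin{align*}
    D_{\Phi_2}(v(x),v(x+r\theta)) = \int_0^r \int_s^r \theta\cdot\nabla v(x+s\theta)\, \theta\cdot\nabla v(x+u\theta)\, du\, ds.
\end{align*}
Plugging this in and applying Fubini's theorem --- justified by \cref{ass:phi_regularity} specialized to $\Phi_2$, in the same way as in the proof of \cref{lemma:spherical_representation} --- to exchange the $r$-integral with the inner $(s,u,x,\theta)$-integrations produces the claimed identity with $G_v$ as defined.

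Second, to continuously extend $G_v$ at $r=0$ with value $\mathbf{G}_v$, I would introduce
\begin{align*}
    g(s,u) := \int_\Rd \int_\Sd \theta\cdot\nabla v(x+s\theta)\, \theta\cdot\nabla v(x+u\theta)\, \uinftybar(x)\, d\theta\, dx,
\end{align*}
which is jointly continuous and continuously differentiable in $u$ by \cref{ass:phi_regularity} (exactly as established in \cref{lemma:g_c1}). Rotational invariance on $\Sd$ gives $\int_\Sd \theta_1^2\, d\theta = \sigma_{d-1}/d$, hence $g(0,0) = (\sigma_{d-1}/d)\,\mathbf{G}_v$. Setting $H(r) := \int_0^r \int_s^r g(s,u)\, du\, ds$ and repeating the finite-difference argument of \cref{lemma:spherical_information_at_zero} shows $H(0) = H'(0) = 0$ and $H''(0) = g(0,0)$, so a second-order Taylor expansion gives $G_v(r) = 2d\, H(r)/(r^2 \sigma_{d-1}) \to \mathbf{G}_v$ as $r\to 0^+$. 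Non-negativity of $G_v(r)$ for every $r$ is automatic: the spherical representation identifies $G_v(r)\sigma_{d-1} r^2 /(2d)$ with the non-negative quantity $\int\int_\Sd D_{\Phi_2}(v(x), v(x+r\theta))\uinftybar(x)\,d\theta\,dx$.

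The only real issue is a bookkeeping one, namely to check that the integrability and domination conditions collected in \cref{ass:phi_regularity} --- written there for a generic convex $\Phi$ --- are strong enough when specialized to $\Phi_2$ to justify Fubini, dominated convergence, and differentiation under the integral sign. Since the quadratic Bregman divergence avoids the division by $v$ that appeared with $\Phi_{\log}$, and since the function $a(\theta, y; s, u)$ in \cref{ass:phi_regularity} simplifies to $\theta\cdot\nabla v(y+s\theta)\,\theta\cdot\nabla v(y)\,\uinftybar(y+u\theta)$, these verifications are strictly weaker analogues of what was needed in \cref{sec:bregman_integral_bounds} and go through without new ideas.
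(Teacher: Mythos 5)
Your proof is correct and matches the paper's approach: the paper itself states that Lemma~\ref{lemma:spherical_representation_chi_squared} is proved ``by the exact same lines'' as Lemma~\ref{lemma:spherical_representation} and Lemma~\ref{lemma:spherical_information_at_zero} with $\Phi_{\log}$ replaced by $\Phi_2$, which is precisely what you carry out. The only cosmetic difference is that you obtain the double-integral form of the Bregman divergence by expanding the square and symmetrizing in $(s,u)$, rather than by differentiating $\Phi'$ and inserting $\Phi''$ as in the proof of Lemma~\ref{lemma:spherical_representation}; since $\Phi_2''\equiv 1$ these are the same computation.
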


 The following assumption is the equivalent of Assumption \ref{ass:jv_assumption} for the function $G_v$, instead of $J_v$.

 \begin{assumption}
    \label{ass:gv_assumption}
     There exists an absolute constant $R> 0$ such that, for $\datadist$-almost all $S$ and all $t\geq 0$, we have:
     \begin{align*}
         \forall r \in [0,R], ~G_v(r) \geq \frac{1}{2} G_v(0).
     \end{align*}
     This assumption is exactly a reformulation of \cref{ass:jv_assumption}, rewritten with $\Phi(x) = \frac{1}{2}x^2$. Using the denominations of \cref{sec:toward_time_uniform}, it is \cref{ass:jv_assumption}-$\Phi_2$.
 \end{assumption}

 We can now prove \cref{thm:chi_sq_levy}, which is a time-uniform generalization bound obtained in the pure heavy-tailed case ($\sigma_1 = 0$).

 \thmLevyCaseChi*

\begin{proof}
    We start, again, with the expression of the entropy flow.
    \begin{align*}
         \frac{d}{dt} \entphi{v} &=  -  C_{\alpha,d}  \sigma_1^\alpha \iint \bregman{v(x)}{v(x+z)} \uinftybar d\nu_\alpha(z) dx + \int v  \uinftybar \nabla v \cdot \nabla \ef. \\ 
    \end{align*}
    We split in two the Bregman integral term and apply the generalized Poincaré inéquality, along with Young's inequality on the last term, for any $C > 0$:
    \begin{align*}
        \frac{d}{dt} \entphi{v} &\leq - \frac{\alpha \eta}{2} \entphi{v} - \frac{C_{\alpha,d}  \sigma_1^\alpha}{2} \iint \bregman{v(x)}{v(x+z)} \uinftybar d\nu_\alpha(z) dx + \int v  \uinftybar \nabla v \cdot \nabla \ef. \\
        &\leq - \frac{\alpha \eta}{2} \entphi{v} - \frac{C_{\alpha,d}  \sigma_1^\alpha}{2} \iint \bregman{v(x)}{v(x+z)} \uinftybar d\nu_\alpha(z) dx +\frac{C}{2} \mathbf{G}_v + \frac{1}{2C}\int v^2 \normof{\nabla \ef}^2  \uinftybar dx.\\
    \end{align*}
    We now use Lemma \ref{lemma:spherical_representation_chi_squared} and Assumption \ref{ass:gv_assumption}, this gives:
    \begin{align*}
        \frac{d}{dt} \entphi{v} &\leq - \frac{\alpha \eta}{2} \entphi{v} - \frac{C_{\alpha,d}  \sigma_1^\alpha}{2}  \frac{\sigma_{d-1}}{2d}\int_0^R G_v(r) \frac{dr}{r^{\alpha - 1}} +\frac{C}{2} \mathbf{G}_v + \frac{1}{2C}\int v^2 \normof{\nabla \ef}^2  \uinftybar dx \\
        &\leq - \frac{\alpha \eta}{2} \entphi{v} - \frac{C_{\alpha,d}  \sigma_1^\alpha \sigma_1^\alpha R^{2 - \alpha}}{8d (2 - \alpha)} \mathbf{G}_v + \frac{C}{2} \mathbf{G}_v + \frac{1}{2C}\int v^2 \normof{\nabla \ef}^2  \uinftybar dx. \\
    \end{align*}
    Therefore, we make the choice:
    \begin{align*}
        C := \frac{C_{\alpha,d}  \sigma_1^\alpha \sigma_1^\alpha R^{2 - \alpha}}{4d (2 - \alpha)} ,
    \end{align*}
    and we get:
     \begin{align*}
        \frac{d}{dt} \entphi{v} &\leq - \frac{\alpha \eta}{2} \entphi{v} + \frac{2d(2 - \alpha)}{C_{\alpha,d}  \sigma_1^\alpha \sigma_1^\alpha R^{2 - \alpha}}\int v^2 \normof{\nabla \ef}^2  \uinftybar dx. \\
    \end{align*}
    We introduce the same constant as the one introduced in Theorem \ref{thm:bound_under_jv_assumption}, namely:
    \begin{align*}
        K_{\alpha,d} = \frac{d (2 - \alpha)}{C_{\alpha,d} \sigma_{d-1} R^{2 - \alpha}}.
    \end{align*}
    Therefore, solving the above differential inequality gives:
    \begin{align*}
        \entphi{v} \leq e^{-\frac{\alpha \eta T}{2}} \Lambda + \frac{2K_{\alpha,d} }{\sigma_1^\alpha}  \bar{I}(T,S),
    \end{align*}
    where $\bar{I}(T,S)$ and $\Lambda$ are defined in the same way than in Theorem \ref{thm:chi_sq_brownian}.

    We conclude by applying \cref{thm:pb_for_subgaussian_disintegrated} and noting that $\renyi[2]{\rho_S^T}{\pi} \leq 2 \entphi{v^S_T}$.
    
\end{proof}

\section{Asymptotics of the constants - Omitted proofs of Section \ref{sec:qualitative_analysis}}
\label{sec:proofs-qualitative_analysis}

This section is devoting to studying the limit behavior of the constants introduced in our generalization bounds, in terms of $\alpha$ and $d$. This corresponds to the proof of the theoretical results of \cref{sec:qualitative_analysis}.

We first study the asymptotic behavior of the constant when $d \to \infty$.

\constantDimensionEquivalent*

\begin{proof}
    By the definitions of \cref{sec:pure_levy_case,sec:qualitative_analysis}, we have:
    \begin{align*}
        \bar{K}_{\alpha,d} = \frac{(2 - \alpha) \Gamma \left(  1 - \frac{\alpha}{2}\right)}{\alpha 2^\alpha} \frac{d \Gamma \left( \frac{d}{2} \right)}{\Gamma \left( \frac{d+\alpha}{2} \right)}.
    \end{align*}
    By \cref{lemma:stirling}, we have:
    \begin{align*}
         \frac{d \Gamma \left( \frac{d}{2} \right)}{\Gamma \left( \frac{d+\alpha}{2} \right)} &\underset{d \to \infty}{\sim}  \frac{d \Gamma \left( \frac{d}{2} \right)}{\Gamma \left( \frac{d}{2} \right) \left(  
        \frac{d}{2} \right)^{\alpha / 2}} \\
        &= 2^{\alpha/2}d^{1 - \frac{\alpha}{2}},
    \end{align*}
    from which we immediately deduce the result.
\end{proof}

We now prove \cref{lemma:alpha_limit}, which study the low-tail limit of our generalization bounds for heavy-tailed dynamics.

\constantAlphaLimit*

\begin{proof}
    By the definitions of \cref{sec:pure_levy_case,sec:qualitative_analysis}, we have:
    \begin{align*}
        \bar{K}_{\alpha,d} = \frac{(2 - \alpha) \Gamma \left(  1 - \frac{\alpha}{2}\right)}{\alpha 2^\alpha} \frac{d \Gamma \left( \frac{d}{2} \right)}{\Gamma \left( \frac{d+\alpha}{2} \right)}.
    \end{align*}
    We have $\alpha \in [1,2)$, therefore, we know that $\frac{\alpha}{2} \notin \mathds{Z}$. 
    We can then apply Euler's reflection formula, \cref{lemma:euler_reflection}, and get that:
    \begin{align*}
        \Gamma \left(  1 - \frac{\alpha}{2}\right)\Gamma \left( \frac{\alpha}{2}\right) &= \frac{\pi}{\sin \left( \frac{\pi \alpha}{2} \right)} \\
        &= \frac{\pi}{\sin \left( \pi - \frac{\pi \alpha}{2} \right)} \\
        &\underset{\alpha \to  2^-}{\sim}  \frac{1}{ 1 - \frac{\alpha}{2}}.
    \end{align*}
    On the other hand, by continuity of the $\Gamma$ function and using the identity $\Gamma(x+1) = x \Gamma(x)$, we also have:
    \begin{align*}
        \Gamma \left( \frac{d+\alpha}{2} \right) \underset{\alpha \to  2^-}{\sim}  \Gamma \left( \frac{d}{2} + 1 \right) = \frac{d}{2}  \Gamma \left( \frac{d}{2}  \right).
    \end{align*}
    Putting everything together, and using that $\Gamma(1) = 1$, we get that:
    \begin{align*}
        \bar{K}_{\alpha,d}\underset{\alpha \to  2^-}{\sim} \frac{1}{2}. 
    \end{align*}
    As $K_{\alpha,d} = R^{\alpha - 2} \bar{K}_{\alpha,d}$, with $R > 0$, this implies the result.
\end{proof}

Finally, the next lemma presents the main properties of the function $\alpha \longmapsto P_\alpha$, introduced in \cref{sec:qualitative_analysis}. As a reminder, we have:
\begin{align*}
    P_\alpha := \frac{(2 - \alpha) \Gamma \left( 1 - \frac{\alpha}{2} \right)}{\alpha 2^{\alpha / 2}}
\end{align*}

\begin{lemma}
    \label{lemma:p_decreasing}
    The function $\alpha \longmapsto P_\alpha$ is, on the interval $[1, 2)$, continuous, decreasing and satisfies:
    \begin{align*}
        \forall \alpha \in [1, 2),~\frac{1}{2} \leq P_\alpha \leq \sqrt{\frac{\pi}{2}}.
    \end{align*}
\end{lemma}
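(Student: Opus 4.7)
The plan is to rewrite $P_\alpha$ in a cleaner form using the functional equation of the Gamma function, and then control it via its logarithmic derivative. First I would apply $\Gamma(x+1) = x\Gamma(x)$ with $x = 1 - \alpha/2$ to collapse the factor $(2-\alpha)\Gamma(1-\alpha/2) = 2(1-\alpha/2)\Gamma(1-\alpha/2) = 2\Gamma(2-\alpha/2)$, which yields the more convenient expression
\begin{equation*}
P_\alpha \;=\; \frac{2\,\Gamma(2-\alpha/2)}{\alpha\, 2^{\alpha/2}}.
\end{equation*}
For $\alpha \in [1,2)$ the argument $2-\alpha/2$ lies in $(1, 3/2]$, which is bounded away from the poles of $\Gamma$, so $P_\alpha$ is smooth (in particular continuous) on $[1,2)$.

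Next, the endpoint values follow by direct computation. At $\alpha = 1$, since $\Gamma(3/2) = \sqrt{\pi}/2$, one obtains $P_1 = 2\Gamma(3/2)/\sqrt{2} = \sqrt{\pi/2}$. Taking $\alpha \to 2^-$ and using $\Gamma(1) = 1$ gives $P_\alpha \to 2/(2\cdot 2) = 1/2$. So it suffices to prove strict monotonicity on $[1,2)$: the claimed two-sided bound $1/2 \leq P_\alpha \leq \sqrt{\pi/2}$ is then automatic.

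The main step is to show $\frac{d}{d\alpha}\log P_\alpha < 0$ on $[1,2)$. Differentiating the logarithm of the simplified expression gives
\begin{equation*}
\frac{d}{d\alpha}\log P_\alpha \;=\; -\tfrac{1}{2}\psi(2-\alpha/2) \;-\; \tfrac{1}{\alpha} \;-\; \tfrac{1}{2}\log 2,
\end{equation*}
where $\psi = \Gamma'/\Gamma$ is the digamma function. Since $\psi$ is strictly increasing on $(0,\infty)$ (its derivative being the positive trigamma function) and since $2-\alpha/2 \geq 1$ for all $\alpha \in [1,2)$, we have $\psi(2-\alpha/2) \geq \psi(1) = -\gamma$, hence $-\tfrac{1}{2}\psi(2-\alpha/2) \leq \gamma/2$. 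Combined with $-1/\alpha \leq -1/2$, this furnishes the uniform upper bound
\begin{equation*}
\frac{d}{d\alpha}\log P_\alpha \;\leq\; \frac{\gamma - 1 - \log 2}{2},
\end{equation*}
and the right-hand side is strictly negative because $\gamma < 1/2 + \log 2$ numerically (as $\gamma \approx 0.577$ while $1 + \log 2 \approx 1.693$). This proves strict monotonicity and, together with the endpoint values and continuity, yields the stated bounds.

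The only real subtlety will be the monotonicity step; once the log-derivative is reduced to an expression involving $\psi(2-\alpha/2)$, the monotonicity of $\psi$ and a crude lower bound $\psi(2-\alpha/2)\geq -\gamma$ immediately close the argument, so no delicate digamma estimate is required.
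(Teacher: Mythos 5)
Your proof is correct and takes a genuinely different route from the paper. You simplify $P_\alpha$ by absorbing the factor $(2-\alpha)$ into the Gamma function via $\Gamma(x+1)=x\Gamma(x)$, giving $P_\alpha = 2\Gamma(2-\alpha/2)/(\alpha\,2^{\alpha/2})$, and then establish monotonicity by a single uniform bound on the logarithmic derivative: since $\psi$ is increasing on $(0,\infty)$ and $2-\alpha/2\geq 1$ on $[1,2)$, you lower-bound $\psi(2-\alpha/2)$ by $\psi(1)=-\gamma$. The paper instead applies Euler's reflection formula to trade $\Gamma(1-\alpha/2)$ for $\pi/(\sin(\pi\alpha/2)\Gamma(\alpha/2))$, writes $P_\alpha$ as a product of $\pi(1-\alpha/2)/\sin(\pi\alpha/2)$ with $1/(2^{\alpha/2}\Gamma(1+\alpha/2))$, and then argues the first factor is decreasing (via $\tan x \le x$ on $(-\pi/2,0)$) while the second is the reciprocal of an increasing positive function. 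Your argument is shorter and avoids the sign analysis of a trigonometric derivative, at the cost of invoking the digamma function and its value $\psi(1)=-\gamma$; the paper's version is slightly more elementary in that it stays within reflection-formula and calculus-level trigonometric estimates. One small typo in your write-up: the final inequality should read $\gamma < 1+\log 2$, as your parenthetical correctly indicates, not $\gamma < 1/2 + \log 2$; either way it holds since $\gamma < 1$.
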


\begin{proof}
    The continuity follows from the continuity of the $\Gamma$ function on $(0, \frac{1}{2}]$. Moreover, we have, using the particular values of the $\Gamma$ function recalled in \ref{sec:gamma_function}:
    \begin{align*}
        P_1 = \frac{\Gamma \left( \frac{1}{2} \right)}{\sqrt{2}} = \frac{\sqrt{\pi}}{\sqrt{2}}.
    \end{align*}
    Using the proof of the \cref{lemma:alpha_limit}, we also have:
    \begin{align*}
        P_\alpha \underset{\alpha \to  2^-}{\longrightarrow} \frac{1}{2}.
    \end{align*}
    
    Now we fix $\alpha \in (1,2)$, we use Euler's reflection formula, \cref{lemma:euler_reflection}, and the formula $\Gamma(1 + z) = z\Gamma(z)$ to get:
    \begin{align*}
    P_\alpha &= \frac{(2 - \alpha) \Gamma \left( 1 - \frac{\alpha}{2} \right)}{\alpha 2^{\alpha / 2}}\\
            &= \frac{2 - \alpha}{\alpha 2^{\alpha / 2}} \frac{\pi}{\sin \left( \frac{\pi \alpha}{2} \right) \Gamma \left( \frac{\alpha}{2} \right)} \\
            &= \frac{\pi \left(  1 - \frac{\alpha}{2}  \right)}{\sin \left( \frac{\pi \alpha}{2} \right)} \frac{1}{2^{\alpha/2}\Gamma \left( 1 + \frac{\alpha}{2} \right)}
\end{align*}
From the general properties of the $\Gamma$ function, we know that $\alpha \longmapsto \Gamma \left( 1 + \frac{\alpha}{2} \right)$ is increasing on $(1,2)$, and therefore it is enough to show that the function
\begin{align*}
    g(\alpha) = \frac{2 - \alpha}{\sin \left( \frac{\pi \alpha}{2} \right)},
\end{align*}
is decreasing, as both $\Gamma \left( 1 + \frac{\alpha}{2} \right)$ and $g$ are positive on $(1,2)$. A quick calculation reveals that:
\begin{align*}
    \sin^2 \left( \frac{\pi \alpha}{2} \right) g'(\alpha) = -\cos \left( \frac{\pi \alpha}{2} \right) \left\{ \tan \left( \pi \left(  \frac{\alpha}{2} - 1\right)\right) - \pi \left(1 -  \frac{\alpha}{2} \right) \right\},
\end{align*}
which implies the desired result, by noting that $\tan(x) \leq x$ on $(-\pi/2, 0)$.

\end{proof}

The function $\alpha \longmapsto P_\alpha$ is represented on \cref{fig:p_factor}. This plot confirms the calculations of \cref{lemma:p_decreasing}.

\begin{figure}[!ht]
    \centering
    \includegraphics[width=0.5\linewidth]{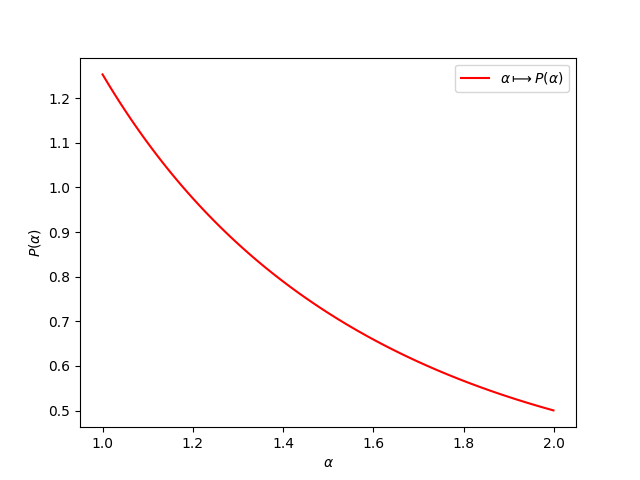}
    \caption{Graphical representation of the function $\alpha \longmapsto P_\alpha$. The leftmost value is $P_1 = \sqrt{\frac{\pi}{2}} \simeq 1.2533$ and the right limit is $P_{2^-} = \frac{1}{2}$.}
    \label{fig:p_factor}
\end{figure}

\subsection{Precisions on the low noise-regime}
\label{sec:low_noise_regime_precision}

In \cref{sec:qualitative_analysis}, we concluded the existence of two regimes predicted by our bounds. We first defined the low noise regime by the following condition:
\begin{align*}
    \frac{\sigma_1 \sqrt{d}}{R} < 1.
\end{align*}
However, the pre-factor $\alpha \longmapsto P_\alpha$ is decreasing. Therefore, we need to take it into account to accurately describe the regime where the generalization error is increasing with $\alpha$. This happens when we have:
\begin{align*}
    \frac{P_1 R}{\sigma_1 \sqrt{d}} < \frac{P_2R^2}{d \sigma_1^2}.
\end{align*}
Using \cref{lemma:p_decreasing}, we get the condition that was mentioned briefly in \cref{sec:qualitative_analysis}:
\begin{align*}
    \frac{\sigma_1 \sqrt{d}}{R} < \frac{1}{\sqrt{2\pi}}.
\end{align*}

\subsection{Comparison with the constants appearing in other works}
\label{sec:raj_comparison_appendix}

\begin{figure}[!ht]
    \centering
    \includegraphics[width=0.5\linewidth]{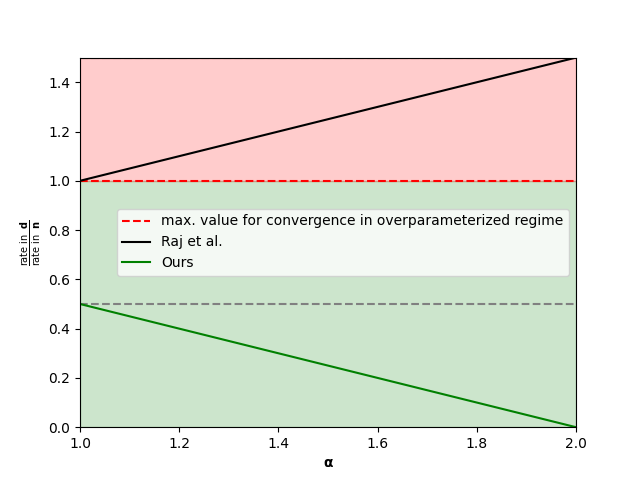}
    \caption{Comparison of the dimension dependence of our bounds and that of \citet{raj_algorithmic_2023}, under the assumption that $\ell$ is Lipschitz continuous, for both bounds.}
    \label{fig:rate_comparison}
\end{figure}

In\citep{raj_algorithmic_2023}, the authors prove expected generalization bounds for heavy-tail dynamics. The constant appearing in their bounds has a complex dependence on various constants defined in their assumptions, but we can sum up the dependence in $\alpha$ and $d$ as follows (by Applying Euler's reflection formula to their Lemma $7$):
\begin{align*}
    R_{\alpha, d} = \landau{1 + \frac{\sqrt{d}\Gamma \left( \frac{\alpha + d}{2} \right)}{(2 - \alpha)\Gamma \left( 1 - \frac{\alpha}{2} \right) \Gamma \left( \frac{d}{2} \right)}}
\end{align*}
By the proof of \cref{lemma:d_limit} and \ref{lemma:alpha_limit}, we see that:
\begin{align*}
    R_{\alpha, d} = \mathcal{O} \left(d^{\frac{1+\alpha}{2}}\right).
\end{align*}
It is to be noted that our bound has a rate in $n$ of $1/2$, \ie our bound is proportional to $1/\sqrt{n}$, while the bound of \citet{raj_algorithmic_2023} has a rate of $1$, \ie the bound is proportional to $1/n$. If we denote by $\Xi$ the ratio of the rate in $d$ to the rate in $n$ of the bound, we find (under a Lipschitz assumption for both bounds):
\begin{align}
    \label{eq:rate_ratio_both_bounds}
    \Xi^{\text{(Ours)}} = 1 - \frac{\alpha}{2}, \quad  \Xi^{\text{\citep{raj_algorithmic_2023}}} =\frac{1+\alpha}{2}.
\end{align}
The meaning of the coefficient $\Xi$ is that, in an overparameterized regime where $d> n$, the bound is non-vacuous only if $\Xi < 1$. We graphically represented this coefficient in \cref{fig:rate_comparison}.

\section{Discussion of the time dependence of the bounds}
\label{sec:time-dependence_discussion}

While \cref{thm:bound_under_jv_assumption} is the first high probability generalization bound for heavy-tailed dynamics, one may notice the time dependence of the bound, coming from the integral term, \ie
\begin{align}
    \label{eq:integral_term_appendix}
    \int_0^T \Eof[U]{\ef(W_t)} dt.
\end{align}
Because of this term, as explained in \cref{sec:qualitative_analysis}, the light-tail limit of this bound leads to an informal bound of the form:
\begin{align*}
    G_S(W_t) \underset{\alpha \to 2^-}{\lesssim} \sqrt{\frac{1}{n\sigma^2}  \int_0^T \Eof[U]{\ef(W_t)} dt} ,
\end{align*}
which is worse than the existing bounds on Langevin dynamics \citet{mou_generalization_2017, li_generalization_2020, farghly_time-independent_2021}, which achieve time-independent bounds. More explicitly, \citet{mou_generalization_2017} obtained a similar bound, but with an additional exponential time-decay in the integral term, \ie, informally,
\begin{align*}
    \int_0^T e^{-a(T-t)}\Eof[U]{\ef(W_t)} dt,
\end{align*}
where $a>0$ is a constant dependent of the problem.

To the best of our knowledge, there is no obvious theoretical argument affirming that the limit $\alpha \to 2^-$ should exactly recover the bounds for Langevin dynamics, while it is quite clear that this limit should not be infinite, as in \citet{raj_algorithmic_2023}. That being said, it is worth discussing why this time dependence happens in our bound, how it could be improved, and, most importantly, why a time independent bound is currently beyond the reach of the theory.

\cref{sec:toward_time_uniform} presents one step in the direction of time-uniform bounds. However, the integral term appearing in \cref{thm:chi_sq_levy} is not satisfactory, because of its lack of interpretability, compared to \cref{eq:integral_term_appendix}. 

On the other hand, a more detailed analysis of the results of \citet{mou_generalization_2017,li_generalization_2020} shows that time-independence is a direct consequence of the use of a logarithmic Sobolev inequality (LSI, see \cref{sec:log_sobolev_inequalities_phi_entropy}). If we translate these arguments in our setting, such an inequality would need to be satisfied by the prior distribution $\pi$. Therefore, it is natural to ask, whether we could apply \cref{thm:generalized_lsi} in our proofs. 

A similar computation has been proposed by \citet{gentil_logarithmic_2008}, in their study of the rate of convergence to equilibrium of an equation of the form of \cref{eq:empirical_lfp}. However, we argue that we cannot apply this reasoning in our setting. Let's explain it briefly: in the proof of \cref{thm:bound_under_jv_assumption}, the crucial term that appears in the computation of the entropy flow is what we called the Bregman integral, it is given by:
\begin{align*}
    B_\Phi^\alpha(v)  = C_{\alpha, d}  \iint \bregman{v(x)}{v(x+z)} \uinftybar d\nu_\alpha(z) dx.
\end{align*}
On the other hand, \cref{thm:generalized_lsi} uses the following term, if we apply it to $\pi$ (in a similar way to how it is done in \cref{sec:proofs-poincare-brownian-case}):
\begin{align*}
    \iint \bregman{v(x+z)}{v(x)} \uinftybar d\nu_\alpha(z) dx.
\end{align*}
Unfortunately, the Bregman divergence is not commutative in general, and, to the best of our knowledge, there is no obvious way of linking the two integrals appearing above. This discussion is the motivation behind \cref{sec:toward_time_uniform}, where we applied \cref{thm:generalized_lsi} and \cref{lemma:big_decomposition} to the case where the Bregman divergence becomes symmetric, \ie when $\Phi(x) \propto x^2$. 

Therefore, we conclude that we do not have the LSI that we would need to make our bounds time-independent. Let's end this short discussion by writing down the result that we would obtain with such an inequality.

\begin{theorem}
    We consider the same setting than in \cref{thm:bound_under_jv_assumption}, but we additionally assume that the prior $\pi$ satisfies the following LSI, with reversed Bregman divergence compared to \cref{thm:generalized_lsi}, for all $v$ smooth enough:
    \begin{align*}
        \entphi{v} \leq \frac{C_{\alpha,d}\sigma_1^\alpha}{\alpha \eta} \iint \bregman{v(x)}{v(x+z)} \uinftybar(x) d\nu_\alpha(z) dx.
    \end{align*}
    Then, with probability at least $1 - \zeta$ over $\datadist$, we have:
    \begin{align*}
      \Eof[U]{G_S(W_T^S)} \leq \sqrt{\frac{2K_{\alpha, d}}{\sigma_1^\alpha} \int_0^T e^{-\frac{\alpha \eta}{2} (T - t)} \Eof[U]{\normof{\nabla\ef(W^S_t)}^2 } dt + \frac{\Lambda + \log(3/\zeta)}{n}}
    \end{align*}
    with $\Lambda = \klb{\rho_0}{\uinftybar}$ and:
    \begin{align}
        K_{\alpha,d} = \frac{(2 - \alpha)\Gamma \left( 1 - \frac{\alpha}{2}\right) d \Gamma \left(\frac{d}{2}\right)}{\alpha 2^\alpha \Gamma \left( \frac{d +\alpha}{2}\right) R^{2 - \alpha} },
    \end{align}  
\end{theorem}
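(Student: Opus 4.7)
The plan is to combine the entropy flow decomposition of \cref{lemma:big_decomposition} with the hypothesized LSI so that the Bregman dissipation simultaneously drives exponential entropy decay \emph{and} cancels the Fisher information generated by a Young-type bound on the coupling term. Fixing $\Phi(x) = x\log(x)$ (so $\Phi''(v)v = 1$ and $\entphi{v_t^S} = \klb{\rho_t^S}{\pi}$) and specializing to $\sigma_2 = 0$, \cref{lemma:big_decomposition} reduces to
\begin{align*}
\frac{d}{dt} \entphi{v_t^S} = -\sigma_1^\alpha B_\Phi^\alpha(v_t^S) - \int \nabla v_t^S \cdot \nabla \ef \, \uinftybar \, dx.
\end{align*}

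The main trick is to split $\sigma_1^\alpha B_\Phi^\alpha(v_t^S)$ into two equal halves and use each one differently. The assumed LSI rewrites as $B_\Phi^\alpha(v_t^S) \geq (\alpha \eta / \sigma_1^\alpha)\,\entphi{v_t^S}$, so the first half yields the linear dissipation $-\tfrac{\sigma_1^\alpha}{2} B_\Phi^\alpha(v_t^S) \leq -\tfrac{\alpha \eta}{2}\,\entphi{v_t^S}$, which will generate the desired exponential decay. For the second half, I would replay the argument used in the proof of \cref{thm:bound_under_jv_assumption}: combining \cref{lemma:spherical_representation}, the integral representation \cref{eq:integral_representation_main}, and \cref{ass:jv_assumption}, one obtains $-\tfrac{\sigma_1^\alpha}{2} B_\Phi^\alpha(v_t^S) \leq -\tfrac{\sigma_1^\alpha}{8 K_{\alpha,d}} J_{v_t^S}(0)$, where $J_v(0) = \int \normof{\nabla v}^2/v \, \uinftybar\, dx$ is the Fisher information of $\rho_t^S$ with respect to $\pi$.

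Young's inequality on the coupling term gives, for any $C>0$, $-\int \nabla v \cdot \nabla \ef \,\uinftybar \,dx \leq \tfrac{C}{2} J_v(0) + \tfrac{1}{2C} \Eof[U]{\normof{\nabla \ef(W_t^S)}^2}$, and the tuning $C = \sigma_1^\alpha / (4 K_{\alpha,d})$ is chosen precisely so that the Fisher term cancels the leftover contribution from the second half of the split. This leaves the clean linear differential inequality
\begin{align*}
\frac{d}{dt} \entphi{v_t^S} \leq -\frac{\alpha \eta}{2}\,\entphi{v_t^S} + \frac{2 K_{\alpha,d}}{\sigma_1^\alpha}\, \Eof[U]{\normof{\nabla \ef(W_t^S)}^2},
\end{align*}
and multiplication by the integrating factor $e^{\alpha \eta t/2}$ followed by integration on $[0,T]$ yields $\klb{\rho_T^S}{\pi} \leq e^{-\alpha \eta T/2}\Lambda + \tfrac{2K_{\alpha,d}}{\sigma_1^\alpha} \int_0^T e^{-\alpha \eta (T-t)/2} \Eof[U]{\normof{\nabla\ef(W_t^S)}^2} dt$. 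Plugging this into \cref{thm:pb_for_subgaussian} and bounding $e^{-\alpha\eta T/2}\Lambda \leq \Lambda$ delivers the stated generalization inequality.

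The only genuinely technical difficulty is conceptual rather than computational: once the LSI is granted, what remains is a one-line Gronwall argument and routine Young-inequality bookkeeping. The real obstacle, as flagged earlier in this section, is that the classical generalized LSI of \cite{chafai_entropies_2004, wu_new_2000} (\cref{thm:generalized_lsi}) controls $\entphi{v}$ by the \emph{reversed} integral $\iint \bregman{v(x+z)}{v(x)} \uinftybar\, d\nu_\alpha\, dx$, whereas the dissipation produced by \cref{lemma:big_decomposition} involves $B_\Phi^\alpha(v) = C_{\alpha,d} \iint \bregman{v(x)}{v(x+z)} \uinftybar\, d\nu_\alpha\, dx$; since this Bregman divergence is asymmetric when $\Phi = \Phi_{\log}$, the classical result does not directly supply the hypothesis used here. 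Establishing the non-reversed LSI for the heavy-tailed steady state $\pi$ of \cref{eq:steady_state} is precisely the missing analytic ingredient needed to upgrade this conditional theorem to an unconditional one.
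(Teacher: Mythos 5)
Your proof is correct, and it follows exactly the route the paper intends: the paper does not actually print a proof of this conditional theorem (it is stated at the end of \cref{sec:time-dependence_discussion} as a ``what if we had the LSI'' result), but the split-the-Bregman-integral-in-half technique you employ is precisely the one used in the paper's proof of \cref{thm:chi_sq_levy}, where the first half is fed into the functional inequality to produce exponential decay and the second half is matched against the Young term via \cref{lemma:spherical_representation} and \cref{ass:jv_assumption}. Your bookkeeping checks out: $B_\Phi^\alpha(v)\ge \tfrac{\alpha\eta}{\sigma_1^\alpha}\entphi{v}$ from the hypothesized LSI, $B_\Phi^\alpha(v)\ge \tfrac{1}{4K_{\alpha,d}}J_v(0)$ from \cref{lemma:spherical_representation} with \cref{ass:jv_assumption}, Young with $C=\sigma_1^\alpha/(4K_{\alpha,d})$ cancels the Fisher term and leaves $\tfrac{2K_{\alpha,d}}{\sigma_1^\alpha}$ on the gradient integral, and Gr\"onwall followed by \cref{thm:pb_for_subgaussian} yields the claim. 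Two remarks: first, what you actually derive carries the $2s$ prefactor and a $1/n$ on the integral term, i.e.\ $2s\sqrt{\tfrac{2K_{\alpha,d}}{n\sigma_1^\alpha}\int_0^T e^{-\frac{\alpha\eta}{2}(T-t)}\E_U\normof{\nabla\ef(W_t^S)}^2dt + \tfrac{e^{-\alpha\eta T/2}\Lambda+\log(3/\zeta)}{n}}$; the theorem as printed appears to have dropped both the $2s$ and the $n$ in the first term, which should be regarded as typographical. Second, you correctly flag the central obstruction already raised in \cref{sec:time-dependence_discussion}: the generalized LSI of \cref{thm:generalized_lsi} controls $\entphi{v}$ by $\iint D_\Phi(v(x+z),v(x))\,\bar u_\infty\,d\nu_\alpha\,dx$, whereas the dissipation in \cref{lemma:big_decomposition} produces the arguments in the opposite order, and for $\Phi_{\log}$ the Bregman divergence is not symmetric, so the classical inequality does not deliver the hypothesis this theorem assumes.
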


\section{Additional experimental details}
\label{sec:experiments_appendix}

\subsection{Hyperparameters details}
\label{sec:hyperparameters}
In this section, we give a list of the exact hyperparameters used to obtain our main experiments. In general, those hyperparameters were chosen to make our experimental setting as close as possible to our theoretical setting and assumptions.

\textbf{Linear model trained on MNIST} (\cref{fig:linear_model_mnist_all_R,fig:linear_model_sigma_correlation}) : We use a linear predictor, without bias, trained with a (multivariate cross-entropy loss). All $10$ classes of the MNIST dataset were used but we randomly subsample $10\%$ of the training set, to lower the computational cost of the experiments. We simulate \cref{eq:Euler-Maruyama} with $T = 5.10^3$, $\gamma = 10^{-2}$, $\eta = 10^{-3}$. The last $2000$ iterations were used to estimate the accuracy error, as described in \cref{sec:robust_mean_estimation}. During those experiments, we let $\alpha$ vary in $[1.6, 2]$, using a linear scale of $10$ values. The parameter $\sigma$ varies in a logarithmic scale of $10$ values, such that $\sigma \sqrt{d} \in [0.5, 40]$, where $d$ is the number of parameters in the model, this is inspired by the analysis of \cref{sec:qualitative_analysis}. This range was chosen to ensure both that the training is stable enough and that the heavy tail of the noise has an impact on the accuracy (\ie $\sigma$ not too small).

\textbf{$2$-layers FCN trained on MNIST:} This experiment was used to produce \cref{fig:correlation_main,fig:regression_from_d,fig:all_R}. We use a $2$-layer neural network with ReLU activation, without bias, trained with a (multivariate cross-entropy loss). All $10$ classes of the MNIST dataset were used but we randomly subsample $10\%$ of the training set, to lower the computational cost of the experiments. We simulate \cref{eq:Euler-Maruyama} with $T = 10^4$, $\gamma = 10^{-2}$, $\eta = 10^{-3}$. The last $2000$ iterations were used to estimate the accuracy error, as described in \cref{sec:robust_mean_estimation}. During those experiments, we let $\alpha$ vary in $[1.6, 2]$, using a linear scale of $10$ values. The width of the network varies in a linear scale of $10$ values, between $40$ and $200$.

\textbf{FashionMNIST experiments:} The same model was also used on the FashionMNIST dataset (a $10\%$ subsample of it), to obtain \cref{fig:fashion_mnist_all_R,fig:regression_from_d_fashion_mnist,fig:correlation_experiment_fashion_mnist}.

\textbf{$5$-layers FCN trained on MNIST:} We use a $5$-layer neural network with ReLU activation, without bias, trained with a (multivariate cross-entropy loss). $100\%$ of the MNIST dataset were used. We experimented using a batch-size in this case, \ie \cref{eq:Euler-Maruyama} is replaced by:
\begin{align}
    \label{eq:Euler_Maruyama_batch_size}
    \hat{W}^S_{k+1} = \hat{W}^S_k - \frac{\gamma}{b} \sum_{j=1}^b \nabla f (\hat{W}^S_k, z_{i_j}) - \eta \gamma \hat{W}^S_k + \gamma^{\frac{1}{\alpha} } \sigma_1 L_1^\alpha,
\end{align}
where $b\in \mathds{N}^\star$ is the batch size and $(i_1,\dots,i_b)$ are random indices drawn at each iteration.
We simulate \cref{eq:Euler-Maruyama} with $T = 10^5$, $\gamma = 10^{-2}$, $\eta = 10^{-3}$. The last $100$ iterations were used to estimate the accuracy error, as described in \cref{sec:robust_mean_estimation}. We used batch sizes $b \in \set{64,128,256,512}$. During those experiments, we let $\alpha$ vary in $[1.7, 2]$, using a linear scale of $10$ values. In order for the training to be stable enough for the experiment to converge, we noted that we had to use smaller values, those values were chosen on a logarithmic scale of $6$ values so that $\sigma \sqrt{d} \in [0.1, 3]$. It is probable that only the low noise regime, of \cref{sec:qualitative_analysis}, is then observable. Note that, in this case, \ie when using mini-batches, we estimated the gradient norms $\normof{\nabla \ef}$ using the batch gradients, appearing in \cref{eq:Euler_Maruyama_batch_size}, instead of the whole empirical risk $\ef$. This makes sense as one of the goals of using batches is to reduce the computational cost of the experiments.

\subsection{Accuracy evolution and robust mean estimation}
\label{sec:robust_mean_estimation}

\cref{fig:losses_depth_0,fig:losses_depth_1} show the evolution of both the train and test accuracy, during the first $2000$ iterations of training, for models and datasets similar to the one used for our main experiments (but with less data to make it easier to visualize). We show these curves for different values of the tail index $\alpha$ and of the quantity $\sigma \sqrt{d}$, which we argue in \cref{sec:qualitative_analysis} has a strong impact on the training dynamics. 

This allows us to observe two phenomenons, first, at least in these two experiments, there seems to be a transition, around the value $\sigma \sqrt{d} = 1$. Indeed, for $\sigma \sqrt{d} \lesssim 1$, we observe that the tail of the noise has little impact on the evolution of both accuracies and therefore on the generalization error. On the other hand, for higher values of $\sigma \sqrt{d}$, we see the impact of the tail index $\alpha$. Indeed, the heavier the tail (\ie the smaller $\alpha$), the more jumps are observed on both accuracies, which necessarily have an impact on the observed generalization. This seems to follow our theory developed in \cref{sec:qualitative_analysis}. 

More importantly, the presence of those jumps makes the estimation of the generalization error, $G_S(W_T^S)$, extremely noisy. To avoid this issue, we use the following two techniques:
\begin{enumerate}
    \item The generalization error is averaged among the $2000$ last iterations (recall that we use $N \geq 5000$ in all our experiments so that the model is close to converence when we estimate the generalization error).
    \item As the random jumps introduce values of the generalization error that are completely random and often much higher than the values between the jumps, we avoid them to bias the estimation by removing the $15\%$ upper quantile of the generalization error, in the last $2000$ iterations, before computing the mean.
\end{enumerate}

\begin{figure}[!ht]
\centering
\subfigure[$\sigma \sqrt{d} = 0.1$\label{fig:losses_depth_0_1}]{\includegraphics[trim={2cm 0 3cm 0},clip,width=0.9\linewidth]{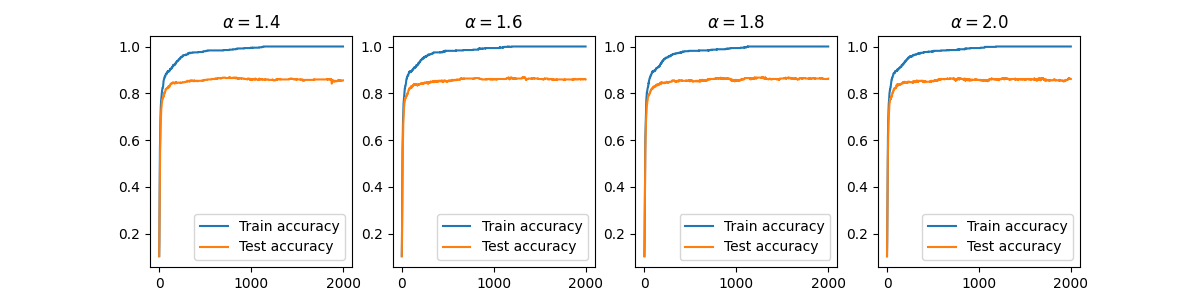}}
\vfill
\subfigure[$\sigma \sqrt{d} = 1$\label{fig:losses_depth_0_2}]{\includegraphics[trim={2cm 0 3cm 0},clip,width=0.9\linewidth]{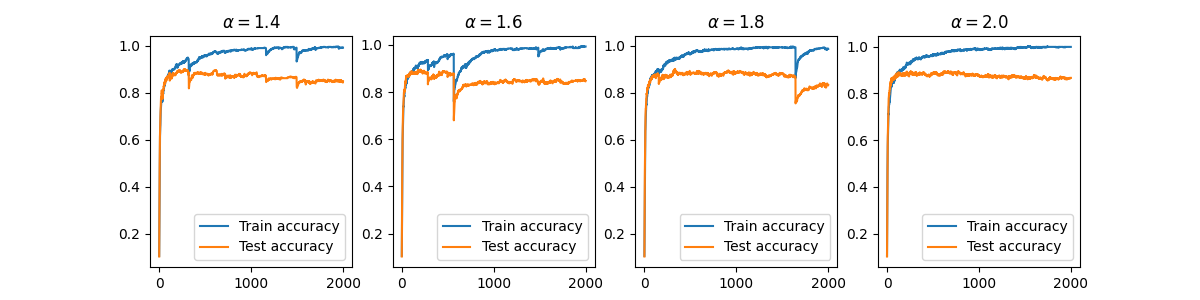}}
\vfill
\subfigure[$\sigma \sqrt{d} = 10$\label{fig:losses_depth_0_3}]{\includegraphics[trim={2cm 0 3cm 0},clip,width=0.9\linewidth]{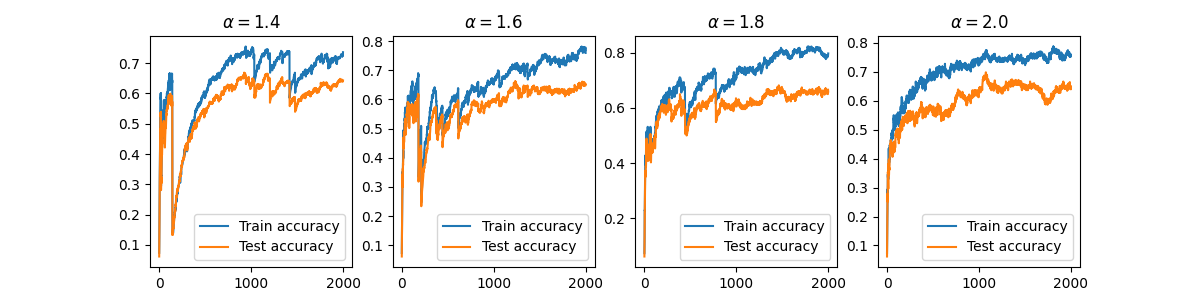}}
\caption{Evolution of the test and training accuracies, during the first $2000$ iterations, for the simulation of \eqref{eq:multifractal_dynamics} with a linear model trained with a cross-entropy loss, on the MNIST dataset.}
\label{fig:losses_depth_0}
\end{figure}

\begin{figure}[!ht]
\centering
\subfigure[$\sigma \sqrt{d} = 1$\label{fig:losses_depth_1_1}]{\includegraphics[trim={2cm 0 3cm 0},clip,width=0.9\linewidth]{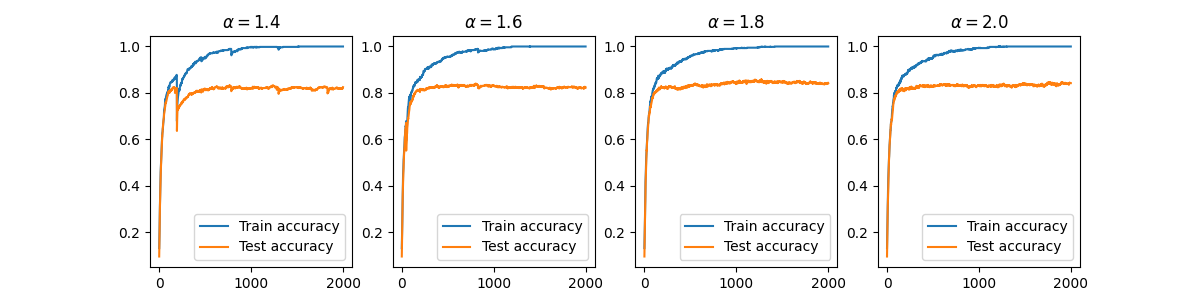}}
\vfill
\subfigure[$\sigma \sqrt{d} = 10$\label{fig:2}]{\includegraphics[trim={2cm 0 3cm 0},clip,width=0.9\linewidth]{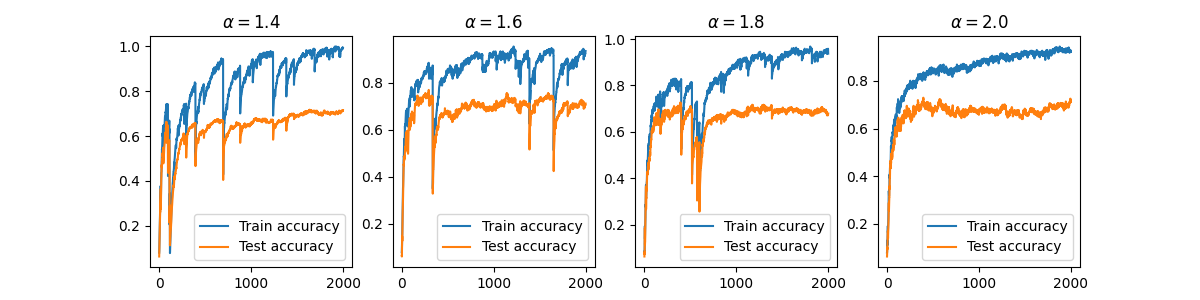}}
\caption{Evolution of the test and training accuracies, during the first $2000$ iterations, for the simulation of \eqref{eq:multifractal_dynamics} with a $2$-layer neural network, trained with a cross-entropy loss, on the MNIST dataset.}
\label{fig:losses_depth_1}
\end{figure}

\subsection{Experimental procedures details}
\label{sec:procedure_details}

In this section, we quickly give more details on how the figures of \cref{sec:experiments} were obtained.

\subsubsection{Bound estimation - \cref{fig:all_R,fig:linear_model_mnist_all_R,fig:fashion_mnist_all_R}}

To get \cref{fig:all_R,fig:linear_model_mnist_all_R,fig:fashion_mnist_all_R}, we estimated the computable part of \cref{eq:bound_in_bounded_case}. To do so, the gradient norm of all iterations where averaged, to get an estimate of the integral term:
\begin{align}
    \label{eq:integral_estimation}
    \widehat{I} := \gamma \sum_{k=1}^N \normof{\nabla \ef(\hat{W}_k^S)}^2 \simeq \int_0^T \normof{\nabla \ef(W_t^S)}^2 dt,
\end{align}
where $\gamma$ is the learning rate and $N$ the number of iterations.
The quantity that is plotted on the $y$-axis of those figures is:
\begin{align}
    \label{eq:estimation_formula}
    \widehat{G} :=  \sqrt{ \frac{P_\alpha d^{1 - \frac{\alpha}{2}}}{n\sigma_1^\alpha R^{2 - \alpha}} \widehat{I}}.
\end{align}
Moreover, this quantity is averaged over $10$ random seeds, with the same values of the hyperparameters $(\sigma, d, \alpha)$. Unless mentioned otherwise, the constant $R$, which is unknown a priori, is taken to be $R=1$ in the figures.

\subsubsection{Correlation experiments - \cref{fig:correlation_main,fig:correlation_experiment_fashion_mnist}}

\textbf{\cref{fig:correlation_main,fig:correlation_experiment_fashion_mnist}:} In this experiment, we train a $2$-layers neural network on MNIST. The tail-index $\alpha$ varies in $[1.6, 2]$ and the width of the network in $[40, 200]$. To obtain the green curve, we proceed as follows:
\begin{itemize}
    \item For each value fixed $w$ of the width, we compute Kendall's correlation coefficient, denoted $\tau$, between the accuracy error and the tail-index $\alpha$.
    \item This gives us the correlation in terms of the width, \ie a map $\tau(w)$.
    \item We repeat this procedure for $10$ random seeds.
    \item The green curve represents the mean value and standard deviation of $\tau(w)$.
\end{itemize}
For the dot black curve, to get a less noisy coefficient, we directly used, for each width $w$, the average of the accuracy error across all the random seeds, and we computed the correlation between $\alpha$ and this mean accuracy error. This is why there are no error bars in that case.

\subsection{Additional experiments}
\label{sec:additional_experiments}

In this subsection, we present a few additional experiments to further support the findings of \cref{sec:experiments}. These additional experiments are organized as follows:
\begin{enumerate}
    \item Additional experiments using a linear model.
     \item Additional experiments conducted on the FashionMNIST dataset (the experiments presented in \cref{sec:experiments} were all conducted on the MNIST dataset).
    \item More details figures regarding the use of mini-batches.
\end{enumerate}

\subsubsection{Additional experiment with linear models}
\label{sec:linear_model_additional_experiments}

We conducted experiments similar to that presented in \cref{sec:experiments} with a linear model, instead of a $2$-layers FCN. The results are presented in \cref{fig:linear_model_mnist_all_R,fig:linear_model_sigma_correlation}. More specifically, in these experiments, we let both the noise scale $\sigma_1$ and the tail-index $\alpha$ vary in a fixed grid of values (see \cref{sec:hyperparameters}). Note that, in this case, it is not possible to let the number of parameters vary without affecting the data, but we can act on $\sigma_1$ instead, and still test our theory. That being said, this model has the advantage, when using the cross-entropy as a surrogate loss, to fit nicely in our theoretical setting.

\begin{figure}[!ht]
    \centering
    \subfigure[$R=1$]{\includegraphics[width=0.4\linewidth]{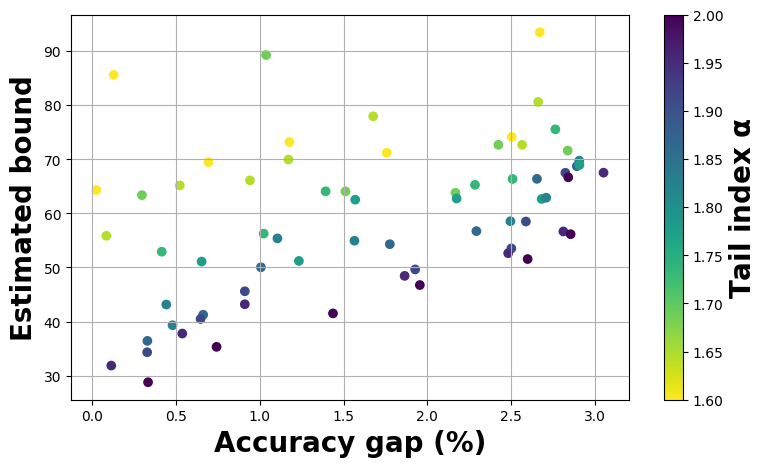}}
    \hfill
    \subfigure[$R=25$]{\includegraphics[width=0.4\linewidth]{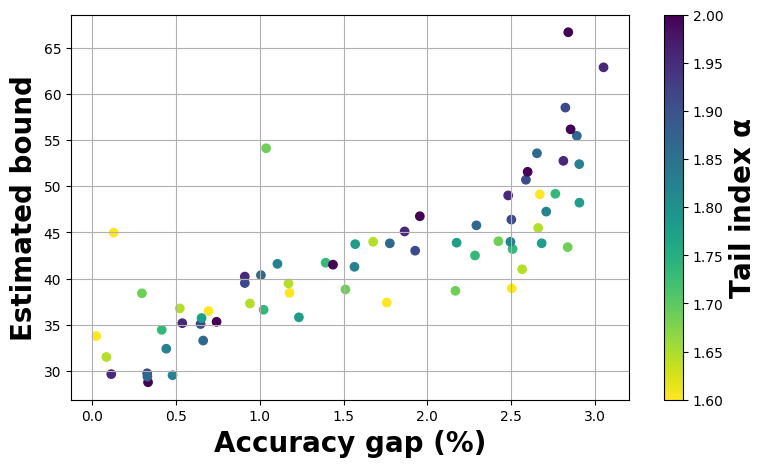}}
    \caption{Linear model on MNIST}
    \label{fig:linear_model_mnist_all_R}
\end{figure}

\begin{figure}
    \centering
    \includegraphics[width=0.5\linewidth]{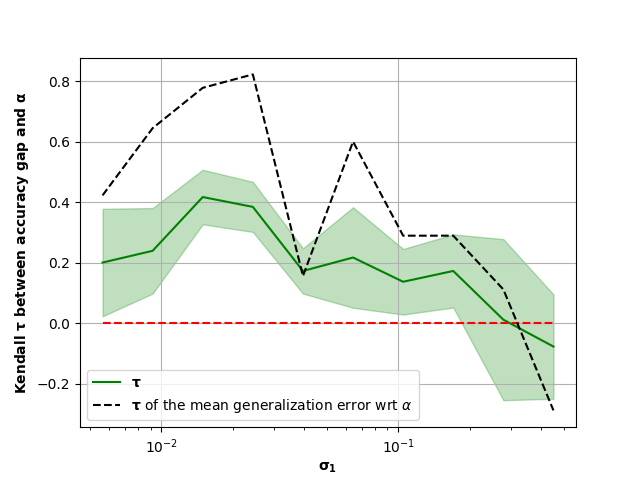}
    \caption{Correlation (Kendall's $\tau$) between the tail index $\alpha$ and the accuracy gap for different values of the noise scale $\sigma_1$.}
    \label{fig:linear_model_sigma_correlation}
\end{figure}

More specifically, we can see on \cref{fig:linear_model_mnist_all_R} that the estimated bound, according to \cref{eq:estimation_formula}, correlates very well with the generalization error. On \cref{fig:linear_model_sigma_correlation}, we show how the correlation between $\alpha$ and the accuracy gap varies when $\sigma_1$ is varying, see \cref{sec:procedure_details}. The phase transition, predicted in \cref{sec:qualitative_analysis} is visible in this figure. An additional interesting phenomenon is observed on \cref{fig:linear_model_sigma_correlation}. Indeed, on the left of the correlation plots (both the black and green curves), we can see that the Kendall's $\tau$ coefficient, measuring the correlation between $\alpha$ and $G_S$, decreases. We interpret this as a third regime, when $\sigma_1$ is too small for our theory to hold and the algorithm starts behaving more like a deterministic gradient flow.

As we did on previous experiments, we can use \cref{fig:linear_model_sigma_correlation} and the analysis of \cref{sec:qualitative_analysis} to estimate the value that the parameter $R$ takes in those experiments. Given that $d = 7840$ in this experiment and that the phase transition seems to be happening around $\sigma_1 \simeq 0.3$, we can estimate $R\simeq \sigma_1 \sqrt{d} \simeq 26$. On \cref{fig:linear_model_mnist_all_R}, we plotted the estimated bound both for $R=1$ (our default choice) along with the estimated bound for $R=25$. We observe on that second figure a much improved correlation, hence supporting the experimental results of \cref{sec:experiments}.

\begin{remark}
    With this linear model experiment, it is not possible to let the number of parameters vary (at least not without affecting the data), it is therefore not possible to test the dimension dependence, as in \cref{fig:regression_from_d}, in this case.
\end{remark}

\subsubsection{Additional experiments on the FashionMNIST dataset}
\label{sec:fashion_mnist_experiments_appendix}

\begin{figure}[!ht]
    \centering
    \subfigure[\label{fig:regression_from_d_fashion_mnist}Regression of the parameter $\alpha$ from the accuracy error, for a $2$-layers network trained on $10\%$ of the FashionMNIST dataset.]{
    \includegraphics[width=0.45\linewidth]{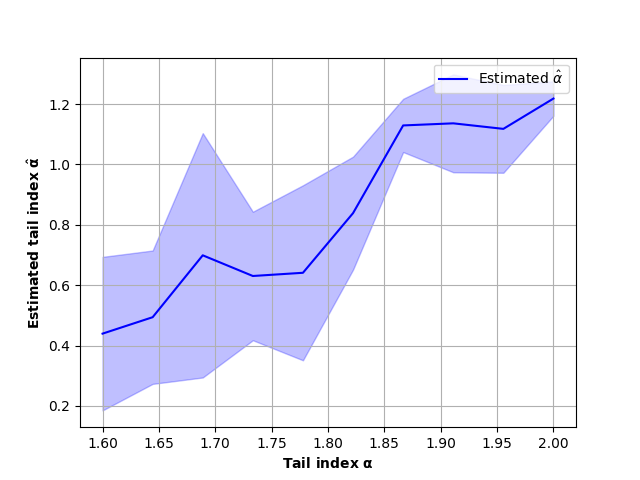}}
    \hfill
    \subfigure[\label{fig:correlation_experiment_fashion_mnist}Correlation between $\alpha$ and the accuracy error, for a $2$-layers network trained on $10\%$ of the FashionMNIST dataset, with varying width.]{\includegraphics[width=0.45\linewidth]{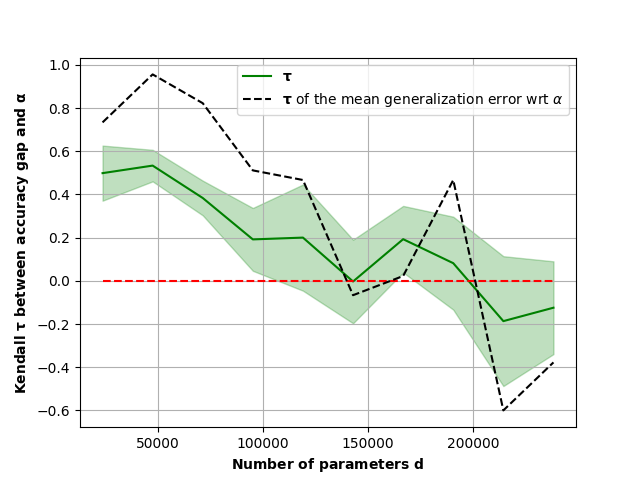}}
    \caption{Experiments with a $2$-layers FCN and the FashionMNIST dataset.}
\end{figure}

We conducted the same experiments as those presented on \cref{fig:regression_from_d,fig:correlation_main} on the Fashion MNIST dataset. The results are presented in \cref{fig:regression_from_d_fashion_mnist,fig:correlation_experiment_fashion_mnist}. We observe the exact same behavior than the experiments conducted on the MNIST dataset: the regression of the tail index shows a remarkable correlation with the ground truth $\alpha$, with the expected monotonicity, even though the value of $\alpha$ is underestimated. Moreover, the phase transition, between the two regimes predicted in \cref{sec:qualitative_analysis}, is observable in \cref{fig:correlation_experiment_fashion_mnist}. It seems to happen at a higher value of $d$ than in the MNIST experiment.

\begin{figure}[!ht]
    \centering
    \subfigure[$R=4$]{\includegraphics[width=0.4\linewidth]{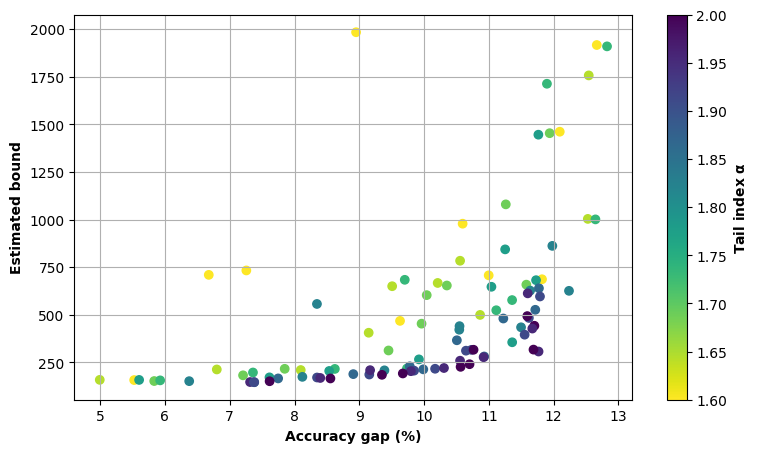}}
    \hfill
    \subfigure[$R=11$]{\includegraphics[width=0.4\linewidth]{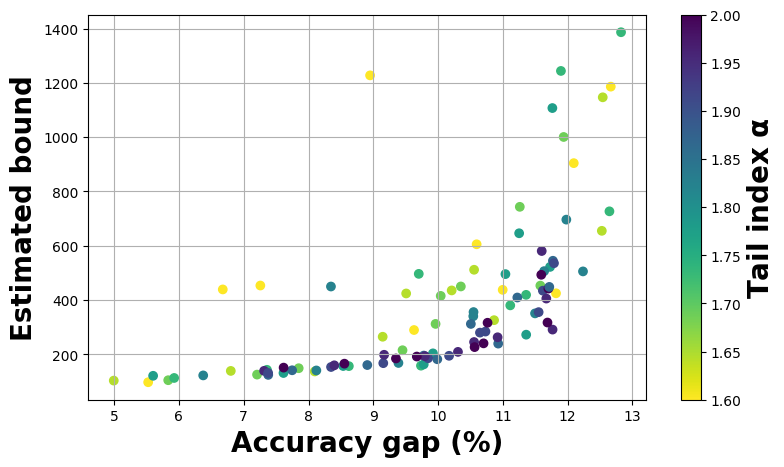}}
    \caption{FasionMNIST}
    \label{fig:fashion_mnist_all_R}
\end{figure}

Additionally, we present in \cref{fig:fashion_mnist_all_R} the value of the estimated bound, based on \cref{eq:estimation_formula}, compared to the accuracy gap, for different values of th parameter $R$. For the default choice $R=1$, we already observe a very good correlation. Based on \cref{fig:correlation_experiment_fashion_mnist}, using the same procedure than in our previous experiments, we can estimate the value of $R$ to be about $R\simeq 11$. As we observe in all our previous experiments, in \cref{sec:experiments,sec:linear_model_additional_experiments}, we observe that taking into account this value of $R$ improves the observed correlation betzeen the bound and the accuracy gap, hence supporting the theory.

\subsubsection{Additional experiments using the CIFAR$10$ dataset}
\label{sec:cnn-experiments}

All our main experiments were conducted on the MNIST and FashionMNIST datasets ($10\%$ sub-sample), and with small models (\ie, linear models and $2$-layers neural networks). The fact that we focus on these small scale experiments is mainly due to three factors:
\begin{enumerate}
    \item As mention in \cref{sec:experiments}, the computational cost of our experiments is high. They are much more scalable for small models and datasets.
    \item It is not clear whether larger DNNs satisfy our theoretical assumptions, further theoretical investigation would be needed in that direction.
    \item The Lévy stable noise $\levy$ can make the training dynamics highly unstable, as is shown on \cref{fig:losses_depth_1}. We believe that this is mainly due to the fact that the heavy-tailed noise in \cref{eq:Euler-Maruyama} does not take into account the model's structure, as in \citep{wan2023implicit} for instance. Further investigation would also be needed to better understand this behavior. 
\end{enumerate}

Despite these difficulties, we present small preliminary results toward the application of our theory to larger DNNs. More precisely we considered a small convolutional neural network (CNN) with $2$ convolutional layers, $3$ fully-connected (FC) layers and ReLU activation. We varied the width of these FC layers between $10$ and $200$ in order to vary the number of parameters $d$. As a dataset, we use a small sample of the CIFAR$10$ dataset, using only $2$ classes of a $10\%$ sub-sample of the dataset.

The results presented in \cref{fig:cnn-regression,fig:cnn-bound} are coherent with those obtained in \cref{sec:experiments}, hence hinting toward the idea that our theory could extend to more practical settings.

\begin{figure}
    \centering
    \includegraphics[width=0.5\linewidth]{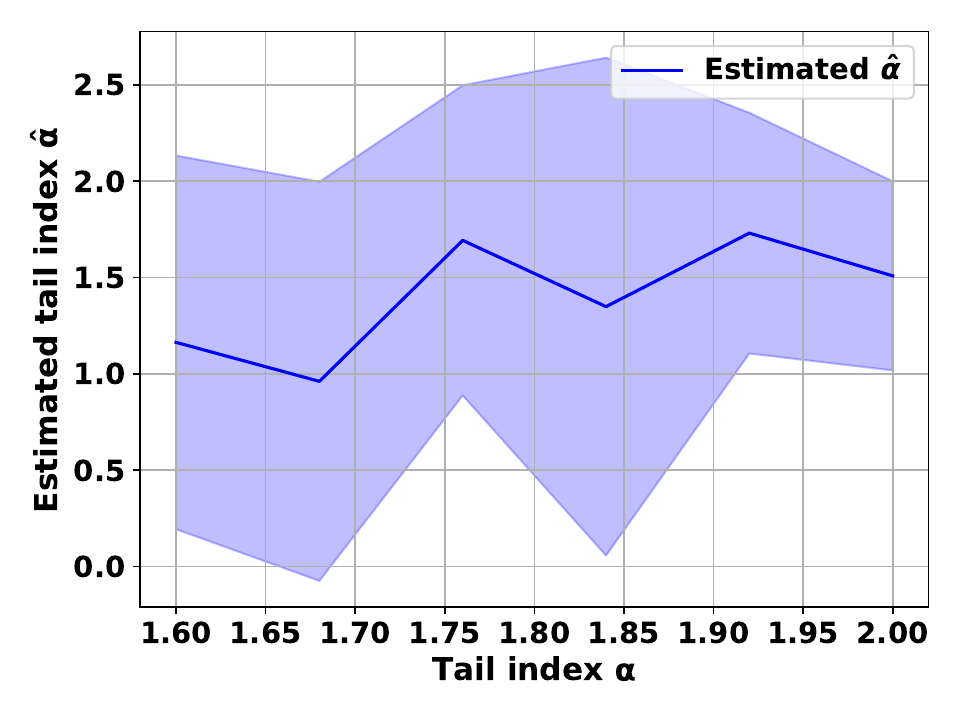}
    \caption{Regression of $\alpha$ from the generalization bound, for a CNN on CIFAR$10$.}
    \label{fig:cnn-regression}
\end{figure}

\begin{figure}[!ht]
    \centering
    \subfigure[$R=1$]{\includegraphics[width=0.49\linewidth]{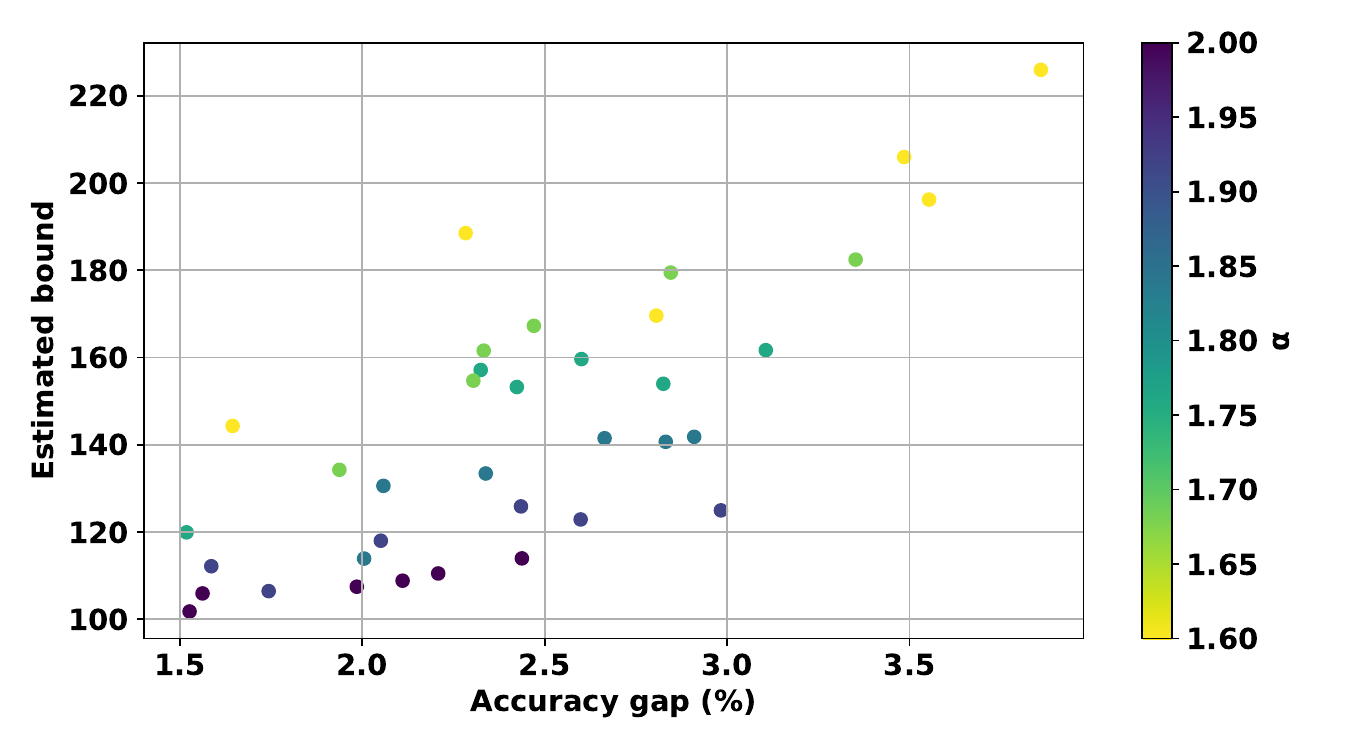}}
    \hfill
    \subfigure[$R=7$]{\includegraphics[width=0.49\linewidth]{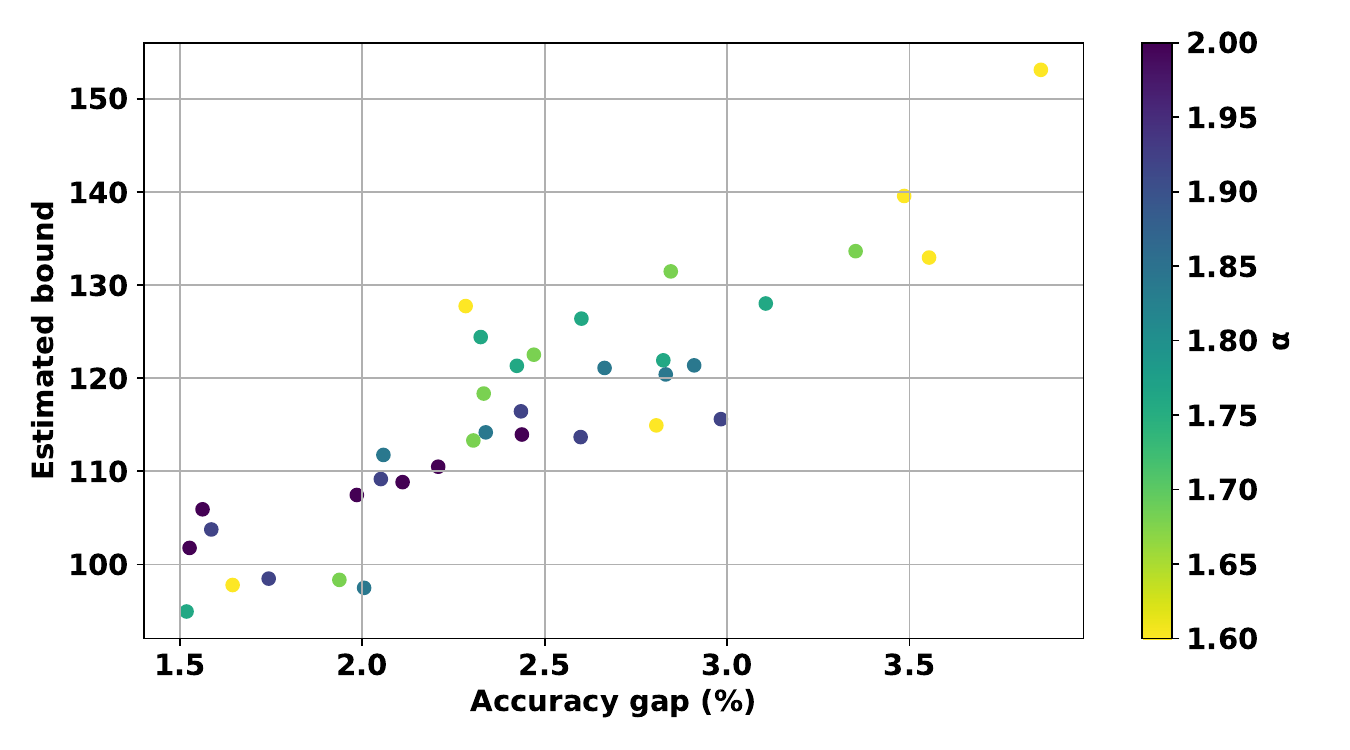}}
    \caption{Estimated bound, computed with \cref{eq:estimation_formula}, versus accuracy gap, for a CNN on CIFAR$10$, for $R=1$ (\textit{left}) and $R=7$ (\textit{right}).}
    \label{fig:cnn-bound}
\end{figure}

\subsubsection{Experiments using Pearson correlation coefficient}

In \cref{fig:correlation_main}, we presented the value of the Kendall's correlation coefficient between the generalization error and the tail index $\alpha$, for different values of $d$. We present the same experiment in \cref{fig:pearson_plot}, but using Pearson's correlation coefficient instead of Kendall's correlation coefficient. We observe that we can still observe the desired phase transition, hence showing that our experiment are robust to the choice of correlation coefficient. However, Kendall's coefficient may be more adapted to our setup, as it is not clear that a linear correlation can be observed in practice.

\begin{figure}[!ht]
    \centering
    \includegraphics[width=0.5\linewidth]{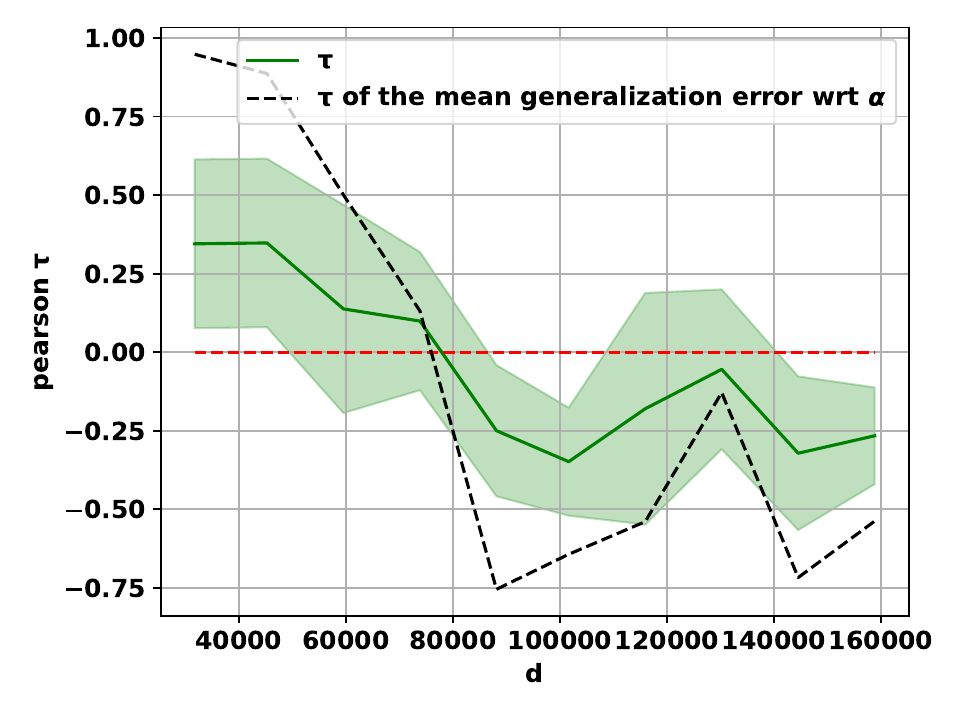}
    \caption{Same experiment as \cref{fig:correlation_main}, but using the Pearson correlation coefficient instead of the Kendall's correlation coefficient.}
    \label{fig:pearson_plot}
\end{figure}

\subsubsection{Additional experiments with the full MNIST dataset and mini-batches}
\label{sec:additiional_full_batch_appendix}

As already mentioned in \cref{sec:experiments}, in order to argue that our bounds may also be pertinent in more practical settings than the figure presented in the rest of the paper, we computed our bound using a FCN$5$, on the full MNIST dataset, using mini-batches. This is to be compared eith our other experiments, were, according to the SDE we study, we take the full batch at each iteration to compute $\nabla \ef$. The exact procedure and hyperparameters are detailed, as for other experiments, in \cref{sec:hyperparameters}. These results are shown on \cref{fig:full_batch_each_batch_size}, were we observe that, for several values of the batch size, we still observe a very good correlation between the estimated bound and the generalization error. Unfortunately, our theroy does not predict the behavior of the accuracy gap with respect to the batch-size, but, from this experiment, we understand that it is still capturing relatively well the behavior of the accuracy gap.

\begin{figure}[!ht]
    \label{fig:full_batch_each_batch_size}
    \centering
    \subfigure[Batch size $64$]{\includegraphics[width=0.3\linewidth]{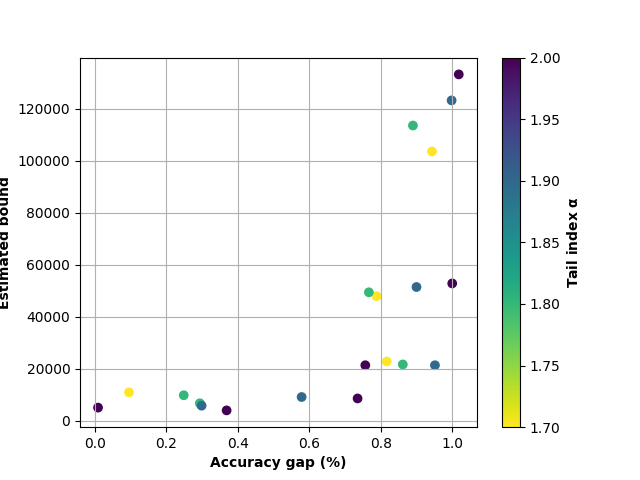}}
    \hfill
    \subfigure[Batch size $128$]{\includegraphics[width=0.3\linewidth]{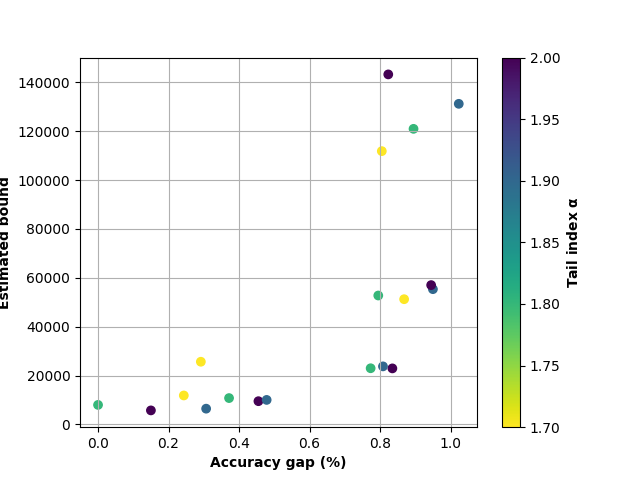}}
    \hfill
    \subfigure[Batch size $256$]{\includegraphics[width=0.3\linewidth]{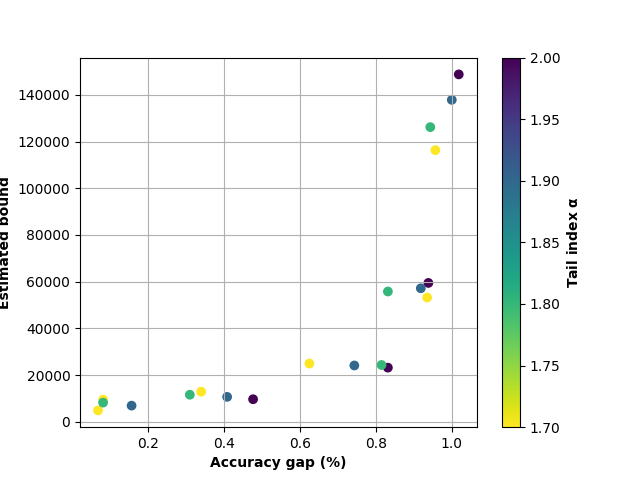}}
    \caption{Estimated bound, computed with \cref{eq:estimation_formula}, versus accuracy gap, for a $5$-layers FCN on the full MNIST dataset, for different value of batch size.}
\end{figure}

\end{document}